\newcommand{\algalign}[2]% #1 = text to left, #2 = text to right
\newlength{\bibitemsep}\setlength{\bibitemsep}{.315\baselineskip plus .05\baselineskip minus .05\baselineskip}
\newlength{\bibparskip}\setlength{\bibparskip}{0pt}
\let\oldthebibliography\thebibliography
\renewcommand\thebibliography[1]{%
  \oldthebibliography{#1}%
  \setlength{\parskip}{\bibitemsep}%
  \setlength{\itemsep}{\bibparskip}%
}
                        \renewcommand{\headrulewidth}{0pt}      % spessore della linea di separazione in alto (0 per eliminare la linea)
                        \renewcommand{\footrulewidth}{0.4pt}    % spessore della linea di separazione in basso (0 per eliminare la linea)
\definecolor{sapienza}{RGB}{130,36,51} % example \definecolor{name}{model}{color-spec}
\definecolor{cust1}{RGB}{85,85,85}
\definecolor{cust2}{RGB}{212,212,212}
\newenvironment{dedication}
{
  \phantom{.}
  \vspace{13cm}
  \begin{quote} \begin{flushright}}
{\end{flushright} \end{quote}}
\definecolor{bluegray}{rgb}{0.4, 0.6, 0.8}
\definecolor{electriclime}{rgb}{0.8, 1.0, 0.0}
\definecolor{malachite}{rgb}{0.04, 0.85, 0.32}
\definecolor{darkred}{rgb}{0.55, 0.0, 0.0}
\definecolor{darkblue}{rgb}{0.0, 0.0, 0.55}
\definecolor{darkgreen}{rgb}{0.0, 0.2, 0.13}
\definecolor{darkorchid}{rgb}{0.6, 0.2, 0.8}
\definecolor{simplexcolor}{RGB}{151,204,200}
\newcommand{\cheeg}{\mathsf{h}_{\mathsf{Cheeg}
}}
\newcommand{\res}{\mathsf{Res}}
\newcommand{\V}{\mathsf{V}}
\newcommand{\E}{\mathsf{E}}
\newcommand{\R}{\mathbb{R}}
\newcommand{\up}{\sigma}
\newcommand{\rs}{\mathsf{r}}
\newcommand{\mpas}{\mathsf{a}}
\newcommand{\agg}{\mathsf{agg}}
\newcommand{\com}{\mathsf{com}}
\newcommand{\out}{\mathsf{out}}
\newcommand{\mlp}{\mathsf{MLP}}
\newcommand{\cph}{\mathsf{C}} % Cellular Complex C
\newcommand{\gph}{\mathsf{G}} % graph G
\newcommand{\kph}{\mathsf{K}} % Simplicial Complex K
\newcommand{\pph}{\mathsf{P}} % P might be useful maybe
\newcommand{\xph}{\mathsf{X}} % Complex X
\newcommand{\MPNN}{\mathsf{MPNN}}
\newcommand{\rew}{\mathcal{R}}
\newcommand{\eigen}{\boldsymbol{\psi}}
\newcommand{\W}{\mathbf{W}}
\newcommand{\OMEga}{\boldsymbol{\Omega}}
\newcommand{\Anorm}{\boldsymbol{\mathsf{A}}}
\newcommand{\Bnorm}{\boldsymbol{\mathsf{B}}}
\newcommand{\Dnorm}{\boldsymbol{\mathsf{D}}}
\newcommand{\Hnorm}{\boldsymbol{\mathsf{H}}}
\newcommand{\Inorm}{\boldsymbol{\mathsf{I}}}
\newcommand{\Lnorm}{\boldsymbol{\mathsf{L}}}
\newcommand{\Pnorm}{\boldsymbol{\mathsf{P}}}
\newcommand{\Snorm}{\boldsymbol{\mathsf{S}}}
\newcommand{\Unorm}{\boldsymbol{\mathsf{U}}}
\newcommand{\Wnorm}{\boldsymbol{\mathsf{W}}}
\newcommand{\xnorm}{\boldsymbol{\mathsf{x}}}
\newcommand{\Xnorm}{\boldsymbol{\mathsf{X}}}
\newcommand{\Znorm}{\boldsymbol{\mathsf{Z}}}
\newcommand{\oper}{\boldsymbol{\mathsf{S}}_{\rs,\mpas}}
\newcommand{\obst}{\mathsf{O}}
\newcommand{\doarr}{{{}^{\downarrow}}}
\newcommand{\uparr}{{{}^{\uparrow}}}
\newcommand{\Ldo}{\Lnorm^{\doarr}}
\newcommand{\Lup}{\Lnorm^{\uparr}}
\theoremstyle{plain}
\newtheorem{theorem}{Theorem}[section]
\newtheorem{proposition}[theorem]{Proposition}
\newtheorem{corollary}[theorem]{Corollary}
\theoremstyle{definition}
\newtheorem{definition}[theorem]{Definition}
\newtheorem{assumption}[theorem]{Assumption}
\theoremstyle{remark}
\newcommand{\tikzcircle}[2][red,fill=red]{\tikz[baseline=-0.7ex]\draw[#1,radius=#2] (0,0) circle ;}%
\definecolor{gold}{RGB}{221, 196, 65}
\definecolor{silver}{RGB}{215, 215, 215}
\definecolor{bronze}{RGB}{205, 127, 50}
\newcommand{\peptides}{\texttt{Peptides}\xspace}
\newcommand{\pepfunc}{\texttt{Peptides-func}\xspace}
\newcommand{\pepstruct}{\texttt{Peptides-struct}\xspace}
\definecolor{lightgray}{gray}{0.95}
\definecolor{Gray}{gray}{0.85}
\definecolor{LightCyan}{rgb}{0.88,1,1}
\definecolor{LightPink}{HTML}{FCE1EF}%{ffcce7}
\definecolor{LightGreen}{HTML}{EEF7E1}
\newcolumntype{A}{>{\columncolor{white}}c}
\newcolumntype{B}{>{\columncolor{LightGreen}}c}
\newcolumntype{C}{>{\columncolor{LightPink}}c}
\definecolor{purpleheart}{rgb}{0.41, 0.21, 0.61}
\definecolor{mblue}{rgb}{0.2118, 0.3608, 0.5765}
\definecolor{mgreen}{rgb}{0.2314, 0.5765, 0.3961}
\definecolor{mred}{rgb}{0.5765, 0.2314, 0.3216}
\definecolor{mpurp}{rgb}{0.4824, 0.2314, 0.5765}
\definecolor{amber}{rgb}{1.0, 0.75, 0.0}
\definecolor{dark2green}{rgb}{0.1, 0.65, 0.3}
\definecolor{dark2orange}{rgb}{0.9, 0.4, 0.}
\definecolor{dark2purple}{rgb}{0.4, 0.4, 0.8}
\newcommand{\first}[1]{\colorbox{gold}{\textbf{#1}}}
\newcommand{\second}[1]{\colorbox{silver}{#1}}
\newcommand{\third}[1]{\colorbox{bronze}{{#1}}}
\definecolor{lightgreen}{RGB}{168,207,147}
\definecolor{lightblue}{RGB}{34,118,180}
\definecolor{lightyellow}{RGB}{255,226,149}
\newcommand{\face}{{\trianglelefteqslant\,}}%\lhd}}  % FACE RELATION
\begin{document}
%   FRONT MATTER                                            %
\frontmatter 	   % Begin Roman style (i, ii, iii, iv...) page numbering
\pagestyle{empty}  % No headers or footers for the following pages

%   TITLE PAGE
\begin{frontespizio}
% Remember: After editing this file, delete the cache and recompile
%---------------------Definizione font---------------------%
\Preambolo{\renewcommand{\fronttitlefont}{\fontsize{24}{24}\bfseries}}
%----------------------------------------------------------%
	% \frontinstitutionfont 	Neretto, 14/17 
	% \frontdivisionfont 		Tondo, 12/16 
	% \frontpretitlefont 		Maiuscoletto, 10/12
	% \fronttitlefont 		Neretto, 17/21 
	% \frontsubtitlefont 		Tondo, 12/14
	% \frontfixednamesfont 		Tondo, 12/14 
	% \frontnamesfont 		Neretto, 12/14 
	% \frontsmallfont 		Neretto, 9/11 
	% \frontfootfont	 	Neretto, 12/14 

	% \bfseries	Grassetto
	% \itshape	Corsivo
	% \scshape	Maiuscoletto
%----------------------------------------------------------%
\Margini{4cm}{3cm}{3cm}{3cm}				% Margini sinistro, in basso, destro e in alto
\Logo[4cm]{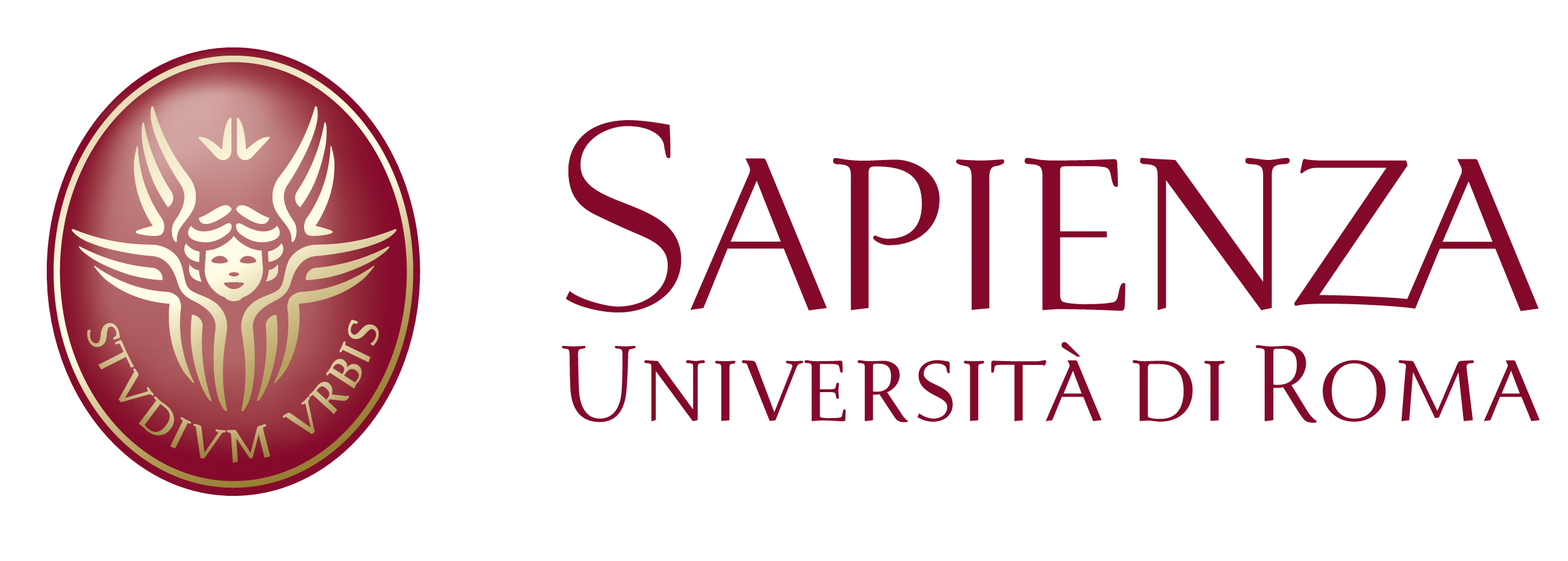}
%\Universita{Roma}
\Istituzione{Sapienza University of Rome}
%\Facolta{Civil and Industrial Engineering}		% = 'Facolta di Ingegneria Civile ed Industriale'
\Divisione{Department of Engineering}		% = 'Facolta di Ingegneria Civile ed Industriale'
%\Dipartimento{Ingegneria Meccanica e Aerospaziale}	% = 'Dipartimento di Ingegneria Meccanica e Aerospaziale'
%\Corso[Master Degree]{Aeronautical Engineering}
\Scuola{PhD in Data Science}
\Titoletto{Thesis For The Degree Of Doctor Of Philosophy}
\Titolo{Topological Neural Networks}
\Sottotitolo{Mitigating the Bottlenecks of Graph Neural Networks via Higher-Order Interactions}
\Punteggiatura{}					% Modifica la punteggiatura dopo 'Candidato', 'Realatore' ecc. (vuoto per togliere)
\NCandidato{Candidate}					% Sostituisce 'Candidato' con '...'
\Preambolo{\renewcommand{\frontsmallfont}[1]{\small}}
\Candidato[]{Lorenzo Giusti}
%\Candidato{\small{1532252}}
\NRelatore{Thesis Advisor}{Advisor}			% Sostituisce 'Relatori' con 'Advisor' (permette anche di scrivere più relatori)
\Relatore{Prof. Stefano Leonardi}
\Relatore{Prof. Pietro Li\`{o}}
%\Correlatore{}
%\Rientro{1cm}						% Rientro di Relatore e Candidato
\Piede{Academic Year MMXX-MMXXIII (XXXVI cycle)}					% A fondo pagina
\end{frontespizio}
\IfFileExists{\jobname-frn.pdf}{}{%
\immediate\write18{pdflatex \jobname-frn}} 
\clearpage
%   QUOTE (OPTIONAL)
\begin{dedication}

{\fontfamily{calligra}\selectfont
{\Huge

To Whom it May Concern 
}
}
\end{dedication}

\newpage
%   ABSTRACT
\afterpage{\null\thispagestyle{empty}\clearpage} % Blank page
\thispagestyle{plain}			
\setlength{\parskip}{0pt plus 1.0pt}
\section*{Abstract}

The irreducible complexity of natural phenomena has led Graph Neural Networks to be employed as a standard model to perform representation learning tasks on graph-structured data. While their capacity to capture local and global patterns is remarkable, the implications associated with long-range and higher-order dependencies pose considerable challenges to such models. This work addresses these challenges by starting with the identification of the aspects that negatively impact the performance of graph neural networks in learning representations of events that strongly depend on long-range interactions. In particular, when graph neural networks require to aggregate messages among distant nodes, the message passing scheme performs an over-squashing of an exponentially growing amount of information into static vectors.

It is important to notice that for some classes of graphs (i.e., path, tree, grid, ring, and ladder) the underlying connectivity allows messages to travel along edges without encountering significant interference from other paths, thus reducing the growth of information to be linear in the number of messages exchanged.

When the underlying graph does not fall into the aforementioned categories, oversquashing arises because the propagation of information happens between nodes that are connected through edges, which induces a computational graph mirroring nodes' connectivity. This phenomenon causes nodes to become insensitive to information sent from remote parts of the graph. To offer a new perspective for designing architectures that mitigate such bottlenecks, a unified theoretical framework reveals the impact of network's width, depth, and graph topology on the over-squashing phenomena in message-passing neural networks.

The thesis then drifts towards the exploitation of higher-order interactions via Topological Neural Networks.  With a multi-relational inductive bias, topological neural networks propagate messages through higher-dimensional structures, effectively providing shortcuts or additional routes for information flow. With this construction, the underlying computational graph is no longer coupled with the input graph structure, thus mitigating the aforementioned bottlenecks while accounting also for higher-order interactions. Inspired by the masked self-attention mechanism developed in Graph Attention Networks alongside the rich connectivity provided by simplicial and cell complexes, two distinct attentional architectures are proposed: Simplicial Attention Networks and Cell Attention Networks.

The rationale behind these architecture is to leverage
the extended notion of neighbourhoods provided by the particular arrangement of groups of nodes within a simplicial or cell complex. In particular, these topological attention networks exploit the upper and lower adjacencies of the underlying complex to design anisotropic aggregations able to measure the importance of the information coming from different regions of the domain. By doing so, they capture dependencies that conventional Graph Neural Networks might miss.

Finally, a communication scheme between higher-order structures is introduced with Enhanced Cellular Isomorphism Networks, which augment topological message passing schemes by letting all the cells of a cell complex receive messages from their lower neighbourhood. This upgrade enables direct interactions among node groups within a cell complex, specifically arranged in ring-like structures. This augmented scheme offers more comprehensive representation of higher-order and long-range interactions, demonstrating very high performance on large-scale and long-range benchmarks.

\vfill
Keywords: Topological Deep Learning, Topological Neural Networks, Geometric Deep Learning, Graph Neural Networks.

\thispagestyle{empty}
\mbox{} 
\begin{singlespace}
 \tableofcontents 	
 \addcontentsline{toc}{chapter}{\listfigurename}
 \listoffigures
 \addcontentsline{toc}{chapter}{\listtablename}
 \listoftables
% \renewcommand{\lstlistlistingname}{List of Code}
% \addcontentsline{toc}{chapter}{\lstlistlistingname} 
% \lstlistoflistings

%   NOMENCLATURE AND ACRONYMS
 \printnomenclature
 \printglossaries
\end{singlespace}

%   MAIN MATTER
\mainmatter	  % Begin normal, numeric (1,2,3...) page numbering
\clearpage
% ------ set page style fancy with the follow 
\pagestyle{fancy} 
\renewcommand{\chaptermark}[1]{\markright{\chaptername\ \thechapter.\ #1}{}}
\renewcommand{\sectionmark}[1]{\markright{\thesection.\ #1}}
\lhead{} 
\chead{}                   
\rhead{\slshape \rightmark} 
\lfoot{Lorenzo Giusti}
\cfoot{} 
\rfoot{\thepage}          
\renewcommand{\headrulewidth}{0.4pt} 
\renewcommand{\footrulewidth}{0.4pt}

%   ACKNOWLEDGEMENTS
\cleardoublepage
\pagenumbering{gobble}
\thispagestyle{plain}			% Supress header
\section*{Acknowledgements}
Lorem ipsum dolor sit amet, consectetur adipisicing elit, sed do eiusmod tempor incididunt ut labore et dolore magna aliqua. Ut enim ad minim veniam, quis nostrud exercitation ullamco laboris nisi ut aliquip ex ea commodo consequat. Duis aute irure dolor in reprehenderit in voluptate velit esse cillum dolore eu fugiat nulla pariatur. Excepteur sint occaecat cupidatat non proident, sunt in culpa qui officia deserunt mollit anim id est laborum.

\vspace{1.5cm}
\hfill
\copyright \, Lorenzo Giusti, Geneva, January 2024. All rights reserved.

\newpage				% Create empty back of side
\thispagestyle{empty}
\mbox{}

%   CHAPTERS

\chapter{Introduction}

\section{On Graph Representation Learning}

In everyday life, we experience events that involve objects and relationships at all scales. From quantum physics~\citep{rovelli2021relational} to cosmology~\citep{makinen2022cosmic}, {\em nature communicates complex phenomena to us in terms of evolving systems of interconnected entities}~\citep{strogatz2004sync}. Examples at the human scale include: brain networks, where neurons are the entities and linked through synapses~\citep{bassett2017network}; molecules, with atoms glued together by chemical bonds~\citep{balaban1985applications} and social networks, where persons are connected through friendships~\citep{ohtsuki2006simple}. The mathematical language to describe such systems is known as \textbf{graph}, a tool able to represent nature's complexity by modelling entities as nodes and relationships as links between them~\citep{velivckovic2023everything}.

In the past decade, the machine learning community has recognized an outstanding template to perform learning tasks on data defined over relational domains. Such models are referred to as \textbf{Graph Neural Networks}(GNNs)~\citep{sperduti1994encoding, sperduti1997supervised, scarselli2008graph, gori2005new}. This success was possible due to their efficiency in combining the representational power of neural networks with a relational inductive bias~\citep{battaglia2018relational} provided by a prior knowledge of the relationships between objects. Within the realm of graph neural networks, the \textbf{message-passing paradigm}~\citep{gilmer2017neural} has emerged as an efficient scheme to realize graph neural networks, It enables nodes in a graph to update their representation with three operations: (1) {\bf communication} between the nodes and their neighbours, (2) {\bf aggregation} of the information received from the neighbours and (3) {\bf update} of the internal representation using the information received from the neighbours. The simplicity of the message passing paradigm has led to significant breakthroughs in scientific challenges like protein folding~\citep{jumper2021highly} and algorithmic reasoning~\citep{velivckovic2021neural}.

\

Although graph neural networks can learn {\em almost} any representation of interconnected systems and the successes of these class of neural networks are a proof of their exceptional ability, their original design face several limitations in representing data coming from more complex systems~\citep{battiston2020networks}. For example, scientists in biology~\citep{lee2013transcriptional, sever2015signal}, physics~\citep{parisi1983order}, sociology~\citep{granovetter1978threshold, sumpter2006principles}, network neuroscience~\citep{giusti2016two} and chemistry~\citep{steed2022supramolecular} may argue that events often involve groups of entities interacting concurrently in a cooperative or adversarial manner. For instance, in such fields of science, group dynamics often play a role, where the interaction of three or more entities can lead to outcomes different from pairwise~\citep{wooldridge2009introduction}.

In particular, when this happens, the underlying phenomena is said to exhibit {\bf higher-order interactions}~\citep{ahn2010link}. Applications in which higher-order interactions alter the state of an interconnected system might be found in most of real-world scenarios.

Although such interactions might contribute only a small amount of information, their effect might have a huge impact on the evolution of complex systems.

For instance, in biochemical networks, multiple proteins interacting together can lead to a cascading signal transduction that would not occur with simple pairwise interactions~\citep{barabasi2011network}. Similarly, in functional brain networks, the disruption or alteration of activity in a critical hub region, can propagate throughout the entire network leading to widespread changes in brain function and behavior which might impact various cognitive tasks and even contribute to neurological disorders~\citep{greicius2004default}. In such cases, traditional graph representations may fall short,  requiring models that can capture higher-order arrangements of entities in a principled fashion.

This manuscript focuses on developing tools for phenomena in which \textbf{the complexity goes beyond simple node-edge representations} and higher-order models are {\bf essential} to completely describe the the complex nature of events.

\section{Topological Neural Networks for Science}\label{sec:intro:tnns_4_science}

The previous section highlights the necessity of a mathematical framework that allows for learning the representation of events involving  non-trivial relationship schemes among the entities that are involved. Although graph neural networks can be employed for learning \textit{almost} every representation of complex systems, in certain situations, traditional graph representations may not sufficiently capture the entire complexity of such systems.

In scientific fields, such as \textit{biology, neuroscience, physics and chemistry}, it has been observed that considering higher-order relationships reveal aspects of the underlying phenomenon that would be hidden if only mutual connections are taken into account.

This section aims to highlight the common threads across diverse scientific fields from the perspective of higher-order interactions.

In particular, it will be discussed how the dynamics of gene regulatory networks often involve multiple genes, how neurons in brain networks fire together, the way in which the degrees of freedom of spin glasses are related to the adversarial interactions among spins (atoms or ions) on a lattice structure and which molecular properties are determined by the relations among chemical rings.

\subsubsection{Biology}

Biology aims to understand the complex nature of life at the molecular level. For this purpose, computational biology employs {\bf gene regulatory network}~\citep{levine2005gene} as a tool to study systems of molecular interactions that govern the expression of genes within cells. These networks encode which genes are turned on or off within the cells at a specific time, and in response to biological signals~\citep{kauffman1969metabolic, karlebach2008modelling}. Within gene regulatory networks, higher-order interactions have been shown to enable a finer-grain control over gene expression and cellular functions~\citep{lee2013transcriptional}. The dynamics of gene regulatory networks often involve interactions between multiple genes, transcription factors, and other regulatory elements, leading to a cascade of biological effects, shaping the dynamical behaviors of cellular systems~\citep{davidson2010regulatory}. These interactions are expressed as non trivial regulatory feedback loops involving a synergy between multiple genetic and epigenetic entities~\citep{lee2013transcriptional}. For instance, the epigenetic modifications that occur at multiple levels of DNA regulation form a complex interplay with gene expression~\citep{bird2007perceptions}. 

\

At the core of higher-order interactions lies the notion of {\bf complexes}. These mathematical structures serves as a combinatorial domains naturally able to represent higher-order interactions in complex systems. While the formal definition of (simplicial and cell) complexes and signals will be provided later in the thesis using algebraic topology~\citep{hatcher2005algebraic}, an informal understanding of these ideas  will suffice the current discussion.

\begin{figure}[!htb]
    \centering
    \includegraphics[width=.75\textwidth]{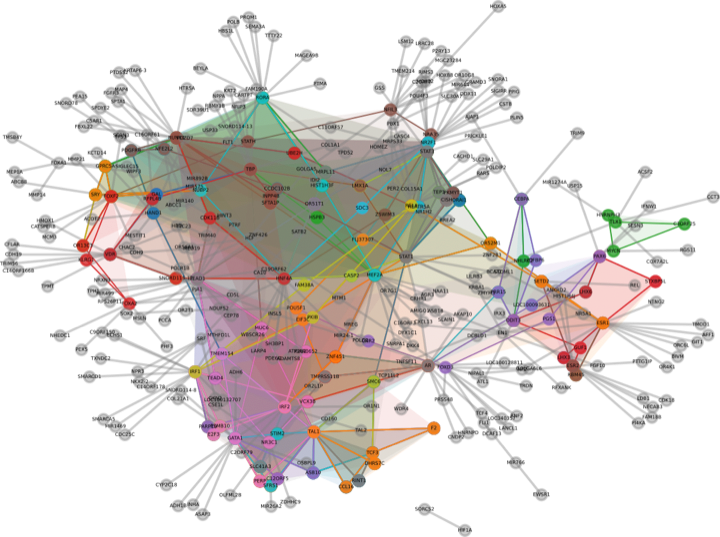}
    \vspace{-2pt}
    \caption{Gene Regulatory Complex}
    \label{fig:intro:gene_regulatory_complex}
\end{figure}

\begin{definition}[Complex (informal)]
    A complex $\xph$ is a mathematical tool for capturing how entities relate and interact. It consist of a set of nodes $\V$ and a structured collection $\mathsf{S}$ denoting the different ways they connect. Here, a $k$-th order interaction is represented by an ordered collection of $k+1$ nodes $\sigma^k$ called \textit{k-cell}.
\end{definition}

In this framework, a single node can be a standalone point; two nodes might connect as a line, symbolizing a second-order interaction; three nodes might form a triangle, indicating a third-order relationship, and so on~(\Cref{fig:intro:gene_regulatory_complex}).

\

Therefore, complexes can improve the representation power of gene regulatory networks by encoding genes as nodes, and $k$-cells as interactions among $k+1$ genes~\citep{berwald2013critical}. In this way, the set $\mathsf{S}$ contains different types of hierarchical relations. This structure takes the name of {\bf gene regulatory complex}, and constitutes a principled way to model genes and higher-order relationships among them which has revealed a landscape of attractors and bifurcations that govern cellular differentiation and response to environmental stimuli~\citep{perkins2017visualizing}.

\begin{proposition}
    The dynamics of genes interacting in higher-order feedback loops can be naturally exploited through gene regulatory complexes while simple networks might miss them~\citep{masoomy2021topological}.
\end{proposition}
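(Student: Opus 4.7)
The plan is to make the statement precise by formalizing what a ``higher-order feedback loop'' means in dynamical terms, and then to exhibit a separation between pairwise graph representations and gene regulatory complexes via an indistinguishability argument. Specifically, I would model a gene regulatory network as a discrete dynamical system on a state space indexed by genes, and define a higher-order feedback loop as a regulatory rule whose restriction to a triple $\{g_1,g_2,g_3\}$ is non-decomposable, i.e.\ not expressible as a sum or composition of functions of strictly smaller subsets of the triple. The informal ``missing'' in the proposition is then read as an inability to distinguish two regulatory systems, while ``exploiting'' becomes the ability to separate them under a chosen class of message-passing models.

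First I would write down two concrete regulatory rules $f$ and $f'$ on the same set of genes that (i) differ only in a single triple-wise cooperative term, and (ii) project onto the same pairwise shadow graph $\gph=(\V,\E)$. Next I would show that any model whose computational graph is determined by $\gph$ alone, and in particular any message-passing neural network on $\gph$, cannot separate $f$ from $f'$, since the induced multiset neighbourhoods and hence the $1$-WL colour refinements coincide on the two systems. Finally, I would encode $f$ and $f'$ as gene regulatory complexes $\xph$ and $\xph'$ carrying genuine $2$-cells on the relevant triples, and observe that these two complexes are non-isomorphic, so that a topological message-passing scheme aware of $2$-cells can produce distinct representations and thus recover the higher-order feedback that the pairwise model erases.

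The main obstacle will be the informal nature of the claim itself. Phrases like ``naturally exploited'' and ``might miss'' are not mathematical primitives, so most of the real work lies in choosing definitions sharp enough to admit a theorem yet faithful to the biological intuition of cooperative regulation. In particular, the negative direction depends crucially on restricting the baseline class to standard message-passing on $\gph$: if arbitrary multi-hop aggregations or global readouts are permitted, the separation weakens and must be restored by invoking the known expressivity bound given by $1$-WL. I would therefore state the proposition with this baseline made explicit, so that the construction of $f$, $f'$ together with the $1$-WL obstruction gives a clean, reproducible separation rather than a purely rhetorical one.
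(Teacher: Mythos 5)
The paper does not actually prove this proposition: it appears in the motivational chapter on science applications as an informal, citation-backed claim (deferred to \citet{masoomy2021topological}), with no formal definitions of ``feedback loop,'' ``exploit,'' or ``miss,'' and no supporting argument in the appendices. So there is no paper proof to match; your proposal is an attempt to supply one, and its general shape --- formalise the claim as a separation statement and prove it by exhibiting two systems that pairwise models cannot distinguish but complex-aware models can --- is in the spirit of the expressivity results the thesis does prove later (e.g.\ the CWL-based \Cref{thm:cin++express}).

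That said, there are two concrete gaps in the plan as written. First, the $1$-WL machinery is misplaced: if $f$ and $f'$ project onto literally the \emph{same} shadow graph $\gph$ with the same node features, then any model whose output is a function of $(\gph,\text{features})$ is trivially constant across the two systems, and no colour-refinement argument is needed; $1$-WL only becomes relevant if the two shadow graphs differ but are $1$-WL-equivalent, which is not what your construction provides. Second, and more seriously, the positive direction is underspecified about where the $2$-cells come from. In the thesis the cell complexes used by the neural models are obtained by a structural lifting map (\Cref{def:structural_lift}) applied to the graph itself, so two systems with identical shadow graphs lift to identical complexes and your separation collapses. For the argument to work you must build the gene regulatory complexes directly from higher-order interaction data --- attaching a $2$-cell to a triple $\{g_1,g_2,g_3\}$ exactly when the regulatory rule on that triple is non-decomposable, as in the informal gene-regulatory-complex construction of the introduction --- and state explicitly that this extra data is not a function of $\gph$. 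With that fixed, the negative direction is immediate (same input, same output) and the positive direction reduces to checking that $\xph\not\simeq\xph'$ and invoking an injective-aggregation expressivity result of CWL type, which the thesis already provides.
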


To further model the dynamics of gene regulatory complexes, it is necessary to introduce the notion of regulatory functions, which will be represented as signals attached on each $k$-cells.

\begin{definition}[Regulatory Functions]
    For a $k$-cell $\sigma^k$ in a gene regulatory complex, the regulatory function $f_{\sigma^k}$ maps the state of the genes to a new state, capturing the combined effect of their interactions:
\[
    f_{\sigma^k}:\{0,1\}^{k+1} \rightarrow \{0,1\}
\]
\end{definition}

Where the domain represents the gene states (e.g., on/off or expressed/silenced) and the codomain captures the resulting state from their interaction. Notice that the binary framework for gene states offers a simplified abstraction. However, real-world gene expressions exhibit a broad spectrum of gene expression states which can manifest with arbitrary degrees of freedom. While this model serves as a starting point, advanced constructs can provide a more fine-grained gene expression profile.

\paragraph{Biological Implications}
Interactions captured by these higher-order cells are fundamental to various biological phenomena. For example, epigenetic modifications often result from the complex interplay of multiple genes and regulatory proteins, and can be expressed via specific configurations~\citep{bird2007perceptions}. Moreover, the landscape of attractors and bifurcations in the gene regulatory network dynamics, essential to cellular differentiation and response, can be more appropriately described considering these higher-order interactions~\citep{kauffman1969metabolic}.

\begin{proposition}\label{prop:alterations_in_grns}
Disruptions in higher-order interactions, represented by alterations in a gene regulatory complex, can lead to pathological states~\citep{vogelstein2013cancer}.
\end{proposition}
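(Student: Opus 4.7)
The plan is to argue this proposition by combining the combinatorial/dynamical structure introduced for gene regulatory complexes with empirical evidence from cancer genomics. Since the statement is semi-formal (it asserts a qualitative cause-effect link rather than a sharp inequality), the proof will be structured around three ingredients: a precise notion of \textbf{alteration}, a precise notion of \textbf{pathological state}, and a \textbf{propagation mechanism} showing how the former implies the latter through the regulatory functions $f_{\sigma^k}$.

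First, I would formalize \emph{alteration} as any one of the following local modifications of the gene regulatory complex $\xph=(\V,\mathsf{S})$: (i) the removal or insertion of a $k$-cell $\sigma^k\in\mathsf{S}$ for some $k\geq 1$, corresponding biologically to the loss or gain of a regulatory interaction, or (ii) a change in the regulatory function $f_{\sigma^k}:\{0,1\}^{k+1}\to\{0,1\}$ attached to an existing cell, corresponding to a mutation that flips the joint effect of a group of genes. I would then define a \emph{pathological state} as a fixed point or attractor of the joint dynamical system induced by $\{f_{\sigma^k}\}$ that lies outside the set of physiologically viable gene-expression patterns (e.g., uncontrolled proliferation signatures). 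This follows the attractor-landscape viewpoint of Kauffman already invoked in the preceding discussion.

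Next, I would show that a local alteration is generically sufficient to shift the global attractor structure. The key step is to view the collection of regulatory functions as a discrete dynamical system on $\{0,1\}^{|\V|}$, and to observe that, because higher-order cells couple $k+1$ genes simultaneously, perturbing a single $f_{\sigma^k}$ at one input can propagate to an entire orbit: the perturbed trajectory diverges from the unperturbed one as soon as the relevant state is visited, and the divergence then cascades through every downstream cell whose inputs include the affected gene. A short argument by induction on the dynamical time step, together with the fact that healthy attractors are typically isolated in state space, yields that the post-alteration system admits at least one attractor not present in the original landscape — this is the pathological state.

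The main obstacle, and the reason a fully formal proof is out of place here, is that the statement crucially depends on a biologically grounded definition of ``pathological,'' which cannot be captured by pure combinatorics alone. I would therefore bridge the gap by appealing to \emph{Proposition}~\ref{prop:alterations_in_grns}'s cited evidence in cancer genomics, where specific alterations of multi-gene regulatory modules (e.g., tumor-suppressor cooperativity losses) are documented to drive the system toward oncogenic attractors. The proof thus terminates with a two-sided argument: mathematically, higher-order feedback loops amplify local perturbations into global shifts of the attractor landscape; empirically, such shifts coincide with observed disease phenotypes. This establishes the proposition in the informal sense intended by the surrounding discussion.
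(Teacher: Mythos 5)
The paper does not actually prove this proposition: in the text it is an informal, motivational claim made in the introductory discussion of gene regulatory complexes, and its entire support is the citation to the cancer-genomics literature (Vogelstein et al.) together with the surrounding attractor-landscape narrative (Kauffman-style dynamics on regulatory functions). In that sense your overall strategy -- give loose definitions of ``alteration'' and ``pathological state,'' gesture at the attractor picture, and then let the empirical evidence carry the conclusion -- is essentially the same move the thesis makes, just spelled out at greater length.

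The genuine gap is in the one step you present as mathematical. You claim that a local alteration of a single $f_{\sigma^k}$ is ``generically sufficient to shift the global attractor structure'' and that an induction on time steps ``yields that the post-alteration system admits at least one attractor not present in the original landscape.'' Neither follows. Propagation of a state discrepancy along trajectories does not imply any change in the attractor set: in Boolean-network models many local perturbations are absorbed (the perturbed input pattern may never be visited on the relevant basins, canalizing functions damp discrepancies, and the altered system can have exactly the same attractors as the original). Moreover, even if a new attractor did appear, nothing in your argument makes it \emph{pathological}; that property lives entirely in the biological labelling of states, which you yourself note cannot be derived combinatorially. So the ``mathematical'' half of your two-sided argument proves nothing beyond what the empirical half already asserts, and the citation you lean on is (circularly) the proposition's own reference. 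Since the statement only asserts that disruptions \emph{can} lead to pathological states, the honest justification is the one the paper gives -- documented instances in the cited cancer-genomics work -- and your write-up would be correct if the overstated attractor claim were either removed or downgraded to the heuristic motivation it is in the thesis.
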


{\em By incorporating higher-order interactions via topological constructs is it possible to have a clear comprehension of the delicate balance of gene regulation. This perspective not only enhances the understanding of the regulatory processes but also opens for improving therapeutic approaches that target these higher-order interactions}~\citep{sever2015signal}.

\subsubsection{Network Neuroscience}

The extraordinary complexity of neuronal connectivity shapes emotions, cognitive processes, and fundamentally, the essence of human experience. The human brain, {\em composed of approximately $86$ billion neurons}, forms a vast network of neurons linked together through synapses. Therefore, neural reactions are not just a random occurrence, but rather the result of elaborated labyrinths of neurons being activated via signals mediated by synapses. These reactions are denoted as {\bf neural pathways}. Such pathways are shaped mostly by past experiences, genetic predispositions, and environmental factors~\citep{kandel2001molecular}. To study functional and structural properties of such pathways in brain networks, the field of {\bf network neuroscience}~\citep{bassett2017network} aims to provide a framework from the perspective of graph theory. However, through the analysis of neural activity of large-scale human brain networks it has been recognized that the brain's functions are deeply rooted in the collective actions of several neurons rather than dyadic activity~\citep{petri2014homological, giusti2016two, reimann2017cliques}.

\begin{definition}
    A \textit{higher-order interaction} in a neural network refers to a synchronized activity ensemble of $n : n > 2$ neurons, where their combined activity cannot be reduced with the sum of their pairwise interactions.
\end{definition}

\begin{figure}[ht]
    \centering
    \includegraphics[width=.9\textwidth]{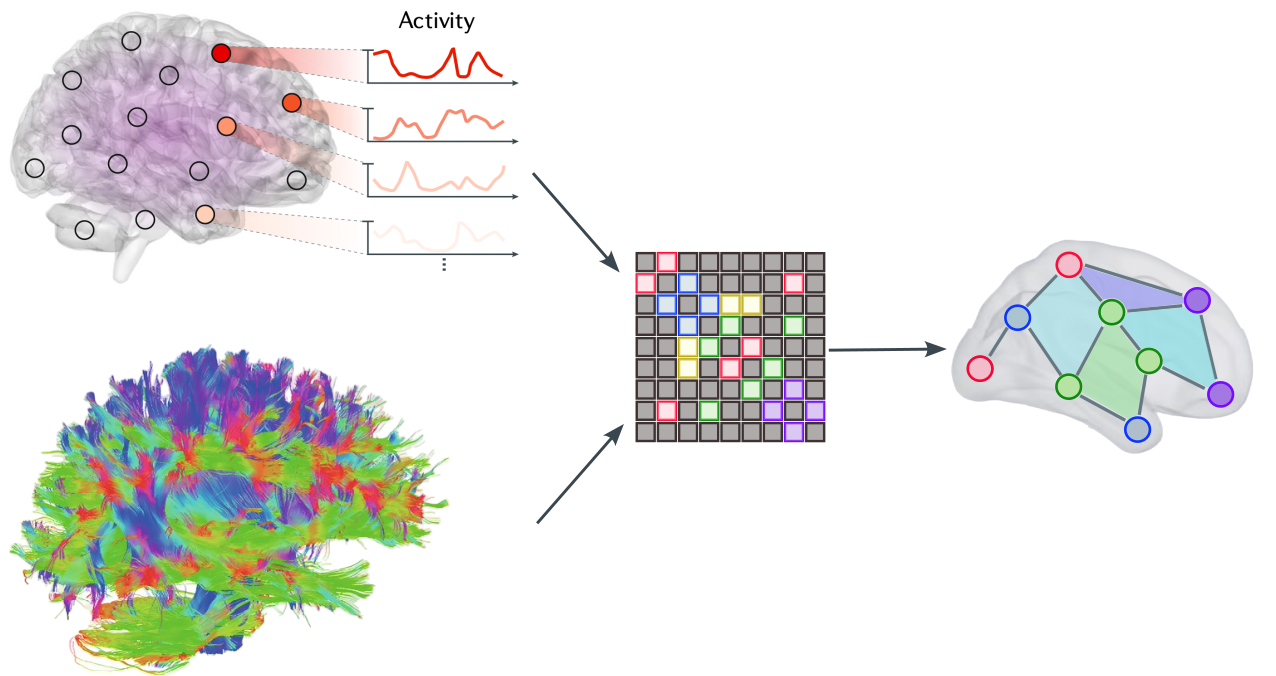}
    \caption{An illustration of a brain complex built from structural and functional neural patterns. This represents of how complex cognitive processes, such as memory formation, might emerge. Adapted from~\cite{lynn2019physics}.}
    \label{fig:intro:brain_complex}
\end{figure}

As visualized in~\Cref{fig:intro:brain_complex}, a group of neurons can form a complex where each node represents a neuron and higher-dimensional groups are associated to higher-order interactions. This structure elegantly captures the multi-neuronal patterns of activation.

For example, consider a triplet of neurons $A$, $B$, and $C$. If neurons $A$ and $B$, and neurons $B$ and $C$ have pairwise exchange of signals during certain cognitive processes, it does not necessarily imply that $A$, $B$, and $C$ are part of a higher-order interaction. 

However, {\em a synchronized firing pattern displayed by all three neurons that cannot be obtained by simply aggregating their pairwise activities indicates a higher-order interaction}.

\begin{proposition}
    If a set of neurons exhibits a higher-order interaction, the collective dynamics of this set cannot be entirely described using the sum of all possible pairwise interactions among the neurons.
\end{proposition}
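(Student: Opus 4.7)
Looking at the definition of higher-order interaction stated immediately above, the proposition is essentially a restatement of that definition, so the proof plan reduces to making the logical equivalence of the two phrasings precise. First I would fix notation: let $\mathcal{N} = \{\nu_1, \ldots, \nu_n\}$ with $n > 2$ be the set of neurons under consideration, let $\Phi(\mathcal{N})$ denote the collective activity pattern (viewed as a time-indexed vector of synchronized firings, or any equivalent representation of the joint dynamics), and let $\phi_{ij}$ denote the activity contribution associated with the pairwise interaction of $\nu_i$ and $\nu_j$. Under this setup, the phrase ``the collective dynamics can be described using the sum of all pairwise interactions'' translates precisely to the existence of a family $\{\phi_{ij}\}_{i<j}$ such that $\Phi(\mathcal{N}) = \sum_{i<j} \phi_{ij}$.

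With this formalisation in place, the argument proceeds by contrapositive. Assume there exists a pairwise decomposition $\Phi(\mathcal{N}) = \sum_{i<j} \phi_{ij}$. Then the combined activity of $\mathcal{N}$ is, by construction, reducible to a sum of pairwise interactions, which by the prior definition means that $\mathcal{N}$ does \emph{not} exhibit a higher-order interaction. Contrapositively, whenever $\mathcal{N}$ does exhibit a higher-order interaction, no such pairwise decomposition can exist, which is exactly the statement of the proposition.

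The step that deserves the most care, and which I view as the main conceptual obstacle, is the reading of the word ``reduced'' appearing in the underlying definition. I would interpret it \emph{existentially}, so that ``cannot be reduced to the sum of pairwise interactions'' means that no family $\{\phi_{ij}\}_{i<j}$ achieves the equality $\Phi(\mathcal{N}) = \sum_{i<j} \phi_{ij}$, and not merely that one specific canonical choice of the $\phi_{ij}$ fails. Under this interpretation, the proposition is a genuine logical consequence of the definition rather than a purely syntactic observation. Under the weaker reading, the claim would require an additional extremality or uniqueness argument on the pairwise terms, which is not available without further modelling assumptions; I would therefore make the existential reading explicit at the outset of the proof so that the contrapositive step becomes immediate.
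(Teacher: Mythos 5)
Your argument is correct, and it matches the spirit of what the paper does: the proposition is treated there, as you suspected, as an almost immediate consequence of the definition rather than as a theorem needing independent machinery. The difference is in framing. The paper does not argue by contrapositive over all possible pairwise decompositions; instead it fixes one canonical expansion of the collective dynamics,
\[
D(\V) = \sum_{i} d(v_i) + \sum_{i \neq j} d(v_i, v_j) + \sum_{i \neq j \neq k} d(v_i, v_j, v_k) + \ldots + d(v_1, \ldots, v_n),
\]
and identifies a higher-order interaction with the non-triviality of the terms beyond the pairwise sum, so that $D(\V)$ visibly cannot equal the truncation at second order. Your version quantifies existentially over \emph{all} families $\{\phi_{ij}\}$ and argues contrapositively, which is a genuinely stronger reading: the paper's canonical-decomposition formulation only rules out the one truncated expansion, leaving open exactly the worry you flag (that some other assignment of pairwise terms might still reproduce $\Phi(\mathcal{N})$), whereas your existential reading closes that loophole by building it into the definition. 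What the paper's route buys is concreteness and a direct link to the order-by-order Hamiltonian-style expansions used elsewhere in the chapter; what your route buys is logical tightness, at the cost of having to stipulate the existential interpretation of ``cannot be reduced,'' which you do explicitly and appropriately. Both are acceptable for a statement at this informal level.
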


Formally, let $\V$ be a set of $n$ neurons. The collective dynamics of $\V$ can be represented as:

\begin{equation}
    D(\V) = \sum_{i=1}^{n} d(v_i) + \sum_{i \neq j} d(v_i, v_j) + \sum_{i \neq j \neq k} d(v_i, v_j, v_k) + \ldots + d(v_1, v_2, \ldots , v_n), \nonumber
\end{equation}

Where $d(v_i)$ is the activity of neuron $v_i$, $d(v_i, v_j)$ represents pairwise interaction of neurons $v_i$ and $v_j$,  $d(v_i, v_j, v_k)$  is a third-order interaction between neurons $v_i$, $v_j$ an $v_k$ while $d(v_1, v_2, ... , v_n)$ characterizes the higher-order interaction of all neurons in set $\V$.

The key observation here is that {\em the terms after $\sum_{i \neq j} d(v_i, v_j)$, are non-trivial and group dynamics should be considered when processing brain signals to gain deeper insights into the brain's functionality}~\citep{ohki2005functional, schneidman2006weak}.

\subsubsection{Physics}

A similar paradigm of higher-order interactions can be observed in condensed matter physics, particularly in {\bf spin glasses}~(\Cref{fig:intro:spin-glass}), disordered magnetic systems with competing interactions presenting several metastable states, which are local minima in their energy landscape where the system can get trapped for extended periods~\citep{binder1986spin}. Grasping higher-order interactions in spin glasses is a key challenge for understanding their role in phase diagrams and dynamical behaviors of complex systems.

\begin{figure}[ht]
    \centering
    \includegraphics[width=0.5\textwidth]{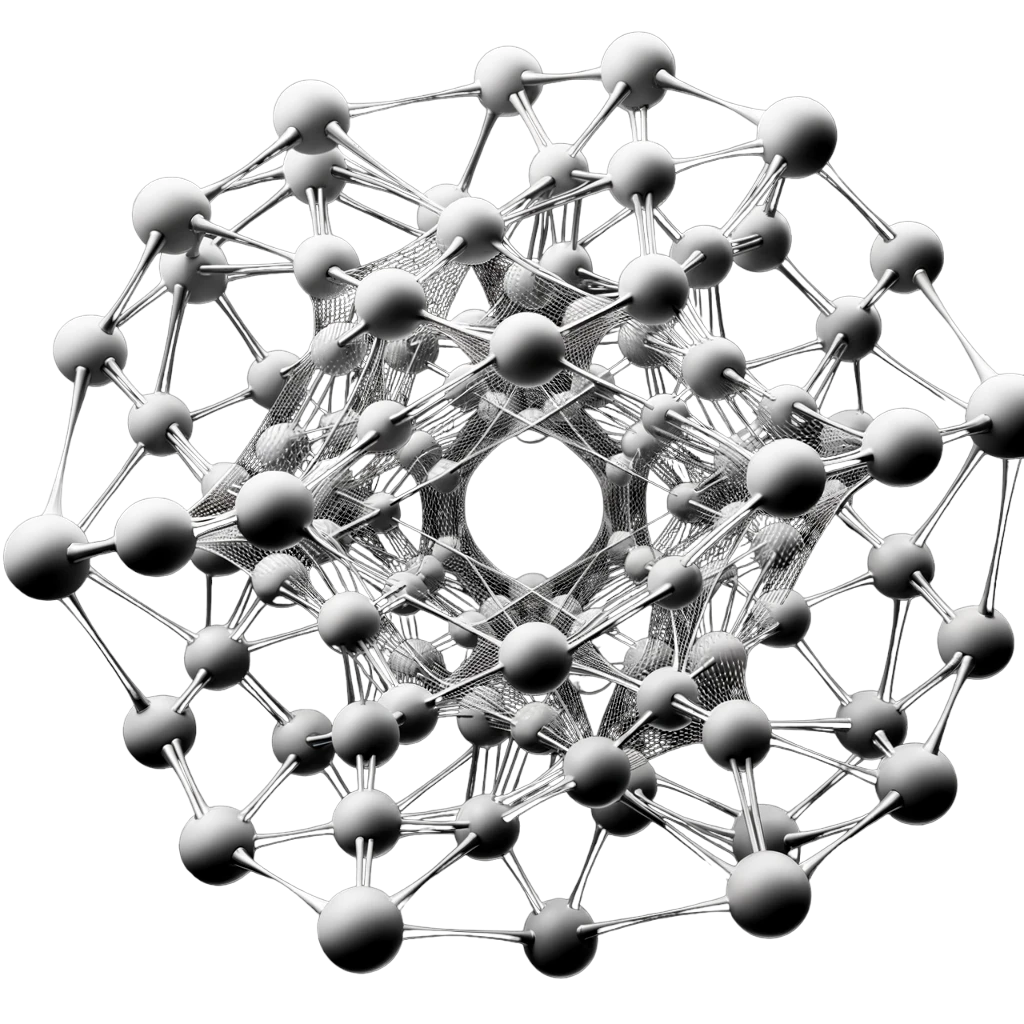}
    \caption{A spin glass lattice with nodes interconnected by edges for pairwise interactions, and polygons connecting multiple nodes to emphasize interactions among groups of spins}
    \label{fig:intro:spin-glass}
\end{figure}

\begin{definition}[Spin Glass]
    A spin glass is a disordered magnetic system characterized by the presence of random and competing ferromagnetic and antiferromagnetic interactions amongst the spin sites.
\end{definition}

Traditionally, these systems were described by pairwise interactions, often represented by the {\bf Ising model} -- a mathematical model in statistical mechanics that describes the magnetic properties of certain materials~\citep{ising1925contribution}. For a system with $N$ spins arranged on a $d$-dimensional lattice (e.g., a regular graph $\gph$), the Hamiltonian of the system that considers only pairwise interactions is given by:

\[
H_{2} = -\sum_{ \langle i j \rangle} J_{ij} \sigma_i \sigma_j 
\]

Here, $\sigma_i$ is a spin that can be oriented either upward, assuming the value of $+1$ or downward, assuming the value of $-1$. The value of $J_{ij}$ denotes the random interaction strength between spins $i$ and $j$. In particular, it represents {\bf cooperative or competitive behaviors amongst spins} $\sigma_i$ and $\sigma_j$. If $J_{ij} < 0$, the interaction between $\sigma_i$ and $\sigma_j$ is said to be antiferromagnetic, while $J_{ij} > 0$ denotes a ferromagnetic interaction between  $\sigma_i$ and $\sigma_j$. The case in which  $J_{ij} = 0$ happens if and only if $\sigma_i$ and $\sigma_j$ do not interact with each other.

However, while the Hamiltonian $H_{2}$ captures pairwise relationships among spins, providing insights into basic magnetization patterns, including higher-order interactions uncovers collective behaviors between spins that might alter their phase diagrams~\citep{edwards1975theory}. 
These diagrams map out different phases, or states of matter, that a system can exhibit under various conditions, such as temperature or pressure. For spin glasses, these phase diagrams can be profoundly shaped by interactions beyond just the pairwise ones.

\begin{definition}[Higher-Order Interaction in Spin Glasses]
    A \textit{higher-order interaction} in a spin glass system involves more than two spins simultaneously interacting, where the outcome cannot be factored into pairwise interactions.
\end{definition}

Incorporating three way relationships in spin glasses, leads to a Ising model of third-order interactions:

\[
H_{3} = -\sum_{\langle i  j  k \rangle} J_{ijk} \sigma_i \sigma_j \sigma_k 
\]

Where $J_{ijk}$ denotes the strength of the third-order interaction between spins $i$, $j$, and $k$. The notation $\langle i  j  k \rangle$ refers to a group of three arbitrary connected spins (i.e., three spins arranged on the vertices of a triangle).

\begin{proposition}
    Higher-order interactions alter the phase space of a spin glass system, leading to new metastable states and altered dynamical properties.
\end{proposition}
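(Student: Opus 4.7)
The plan is to establish the proposition in two stages. First, I would show that augmenting the Ising Hamiltonian with the third-order term $H_3$ strictly enlarges the set of metastable configurations by producing new local minima of the energy landscape that do not exist under $H_2$ alone; this addresses the "new metastable states" clause. Second, I would use this structural change in the landscape to deduce the modifications in the Glauber/Metropolis dynamics, which addresses the "altered dynamical properties" clause.

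For the first stage, consider the combined Hamiltonian $H = H_2 + H_3$ and characterize metastable states as spin configurations $\boldsymbol{\sigma}\in\{-1,+1\}^{N}$ that are stable under single-site flips, i.e.\ satisfying $\Delta_i H(\boldsymbol{\sigma}) \geq 0$ for every site $i$, where $\Delta_i$ denotes the energy change upon flipping $\sigma_i$. The effective local field then reads $h_i(\boldsymbol{\sigma}) = \sum_{j} J_{ij}\sigma_j + \sum_{\langle jk\rangle} J_{ijk}\sigma_j\sigma_k$, and stability requires $\sigma_i h_i(\boldsymbol{\sigma}) \geq 0$ for every $i$. The key step is to exhibit a witnessing configuration $\boldsymbol{\sigma}^\star$ and a choice of couplings $\{J_{ij},J_{ijk}\}$ with $J_{ijk}$ sufficiently large so that $\boldsymbol{\sigma}^\star$ satisfies the stability conditions for $H$ but violates them for $H_2$ alone. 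A minimal witness is a frustrated triangle: three spins whose pairwise couplings admit no non-trivial stable configuration under $H_2$, yet where a suitably chosen triangle coupling $J_{ijk}$ reinforces one specific orientation triple and thereby creates a new local minimum.

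To lift the argument from a single witness to a generic statement about the phase space, I would invoke a Thouless–Anderson–Palmer or cavity-method treatment (alternatively, a Kac–Rice counting argument) to compute the configurational complexity, i.e.\ the logarithm of the typical number of metastable states per spin, and show that it is monotone non-decreasing in the strength of the triangle couplings $J_{ijk}$. For the dynamical part, coupling $H$ to a Glauber update rule and invoking the Arrhenius law asserts that transition rates between basins are controlled by the heights of the intervening saddles; showing that at least one saddle height strictly changes upon switching $H_3$ on is enough to conclude that ergodicity-breaking timescales, aging exponents and autocorrelation functions differ from those of the pure pairwise model.

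The main obstacle is the probabilistic control over the disorder: for generic random $\{J_{ij},J_{ijk}\}$ one needs to argue that the new metastable states and altered saddles persist with positive probability (and almost surely in the thermodynamic limit), which demands either a replica computation or a careful second-moment / Kac–Rice estimate that is technically delicate. However, since the proposition only claims the \emph{existence} of new metastable states and \emph{some} alteration of the dynamics rather than a full classification of the modified phase diagram, the single-triangle witness already yields a constructive proof, and the probabilistic refinement to the disordered thermodynamic limit can be deferred to a remark that sketches how the cavity method extends the argument.
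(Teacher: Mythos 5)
There is nothing in the paper to compare your argument against: this proposition is one of the informal, motivational claims in the introductory physics discussion, and the thesis never proves it. It is asserted as established physics and backed by citations to the spin-glass literature (the surrounding text points to Edwards--Anderson and M\'ezard--Parisi--Virasoro), and the appendices contain proofs only for the over-squashing results and the CIN++ expressivity theorem. The statement itself is not even formalized in the text --- ``alter the phase space'' and ``altered dynamical properties'' are left at the level of physical intuition --- so your first task, implicitly formalizing metastability as single-spin-flip stability $\sigma_i h_i(\boldsymbol{\sigma}) \geq 0$ for the combined Hamiltonian $H_2 + H_3$, is already doing work the paper does not attempt.

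Taken on its own merits, your plan is a sensible way to turn the claim into something provable, and the constructive core is right: with a sufficiently large three-spin coupling $J_{ijk}$ one can stabilize a configuration (e.g.\ one with $\sigma_i\sigma_j\sigma_k = +1$ that is unstable under the pairwise couplings alone), which witnesses ``new metastable states.'' Two caveats, though. First, your specific frustrated-triangle witness is shakier than you suggest: for a purely antiferromagnetic triangle the single-flip-stable states under $H_2$ are only \emph{marginally} stable (flips between degenerate ground states cost zero energy), so the claim that the pairwise model ``admits no non-trivial stable configuration'' depends on whether you use the weak or strict stability inequality; a cleaner witness is a ferromagnetic triangle where $J_{ijk}$ stabilizes a configuration with one minority spin. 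Second, everything beyond the witness --- the TAP/cavity or Kac--Rice monotonicity of the complexity in $J_{ijk}$, and the Arrhenius-law argument that some saddle height strictly changes --- is a research program rather than a proof, as you yourself acknowledge; those are exactly the parts that the spin-glass literature the paper cites treats with non-rigorous replica methods. Since the paper claims nothing rigorous here, your sketch is not ``missing'' anything the paper supplies, but it should be presented as a formalization-plus-witness with cited physics for the rest, not as a complete argument.
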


\paragraph{Physical Implications of Higher-Order Interactions in Spin Glasses: Critical Phenomena and Dynamical Responses}
Research suggests that the inclusion of higher-order interactions in spin glasses leads to profound implications in understanding their behavior, especially near critical points. For instance, while pairwise interactions predominantly influence the low-temperature phase of spin glasses, higher-order interactions can potentially modulate the dynamical responses, relaxation patterns, and aging phenomena of these systems~\citep{mezard1987spin}.

\begin{proposition}
    Higher-order interactions, when prominent, drastically affect the spin glass phase diagram, influencing critical temperatures, exponents, and susceptibility peaks.
\end{proposition}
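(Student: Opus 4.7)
The plan is to compare the phase behavior of the pairwise spin glass described by $H_2$ against the augmented model $H = H_2 + H_3$, and to exhibit explicit shifts in the critical temperature, critical exponents, and susceptibility peaks. First, I would write the quenched free energy $f(\beta) = -\lim_{N \to \infty} (\beta N)^{-1} \, \overline{\log Z_\beta}$ with $Z_\beta = \sum_{\{\sigma\}} e^{-\beta H}$, and carry out the disorder average over the random couplings $J_{ij}$ and $J_{ijk}$ via the replica trick. The decisive point is that the cubic random coupling induces a new invariant of the overlap matrix $Q_{ab}$, of the form $\sum_{a \neq b} Q_{ab}^{3}$, which is absent in the pure pairwise setting and therefore cannot be absorbed into a renormalization of $J_{ij}$.

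Next, I would derive the saddle-point equations for $Q_{ab}$ from the replicated action and linearize around the replica-symmetric ansatz $Q_{ab} = q$ for $a \neq b$. Local stability is governed by the replicon eigenvalue of the Hessian; the cubic invariant shifts this eigenvalue and therefore the de Almeida--Thouless instability line. Solving the resulting condition yields $T_c$ as an explicit function of the variances of $J_{ij}$ and $J_{ijk}$, reducing to the Sherrington--Kirkpatrick value when $J_{ijk} \to 0$ and depending monotonically on $|J_{ijk}|$. Expanding the order parameter and its fluctuations near $T_c$ then produces modified critical exponents $\beta$ and $\gamma$, together with a susceptibility $\chi(T)$ whose peak location and height carry explicit dependence on $J_{ijk}$, which is exactly the claim.

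The main obstacle is that the pattern of replica-symmetry breaking changes once the cubic coupling becomes non-perturbative: the Parisi hierarchical ansatz needs to be adapted, and a closed form for $T_c$ is no longer available. I would sidestep this by first working in a perturbative regime around the pairwise SK solution, establishing monotonicity of $T_c$ in $J_{ijk}$ through a convexity argument on $f(\beta)$ viewed as a function of the cubic coupling strength, and only afterwards extending to the fully broken phase by induction on the number of Parisi steps. To go beyond mean field, I would complement the analysis with a Wilsonian renormalization-group calculation on the lattice $\gph$: showing that the cubic operator is relevant at the pairwise Gaussian fixed point in the appropriate range of dimensions would imply flow to a new fixed point with distinct scaling exponents, giving a non-perturbative confirmation that the phase diagram is genuinely altered rather than merely reparametrized.
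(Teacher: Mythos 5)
There is an important mismatch of register here: the thesis never proves this proposition. It appears in the motivational physics subsection of the introduction as an informal, citation-backed claim about spin glasses (supported by the standard literature, e.g.\ the M\'ezard--Parisi--Virasoro line of work cited right before it), on the same footing as the other unproved ``propositions'' in that chapter about gene regulation and neural ensembles. So there is no paper proof to compare against; what you have written is a research program for substantiating the claim within mean-field statistical mechanics, and it should be judged on its own terms.

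On those terms, the skeleton is the right one --- your observation that averaging the cubic disorder $J_{ijk}$ generates an energetic invariant $\sum_{a\neq b} Q_{ab}^{3}$ that is not present in the interaction part of the pairwise model (and so cannot be absorbed into a redefinition of $J_{ij}$) is exactly the mechanism by which mixed $2{+}3$-spin models differ from the Sherrington--Kirkpatrick model. But as a proof the argument has genuine gaps. First, the replica trick and the interchange of limits it requires are heuristic; nothing in your sketch addresses this, so at best you obtain a physics-level derivation, which is all the thesis claims anyway. Second, and more substantively, the step ``linearize around the replica-symmetric ansatz and track the replicon eigenvalue'' can fail precisely in the regime you care about: a sufficiently strong $Q_{ab}^{3}$ term typically drives the transition discontinuous (one-step replica symmetry breaking, with a dynamical transition preempting the static one), so the de Almeida--Thouless-type local stability analysis does not locate $T_c$ and the exponents you would extract from a continuous-transition expansion are the wrong objects. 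Relatedly, ``extending to the fully broken phase by induction on the number of Parisi steps'' is not a workable argument --- the Parisi scheme is a variational ansatz, not an inductive structure one can recurse on to control the true free energy. Finally, the Wilsonian step (showing the cubic operator is relevant at the pairwise fixed point on a finite-dimensional lattice $\gph$) is an open problem in the field rather than a lemma you can appeal to. If you want to keep the statement at the level of rigor the thesis intends, the honest move is to present the replica computation in the perturbative mean-field regime as supporting evidence and cite the mixed $p$-spin literature for the change in the transition's character, rather than claiming a proof of altered exponents and susceptibility peaks in general.
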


Moreover, accounting for higher-order interaction in spin glassess can offer insights into a broader class of disordered systems such as the aforementioned networks of neurons~\citep{fuhs2006spin, tkacik2009spin}.

For general $k$-th order interactions, the Hamiltonian is given by:

\[ 
H_k = -\sum_{ \langle i_1  i_2  ...  i_k \rangle } J_{i_1 i_2 ... i_k} \sigma_{i_1} \sigma_{i_2} ... \sigma_{i_k},
\]
    
where $ J_{i_1 i_2 ... i_k} $ represent the strength and nature of the relationship between a set of $k$ spins interacting concurrently. The constraint $\langle i_1  i_2  ...  i_k \rangle$ ensures that each unique arrangement of $k$ spins is only considered once. It is importance to notice that, while models incorporating $k$-th order interactions provide a richer representation, they introduce non-trivial complexities, both computationally and analytically~\citep{newman1999monte}.

{\em An Ising model of spin glasses that accounts all the $k$-order interactions among spins is thus represented by sum of all the $k$ Hamiltonians $H_k$ that have a non-zero contribution to the total energy of the system}:

\[
H = \sum_{k} H_k
\]

{\em Such extensions capture the complexity behind spin glass systems more comprehensively, accounting for multi-spin interactions that are not reducible to pairwise ones. 
The interpretation of such interactions between spins can vary depending on the specific model or system under study, but they serve as a foundational mathematical tool for describing the complex behaviors observed in spin glasses.  }

\subsubsection{Supramolecular Chemistry}

Supramolecular chemistry~\citep{steed2022supramolecular}, often described as the {\bf chemistry beyond the molecule}, explores complex assemblies of molecules connected through a spectrum of weak bonds of varying strengths. These spontaneous secondary interactions include hydrogen bonding, dipole-dipole, charge transfer, van der Waals, and $\pi-\pi$ stacking interactions.

Supramolecular assemblies often exhibit complex chemical architectures and high-order self-assembly, giving rise to molecular machines~\citep{feringa2011molecular}, gas absorption~\citep{millward2005metal}, high-tech molecular sensing systems~\citep{allendorf2009luminescent}, nanoreactors~\citep{mattia2015supramolecular}, chemical catalysis~\citep{lee2009metal} and drug delivery systems~\citep{webber2017drug}. Intriguingly, molecular shape serves as a foundational design principle, thanks to the self-assembly~\citep{whitesides2002self} and self-healing~\citep{white2001autonomic} properties of supramolecules. These properties lead supramolecules to be categorized based on their curvature: zero (flat molecules), positive (bowl-shaped), and negative (saddle). 
Understanding these categories helps to distinguish the distinct behaviors and interactions of supramolecules in various contexts. These curvatures can restrict rotational and translational degrees of freedom in large stacked ensembles, leading to the formation of non-trivial scaling and directional graph-like architectures~\citep{lehn1995supramolecular}.

\begin{figure}[!htb]
     \centering
     \begin{subfigure}[t]{0.4\textwidth}
         \centering
         \includegraphics[width=.35\textwidth]{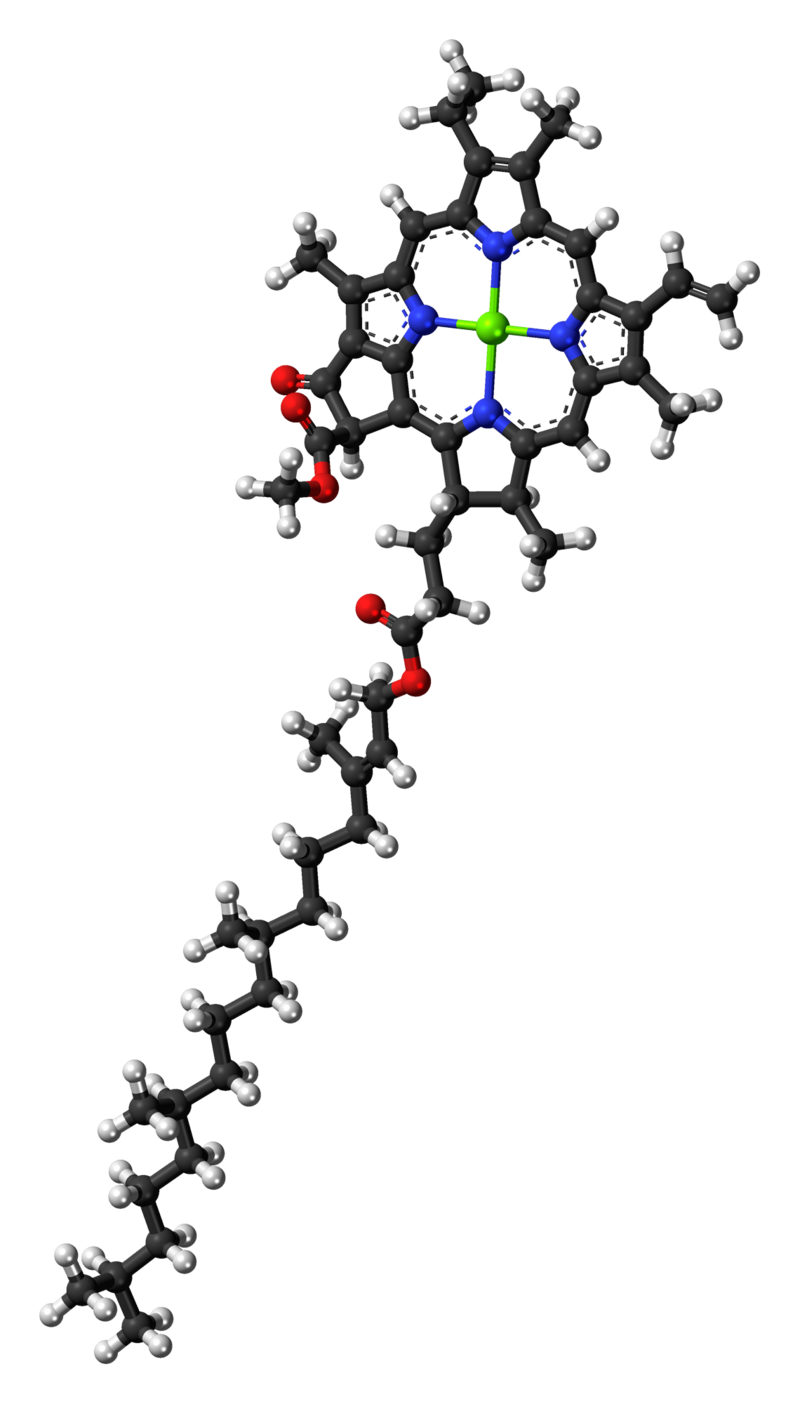}
    \caption{Molecular structure of Chlorophyll-a, the most common molecule in photosynthetic organisms.}
        \label{fig:intro:chlorophylla}
     \end{subfigure}
     \hspace{1cm}
     \begin{subfigure}[t]{0.4\textwidth}
         \centering
         \includegraphics[width=.6\textwidth]{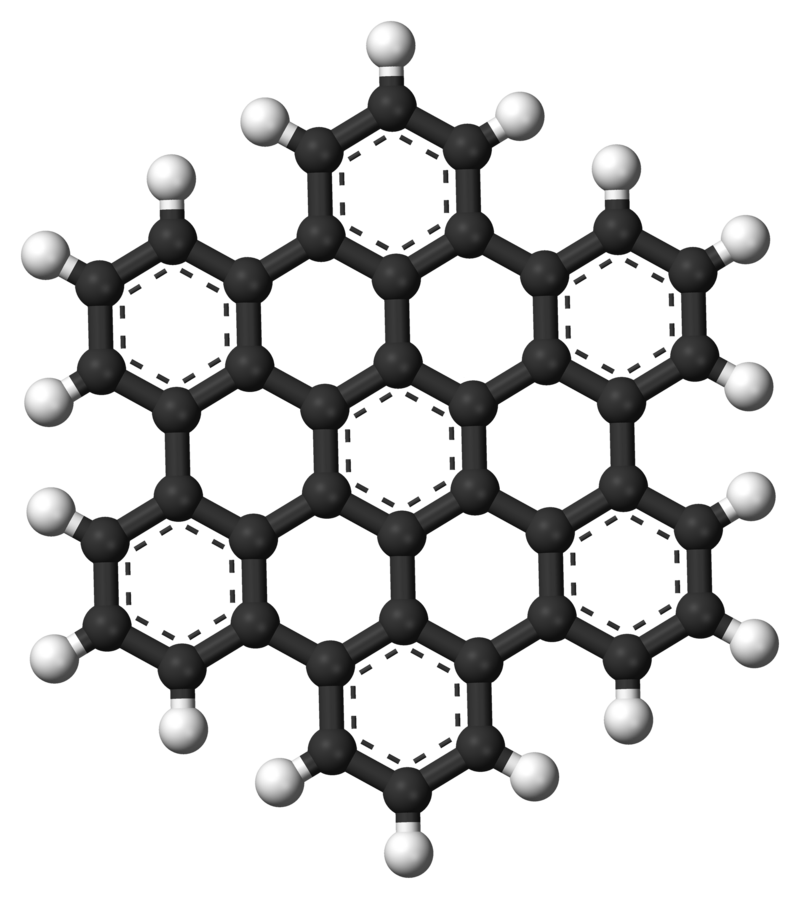}
    \caption{Molecular representation of hexabenzocoronene, a polycyclic aromatic hydrocarbon.}
    \label{fig:intro:pah}
     \end{subfigure}
     \caption{Illustrations of molecules in which long-range and higher-order interactions occurr spontaneously.}
\end{figure}

\

In supramolecular chemistry, long-range interactions refer to dependencies of molecular properties on elements far off from each other within a molecular system, typically spanning several bond lengths or more~\citep{gray2005long}. Of particular interest in this context are the interactions that arise in oxygenic photosynthesis. This is the process by which light energy is converted into chemical energy in the form of glucose or other sugars~\citep{barber2009photosynthetic}. This process is mediated by Chlorophyll-a~(\Cref{fig:intro:chlorophylla}), a cyclic tetrapyrrole molecule. Through its extensive conjugated $\pi$-system, Chlorophyll-a represents the basic building block of a photosystem. During photosynthesis, when a photon strikes a molecule of Chlorophyll-a, it excites an electron to a higher energy state. The energy produced is transferred from molecule to molecule within the light-harvesting complex via resonance energy transfer. Throughout this process, energy transfer manifests as a quantum-coherent phenomenon~\citep{engel2007evidence}, underlining the critical role of long-range interactions. Being able to capture them could lead to a positive impact in the development of efficient artificial photosynthetic systems~\citep{gust2001mimicking} and enhance solar energy technologies~\citep{green2021solar}. 

\

In addition to long-range interactions, higher-order interactions also play a fundamental role in chemical and biological processes. One example is the case of aromatic stacking. This process refers to the non-covalent interactions between aromatic rings, such as those found in the amino acid tryptophan or the nucleotide bases of DNA~\citep{hunter1990nature}, essential for biological processes including: {\em  protein folding, DNA/RNA structure, and ligand-receptor interactions}~\citep{meyer2003}. Another example of such interactions involves Polycyclic Aromatic Hydrocarbons (PAHs), molecules that have gained significant attention in astrophysics and astrobiology. PAHs~(\Cref{fig:intro:pah}) are thought to be among the most abundant and widespread organic molecules in the universe. They are identified in space via their unique infrared emission spectra~\citep{sandford2013infrared} and can form in the extreme conditions of space.

\newpage

\section{Research Objectives, Outline and Contributions}

In the evolving landscape of deep learning, relational patterns present within data have become critical to tackle representation learning tasks on graph-structured data. 

With this perspective, this thesis explores the realm of Topological Neural Networks, highlighting the synergy between concepts from the field of algebraic topology to perform representation learning tasks on discrete topological spaces. The objectives of this work are structured to ensure both depth and breadth in understanding the higher-order interactions and their role in advancing neural architectures. Specifically, the goals of this thesis are: 

\begin{enumerate}
\item \textbf{Fundamentals:} Dive into the fields of graph theory and algebraic topology to understand how graph, simplicial complexes and cell complexes can be employed for constructing advanced neural architectures to perform representation learning tasks on topological spaces~(\Cref{chap:fundamentals}). 

\item \textbf{Challenges in Contemporary GNNs:} Dissect Graph Neural Networks (GNNs) to pinpoint their limitations, emphasizing the over-squashing phenomenon. By understanding the impact of network depth, width, and topology, the thesis sets the stage to demonstrate how topological approaches can mitigate the bottlenecks of graph neural networks when dealing with long-range interactions~(\Cref{chap:challenges}).

\item \textbf{Design Topological Extensions:} Develop novel architectures of topological neural networks as: Simplicial Attention Networks, Cell Attention Networks and Enhanced Topological Message Passing (CIN++), that integrate principles from algebraic topology to incorporate long-range and higher-order interactions~(\Cref{chap:tnns}).

\item \textbf{Empirical Evaluation:} Experimental assessments of the proposed models, confirming empirically the claims and comparing the proposed architectures with established state of the art methods in the field, highlighting the advantages and effectiveness of incorporating topological approaches in structured learning scenarios~(\Cref{chap:exp}).

\item \textbf{Broader Perspectives:} Implications of topological neural networks in various domains, while also discussing upon the limitations and provide future trajectories~(\Cref{chap:conclusions}). 
\end{enumerate}

\paragraph{Contributions}

This thesis grounds its contribution from five main researches: 

\begin{enumerate}
    \item {\em Francesco Di Giovanni, \textbf{Lorenzo Giusti}, Federico Barbero, Giulia Luise, Pietro Lio, and Michael
    Bronstein. On over-squashing in message passing neural networks: The impact of width, depth,
    and topology. In International Conference on Machine Learning, 2023.}~\citep{di2023over}. This work provides a theoretical understaning of one of the major bottleneck of message passing neural networks (the over-squashing phenomenon) from three different angles: the {\em width} (i.e., the number of hidden layers), the {\em depth} (i.e., the number layers) and the {\em topology} of the underlying graph. This work establish that while increasing the network's width can mitigate over-squashing, it does not aid in generalization, and depth (i.e., the number of hidden layers), on the other hand, is limited by vanishing gradients.  Most crucially, the paper highlights the profound impact of graph topology on over-squashing, revealing that it largely occurs between nodes with high commute times. In this study, L.G. and F.B. collaborated to empirically validate the theoretical concepts primarily developed by the CEO of oversquashing phenomena F.D.G.  Moreover, G.L., P.L., and M.B. provided insights with their expertise in the field as a senior supervisors of the research project.
    
    \item {\textbf{Lorenzo Giusti}*, Claudio Battiloro*, Paolo Di Lorenzo, Stefania Sardellitti, and Sergio Barbarossa.
    Simplicial Attention Networks\footnote{This work has been developed concurrently and independently from~\cite{goh2022simplicial}}.}~\citep{giusti2022simplicial}. This work extends the idea of masked self-attention for graph representation learning developed in Graph Attention Networks to data defined over simplicial complexes. In particular, the simplices have two distinct notions of neighbourhood: the upper and the lower ones, provided by the connectivity of the underlying domain. This implies that a simplex receives two types of messages, one coming from the upper neighbourhood and the other from the lower neighbouring simplices. To measure the relative importance of the information coming from messages sent by upper neighbouring simplices two independent masked self-attention mechanism are introduced in this work alongside a principled way to extract the harmonic component of a topological signal, according to the Hodge Theory. In this research, L.G. conceptualized and formulated the preliminary simplicial attention model. Further refinement of the model involved the participation of fratm C.B. which also wrote the method section of the work. L.G. implemented the experimental framework and executed the associated experiments. L.G. and C.B. equally contributed in design a model that respect the principles of the Hodge Theory. S.S. wrote the theoretical findings regarding the permutation equivariance and simplicial awareness of the model. P.d.L proposed the projection onto the harmonic subspace. S.B. provided a senior supervision to the overall research project.
    
    \item {\em \textbf{Lorenzo Giusti}, Claudio Battiloro, Lucia Testa, Paolo Di Lorenzo, Stefania Sardellitti, and Sergio
    Barbarossa. Cell attention networks, In International Joint Conference on Neural Networks (IJCNN).}~\citep{giusti2022cell}. This work further extends the masked self-attention scheme proposed in simplicial attention networks to introduce an architecture that tackles the task of graph representation learning by exploiting higher-order interactions provided by the rich connectivity structure provided by cell complexes. In particular, cell attention networks are able to lift data defined over graphs to features defined over the edges of a regular cell complex of dimension two. After the lifting operation, each layer of cell attention networks is composed by an attentinoal message passing scheme performed over the upper and lower neighbourhoods of the edges of the complex and a self-attention edge pooling procedure that selects the edges that contribute the most in the learning task using a differentiable pooling operation. In this study, L.G. was responsible for the design of cell attention networks and its conceptual framework. Additionally, L.G. developed the  experimental setup and carried out the related experiments. C.B. and L.T. contributed in writing a first version of the work. P.D.L., S.S., and S.B. provided a senior supervision of the work.
    
    \item  {\em \textbf{Lorenzo Giusti}, Teodora Reu, Francesco Ceccarelli, Cristian Bodnar, and Pietro Li\`{o}. CIN++:
    Enhancing topological message passing~\citep{giusti2023cin++}. } This work introduces CIN++, an extension of the Topological Message Passing scheme proposed with Cellular Isomorphism Networks (CINs), incorporating lower message exchanges within cell complexes. This augmentation enables better modeling of real-world complex interactions. The work also analizes  from a Weisfeiler and Lehman colouring procedure the faster convergence benefits in CINs by incorporating lower messages, allowing for direct ring interactions without waiting for upper messages. In this study, L.G. and C.B. were responsible for the initial conception and design of the enhanced topological message passing model. In this work, L.G. did not engage in studying color convergence speed between Cellular Isomorphism Networks and the method proposed in the research, which was done brilliantly by T.R. Also, L.G. conducted approximatively half of the experiments presented, the others were conducted by fratm F.C. The CEO of topological deep learning, C.B. alongside with the Jedi Master of life, P.L. provided a senior supervision to the work\footnote{For \textbf{any} concern about the relative contribution, feel free to reach out L.G. at \url{lorenzo.giusti@cern.ch}.}.
\end{enumerate}

Detailed mathematical proofs, supplementary information, and in-depth discussions supporting the content presented in the main chapters can be found in the appendices. Specifically,~\Cref{app:glossary}, contains the glossary of notation used throughout the thesis;~\Cref{app:on_oversq}, presents the proofs for the theoretical results for {\em Oversquashing in MPNNs};~\Cref{appendix:simmetries_tnns} provides a categorical approach to prove the symmetries of topological neural networks,~\Cref{app:comp_tan} provides a detailed analysis of the computational complexity and the number of learnable parameters involved in cell attention networks and ~\Cref{app:cinpp} contains the proof of CIN++'s expressivity alongside with insights on the enhanced topological message passing (CIN++) seen through the lens of Sheaf Theory.

\chapter{Background and Related works}\label{chap:fundamentals}

\section{Foundations of Graph Theory}\label{sec:background:graphs}

The mathematical abstraction that \textit{captures the essence of pairwise relationships between entities} takes the name of \textit{graph}. Graphs have been everywhere in various fields ranging from sociology (e.g., social networks,~\Cref{fig:back:real_graphs}, top-left), neuroscience (e.g., a brain network,~\Cref{fig:back:real_graphs}, top-right), natural sciences (e.g., molecular structures,~\Cref{fig:back:real_graphs}, bottom-left) to urban engineering (e.g., a transportation network,~\Cref{fig:back:real_graphs}, bottom-right). In real-world scenarios, an entity could symbolize a person in a social network, a neuron in a brain network, an atom in a molecule or a point of interest in an urban network. Moreover, connections could indicate friendships in social networks, synapses in brain networks, chemical bonds between two atoms in a molecule or roads in transportation networks~\citep{barabasi2013network}.

\begin{figure}[!htb]
    \centering
    \includegraphics[width=.8\textwidth]{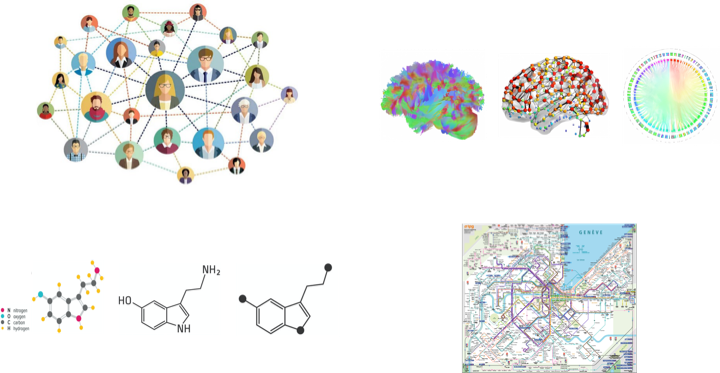}
    \caption{Illustrative examples of real-world scenarios where graphs play a key role: (top-left) A social network depicting friendships, (top-right) A brain network representing neural connections, (bottom-left) A molecular graph of a serotonin molecule showcasing atomic structures, and (bottom-right) The transportation network of Geneva, Switzerland. Adapted from~\cite{velickovic2021connected}.}
    \label{fig:back:real_graphs}
\end{figure}

\begin{definition}[Graph]
    A \textbf{graph} $\gph = (\V, \E)$ is a tuple composed of a set $\V$ of \textbf{nodes} (or vertices) representing the entities while relationships are encoded through a set $\E$ of \textbf{edges} (or links). For $u,v \in \V$, two nodes are \textit{connected through an edge} if $(u,v) \in \E$~\citep{bondy2008graph}. 
\end{definition}

\paragraph{Directedness} In $\gph$, the order of the node pair $(u,v) \in \E$ could be significant. It indicates a directed edge $e_i$ in which signals can only be propagated in one direction, from node $u$ to node $v$. When directionality matters, $\gph$ is said to be a \textbf{directed graph (Digraph)}~(\Cref{fig:back:direct}, right). In real-world applications, digraphs are fundamental structures to visualise and analyse neural information flows within brain networks~\citep{fornito2016fundamentals}. In this framework, each neuron corresponds to a node, and a directed edge $(u,v)$ represent a synapse where a pre-synaptic neuron $u$ transmits signals to a post-synaptic neuron $v$.

Conversely, when the sequence of the node pair $(u,v) \in \E$ is not significant, it results in an \textit{undirected edge} $e_i = (u,v)$ and $\gph$ is referred to be an \textbf{undirected graph}~(\Cref{fig:back:direct}, left). This implies that the relationship between nodes $u$ and $v$ is mutual, with no inherent order or direction. Undirected graphs are especially prevalent in modeling molecular structures, and study the topological properties of molecules, where atoms (nodes) are bound by chemical bonds (edges) without a notion of direction~\citep{trinajstic2018chemical}. In biochemistry, graph-based representation forms the foundation for a variety of applications, including the study of molecular dynamics, chemical reactivity, and structural biology.

\begin{figure}[!htb]
    \centering
    \includegraphics[width=.8\textwidth]{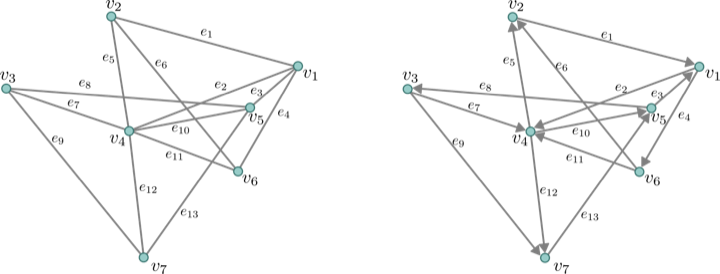}
    \caption{Comparative visualization of graph structures: (left) An \textit{undirected graph}, exemplifying mutual relationships without directionality, commonly used in molecular structures; (right) A \textit{directed graph (Digraph)}, representing one-way relationships, often observed in neural information flows within brain networks.}
    \label{fig:back:direct}
\end{figure}

\paragraph{Connectivity Representations}
The connectivity structure of $\gph$ is not limited to a visual characterization or a set-based definition. In fact, it can be precisely represented using the \textbf{adjacency matrix} $\Anorm$ and the \textbf{incidence matrix} $\Bnorm$, enabling a wide range of algebraic and analytical operations on graphs. 
\begin{definition}[Adjacency matrix]\label{def:adjacency}
For a graph $\gph=(\V,\E)$ with $n$ nodes, the adjacency matrix $\Anorm \in \mathbb{R}^{n\times n}$ have unitary entry in $\Anorm_{uv}$ if there is an edge between node $u$ and node $v$, and $0$ otherwise:

\begin{equation}\label{eq:adj}
    \Anorm_{uv} = \begin{cases} 
    1 & \text{if } (u,v) \in \E, \\
    0 & \text{otherwise}.
    \end{cases}
\end{equation}

\end{definition}

For undirected graphs, $\Anorm$ is symmetric (i.e., $\Anorm = \Anorm^\intercal$). In the case of directed graphs (or digraphs), $\Anorm$ can be asymmetric, indicating the direction of the edges. The adjacency matrix defined in \Cref{eq:adj} can also be generalized for graphs in which the edges are equipped with a scalar weight $w_{e_i}$ (weighted graphs) for $e_i = (u,v) \in \E$. In this case, the non-zero entries are replaced as: $\Anorm_{e_i} = w_{e_i}$~\citep{bondy2008graph}.  \emph{This work will focus mainly on connected, undirected and unweighted graphs.}

It exists a normalized representation of $\Anorm$, denoted as $\tilde{\Anorm} = \Dnorm^{-1/2} \Anorm \Dnorm^{-1/2}$,  where $\Dnorm$ is the degree matrix, a diagonal matrix such that $d_u$ is the \textit{degree} of node $u$, the number of its incident edges. $\tilde{\Anorm}$ is necessary to mitigate the influence of node degrees, thus allowing for a more uniform influence distribution across nodes in various graph algorithms~\citep{chung1997spectral}. Particularly, it is preferrable to use $\tilde{\Anorm}$ in contexts where the scale or magnitude of node connections could induce biases, ensuring that the intrinsic topology of the graph is preserved without being dominated by high-degree nodes like spectral clustering~\citep{von2007tutorial} or graph convolutional networks~\citep{kipf2017graph}.

To capture the topological characteristics of $\gph$ beyond the adjacency structure, the {\em incidence matrix} acts as map between each node $u$ and the edges $e_i$ that have $u$ as one of its endpoints.

\begin{definition}[Incidence matrix]\label{def:incidence} The incidence matrix $\Bnorm \in \mathbb{R}^{n \times e}$ (where $e$ is the number of edges) encodes, for each edge, which nodes are the endpoints:

\begin{equation}\label{eq:boundary}
    \Bnorm_{ij} = \begin{cases} 
    1 & \text{if node } i \text{ is on the tail of edge } j, \\
-1 & \text{if node } i \text{ is on the head of edge } j, \\
0 & \text{otherwise}.
    \end{cases}
\end{equation}
\end{definition}

In \Cref{eq:boundary}, the rows of $\Bnorm$ correspond to the nodes, while the columns represent the edges. The non-zero entries in each column denote the two nodes connected by that particular edge. In directed graphs, positive and negative entries indicate the tail and head of each directed edge, respectively. In~\Cref{fig:back:matrices} it is shown an example of a graph $\gph$ alongside its adjacency matrix $\Anorm$ and its incidence matrix $\Bnorm$.

\begin{figure}[!htb]
    \centering
    \includegraphics[width=.9\textwidth]{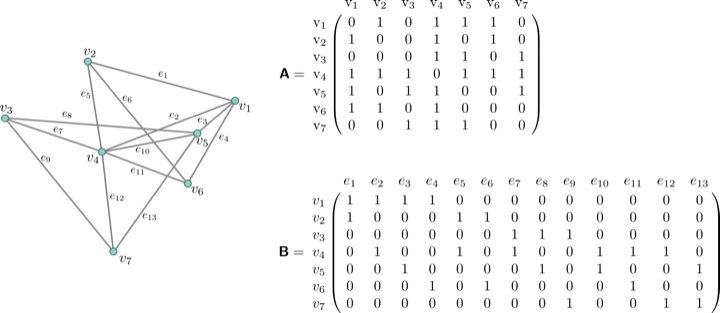}
    \caption{Algebraic Representations of an Undirected Graph: (left) Graph $\gph$; (top-right) its adjacency matrix $\Anorm$ and (bottom-right) unsigned incidence matrix $\Bnorm$.}
    \label{fig:back:matrices}
\end{figure}

Both the adjacency matrix $\Anorm$ and the incidence matrix $\Bnorm$ not only provide a structured way to visualize the graph's connectivity but are key components in applications of graph theory, such as determining the presence of specific subgraphs or analyzing graph properties and behaviors~\citep{barabasi2013network}.

\paragraph{Spectral Graph Theory}

Understanding the spectral properties of these matrices provides deep insights of features such as connectivity, clustering, and centrality, as well as the overall structural patterns within the graph. For instance, the spectrum of the adjacency matrix can reveal properties related to the graph's connectivity, its diameter, and even community structures within the graph. Similarly, building on this spectral framework, let $\Lnorm$ be the \textbf{Laplacian matrix}, a linear operator that had a key role in advancing the fields of spectral graph theory~\citep{chung1997spectral}, graph signal processing~\citep{shuman2013emerging} and acts as a bridge towards graph neural networks~\citep{gama2020graphs}.

\begin{definition}[Laplacian matrix]\label{def:laplacian}
    Given a graph $\gph = (\V, \E)$ having adjacency matrix $\Anorm$ and incidence matrix $\Bnorm$, the Laplacian matrix is defined as:
    \begin{equation}
        \Lnorm = \Bnorm \Bnorm^\intercal = \Dnorm - \Anorm.
    \end{equation}
\end{definition}

The Laplacian matrix, is a symmetric, positive semi-definite real matrix with non-negative eigenvalues $0=\lambda_0 \leq \lambda_1 \leq \lambda_2 \leq \ldots \leq \lambda_{n-1}$. The corresponding eigenvectors are denoted by $\mathbf{u}_0, \mathbf{u}_1, \ldots, \mathbf{u}_{n-1}$. Its eigendecomposition, $\Lnorm = \Unorm \boldsymbol{\mathsf{\Lambda}} \Unorm^\intercal$ provides insight into many graph properties, such as connectivity and expansion. As for the adjacency matrix, the Laplacian matrix admits a normalised representation: $\tilde{\Lnorm} = \Dnorm^{-1/2} \Lnorm \Dnorm^{-1/2} = \Inorm - \tilde{\Anorm}$, where $\Inorm$ is the {\em identity matrix}.

\begin{figure}[!htb]
    \centering
    \includegraphics[width=.5\textwidth]{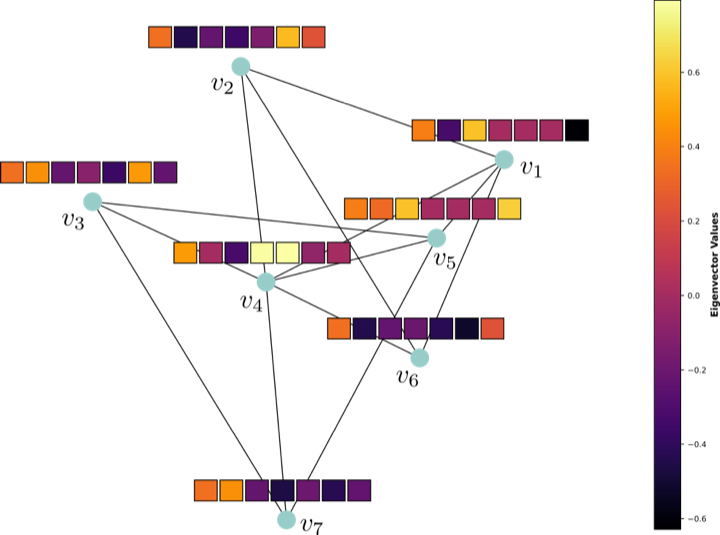}
    \caption{In a graph $\gph$, the set of orthonormal eigenvectors $\Unorm$ of the graph Laplacian $\Lnorm$ provide a unique fingerprint regarding the position of the node within the graph.}
    \label{fig:back:eigenvects}
\end{figure}

The spectral decomposition of the Laplacian matrix holds a wide range of applications. One of such is the \textit{spectral clustering}~\citep{von2007tutorial}. In particular, the eigenvalues ($\lambda_i$) and their associated eigenvectors ($\mathbf{u}_i$) reveal a low-dimensional fingerprint that reflects the community structure of the graph~(\Cref{fig:back:eigenvects}). The clustering is then obtained by applying a standard clustering algorithm, like {\em $k$-means}~\citep{macqueen1967some, lloyd1982least}, on the eigenvectors corresponding to the smallest non-zero eigenvalues, revealing clusters "hidden" in a graph. For example, as shown in~\Cref{fig:clust}, data scattered in a circular shape with a cluster at its center, traditional clustering methods might struggle, but spectral clustering can unveil the circle's structure and identify both clusters distinctly~\citep{ng2002spectral}.

\paragraph{Connectivity, Expansion and Cheeger’s Inequality:}

\begin{figure}[!t]
     \centering
     \begin{subfigure}[b]{0.45\textwidth}
         \centering
         \includegraphics[width=\textwidth]{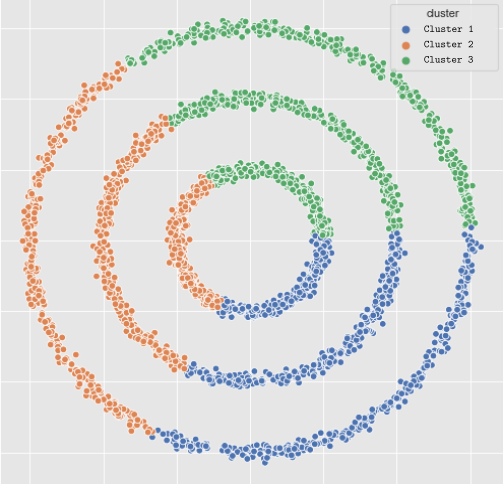}
         \caption{Standard k-means clustering.}
         \label{fig:kmeans}
     \end{subfigure}
     \hspace{5pt}
     \begin{subfigure}[b]{0.45\textwidth}
         \centering
         \includegraphics[width=\textwidth]{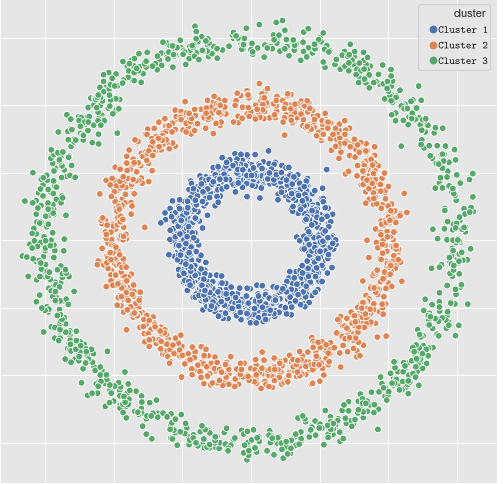}
         \caption{Spectral clustering.}
         \label{fig:spectral_clust}
     \end{subfigure}
    \caption{Comparison between (a) standard k-means clustering  and (b) spectral clustering  for a set of data with three distinct clusters formed by three nested circles. While k-means struggles to identify the true structure of the data, spectral clustering succeeds in revealing the patterns.}
    \label{fig:clust}
\end{figure}

The smallest eigenvalue $\lambda_0$ is always 0, and its multiplicity corresponds to the number of connected components in the graph. Moreover,the smallest positibe eigenvalue of the Laplacian matrix, $\lambda_1$ is called the \textbf{spectral gap} and is proportional to a measure of the graph's connectivity. Specifically, the smaller $\lambda_1$ is, the less connected the graph is. This is because a small$\lambda_1$ signifies a large spectral gap, indicating sparse connections between nodes. Conversely, if $\lambda_1$ is large, it means the spectral gap is small, suggesting a well-connected graph~\citep{chung1997spectral}. The spectral gap, is often used to gauge the graph's expansion properties via a quantity known as the \textbf{Cheeger constant}~\citep{cheeger69lower}.

\begin{definition}[Cheeger constant]
For a graph $\gph$, the Cheeger constant is
\begin{equation}
    h(\gph) = \min_{\mathsf{U}\subset \V}\frac{\lvert \{(u,v)\in\E: u\in \mathsf{U}, v\in \V\setminus \mathsf{U}\}\rvert}{\min(\mathrm{vol}(\mathsf{U}),\mathrm{vol}(\V\setminus \mathsf{U}))},
\end{equation}
\end{definition}
\noindent 

where $\mathrm{vol}(\mathsf{U}) = \sum_{u\in\mathsf{U}}d_u$, with $d_u$ the degree of node $u$. In partiuclar, a profound relationship between the eigenvalues of $\Lnorm$ and the expansion propoerties of $\gph$ is known as \emph{Cheeger inequality}: $\lambda_1 / 2 \leq h(\gph) \leq \sqrt{2 \lambda_1}$~\citep{cheeger69lower}. The previous result provides a connection between the algebraic properties of a graph through its eigendecomposition and its combinatorial structure via its expansion properties. {\em The smaller $h(\gph)$ is, the more it intimates the presence of a discernible bottleneck—illustrating two predominant node clusters sparsely interconnected. Conversely, a large value of $h(\gph)$ underscores a ubiquity of interconnections irrespective of any conceivable separation of the set of nodes, signifying the graph's resistance to simple partitioning.} 
\newpage
\section{Graph Signal Processing}\label{sec:gsp}

In real world graphs, nodes are often equipped with information that specify certain features of the entities they represent. For instance, let $G=(\V, \E)$ be a graph representing a social network. Each user is represented as a node $v \in \V$ and the profile information of such users as a vector $\xnorm_v$ capturing their interests, activities, or demographic details~\citep{konstas2009social}. In a molecular graph, the signal on each node (atom) might outline various atomic characteristics such as atomic weight, charge, or hybridization state~\citep{gilmer2017neural}.

\begin{figure}[!htb]
    \centering
    \includegraphics[width=.5\textwidth]{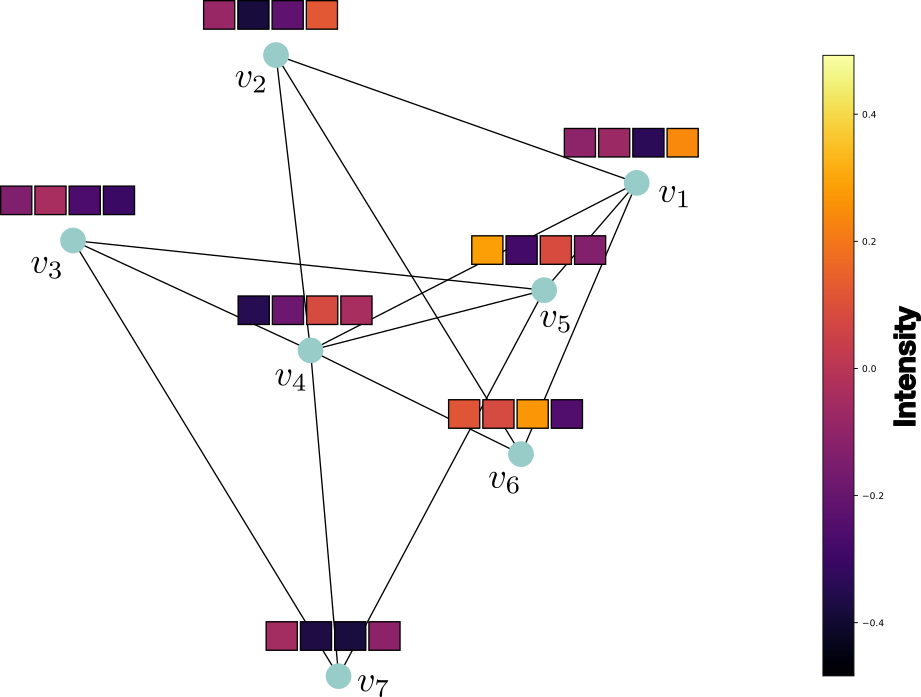}
    \caption{Illustration of a graph signal on a graph $\gph$.  Each node $v$ is associated with a four-dimensional feature vector $\mathbf{h}_u$.}
    \label{fig:back:graph_signal}
\end{figure}

In applied graph theory, this is achieved by extending the notion of temporal or spatial domains to a signal onto the domain defined by the graph topology. 

\begin{definition}
    A \textbf{graph signal} is a function, $s: \V \rightarrow \mathbb{R}^d$, that maps each node $v \in \V$ of a graph $\gph=(\V,\E)$ to a vector $\xnorm_v \in \mathbb{R}^d$. 
\end{definition}

In simpler terms, it assigns a $d$-dimensional vector to each node of $\gph$, thus enriching the node with additional information or features~\citep{shuman2013emerging}. In~\Cref{fig:back:graph_signal} it is shown a pictorial example of a four dimensional graph signal. Mathematically, if $\gph$ has $n$ nodes, a graph signal can be represented as a matrix $\Xnorm \in \mathbb{R}^{n \times d}$, where each row corresponds to the feature vector associated with a node. Later, it will be shown that this representation aligns well with graph representation learning frameworks, where node classification~\citep{kipf2017graph}, graph classification~\citep{xu2019powerful}, or link prediction~\citep{zhang2018link} grounded in the fact that $\Xnorm$ respects certain symmety properties.

By considering graph signals, one can perform graph-based signal processing, combining traditional signal processing techniques with the topological and structural characteristics of graphs.

\paragraph{Graph Shift Operator}
The \textbf{graph shift operator} defines localized operations on graph signals and it has been an integral component of the graph signal processing achievements. It plays a role analogous to the time shift in classical signal processing, encapsulating local interactions in the graph by capturing information from signal {\em shifts} across the neighboring nodes of a graph ~\citep{shuman2013emerging}.

\begin{figure}[!htb]
    \centering
    \includegraphics[width=\textwidth]{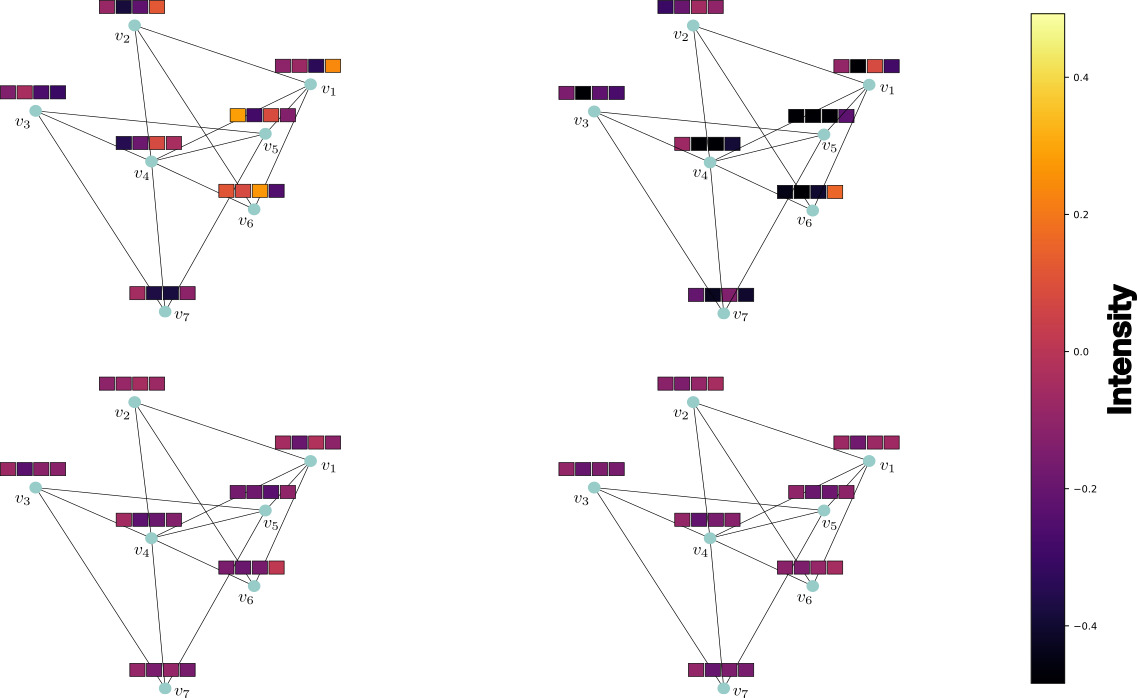}
    \caption{Visualization of the impact of the graph shift operator $\Snorm$  on the propagation of the signal across the neighbourhoods of $\gph$. The top-left figure shows $\Snorm = \Anorm$; the top-right figure shows $\Snorm = \widetilde{\Anorm}$; the bottom-left figure shows $\Snorm = \Lnorm$; the bottom-right figure shows $\Snorm = \widetilde{\Lnorm}$.}
    \label{fig:back:GSO}
\end{figure}

Formally, a graph shift operator is represented by a matrix $\Snorm \in \mathbb{R}^{n \times n}$ that incorporate the local interactions or connectivity structure of a graph $\gph=(\V,\E)$. The most prevalent choices for the graph shift operator are the adjacency matrix $\Anorm$, the Laplacian matrix $\Lnorm$ and their normalized version~\citep{ortega2018graph}. A pictorial overview of the effect of the particular choice for $\Snorm$ is depicted in~\Cref{fig:back:GSO}.

When a graph shift operation is applied to a graph signal $\Xnorm$, it transforms it as:
\begin{equation}
\Znorm = \Snorm \Xnorm
\end{equation}

The result, $\Znorm$, is a new graph signal where the value at each node is a localized combination of its neighbors' values, weighted by the structure captured in $\Snorm$. Intuitively, this can be thought of as a signal {\em propagation or diffusion} across the graph, mirroring the temporal shift of signals in the traditional signal processing paradigm~\citep{sandryhaila2013discrete}.

\

By leveraging powers of the graph shift operator  (i.e., $\Snorm^k$ for integer $k$), {\em one can model the effect of a filter at different local scales on the graph, capturing the influence of nodes further away in the graph topology}. This property makes the graph shift operator a versatile tool for designing  defining more complex graph signal processing operations that are sensitive to the underlying graph structure~\citep{hammond2011wavelets}. Its analogy with the time shift in classical signal processing links traditional methods with the complexities and nuances of processing signals on irregular, graph-structured data domains~\citep{gama2020graphs}.

\paragraph{Graph Fourier Transform}

The {\em graph Fourier transform acts as a bridge, from classical signal processing techniques towards their extensions to graph signals}. Analogous to the classical Fourier Transform, which decomposes signals into a basis of sines and cosines, the graph Fourier transform decomposes graph signals based on the eigenvectors of the graph Laplacian matrix~\citep{shuman2013emerging}.

Given the Laplacian matrix $\Lnorm$ of a graph $\gph=(\V,\E)$ and its eigendecomposition $\Lnorm=\Unorm \boldsymbol{\mathsf{\Lambda}}  \Unorm^\intercal$, where $\Unorm$ consists of the eigenvectors $\mathbf{u}_0, \mathbf{u}_1, \ldots, \mathbf{u}_{n-1}$ and $\boldsymbol{\mathsf{\Lambda}}$ is a diagonal matrix containing the corresponding eigenvalues $\lambda_0 \leq \lambda_1 \leq \lambda_2 \leq \ldots \leq \lambda_{n-1}$~\citep{chung1997spectral}.

These eigenvectors serve as the orthogonal basis functions in the graph spectral domain. Given a graph signal $\Xnorm$, its Graph Fourier Transform is given by:
\begin{equation}
    \hat{\Xnorm} = \Unorm^\intercal \Xnorm
\end{equation}
where $\hat{\Xnorm}$ represents the graph signal in the spectral domain.

The inverse Graph Fourier Transform, which retrieves the original graph signal from its spectral representation, is then:
\begin{equation}
   \Xnorm = \Unorm \hat{\Xnorm}
\end{equation}

This transformation has been critical to understand and process graph signals. It allows various graph signal processing tasks by considering operations to be performed in the spectral domain, which can provide insights into the signal's characteristics regarding the graph's connectivity by linking the graph's topological structure (through its Laplacian's eigenvectors) with the intrinsic properties of the signals residing on the graph~\citep{ortega2018graph}.

\

Additionally, similar to classical signal processing, operations like filtering can be efficiently achieved in the spectral domain, which, when mapped back to the vertex domain, translates to localized operations on the graph~\citep{hammond2011wavelets}.

\paragraph{Graph Filters}

Graph filters serve as essential tools in graph signal processing. They provide a dynamic way to understand and manipulate the propagation of information within the graph, much like classical filters operate on time or frequency-domain signals. These operators can modify graph signals either directly in the vertex domain or in the spectral domain by leveraging the eigendecomposition of the graph Laplacian~\citep{shuman2013emerging}. For example, is possible to represent and analyze how quickly and to which users this information disseminates in a social network as a graph signal. The graph filter then acts like a lens, allowing to 'zoom in' or 'zoom out' to see how strongly each user is influenced by the information, or to simulate what might happen if the speed or pattern of the spread changes. Given a graph signal $\Xnorm$, a filter $g$ in the spatial domain operates by directly modifying the signal values on the nodes, often accounting for their neighboring values. This is expressed as:

\begin{equation}
	\Znorm = g(\Snorm)\Xnorm,
\end{equation}

where $\Znorm$ is the filtered graph signal, and the operation $g(\Snorm)$ represents the local influence of neighboring nodes on the original signal values, encoding properties of the graph topology.

\begin{figure}[t]
    \centering
    \includegraphics[width=.9\textwidth]{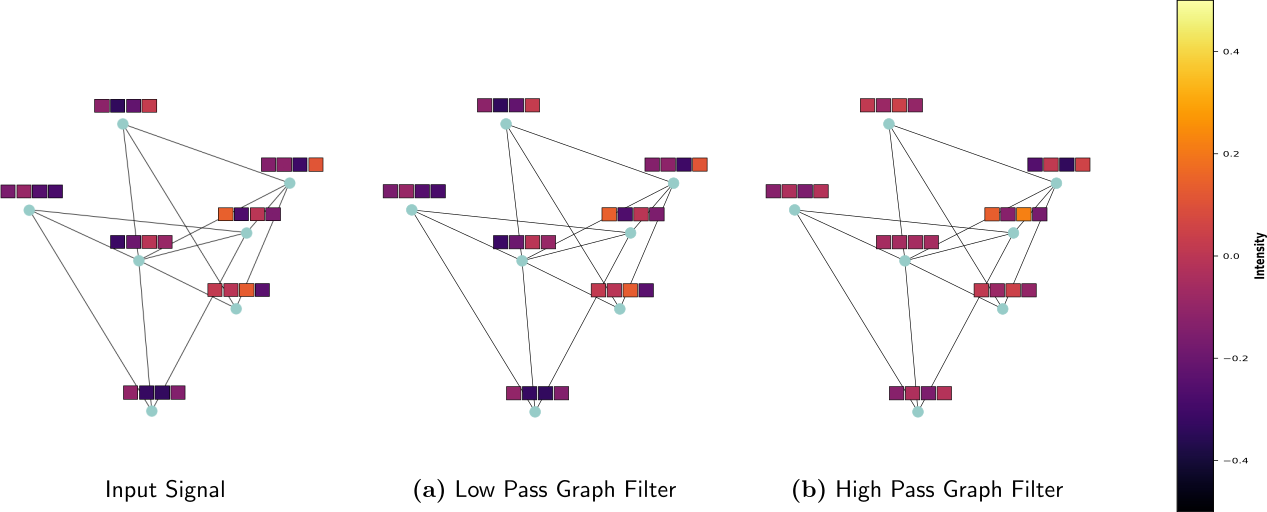}
    \caption{(a) Application of a Low Pass Filter (LPF) on a graph signal, retaining only the prominent, low-frequency features. (b) Application of a High Pass Filter (HPF), emphasizing the fine-grained, high-frequency features of the graph signal.}
    \label{fig:back:filters}
\end{figure}

On the other hand, filtering in the spectral domain involves manipulating the graph Fourier coefficients of the signal, an approach that echoes the filtering in the frequency domain in classical signal processing. Given the Graph Fourier Transform $\hat{\Xnorm} = \Unorm^\intercal \Xnorm$, a spectral filter $\tilde{g}$ is applied as:

\begin{equation}
	\hat{\Znorm} = \tilde{g}(\boldsymbol{\mathsf{\Lambda}}) \hat{\Xnorm},
\end{equation}

where $g(\boldsymbol{\mathsf{\Lambda}})$ is a diagonal matrix with entries formed by applying the filter function $\tilde{g}$ to the eigenvalues of $\Lnorm$. The filtered graph signal in the vertex domain is then recovered using the inverse Graph Fourier Transform: $\Znorm = \Unorm \hat{\Znorm}$ ~\citep{hammond2011wavelets}~(\Cref{fig:back:filters}).

Graph filters can be designed to enhance or suppress certain spectral components of the graph signal to manage tasks like noise reduction, signal smoothing, or feature enhancement. Notably, the design and application of these filters consider the graph's structure, making them adaptable to various graph topologies and catering to the specificities of the underlying data~\citep{ortega2018graph}, offering a powerful paradigm for processing and analyzing signals on graph structures, bridging the gap between classical signal processing techniques and the emerging challenges posed by data defined on irregular domains~\citep{sandryhaila2013discrete}.

\paragraph{Graph Convolution}

Graph convolution can be seen as an extension of classical convolution to graph-structured data when dealing with data that does not naturally fit into a regular grid. Instead of sliding a kernel across a regular grid as in the classical convolution, graph convolution operates by aggregating information from a node's local neighborhood, taking into account both the signal values and the underlying graph structure~\citep{shuman2013emerging}.

\begin{figure}[!htb]
    \centering
    \includegraphics[width=\textwidth]{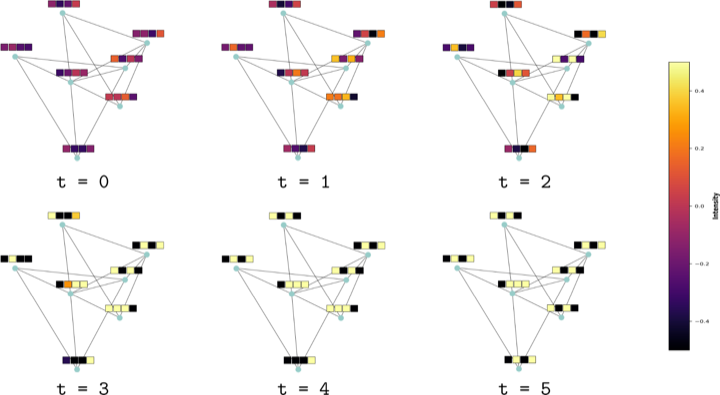}
    \caption{ The discrete diffusion process across the graph domain via Graph Convolution using the Laplacian as a graph shift operator. The Laplacian captures the local variations in the graph, and the convolution operation simulates the spread of information, coherently to how diffusion acts in physical systems. Notice how the signal $\xnorm$ starts to stabilizes to a steady state after a fixed $t_0$.}
    \label{fig:back:graph_diffusion}
\end{figure}

At its core, this operation defines how localized weights, analogous to those in a neural network kernel, interact with a graph signal. Given a graph signal $\Xnorm$ and a graph shift operator $\Snorm$, the graph convolution is typically expressed as a graph filtering operation, where the function $g(\Snorm)$, representing the graph filter, is a polynomial of the graph shift operator. 

\newpage

This polynomial expansion enables the aggregation of neighborhood information up to a specified degree, controlled by the polynomial's order, and its coefficients can be likened to the weights in a classical convolutional kernel~\citep{ortega2018graph}.
Mathematically,

\begin{equation}
    \Znorm = \sum_{t=0}^{T} g_t(\Snorm^t)\Xnorm,
\end{equation}

where $T$, representing the polynomial's order, serves as a conceptual measure of the {\em diffusion time}. This term indicates the reach of the convolution across the graph, specifying how far the information from a node is propagated through its neighborhood. The graph convolution can bee seen as a discrete analogous of the diffusion operation on curved surfaces via the Laplace operator~\citep{bronstein2017geometric}. It aggregates information from local neighborhoods, while the polynomial nature allows the convolution to consider information from extended neighborhoods (further hops away) by including higher-degree termsr~(\Cref{fig:back:graph_diffusion}).

This approach serves as the foundation for convolutional operations in Graph Neural Networks (GNNs)~\citep{bruna2013spectral, defferrard2016convolutional, kipf2017graph, hamilton2017inductive}. 
\newpage
\section{Graph Neural Networks}\label{sec:preliminaries_mpnns}
Let $\gph$ be a graph with nodes $\V$ and edges $\E$. The connectivity is encoded in the adjacency matrix $\mathbf{A}\in\mathbb{R}^{n\times n}$, where $n$ represents the number of nodes. Assume that $\gph$ is undirected, connected, and that there are features $\{\mathbf{h}_{v}^{(0)}\}_{v\in \V}\subset \R^{d}$. %with matrix representation $\Hi^{(0)}\in\R^{n\times p}$, 
%with $p$ the {\em hidden dimension}. % and that we wish to solve a graph-level task by predicting a scalar quantity (or label) y_\gphy_\gph. %In the following, we assume to have access to the ground-truth values \{y_{\gph_i}\}\{y_{\gph_i}\} for a family \{(\gph_i,\Hi^{(0)}_i)\}\{(\gph_i,\Hi^{(0)}_i)\}, and that the goal amounts to attaining a prediction for an unknown pair (\gph,\Hi^{(0)})(\gph,\Hi^{(0)}). 
%In this scenario, GNNs have been proven to be a powerful tool. 
Graph Neural Networks (GNNs) are functions of the form $\mathsf{GNN}_{\theta}:(\gph,\{\mathbf{h}_v^{(0)}\}) \mapsto y_\gph$, with parameters $\theta$ estimated through training, where the output $y_\gph$ can be a node-level or graph-level prediction. The most studied class of GNNs, known as the Message Passing Neural Network ($\MPNN$)~\citep{gilmer2017neural}.

The $\MPNN$ computes node representations by performing $m$ independent message-passing rounds, formulated as:

\begin{equation}\label{eq:message-passing-scheme}
    \mathbf{h}_{v}^{\textsf{new}} = \mathsf{com}(\mathbf{h}_{v}, \underset{u \in \mathcal{N}(v)}{\mathsf{agg}}(\mathbf{h}_{u})),
\end{equation}

\noindent where  $\mathsf{agg}$ is some {\em aggregation} function invariant to node permutation, while $\com$ {\em combines} the node's current state with messages from its neighbours. Usually in $\MPNN$s, the aggregation takes the form:

\begin{equation}\label{eq:message-passing-operator}
    \underset{u \in \mathcal{N}(v)}{\mathsf{agg}}(\mathbf{h}_{u}) = \sum_{u} \mathsf{m} \big (\mathbf{h}_{u}, \mathbf{h}_{v}, \Snorm_{vu} \big ),
\end{equation}

\noindent where  $\Snorm\in\R^{n\times n}$ is a {\bf Graph Shift Operator}, meaning that $\Snorm_{vu} \neq 0$ if and only if $(v,u)\in \mathsf{E}$. Typically, $\Snorm$ is a (normalized) adjacency matrix that is also referred to as message-passing matrix. In~\Cref{eq:message-passing-operator}, $\mathsf{m}$ is the  {\bf message} function. In particular it is responsible to dispatch the information across the neighbourhoods. Although the particular choice of the message passing matrix $\Snorm$, the particular istance of the $\MPNN$ (i.e., GCN~\citep{kipf2017graph}, GAT~\citep{velivckovic2018graph}, SAGE~\citep{hamilton2017inductive}, GIN~\citep{xu2019powerful}) is fully determined by the choice of  $\mathsf{m}$ and  $\mathsf{com}$.

\begin{figure}[t]
    \centering
    \includegraphics[width=.3\textwidth]{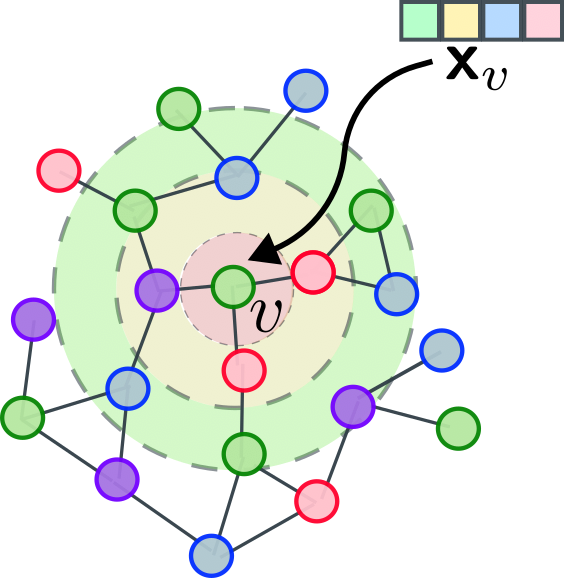}
     \caption{Illustration of a 3-hop receptive field of a node $v$ having features $\xnorm_v$. An $\MPNN$ must have at least three layers to include information coming from nodes $u$ that are no more that 3-hops away from $v$.}
     \label{fig:mpnn_rec_field}
\end{figure}

The common ground  of $\MPNN$s is that they all aggregate messages over the neighbours, such that in a layer, only nodes connected via an edge exchange messages~(\Cref{fig:mpnn_rec_field}).
%sum or mean. % and \mathsf{com}^{(t)}\mathsf{com}^{(t)} learnable functions and \{\{\cdot\}\}\{\{\cdot\}\} denoting a {\em multi-set}.
% We denote the final node representations (after mm layers) by 
% \begin{equation}
%     \mathbf{H}^{(m)} = \MPNN^{(m)}(\gph,\mathbf{H}^{(0)}).
% \end{equation}
\noindent %The final prediction is then achieved via a readout map: h_\gph = \mathsf{read}(\{\mathbf{h}_{v}^{(m)}:\,\, v\in\V\})h_\gph = \mathsf{read}(\{\mathbf{h}_{v}^{(m)}:\,\, v\in\V\}). 
%
%the {\em computational graph} used across the architecture %for sending messages 
%{\em coincides with the input one} $\gph$. 
This presents two advantages: $\MPNN$s  can capture graph-induced `short-range' dependencies well, %since node features are updated at each layer based on the information contained in the 1-hop -- 
%in fact, in \Cref{sec:theory_2} we provide further theoretical evidence to this property -- % will further prove how and why \MPNN\MPNN succeed at representing local interactions -- 
and are efficient, since they can leverage the sparsity of the %input 
graph. % to carry \mathcal{O}(\lvert \mathsf{E}\rvert)\mathcal{O}(\lvert \mathsf{E}\rvert)-operations. %per layer (if we ignore the hidden dimension ff). 
Nonetheless, $\MPNN$s have been shown to suffer from a few drawbacks, including {\em limited expressive power} and {\em over-squashing}. The problem of expressive power %, now often studied via the lenses of the WL-color refinement algorithm 
stems from the equivalence of $\MPNN$s to the Weisfeiler-Leman graph isomorphism test 
\citep{xu2019powerful, morris2019weisfeiler}, which has been studied extensively %in past years to study the theoretical properties of GNNs, 
\citep{jegelka2022theory}.  
%
%On the other hand, the phenomenon of over-squashing, which is the main focus of this work, is more elusive and less understood. We review what is currently known about it in the next subsection. 

%Less is known about the nature of over-squashing and ways to alleviate it, which represent the %theoretical and practical 
%main focus of this work. % we aim to investigate analytically and propose solutions for, in this work. %In this work, we focus on this limitation of \MPNN\MPNN, investigate analytically its effects and propose efficient but theoretically sound ways to mitigate it. 

\section{Challenges of Graph Neural Networks}\label{sec:oversquashing}

\paragraph{Long-Range Interactions} In an $\MPNN$, information from neighboring nodes is aggregated such that for a node $v$ to incorporate features from a distance $r$, the network requires at least $r$ layers~\citep{barcelo2019logical}~(\Cref{fig:mpnn_lr_interactions}). However, with the expansion of a node's receptive field, it has been observed that $\MPNN$s can lead to a phenomenon termed as {\em over-squashing}, where there is a potential loss of information~\citep{alon2020bottleneck}. %\MPNN\MPNNs are not suited to send messages across {\em distant} nodes due to the receptive field of a node vv generally expanding at an exponential rate with the radius~\citep{alon2020bottleneck} -- leading to information being `{\bf over-squashed}' in fixed-sized node representations. 

\begin{figure}[!htb]
    \centering
    \includegraphics[width=.3\textwidth]{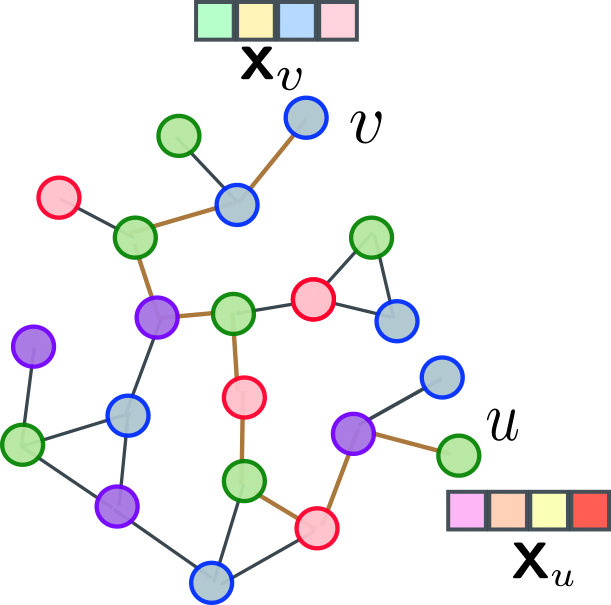}
     \caption{Pictorial overview of long-range interactions. Since the geodesic distance between $v$ and $u$, is equal to the diameter of the graph (i.e., $d_{\gph}(v,u) = 9$), an $\MPNN$ must have at least {\em nine} layers to include information coming from nodes $u$ when updating the representation of node $v$. This would causes $v$ to receive an exponential number of messages over-squashed into fixed size vectors, reducing the sensitivity of the underlying $\MPNN$.}
     \label{fig:mpnn_lr_interactions}
\end{figure}

\begin{proposition}[Sensitivity of $\MPNN$s]
    Consider an $\MPNN$ with a message-passing matrix $\Anorm$~(\Cref{eq:message-passing-operator}) and {\em scalar} features. Let also $v,u$ be a pair of nodes at distance $r$. The sensitivity of node features can be quantified as $\lvert \partial h_{v}^{(r)} / \partial h_{u}^{(0)}\rvert \leq c\cdot (\Anorm^{r})_{vu},$ 
\noindent with $c$ a constant depending on the Lipschitz regularity of the model. If %Accordingly, for `distant' nodes, we may have that 
$(\Anorm^{r})_{vu}$ decays exponentially with $r$, % (or is exponentially smaller than \Anorm^{k}_{v\tilde{u}}\Anorm^{k}_{v\tilde{u}} whenever v,\tilde{u}v,\tilde{u} are close to each other). In this case, 
then the feature of $v$ is insensitive to the information contained at $u$. 
\end{proposition}
Moreover,~\citet{topping2022understanding} showed that over-squashing is related to the existence of edges with {\em high negative curvature}. Such characterization though only applies to propagation of information up to 2 hops. % since it provides bounds on the Jacobian of features h_{v}^{(t_0 + 2)}h_{v}^{(t_0 + 2)} with respect to node features h_{u}^{(t_0)}h_{u}^{(t_0)}, for v,uv,u such that d_{\gph}(v,u) =2d_{\gph}(v,u) =2. 

\paragraph{Higher-Order Interactions}\label{sec:group_interactions}

In graph representation learning, $\MPNN$s have focused on mutual node relationships, posing a challenge in modeling higher-order interactions. To understand this, consider $\mathbf{h}_{\mathsf{S}}$ as feature vectors representing interactions across subsets of nodes $\mathsf{S} \subseteq \V$ such that $\vert \mathsf{S} \vert = k$~\citep{majhi2022dynamics, bick2023higher}. To capture these interactions, one can aggregate features from a subset of nodes using a function such that $\mathbf{h}_{\mathsf{S}} = \agg({\mathbf{h}_{v}: v \in \mathsf{S}})$, where $\mathbf{h}_{\mathsf{S}}$ contains {\em collective state of nodes} in $\mathsf{S}$. 
As demonstrated by~\cite{perotti2015hierarchical}, the collective influence of nodes in a subset $\mathsf{S}$ on the entire graph $\gph$ can be quantified through the measure:
\begin{equation}\label{eq:interaction_strength}
I(\mathsf{S}) = \sum_{v \in \mathsf{S}} I(\mathbf{h}_{\mathsf{S}}; \mathbf{h}_{v}), 
\end{equation}

where $I(\mathbf{h}_{\mathsf{S}}; \mathbf{h}_{v})$ is the {\em mutual information} between $\mathbf{h}_{\mathsf{S}}$ and $\mathbf{h}_{v}$. If $I(\mathsf{S})$ is significantly greater than $0$, then the representation $\mathbf{h}_{\mathsf{S}}$ contributes beyond the individual information provided node representations.

\begin{figure}[!htb]
    \centering
    \includegraphics[width=.3\textwidth]{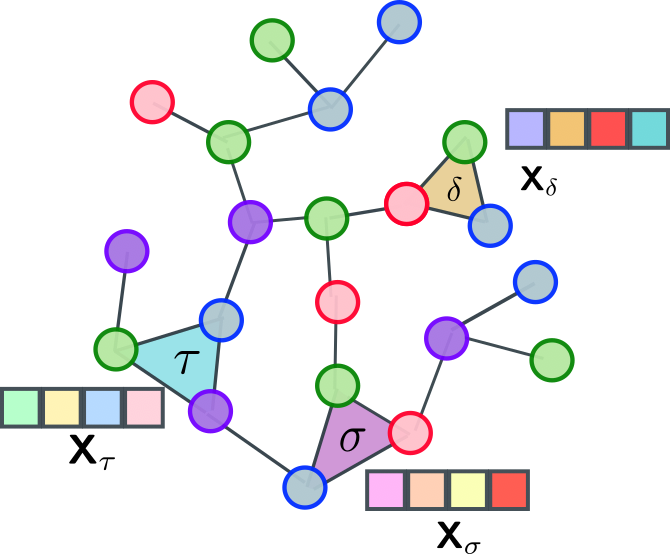}
     \caption{Visual intuition of higher-order interactions. Groups of nodes $\sigma, \tau$ and $\delta$  are equipped with features representing the state of the group. Notice that in this context, the features $\boldsymbol{\mathsf{X}}_\sigma, \boldsymbol{\mathsf{X}}_\tau, \boldsymbol{\mathsf{X}}_\delta \in \mathbb{R}^d$ cannot be reduced as the sum of the individual features attached to the nodes that compose $\sigma, \tau$ and $\delta$. }
     \label{fig:mpnn_ho_interactions}
\end{figure}

\begin{proposition}[Limitation of Pairwise Aggregations]
Let $\gph$ be a graph having a subset of nodes $\mathsf{S} \subseteq \V$. Let also $I(\mathsf{S})$ be the  information provided by the interaction among the nodes in $\mathsf{S}$. If $I(\mathsf{S}) > \varepsilon$, the total information provided by individual nodes in $\mathsf{S}$ do not fully captures the group dynamics of $\mathsf{S}$.  Moreover, when $I(\mathsf{S}) > \varepsilon$, MPNNs with only pairwise aggregations exhibit a drop in performance proportional to $\mathcal{O}(I(\mathsf{S}))$ in modelling the underlying phenomena.
\end{proposition}

As the discussion did not make specific assumptions about the choice of $\mathsf{S}$, the challenge lies in finding suitable groups $\mathsf{S}$ such that $\mathbf{h}_{\mathsf{S}}$ truly represent meaningful group interactions without considering all possible choices of $\mathsf{S} \subseteq \V$~\citep{benson2016higher}. A promising approach to identify meaningful node groups, like $\mathsf{S}$, is through the mathematical foundations of simplicial and cell complexes. These structures inherently model group interactions via their connectivity patterns~(\Cref{fig:mpnn_ho_interactions}). Moreover, message passing operations over simplicial and cell complexes can implement higher-order aggregations to group dependencies prevalent in complex systems naturally. 

\section{Simplicial Complexes}\label{sec:background:simplicial_complexes}

Simplicial complexes are mathematical objects able to capture the essence of continuity in topological spaces with a combinatorial framework. This thesis explores these structures by focusing on the connectivity properties provided by simplicial complexes. In particular, it is emphasized how these properties naturally model higher-order relationships among entities. In this context, simplicial complexes provide a generalization of graphs and expand upon the traditional idea of nodes and edges to encapsulate higher-dimensional relationships. These constructs are built by gluing collections of nodes into higher-order structures called \textit{simplices}. Like graphs, they have applications in a variety of domains, from computational topology~\citep{nanda21, edelsbrunner2022computational} to algebraic geometry~\citep{schenck2003computational}, and are particularly useful for modeling complex relationships between entities, such as multiple neurons firing together~\citep{giusti2016two} or multi-agent collaboration in computer science~\citep{munkres2018elements}. 

\begin{figure}[!htb]
    \centering
    \includegraphics[width=.9\textwidth]{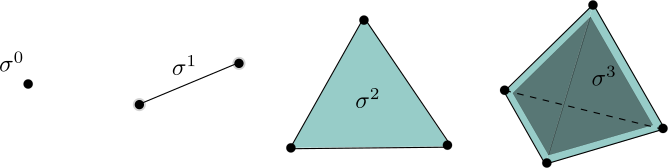}
    \caption{Simplices: node (0-simplex), edge (1-simplex), triangle (2-simplex), tetrahedron (3-simplex)}
    \label{fig:back:simplices}
\end{figure}

\begin{definition}[Simplex]\label{def:abstract_simplex}
Given a finite set of nodes $\V$, a \textit{$k$-simplex} is a collection $\sigma^k = \{v_1, \ldots, v_{k+1}\}$ of $k+1$ distinct elements of $\V$.
\end{definition}

For a finite set of nodes $\V$ situated in a $d$-dimensional real space $\mathbb{R}^d$, the simplices can be equipped with a geometric interpretation. Specifically, a geometric $k$-simplex represents the convex hull of $k+1$ nodes that are affinely independent. Affine independence in $\mathbb{R}^d$ denotes that no node in the set can be written as a linear combination of the others, ensuring the set spans a $k$-dimensional space for $k \leq d$~\cite{hatcher2005algebraic}. Consequently, is it possible to classify a nodes as a $0$-simplex, a line segment as a $1$-simplex, a triangle as a $2$-simplex, a tetrahedron as a $3$-simplex, and so forth~(\Cref{fig:back:simplices}). Intuitively, the dimension of a simplex $\sigma^k$ is $k$, which is one less than the number of its vertices. Given a simplex $\sigma^k$, subsets of its nodes that define lower-dimensional simplices. These are known as the \textit{faces} of $\sigma^k$. As shown in \Cref{fig:concept_of_face}, a $2$-simplex (triangle) $\sigma^2$, has three distinct edges as its faces. The combinatorial nature of simplices implies that higher-dimensional simplices have faces that represent all possible distinct node combinations of lower dimensions.

\begin{figure}[!htb]
    \centering
    \includegraphics[width=.3\textwidth]{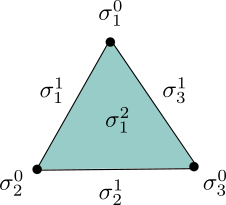}
 \caption{Depiction of the hierarchical face incidence relationships of a 2-simplex, $\sigma_1^2$ and its substructures. This simplex consists of three 1-simplices ($\sigma_1^1, \sigma_2^1, \sigma_3^1$) as its bounding edges. Each of these 1-simplices, in turn, is determined by two distinct 0-simplices as its endpoints. For example, $\sigma_1^1$ has $\sigma_1^0$ and $\sigma_2^0$ as its faces.}
    \label{fig:concept_of_face}
\end{figure}

\begin{definition}[Face]\label{def:face}
Given a $k$-simplex $\sigma^k$, a \textit{face} $\sigma^{k-1} \subset \sigma^k$ is a $(k-1)$-simplex obtained by omitting exactly one node from $\sigma^k$. In other words, $\sigma^{k-1} = \sigma^k \setminus \{v_i\}$ for some $v_i \in \sigma^k$.
\end{definition}

A simplex $\sigma^k$, will be referred as $\sigma$  if its dimension is clear from the context, or not relevant. Furthermore, the face incidence relation, will be referred as $\tau \face \sigma$, and reads as: "Simplex $\tau$ is a face of simplex $\sigma$".  Simplices serve as fundamental building blocks to represent multi-dimensional relationships between entities. Although individual simplices shed light on these interconnections, it is more common to deal with collections of interconnected simplices. Such assembly of simplices, spanning diverse dimensions, comes together in a structured framework known as \textit{simplicial complex}~(\Cref{fig:back:simplicial_complex}). This structure not only highlights the hierarchical structure among its components but also maintain specific topological consistencies~\citep{hatcher2005algebraic}.

\begin{figure}[!htb]
    \centering
    \includegraphics[width=.4\textwidth]{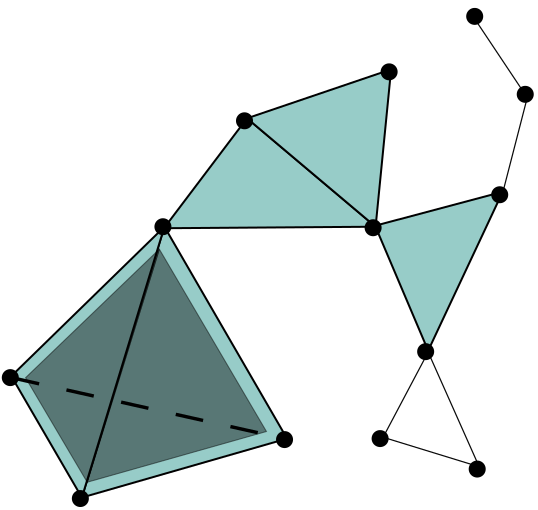}
    \caption{Geometric representation of a three-dimensional simplicial complex.}
    \label{fig:back:simplicial_complex}
\end{figure}

\begin{definition}[Simplicial Complex]\label{def:simplicial_complex_enhanced}
A simplicial complex $\kph = (\V, \mathsf{S})$ is a collection of \textit{simplices} $\mathsf{S}$ such that every face of any simplex in $\mathsf{S}$ must also belong to $\mathsf{S}$: $\sigma \in \mathsf{S}$ and $\tau \face \sigma \implies \tau \in \mathsf{S}$. Moreover, the intersection of two arbitrary simplices $\sigma, \tau \in \mathsf{S}$ is either empty or a face of both.
\end{definition}

Specifically, the collection $\mathsf{S}$ consists of sets of simplices of varying dimensions, such that $\mathsf{S} = \bigcup_{k=0}^{K} \Sigma^k$, where each $\Sigma^k = \{ \sigma^k_1, \sigma^k_2, \sigma^k_3, \ldots \}$ represents the set of all $k$-simplices. It is important to remark that Every singleton set that contains a node $\{v\}, v \in \V$ is represented as a $0$-simplex in $\kph$, pairs $\{u,v\}$ are represented as $1$-simplices, triplets with $2$-simplices and so on. The dimension of $\kph$, is the maximum dimension of any of its simplices and is referred as $\text{dim}(\kph)$. Notice that a simplicial complex $\kph = (\V, \mathsf{S})$ such that $\text{dim}(\kph) = 1$ is mathematically equivalent to a graph $\gph = (\V, \E)$ in which the nodes and the edges of $\gph$ correspond to the 0-simplices and the 1-simplices of $\kph$, respectively.

\paragraph{Orientation} 
Much like the directedness in graphs, simplices in a simplicial complex can be equipped with another symmetry, the \textit{orientation}. However, unlike arrow directions in graphs' edges, the orientation of simplices provides a richer structure. An orientation can be intuitively thought of as a consistent "clockwise" or "counterclockwise" assignment across the simplices $\sigma^k$ of a simplicial complex $\kph$ specified by the ordered $(k+1)$-tuple of $\sigma^k$. If every simplex of $\kph$ is equipped with an orientation, $\kph$ is said to be an \textit{oriented simplicial complex}. In other words, the orientation imparts a directionality to each simplex $\sigma^k$ of $\kph$, enabling a symmetry structure to be established between simplices, enabling advanced algebraic constructs~(\Cref{fig:orientation}). To capture the orientation compatibility between a lower order simplex $\sigma^{k-1}$, and a higher order simplex $\sigma^{k}$, the notation $\sigma^{k-1} \sim \sigma^{k}$ is employed. This notation illustrates that the orientation of $\sigma^{k-1}$ is coherent with that of $\sigma^{k}$. On the contrary, $\sigma^{k-1} \nsim \sigma^{k}$ refers to simplices that have opposite orientation.

\begin{figure}[!htb]
    \centering
    \includegraphics[width=.35\textwidth]{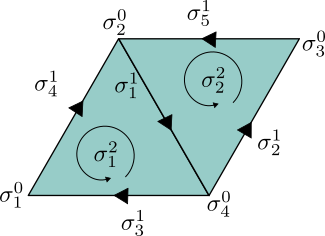}
    \caption{Illustrative example of an oriented simplicial complex of dimension 2. Notice that, between $\sigma_1^2$ and all its faces  the orientation remains coherent while for  $\sigma_2^2$, the orientation of its faces is opposite to the one of $\sigma_2^2$.}
    \label{fig:orientation}
\end{figure}

\paragraph{From Simplicial Complexes to Algebraic Structures}

To understand complex systems, it can be useful to look for structures that can represent entities and intricate relationships in a manner more robust than the familiar framework of graphs. Simplicial complexes are one possible choice for such structures, which can capture polyadic relationships and multi-facet interactions. While simplicial complexes provide a richer perspective, to perform algebraic operations on them, it is required to translate the representation defined so far into an algebraic structure. This need paves the way for the concept of \textit{$k$-chains}.

\begin{figure}[!htb]
    \centering
    \includegraphics[width=\textwidth]{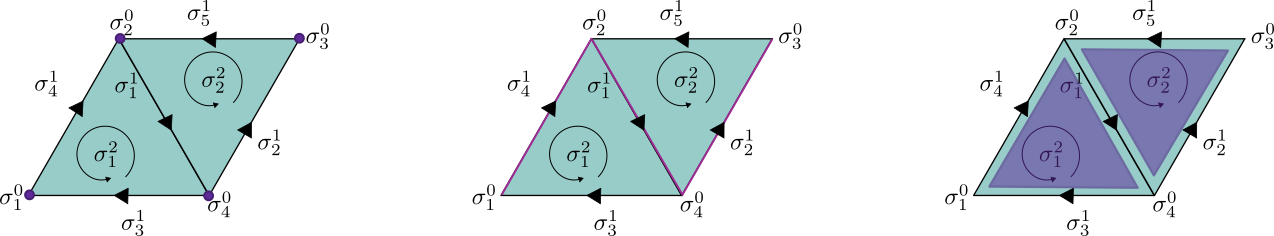}
    \caption{Visualization of hierarchical structures for chains within a 2D simplicial complex.}
    \label{fig:chains}
\end{figure}

\begin{definition}[Chains]\label{def:chains}
    Let $\kph = (\V, \mathsf{S})$ be an oriented simplicial complex, where $\V$ denotes the set of nodes and $\mathsf{S}$ represents the set of simplices. The \textit{$k$-chain} space, denoted by $C_k(\kph, \mathbb{R})$, is defined as the vector space formed by taking linear combinations, with real coefficients, of the oriented $k$-simplices of $\kph$. Any element belonging to $C_k(\kph, \mathbb{R})$ is called a \textit{$k$-chain}. For $k > \text{dim}(\kph)$ it holds $C_k(\kph, \mathbb{R}) = \emptyset$.
\end{definition}

An example of a k-chains are depicted in~\Cref{fig:chains}. In particular the chain  $\alpha_1 \sigma_1^0 + \alpha_2 \sigma_2^0 + \alpha_3 \sigma_3^0 + \alpha_4 \sigma_4^0 \in C_0(\kph, \mathbb{R})$ weights the nodes with real coefficients while the chain  $\beta_1 \sigma_4^1 + \beta_2 \sigma_1^1 + \beta_3 \sigma_2^1 \in C_1(\kph, \mathbb{R})$ is a combination of distinct 1-simplices (edges) of a complex $\kph$ with three arbitrary real values. Notice that, omitting a $k$-simplex from a $k$-chain is equivalent to consider its coefficient equal to zero.

While chains identify distinct regions of a complex $\kph$, to assign features $\xnorm^k$ to those regions of $\kph$ specified by a chain it is necessary to introduce the space of \textit{co-chains}. These are vector spaces of functionals defined on chains, essentially mapping chains to the real numbers. Intuitively ,while chains are combinations of simplices, co-chains offer a formal definition to assign values to these simplices.

\begin{definition}[Co-chains]\label{def:cochains}
Let $\kph = (\V, \mathsf{S})$ be an oriented simplicial complex. The \textit{k-co-chain} space, denoted by $C^k(\kph, \mathbb{R})$, is defined as the set of all real-valued functions on the oriented $k$-simplices of $\kph$. Any element $\xnorm_k \in C^k(\kph, \mathbb{R})$ is called a \textit{k-co-chain}. Here, for $k > \text{dim}(\kph)$ implies $C^k(\kph, \mathbb{R}) = \emptyset$.
\end{definition}

\paragraph{Transitioning Between Dimensions}
By progressing from the representation of simplicial complexes using $k$-chains, a natural interest to relate different chains, particularly, how to transition from a higher-dimensional simplex to its lower-dimensional faces might arise. The tool that serves this purpose, linking one chain to its adjacent, lower-dimensional chain, takes the name of \textit{boundary operator}.

\begin{definition}[Boundary]\label{def:boundary}
For an oriented simplicial complex $\kph$, the boundary operator is a linear map
\begin{equation}
\partial_k : C_k(\kph, \mathbb{R}) \to C_{k-1}(\kph, \mathbb{R}),
\end{equation}
which takes a $k$-chain and produces a $(k-1)$-chain representing the simplices that are on its boundary. Specifically, for a $k$-simplex given by an ordered sequence of nodes $\sigma^k = [v_0, v_1, \ldots, v_k]$, its boundary is given by:
\begin{equation}
\partial_k \big( \sigma^k \big) = \sum_{i=1}^{k} (-1)^i [v_1, \ldots, \hat{v_i}, \ldots, v_k],
\end{equation}
where $\hat{v_i}$ indicates the omission of the node $v_i$.
\end{definition}

The alternating sign ensures that the orientation is respected when taking boundaries. A fundamental property that follows from this definition is that the boundary of a boundary is always zero (i.e., $\partial_{k-1} \circ \partial_k = 0$). 

\

In the realm of algebraic topology, \textit{k-chains} and \textit{boundary operators} define rigorously algebraic operations on simplicial complexes, bridging the gap between the topological properties and computations over discrete spaces~\citep{hatcher2005algebraic}.  Much like how the boundary operator allows to transition from higher-dimensional simplices to their lower-dimensional faces, there exists a dual operator which allows for a transition in the opposite direction: from lower-dimensional co-chains to higher-dimensional ones. This dual map takes the name of \textit{Co-boundary} operator.  

\begin{definition}[Co-Boundary]\label{def:coboundary}
For an oriented simplicial complex $\kph$, the co-boundary operator is a linear map
\begin{equation}
\delta^k : C^k(\kph, \mathbb{R}) \to C^{k+1}(\kph, \mathbb{R}),
\end{equation}
which takes a $k$-co-chain and maps it to a $(k+1)$-co-chain. If $\phi$ is a $k$-co-chain, then for any $(k+1)$-simplex $\sigma^{k+1} = [v_0, \ldots, v_{k+1}]$ in $\kph$, the action of the co-boundary operator is defined by:
\begin{equation}
\delta^k \big( \sigma^{k} \big) = \sum_{i=1}^{k+1} (-1)^i \phi([v_1, \ldots, \hat{v_i}, \ldots, v_{k+1}]),
\end{equation}
where again, $\hat{v_i}$ indicates the omission of the node $v_i$.
\end{definition}

Importantly, the co-boundary operator has a relationship with the boundary operator. In the same way that the boundary of a boundary is always zero (i.e., $\partial_{k-1} \circ \partial_k = 0$), the co-boundary of a co-boundary also vanishes (i.e., $\delta^{k+1} \circ \delta^k = 0$).

Co-chains and co-boundary operators are the building blocks of cohomology, which is a fundamental concept in algebraic topology. Just as homology captures the "holes" or missing simplices in a space, cohomology captures the functions or signals defined on that space. 

\paragraph{Connectivity Structure of Simplcial Complexes}

\begin{figure}[!htb]
    \centering
    \includegraphics[width=.4\textwidth]{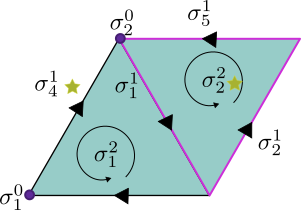}
    \caption{Visualization of a 2D simplicial complex highlighting boundary neighbourhoods. Simplices involved in the boundary computation are marked with a \textcolor{gold}{\Large$\star$}.}
    \label{fig:boundary_adj_sc}
\end{figure}

Within the field of algebraic topology, the way in which simplices' adjacencies are arranged is critical to understanding how simplicial complexes model  relationships. This is due to the fact that, {\em given a simplicial complex $\kph = (\V, \mathsf{S})$, an arbitrary $k$-simplex yields four different neighbourhoods in contrast of the canonical adjacency provided by a graph $\gph = (\V, \E)$.}
\

\textbf{Boundary Adjacency:}  A $(k-1)$-simplex $\sigma^{k-1} \in \kph$ is said to be  {\it boundary adjacent} to a $k$-simplex $\sigma^{k}$ if it holds $\sigma^{k-1} \face \sigma^{k}$.  For a $k$-simplex $\sigma^{k}$, the set of boundary adjacent simplices is denoted by $\mathcal{B}(\sigma^k)$. For example, in~\Cref{fig:boundary_adj_sc}, the boundary neighbourhood of the edge represented by the 1-simplex $\sigma_4^1$ is a set $\mathcal{B}(\sigma_4^1) = \{\sigma_1^0, \sigma_2^0\}$ that contains the 0-simplices (nodes) that are at the ends of $\sigma_4^1$. For the triangle represented by the 2-simplex $\sigma_2^2$, the boundary neighbourhood is $\mathcal{B}(\sigma_2^2) = \{\sigma_2^1, \sigma_5^1, \sigma_1^1 \}$. Notice that for an oriented simplicial complex, the boundary of an oriented $k$-simplex consists of the union of oriented $(k-1)$-simplices, each given an orientation induced from that of the $k$-simplex. This induced orientation guarantees that by assembling the $(k-1)$-simplices according to their orientation, the original $k$-simplex with its given orientation is recovered.

\begin{figure}[!htb]
    \centering
    \includegraphics[width=.4\textwidth]{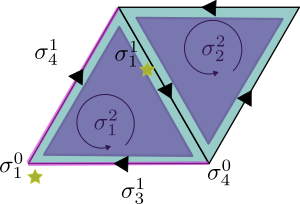}
    \caption{Visualization of 2D simplicial complex emphasizing co-boundary relationships. Simplices under consideration for showing the co-boundary computation are marked with a \textcolor{gold}{\Large$\star$}.}
    \label{fig:coboundary_adj_sc}
\end{figure}

\

\textbf{Co-boundary Adjacency:}  A $(k+1)$-simplex $\sigma^{k+1} \in \kph$ is said to be  {\it co-boundary adjacent} to a $k$-simplex $\sigma^{k}$ if it holds $\sigma^k \face \sigma^{k+1}$. For a $k$-simplex $\sigma^{k}$, the set of co-boundary adjacent simplices is denoted by $\mathcal{C}o(\sigma^k)$. For example, in~\Cref{fig:coboundary_adj_sc} it is shown that, the co-boundary neighbourhood of the 0-simplex (node) $\sigma_1^0$, is composed by the 1-simplices (edges) $\sigma_3^1$ and $\sigma_4^1$. That is $\mathcal{C}o(\sigma_1^0) = \{\sigma_3^1, \sigma_4^1\}$. Moreover, for the 1-simplex (edge) $\sigma_1^1$, the co-boundary is the set $\mathcal{C}o(\sigma_1^1) = \{\sigma_1^2, \sigma_2^2 \}$ that contains the 2-simplices triangles that have $\sigma_1^1$ as one of their faces. Notice that, in a two-dimensional simplicial complex $\kph$, only 0-simplices and 1-simplices might have a non-empty co-boundary neighbourhood.

\begin{figure}[!htb]
    \centering
    \includegraphics[width=.4\textwidth]{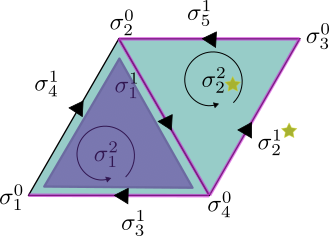}
    \caption{Visualization of lower-neighbourhood relationships within a 2D simplicial complex. Simplices marked by a \textcolor{gold}{\Large$\star$} highlight the focus when determining the lower neighbourhood.}
    \label{fig:lower_neigh_adj_sc}
\end{figure}

\

\paragraph{Upper Adjacency} 
Let $\sigma^k$ and $\tau^k$ be two arbitrary $k$-simplices within a simplicial complex $\kph$. If $\sigma^k$ and $\tau^k$ share a mutual relationship as faces of a $(k+1)$-simplex, they are {\it upper adjacent} ($\sigma^k \in \mathcal{N}_{\uparr}(\tau^k)$ and vice-versa). In other words, $\sigma^k$ and $\tau^k$ are both are faces of a simplex $\delta^{k+1}$ of one dimension higher ($\sigma^k \face \delta^{k+1}$ and $\tau^k \face \delta^{k+1}$). In a 2-dimensional simplicial complex,two 0-simplices (nodes) are upper adjacent if they have an edge that joins them while two 1-simplices (edges) are upper adjacent if both are sides of a common 2-simplex (triangle)~(\Cref{fig:upper_neigh_adj_sc}).

\begin{figure}[!htb]
    \centering
    \includegraphics[width=.4\textwidth]{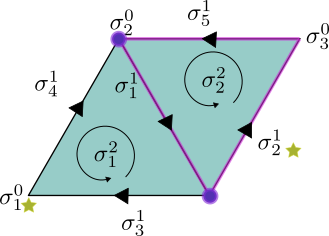}
    \caption{Visualization of a 2D simplicial complex emphasizing upper adjacency. Simplices denoted with a \textcolor{gold}{\Large$\star$} are the ones for which their corresponding upper adjacent simplices are highlighted.}
    \label{fig:upper_neigh_adj_sc}
\end{figure}

\

\paragraph{Lower Adjacency}
Conversely, two simplices $\sigma^k$ and $\tau^k$  are {\it lower adjacent} ($\sigma^k \in \mathcal{N}_{\doarr}(\tau^k)$ and vice-versa) if they jointly possess a shared face of order $k-1$ within $\kph$. So, it exists a simplex $\delta^{k-1}$ such that $\delta^{k-1} \face \sigma^k$ and $\delta^{k-1} \face \tau^k$). For example, consider the two triangles (2-simplices $\sigma_1^2, \sigma_2^2$) in~\Cref{fig:lower_neigh_adj_sc}, they are lower adjacent because they have a shared face $\sigma_1^1$ of one dimension lower.

\paragraph{Algebraic Representation of Simplicial Complexes}

In the study of algebraic topology, simplicial complexes serve as combinatorial models that provide a bridge between topological spaces and algebraic structures that provide a rich connectivity structure . This relationship facilitates a wide array of calculations and analyses. Among the tools used to represent the algebraic structure of simplicial complexes are the \textit{incidence (or boundary) matrices} $\Bnorm_k$ and the \textit{higher-order Laplacian matrices} $\Lnorm_k$~\citep{goldberg2002combinatorial}. In particular, $\Bnorm_k$ is the algebraic representation of the boundary operator $\partial_k$ while $\Lnorm_k$ is the extention,  to higher dimensional simplices of the canonical graph Laplacian~\citep{grady2010discrete}. Furthermore, a set of incidence matrices $\Bnorm_k$ for $k = 1,\ldots, K$, is sufficient to define the connectivity structure of an oriented simplicial complex $\kph$ of order $K$. Specifically, the entries of $\Bnorm_k$ establish which $k$-simplices are incident to which $(k+1)$-simplices and if they have a coherent orientation. Formally:

\begin{equation}\label{eq:sc_incidence_matrix}
  \big[\Bnorm_{k} \big]_{ij}=\left\{\begin{array}{rll}
  0, & \text{if} ~ \sigma_i^{k-1} \mkern5mu\not\mkern-5mu\face \sigma_j^{k},\\
  1,& \text{if} ~ \sigma^{k-1} \face \sigma_j^{k} ~ \text{and} ~ \sigma_i^{k-1}  \sim \sigma_j^{k},\\
  -1,& \text{if} ~  \sigma_i^{k-1} \face \sigma_j^{k} ~ \text{and} ~ \sigma_i^{k-1}  \not\sim \sigma_j^{k}\\
  \end{array}\right.
\end{equation}

The incidence matrices reflect the geometric structure and mutual relationships between simplices within a simplicial complex~\citep{hatcher2005algebraic}. However, to derive the spectral properties of these complexes it is required to introduce the higher-order Laplacian matrices. For a simplicial complex $\kph$, these extend the notion of the traditional graph Laplacian to capture multi-dimensional interactions. Formally, 

\begin{align}\label{eq:back:laplacians}
&\Lnorm_0=\Bnorm_{1}\Bnorm_{1}^T,\\
&\Lnorm_k=\underbrace{\Bnorm^T_k\Bnorm_k}_{\Ldo_k}+ \underbrace{\Bnorm_{k+1}\Bnorm_{k+1}^T}_{\Lup_k}, \quad k=1, \ldots, K-1\label{eq:back:L_int},\\
&\Lnorm_K=\Bnorm_{K}^T\Bnorm_{K}.
\end{align}

Notice that all Laplacians of intermediate order~(\Cref{eq:back:L_int}), contain two terms expressing the lower and upper adjacencies of $k$-order simplices. The former term $\Bnorm^T_k\Bnorm_k$, it is the {\it lower Laplacian}, $\Ldo_k$; the latter ($\Bnorm_{k+1}\Bnorm_{k+1}^T$) is the {\it upper Laplacian} $\Lup_k$~\citep{barbarossa2020topological}.
\section{Cell Complexes}\label{sec:background:cell_complexes}

Simplicial complexes are powerful combinatorial structures able to represent naturally polyadic interactions and a unique connectivity provided by the different neighborhoods of the simplices. However, this very property can make them inflexible, since the face inclusion property~(\Cref{def:simplicial_complex_enhanced}) ensures that such domains are built by sticking simplices together. In some cases, this solution is too rigid since it necessitates the explicit representation of all $(k-1)$-dimensional faces when only the full $k$-th order interaction needs to be represented~(\Cref{fig:back:simplicial_complex}). This can lead to superfluous information and unnecessary computational overhead, particularly when the primary interest lies in capturing specific higher-order interactions without being constrained by the need to represent all their constituent sub-interactions. This can be achieved by switching $k$-simplices with spaces 'like' closed $k$-dimensional disks, overcoming this limitation with the introduction of {\em regular cell complexes}.

\

\begin{figure}[!htb]
    \centering
    \includegraphics[width=.4\textwidth]{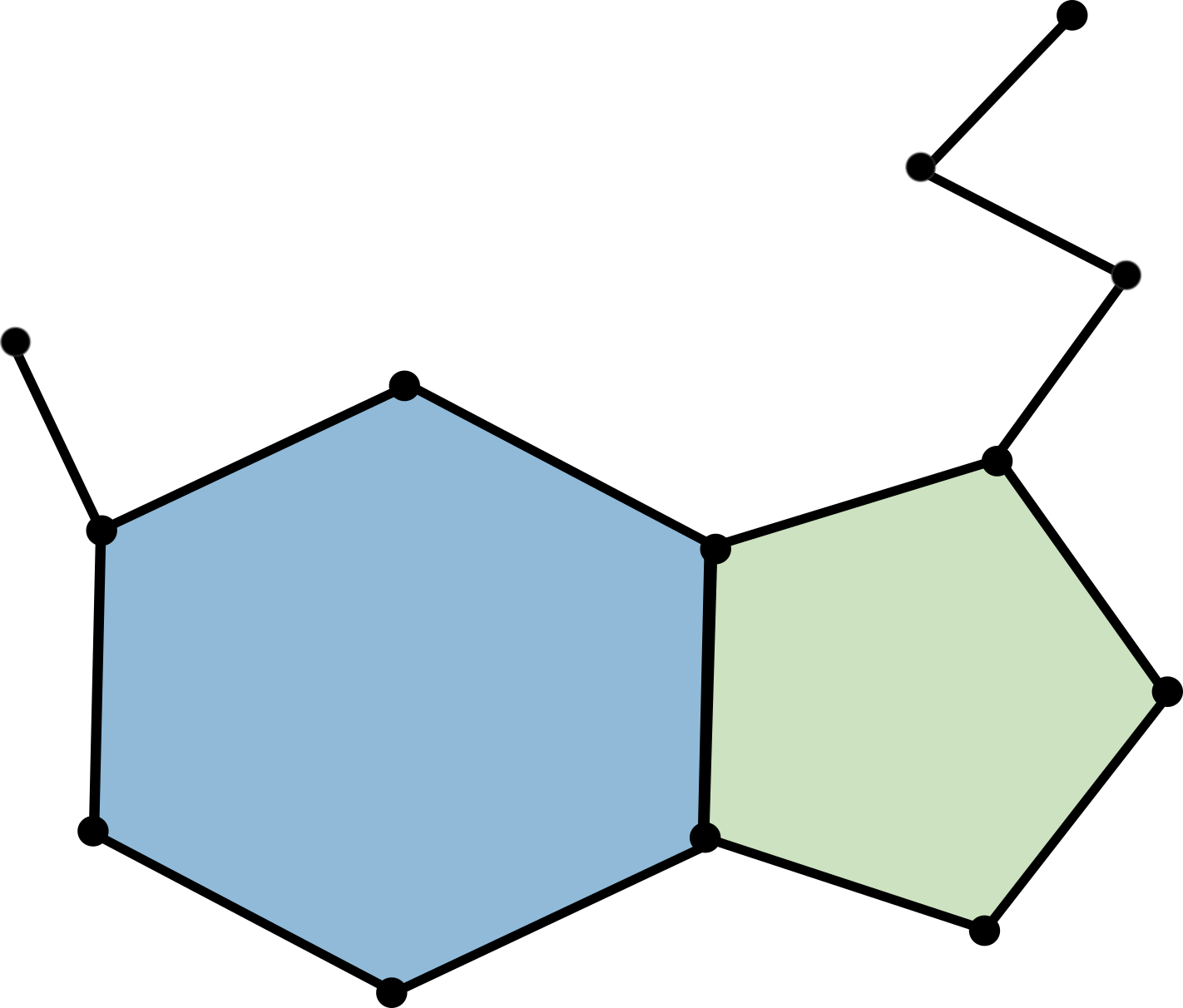}
    \caption{A cell complex $\cph$ representing a serotonin molecule. Notice that, nodes can be arranged as rings without the necessity of representing sub-structures as required by simplicial complexes via the face inclusion principle~(\Cref{def:simplicial_complex_enhanced}).}
    \label{fig:serotonin_molecule}
\end{figure}

Cell complexes are discrete topological spaces able to represent complex interconnected systems, generalizing graphs~\Cref{sec:background:graphs} and simplicial complexes~\Cref{sec:background:simplicial_complexes}. In particular, cell complexes naturally relax the constraint imposed by the face inclusion property required by simplicial complexes~(\Cref{fig:serotonin_molecule}). However, before defining operations over these flexible domains, it is necessary to ensure that a cell complex $\cph$ respects certain regularity conditions. 

\

\begin{definition}[Regular Cell Complex]\label{def:cell_complex}
\cite{hansen2019toward} \textit{A {\it regular cell complex} is a topological space $\cph$ together with a partition $\{\cph_{\sigma}\}_{\sigma \in \mathcal{P}_{\cph}}$ of subspaces $\cph_{\sigma}$ of $\mathcal{C}$ called $\mathbf{cells}$, where $\mathcal{P}_{\cph}$ is the indexing set of $\mathcal{C}$, such that}

\begin{enumerate}
    \item For each cell $\sigma \in \cph$, every sufficiently small neighbourhood of $\sigma$ intersects finitely many cells $\cph_{\sigma}$;  
    \item \label{it:closure} For all $\tau$, $\sigma$ in $\cph$, it holds that $\cph_{\tau} \cap \overline{\cph}{\sigma} \neq \varnothing$ iff $\cph{\tau} \subseteq \overline{\cph}{\sigma}$, where $\overline{\cph}{\sigma}$ denotes the closure of the cell;
    \item \label{it:homeomo} Every $\cph_{\sigma}$ is homeomorphic to $\mathbb{R}^{k}$ for some $k$;
    \item \label{it:posed} For every $\sigma$ $\in$ $\mathcal{P}_{\cph}$ there is a homeomorphism $\phi$ of a closed ball in $\mathbb{R}^{k}$ to $\overline{\cph}_{\sigma}$ such that the restriction of $\phi$ to the interior of the ball is a homeomorphism onto $\cph_{\sigma}$.
\end{enumerate}
\end{definition}

Condition~(\ref{it:closure}) implies that the indexing set $\mathcal{P}_{\cph}$ has a poset structure, given by $\tau$ $\leq$ $\sigma$ iff $\cph_{\tau}$ $\subseteq$ $\overline{\cph_\sigma}$. This is known as the face poset of $\cph$. The regularity condition~(\ref{it:posed}) implies that all topological information about $\cph$ is encoded in the poset structure of $\mathcal{P}_{\cph}$. Then, a regular cell complex can be identified with its face poset. For this reason, from now on, the cell $\cph_{\sigma}$ will be referred with its corresponding face poset element $\sigma$ which dimension $\dim({\sigma})$ is equal to the dimension of the space homeomorphic to $\cph_{\sigma}$.

\begin{figure}[!htb]
    \centering
    \includegraphics[width=.95\textwidth]{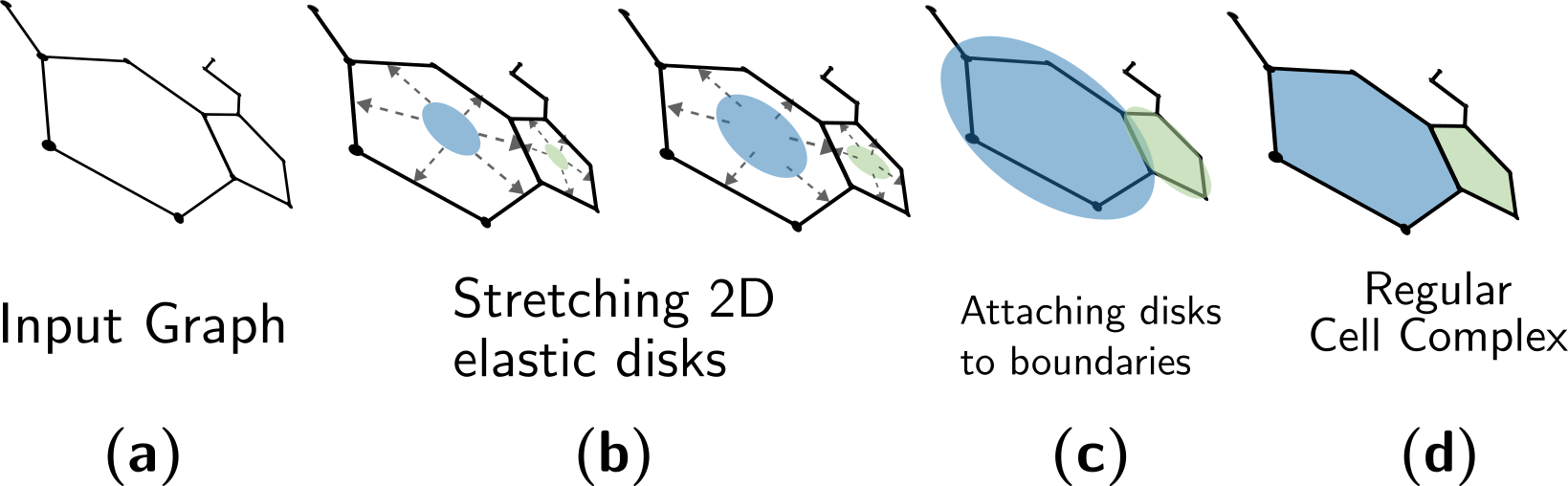}
    \caption{Illustration of a skeleton-preserving lifting procedure: Attaching two-dimensional cells to the induced cycles of a graph $\gph$, preserving node and edge features to form a regular cell complex $\cph$ such that $sk_1(\cph) = \gph$.}
    \label{fig:sl}
\end{figure}

In this context, a graph $\gph = (\V, \E)$ can be viewed as a particular case of a regular cell complex $\cph$. Specifically, a graph is a cell complex  where the set of $2$-cells is the empty set. The vertices of the graph correspond to the $0$-cells in $\cph$, while the edges of the graph are then represented by its $1$-cells, connecting pairs of vertices. Throughout this thesis, only regular cell complexes $\cph$ built using {\em skeleton-preserving cellular lifting maps}~\citep{bodnar2021weisfeilercell} from an input graph $\gph$ will be considered. A pictorial example of this operation is provided in~\Cref{fig:sl}, where filled rings are attached to closed paths of edges having no internal chords.

\begin{figure}[t]
    \centering\includegraphics[width=.9\textwidth]{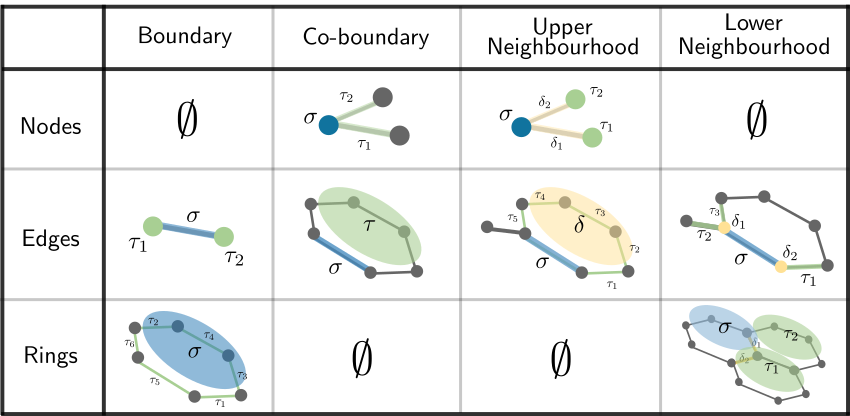}
    \caption{Visual representation of adjacencies within cell complexes. The reference cell, $\sigma$, is showcased in \textcolor{lightblue}{blue}, with adjacent cells $\tau$, highlighted in \textcolor{lightgreen}{green}. Any intermediary cells $\delta$ mediating the connectivity are depicted in \textcolor{lightyellow}{yellow}.}
    \label{fig:adjs}
\end{figure}

\

\paragraph{Connectivity Structure of Cell Complexes}

The connectivity structure of a regular cell complex is similar to the one provided by simplicial complexes. Cell complexes have a unique connectivity blueprint thanks to their flexibility in modelling higher-order structures with a relatively simple combinatorial domain. A glossary of the neighbourhoods of a two dimensional regular cell complex $\cph$ is depicted in~\Cref{fig:adjs}.

\begin{definition}[Boundary Relation]\label{def:boundary_rel}
    Given two cells $\sigma, \tau \in \cph$. The boundary relation $\sigma \face \tau$ holds \emph{iff} $\dim({\sigma}) < \dim({\tau})$ and there does not exist a $\delta \in \cph$ such that $\sigma \face \delta \face \tau$.
\end{definition}

\

For a cell $\sigma$, the \textbf{boundary neighbourhood} is a set $\mathcal{B}(\sigma) = \{ \tau \, \vert \,  \tau \face \sigma \}$ composed by the lower-dimensional cells that respect the boundary relation~(\Cref{def:boundary_rel}). For example, in a cell complex of dimenison two, nodes don’t possess a boundary neighborhood as they represent isolated points within the complex; an edge is bounded by the nodes at its endpoints; the boundary of a ring is defined by the edges that circumscribe it.

\

The \textbf{co-boundary neighbourhood} is a set $\mathcal{C}o(\sigma) = \{\tau \, \vert \, \sigma \face \tau \}$ of higher-dimensional cells with $\sigma$ on their boundary. In a two-dimensional cell complex, the co-boundary of a node is constituted by the edges originating from or terminating at it; for an edge, the co-boundary includes the rings for which the edge is a ring's border. Tings do not possess a co-boundary neighborhood in this scenario.

\

The \textbf{upper neighbourhood} are the cells of the same dimension as $\sigma$ that are on the boundary of the same higher-dimensional cell as $\sigma : \mathcal{N}_{\uparr}{(\sigma)} = \{ \tau \, \vert \, \exists \delta : \sigma \face \delta \wedge \tau \face \delta\}$. The upper neighborhood of a node is provided by the set of nodes directly connected to via edges, which is the canonical graph adjacency; for an edge, the upper neighbourhood include edges surrounding the rings for which the edge is a boundary element; 

\

The \textbf{lower neighbourhood} is composed by the cells of the same dimension as $\sigma$ that share a lower dimensional cell on their boundary: $\mathcal{N}_{\doarr}{(\sigma)} = \{ \tau \, \vert \, \exists \delta : \delta \face \sigma \wedge \delta \face \tau\}$.  In regular cell complexes, nodes do not have a lower neighborhood; the lower adjacent cells of an edge are the edges that share a common vertex with the edge in consideration; in a $2$-complex, rings do not have upper adjacent cells. The lower adjacent cells of a ring are the rings sharing a common boundary edge with the ring itself. 

\

By combining a flexible connectivity structure with a minor complexity overhead, cell complexes find applications in several real-world scenarios, including: molecular modelling (e.g., molecular graphs and molecular surfaces can be represeted as~\Cref{fig:serotonin_molecule}); material science (e.g., topological insulators~\citep{hasan2010colloquium}); computer graphics (e.g., polygonal meshes~\citep{crane2018discrete}; physics (e.g., general relativity, space-time can be modelled using 4D cell complexes~\citep{tonti1975formal}). 

\

{\em Although the theory of presented so far and the methods proposed afterwards apply to cell complexes of arbitrary dimension, in this thesis, only cell complexes with cells of maximum dimension equal to $2$ are considered.} 
\section{Topological Signal Processing}

\begin{figure}[!htb]
    \centering
    \includegraphics[width=.9\textwidth]{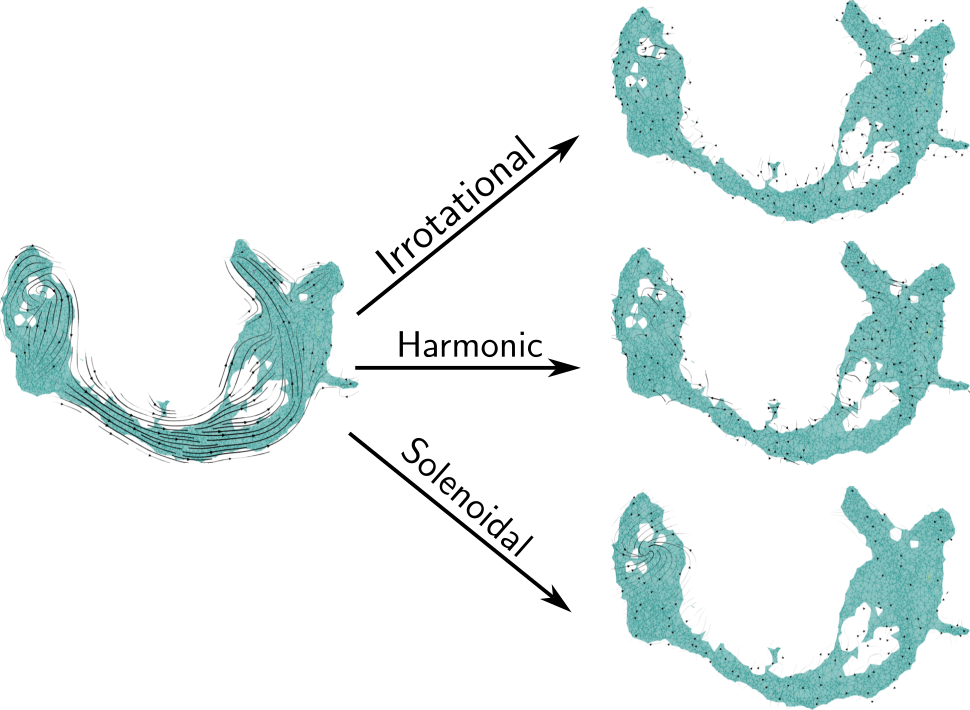}
    \caption{Visual representation of the Hodge Decomposition applied to RNA velocity fields from~\cite{la2018rna}. It showcases the separation of flow components into: {\em irrotational, harmonic, and solenoidal}. }
    \label{fig:back:hodge}
\end{figure}

The mathematical formalism for extending the graph signal processing techniques, as defined in~\Cref{sec:gsp}, to process signals defined over complex topological spaces is known in the literature as {\bf topological signal processing}.
In particular, this section provides fundamental tools to analyze signals defined over topological spaces. Moreover, processing topological signals over cell complexes includes signals over simplicial complexes and graphs as particular instances of the framework~\citep{barbarossa2020topological, schaub2020random, schaub2021signal, sardellitti2022cell, roddenberry2022signal, yang2021finite, yang2022simplicial}. Therefore, this section will focus on processing signals defined on cell complexes without loss of generality. To this aim, let $\xph$ be a discrete topological space. In this context, $\xph$ can be either a simplicial complex $\kph = (\V, \mathsf{S})$ or a cell complex $\cph = (\V, \mathcal{P}_{\cph})$. As mentioned before, processing signals defined on $\xph$ does not require it to be materialized in one of the two particular instances. In particular, ensuring that each cell in $\xph$ is equipped with defined features and proper neighborhoods guarantees consistent signals' flow, independently of whether $\xph$ is instantiated as a simplicial or cell complex. 

It is worth noting that, while both cell complexes and simplicial complexes can represent $\xph$. The choice between them should be influenced by the specific application and the nature of the data.

\begin{definition}\label{def:cell_signal}
Let $\cph = (\V, \mathcal{P}_{\cph})$ be a two dimensional cell complex having a set of nodes $\V$, edges $\E$ and rings $\mathsf{R}$ incorporated within the indexing set $\mathcal{P}_{\cph}$. A cell signal is defined as a function that assigns a value from field $\mathbb{F}$ to each cell of $\cph$:
\begin{equation}\label{eq:sc:sc_features}
\xnorm_{\sigma}: {\cph_\sigma} \rightarrow \mathbb{F}.
\end{equation}
\end{definition}

In this context, $\mathbb{F}$ typically represents a $d$-dimensional vector space (${\cph_\sigma} \rightarrow \mathbb{R}^{d})$, where the dimension $d$ can vary across different cells without loss of generality.\footnote{\Cref{def:cell_signal} alongside a notion of "bridges" between dimensions grounds the Sheaf Theory~\citep{bredon2012sheaf}.}.

\paragraph{Hodge decomposition} High order Laplacians admit a Hodge decomposition~\citep{lim2020hodge},
leading to three orthogonal subspaces. In particular, the $k$-simplicial signal space can be decomposed as:

\begin{equation}\label{hodge_spaces}
\mathbb{R}^{d} = \text{im}(\Bnorm_{k}^\intercal\big) \oplus \text{im}\big(\Bnorm_{k+1}\big) \oplus \text{ker}\big(\Lnorm_{k}\big).
\end{equation}

Thus, any topological signal $\xnorm_{\sigma}$ can be decomposed as:
%given any signal $\xnorm_{k}$ of order $k$, there always exist three orthogonal signals $\xnorm_{k-1}$, $\xnorm_{k+1}$ and $\widetilde{\xnorm}_{k}$ of order $k-1$, $k+1$, and $k$, respectively, such that $\mathb:f{x}_{k}$ can be cast as:
\begin{equation}\label{eq:hodge_decomposition}
    \xnorm_{\sigma}=\underbrace{\Bnorm_{k}^\intercal\, \xnorm_{\tau}}_{\text{irrotational}} +\underbrace{\Bnorm_{k+1}\, \xnorm_{\delta}}_{\text{solenoidal}} +\underbrace{\xnorm_{h}}_{\text{harmonic}} 
\end{equation}

where $dim(\sigma) = k \implies \xnorm_\sigma \in \mathbb{R}^d$ and $dim(\tau) = k-1$ and $dim(\delta) = k+1$

To provide an interpretation of the three orthogonal components in~\Cref{eq:hodge_decomposition} consider $k=1$ and edge flows \big(i.e., $\xnorm_{\sigma}\big|_{\sigma \in \E}$ \big)~\citep{barbarossa2020topological}. The matrix $\Bnorm_{1}$ is the \textbf{discrete divergence operator}, applied to an edge flow $\xnorm_{\sigma}$ computes, for each node $v \in \V$ its net flow that is the amount of flow going towards $v$ minus the flow going from $v$ outward its neighbours. Its adjoint $\Bnorm_{1}^\intercal $ differentiates a node signal $\xnorm_{\tau}\big|_{\tau \in \V}$ along the edges to induce an edge flow $\Bnorm_{1}^\intercal \xnorm_{\tau}$.

The component $\Bnorm_{1}^\intercal \xnorm_{\tau}$ is referred to as \textbf{irrotational component} of $\xnorm_{\sigma}$ and $\text{im}(\Bnorm_{k}^\intercal)$ the gradient space. Applying matrix $\Bnorm_{2}^\intercal$ to an edge flow $\xnorm_{\sigma}$ means computing its circulation along each cell, thus $\Bnorm_{2}^\intercal$ is called a curl operator. Its adjoint $\Bnorm_{2}$ induces an edge flow $\xnorm_{\sigma}$ from a cell signal $\xnorm_{\delta} $.

The component $\Bnorm_{2}\xnorm_{\delta}$ is referred to as the \textbf{solenoidal component} of $\xnorm_{\sigma}$ and $\text{im}(\Bnorm_{2})$ the curl space. The remaining component $\xnorm_{h}$ is  the \textbf{harmonic component} since it belongs to$\text{ker}(\Lnorm_{1})$ that is called the harmonic space. Any harmonic flow $\xnorm_{h}$ has zero divergence and curl.

In the sequel the focus will be on topological signal processing techinques for edge signals, without loss of generality. Therefore, let  $\xnorm := \xnorm_{1}$, $\Lnorm := \Lnorm_{1}$,  $\Lnorm^{\doarr} := \Lnorm_1^{\doarr}$ and $\Lnorm^{\uparr} := \Lnorm_1^{\uparr}$, such that $\Lnorm = \Lnorm^{\doarr} + \Lnorm^{\uparr}$. Also, let $\mathcal{N}_{\doarr}(e)$ and $\mathcal{N}_{\uparr}(e)$ be the lower and upper neighbors of edge $e$, respectively.

\paragraph{Topological filters} The Hodge decomposition in~\Cref{eq:hodge_decomposition} suggests to separately filter the irrotational, solenoidal and harmonic components of the signal. Thus, generalizing the approach proposed in~\cite{yang2021finite}, consider a simplicial convolutional filter given by:
\begin{equation}\label{eq:topological_filters}
    \Hnorm =   \underbrace{\sum_{k = 1}^{K_{\doarr}} w^{\doarr}_k \big(\Lnorm^{\doarr}\big)^k}_{\Hnorm^{\doarr}} + \underbrace{\sum_{k = 1}^{K_{\uparr}}  w^{\uparr}_k \big(\Lnorm^{\uparr}\big)^k}_{\Hnorm^{\uparr}} + \underbrace{w_{h} \Pnorm_h}_{\Hnorm^{h}}
\end{equation}

where $\mathbf{w}^{\doarr} = \Big[w^{\doarr}_1,...,w^{\doarr}_{K^{\doarr}}\Big]$, $\mathbf{w}^{\uparr} = \Big[w^{\uparr}_1,...,w^{\uparr}_{K^{\uparr}}\Big]$ and $w_{h}$ are the filter's weights.
The order of the irrotational and solenoidal filters are represented by 
$K_{\doarr}$  and $K_{\uparr}$, respectively. The filter in~\Cref{eq:topological_filters} resembles the Hodge decomposition and it is a proper generalization to simplicial signals  of the linear-shift-invariant graph filters~\citep{shuman2013emerging}. In particular, the terms $\mathbf{H}^{\doarr}$ and $\mathbf{H}^{\uparr}$ of~\Cref{eq:topological_filters} allows to independently filter the input signal based on its lower and upper simplicial neighbourhoods (encoded into the Laplacians $\Lnorm^{\doarr}$ and $\Lnorm^{\uparr}$), thus processing its irrotational and solenoidal components, respectively. The term $\mathbf{H}^{h}$ extracts and scales the harmonic component of the signal, with $\Pnorm_h \in \mathbb{R}^{E \times E}$ being a  projection operator onto the harmonic space $\text{ker}\big(\Lnorm\big)$. From~\Cref{hodge_spaces} and~\Cref{eq:back:laplacians},  harmonic  signals  can  be represented  as  linear  combination  of  a  basis  of  eigenvectors  spanning  the kernel of $\Lnorm$. However, since there is no unique way to identify a basis for such a subspace, the approximation can be driven by ad-hoc criteria to choose a specific basis, as in~\cite{sardellitti2022cell}, or just finding an approximated projector $\widehat{\Pnorm}_h$ of any of the possible bases, but with some desirable property as sparsity. In the latter case, the true harmonic projection operator is equal to $\Pnorm = U_h U_h^T$, where $U_h$ is the set eigenvectors of $\Lnorm$ corresponding to the smallest eigenvalue  A sparse approximation of $\Pnorm_h$ can thus be obtained as \cite{olfati2004consensus}:
\begin{align}\label{eq:harmonic_projector_approx}
    \widehat{\Pnorm}_h = \big(\mathbf{I} - \varepsilon \, \Lnorm\big)^{K^{h}},
\end{align}
where $K^{h} > 0$ and $0<\varepsilon\leq\frac{2}{\lambda_{\text{max}}(\Lnorm)}$. It can be shown that  for $\widehat{\Pnorm}_h$ in~\Cref{eq:harmonic_projector_approx} it holds~\citep{olfati2004consensus}:
\begin{equation}\label{harmonic_convergence}
\displaystyle\lim_{K^{h} \rightarrow \infty} \; \widehat{\Pnorm} = \Pnorm_h.
\end{equation}
The matrix $\mathbf{H}^{h}$ in~\Cref{eq:topological_filters} and~\Cref{eq:harmonic_projector_approx} is known as the \textit{harmonic filter}.

\

\paragraph{Spectral Interpretaion.} A frequency response of the filter in~\Cref{eq:topological_filters} can be derived, based on the work from \cite{barbarossa2020topological} and the definition of Simplicial Fourier Transform  from \cite{yang2021finite}, therefore further details can be found therein. Assume that $\widehat{\Pnorm}_h$ in~\Cref{eq:harmonic_projector_approx} is in the asymptotic regime in~\Cref{harmonic_convergence} (i.e., $\widehat{\Pnorm}_h = \Pnorm_h$).The Simplicial Fourier Transform  $\mathbf{s} \in \mathbb{R}^{E} $ of a signal $\xnorm \in \mathbb{R}^{E}$ is defined as its projection onto the basis of the eigenvectors $\mathbf{U} \in \mathbb{R}^{E \times E}$  of $\Lnorm\in \mathbb{R}^{E \times E}$ (which is a symmetric and positive semi-definite matrix by definition):

 \begin{equation}\label{eq:background:sc:simplicial_fourier_transform}
    \mathbf{s} = \mathbf{U}^T\xnorm.
 \end{equation}
Given the transform in~\Cref{eq:background:sc:simplicial_fourier_transform}, the filter frequency response is defined as:
\begin{equation}
     \boldsymbol{\Sigma} =  \mathbf{U}^T\mathbf{H}\mathbf{U},
 \end{equation}
where $\boldsymbol{\Sigma} \in \mathbb{R}^{E \times E}$ is a diagonal matrix representing a mask in the frequency domain.
Due to~\Cref{hodge_spaces} and~\Cref{eq:hodge_decomposition}, the matrix $\boldsymbol{\Sigma}$ can be seen as a block-diagonal matrix made of a diagonal matrix $\boldsymbol{\Sigma}^{\doarr}\in \mathbb{R}^{N_{\doarr} \times N_{\doarr}}$ containing the frequency mask associated to non-zeros eigenvalues $\boldsymbol{\lambda}^{\doarr} \in \mathbb{R}^{N_{\doarr}}$ of $\Lnorm^{\doarr}$, a diagonal matrix $\boldsymbol{\Sigma}^{\uparr}\in \mathbb{R}^{N_{\uparr} \times N_{\uparr}}$ containing the frequency mask associated to the non-zeros eigenvalues $\boldsymbol{\lambda}^{\uparr} \in \mathbb{R}^{N_{\uparr}}$ of $\Lnorm^{\uparr}$  and a constant diagonal matrix  $\boldsymbol{\Sigma}^{h}\in \mathbb{R}^{N_h \times N_h}$ containing the constant frequency mask associated to the zero eigenvalues of $\Lnorm$. Therefore, $N_{\doarr}$ is the dimension of the gradient space,
 $N_{\uparr}$ is the dimension of the curl space and $N_h$ is the dimension of the harmonic space, such that $N = N_{\doarr} + N_{\uparr} + N_h$. This fact allows to characterize the frequency response in terms of irrotational, solenoidal and harmonic frequencies responses, enhancing the perspective of~\Cref{eq:topological_filters} as three parallel filtering branches. In particular, it holds:
 \begin{align}
     &[\boldsymbol{\Sigma}^{\doarr}]_{ii} = \sum_{k = 1}^{K_{\doarr}} w_k^{\doarr} (\lambda^{\doarr}_i)^k, \label{spctral_lower}\\
     &[\boldsymbol{\Sigma}^{\uparr}]_{ii} = \sum_{k = 1}^{K_{\uparr}} w_k^{\uparr} (\lambda^{\uparr}_i)^k, \label{spctral_upper} \\ &[\boldsymbol{\Sigma}_{h}]_{ii} = w^{h},\label{spctral_harmonic}
\end{align}
which represent the frequency masks of the irrotational, solenoidal, and harmonic component, respectively.

\section{Topological Neural Networks}\label{sec:tnn}

Let $\xph$ be a discrete topological space (i.e., a simplicial or cell complex) with nodes $\V$ and a set $\mathcal{P}_{\xph}$ that indexes higher-order cells contained in $\xph$, including the set of nodes $\V$ as 0-cells and the set of edges $\E$ as 1-cells. The connectivity of $\xph$ is encoded in the set of incidence matrices ${\Bnorm_{k}}{k=1}^{dim(\xph)}$, where each $\Bnorm{k}$ maps $k$-cells to the $(k+1)$-cells on their co-boundary. For example, if $\xph$ is a cell complex of dimension 2, its connectivity is fully encoded in the set $\{\Bnorm_1, \Bnorm_2\}$ such that $\Bnorm_1 \in \R^{n \times e}$ maps each node $v$ to the edges that have $v$ on their boundary and $\Bnorm_{2} \in \R^{e \times r}$ acounting for the connectivity between edges and rings. Assume that $\xph$ is connected, undirected, unweighted, unoriented, and that there are features $\{\mathbf{h}_{\sigma}\}_{\sigma \in \mathcal{P}_{\xph}}\subset \R^{d}$. Topological Neural Networks (TNNs) are functions of the form:

\begin{equation}\label{eq:tnn_def}
    \mathsf{TNN}_{\theta}:(\xph,\{\mathbf{h}_\sigma\}) \mapsto y_\xph,
\end{equation}

with parameters $\theta$ learned via a training procedure and whose output $y_\xph$ is either a cell-level or complex-level prediction. 

\

From the broad class of topological nerual networks~\citep{papillon2023architectures}, this manuscript will focus on message passing schemes defined over topological spaces, known as Topological Message Passing~\citep{bodnar2021weisfeiler} that compute cell representations by stacking layers of the form:

\begin{align}\label{eq:topological-message-passing-scheme}
    \mathbf{h}_{\mathcal{B}} &= \underset{\tau \in \mathcal{B}(\sigma)}{\agg} \big(\mathsf{m}_{\mathcal{B}}\big( \mathbf{h}_{\sigma}, \mathbf{h}_{\tau}\big)\big),\\[1.5pt]
    \mathbf{h}_{\mathcal{C}o} &= \underset{\tau \in \mathcal{C}o(\sigma)}{\agg} \big(\mathsf{m}_{\mathcal{C}o}\big( \mathbf{h}_{\sigma},  \mathbf{h}_{\tau}\big)\big), \\[1.5pt]
    \mathbf{h}_{\uparr} &= \underset{\tau \in \mathcal{N}_\uparr(\sigma)}{\agg} \big(\mathsf{m}_{\uparr}\big( \mathbf{h}_{\sigma}, \mathbf{h}_{\tau}\big)\big), \\[1.5pt]
    \mathbf{h}_{\doarr} &= \underset{\tau \in \mathcal{N}_\doarr(\sigma)}{\agg} \big(\mathsf{m}_{\doarr}\big( \mathbf{h}_{\sigma}, \mathbf{h}_{\tau} \big) \big), \\[1.5pt]
    \mathbf{h}_{\sigma}^{\textsf{new}} &= \mathsf{com} \big( \mathbf{h}_{\sigma},  \mathbf{h}_{\mathcal{B}} , \mathbf{h}_{\mathcal{C}o}, \mathbf{h}_{\uparr},     \mathbf{h}_{\doarr} \big).\label{eq:tmp-comb}
\end{align}

It is important to highlight that for any given cell $\sigma$, certain neighborhoods may be empty, meaning they lack adjacent cells. In such cases, the associated representations are considered as zeros. For example, nodes ($0$-cells) do not have neither a boundary neighbourhood nor the lower one. In this case for $\sigma$ being a $0$-cell,~\Cref{eq:tmp-comb} reduces to compute the latent representation of co-boundary (i.e., $\mathbf{h}_{\mathcal{C}o}$) and upper (i.e., $\mathbf{h}_{\uparr}$) messages to combine them into a new representation of $\sigma$ as: $\mathbf{h}_{\sigma}^{\textsf{new}} = \mathsf{com} \big( \mathbf{h}_{\sigma}, \mathbf{h}_{\mathcal{C}o}, \mathbf{h}_{\uparr} \big)$

\

Although message passing schemes on domains that extend beyond traditional graphs have been extensively studied\footnote{Saying {\em "going beyond message passing"} is still an argument of (friendly) discussion among scientist within the field of graph representation learning~\citep{velivckovic2022message}.}, {\em this section focuses specifically on topological neural networks for representation learning over simplicial and cellular complexes}. Many real-world applications yield data in the form of attributed graphs, though this is not an exhaustive representation of all data types encountered in practice. By employing topological domains with higher complexity than simplicial and cell complexes, hypergraphs~\citep{feng2019hypergraph} or combinatorial complexes~\citep{hajijtopological} might require additional inductive biases which usually go far beyond the domain knowledge and will not be considered in this thesis.

\section{State-of-the-Art and Related Works}\label{sec:related_work}

Multiple solutions to face the challenges of graph neural networks have already been proposed. For clarity, it is convenient to introduce the following notion:

\begin{definition}\label{def:rewiring}
Consider an $\MPNN$, a graph $\gph$ with adjacency $\mathbf{A}$, and a map $\mathcal{R}:\R^{n\times n}\rightarrow \R^{n\times n}$. A graph $\gph$ is said to be {\bf rewired} by $\mathcal{R}$, if the messages are exchanged on $\mathcal{R}(\gph)$ instead of $\gph$, with $\mathcal{R}(\gph)$ the graph with adjacency $\mathcal{R}(\mathbf{A})$. 
\end{definition}
\noindent Recent approaches to address over-squashing share a common idea: replace the graph $\gph$ with a rewired graph $\mathcal{R}(\gph)$ enjoying better connectivity \Cref{fig:effective-resistance}. These works are then distinguished based on the choice of the rewiring $\mathcal{R}$. 
\paragraph{Spatial methods.} Since $\MPNN$s fail to propagate information to distant nodes, a solution consists in replacing $\gph$ with $\rew(\gph)$ such that $\mathrm{diam}(\rew(\gph)) \ll \mathrm{diam}(\gph)$. 

Typically, this is achieved by either explicitly adding edges (possibly attributed) 
between distant nodes 
\citep{bruel2022rewiring,abboud2022shortest,gutteridge2023drew} or by allowing distant nodes to communicate through higher-order structures (e.g., cellular or simplicial complexes, \citep{bodnar2021weisfeilercell,bodnar2021weisfeiler}, which requires additional domain knowledge %to define the cells 
and incurs a computational overhead). 
{\em Parametrizing $\mathcal{R}(\cdot)$} This is achieved by considering the rewiring of $\gph$ a function whose parameters can be learned via backpropagation~\citep{hint86}. In~\cite{chen2020iterative, chen2019deep} proposed an end-to-end graph learning framework for jointly and iteratively learning the GCN parameters and an optimal graph topology, as a refinement of the initially available graph. The work in~\cite{tang2019joint}  proposed a dynamic procedure for joint learning of graphs and GCN parameters based on pairwise similarities of convolutional features in each layer. In~\cite{franceschi2019learning}, the authors provided a method for joint learning of graph and GCN parameters based on solving a bilevel program that learns a discrete probability distribution at the edges of the graph.
{\em Graph-Transformers} %belongs to this class, 
can be seen as an extreme example of rewiring, 
where $\rew(\gph)$ is a {\em complete graph} with edges weighted via attention \citep{kreuzer2021rethinking, mialon2021graphit, ying2021transformers,rampavsek2022recipe}. While these methods do alleviate over-squashing, since they {\em bring all pair of nodes closer}, they come at the expense of making the graph $\rew(\gph)$ much denser. In turn, this has an impact on computational complexity and introduces the risk of mixing local and non-local interactions. % due to the addition of many edges at each layer of the computational graph. %In \Cref{sec:theory_2} we also investigate theoretically a vulnerability of such approaches.

This group includes~\citep{topping2022understanding} and~\citep{banerjee2022oversquashing},  
where the rewiring is {\em surgical} -- but requires specific pre-processing -- in the sense that %we do not enlarge the receptive field of {\em each} node: 
$\gph$ is replaced by $\rew(\gph)$ where edges have only been added to `mitigate' bottlenecks as identified, for example, by negative curvature \citep{ollivier2007ricci, di2022heterogeneous}. 

Spatial rewiring, intended as accessing information beyond the 1-hop when updating node features, is common to many existing frameworks %that can lead to better representations 
\cite{abu2019mixhop,klicpera2019diffusion, chen2020supervised, ma2020path, wang2020multi, nikolentzos2020k}. However, %The aggregation of messages beyond the 1-hop 
this is usually done via powers of the adjacency matrix, which is the main culprit for over-squashing \citep{topping2022understanding}.
Accordingly, although the diffusion operators $\Anorm^{k}$ allow to aggregate information over non-local hops, they are not suited to mitigate over-squashing. 
\paragraph{Spectral methods.}
The connectedness of a graph $\gph$ can be measured via a quantity known as the {\em Cheeger constant}, defined as follows \citep{chung1997spectral}:

\begin{definition}\label{def:cheeger}
For a graph $\gph$, the Cheeger constant is
\begin{equation*}
    \cheeg = \min_{\mathsf{U}\subset \V}\frac{\lvert \{(u,v)\in\E: u\in \mathsf{U}, v\in \V\setminus \mathsf{U}\}\rvert}{\min(\mathrm{vol}(\mathsf{U}),\mathrm{vol}(\V\setminus \mathsf{U}))},
\end{equation*}
\noindent where $\mathrm{vol}(\mathsf{U}) = \sum_{u\in\mathsf{U}}d_u$, with $d_u$ the degree of node $u$.
\end{definition}
\noindent The Cheeger constant $\cheeg$ represents the energy required to disconnect $\gph$ into two communities. A small $\cheeg$ means that $\gph$ generally has two communities separated by only few edges -- over-squashing is then expected to occur here {\em if} information needs to travel from one community to the other. While $\cheeg$ is generally intractable to compute, thanks to the Cheeger inequality it holds $\cheeg \sim \lambda_1$, where $\lambda_1$ is %\citep{chung1997spectral}, one can approximate it by 
the positive, smallest eigenvalue of the graph Laplacian. Accordingly, a few new approaches have suggested to choose a rewiring that depends on the spectrum of $\gph$ and yields a new graph satisfying $\cheeg(\rew(\gph)) > \cheeg(\gph)$.
s \citet{arnaiz2022diffwire,deac2022expander, karhadkar2022fosr}. 
It is claimed that sending messages over such a graph $\rew(\gph)$
alleviates over-squashing, however this has not been shown analytically yet. 

\textbf{Pooling in MPNNs}: In message passing neural networks the pooling operation refer to a procedure that aims to reduce the number of nodes of the input graph $\gph$ through the layers of the $\MPNN$,  typically it follows a hierarchical scheme in which the pooling regions correspond to graph clusters that are
combined to produce a coarser graph~\citep{bruna2013spectral,defferrard2016convolutional, gama2018convolutional, mesquita2020rethinking}.

\textbf{Advent of Topological Deep Learning}
To cope with the limitations of long-range and group interactions, the field of topological deep learning~\citep{bodnar2022topological} provides the fundamental principles to overcome several limitations of the message passing schemes previously mentioned. In~\cite{bodnar2021weisfeiler} the authors proposed a Simplicial Weisfeiler-Lehman (SWL) colouring procedure for distinguishing non-isomorphic simplicial complexes and a provably powerful message passing scheme based on SWL, that generalise Graph Isomorphism Networks~\citep{xu2019powerful}. This was later refined in~\cite{bodnar2021weisfeilercell}, where the authors introduced CW Networks (CWNs), a hierarchical message-passing on cell complexes proven to be strictly more powerful than the WL test and not less powerful than the 3-WL test. In~\cite{hajij2020cell}, the authors provide a general message-passing mechanism over cell complexes however, they do not study the expressive power of the proposed scheme, nor its complexity. Furthermore, they did not experimentally validate its performance. The works in~\cite{bodnar2022neural, suk2022surfing} introduced Neural Sheaf Diffusion Models, neural architectures that learn a sheaf structure on graphs to improve learning performance on transductive tasks in heterophilic graphs. For a more detailed examination of the architectures developed in the field of topological deep learning, it is worth to read the survey of Papillon~\citep{papillon2023architectures}. Recent works considered also rings within the message passing scheme by means of Junction Trees (JT)~\citep{Fey2020_himp} and by augmenting node features with information about cycles~\citep{bouritsas2022improving}.
\chapter{On the Limitations of Graph Neural Networks and How Mitigate Them}\label{chap:challenges}
\section{On Over-Squashing in Message Passing Neural Networks}\label{sec:on_oversq}

The message-passing paradigm, realized via Message-Passing Neural Networks ($\MPNN$s)~\citep{gilmer2017neural}, has been criticized for its limitations related to expressivity~\citep{xu2019powerful}, over-smoothing~\citep{li2018deeper}. Graphs, at their core, represent a basic form of \emph{topological space}, and as such, {\em they often fall short in consistently modeling group and long-range interactions inherent in more complex topologies~\citep{bodnar2022topological}}. When $\MPNN$s propagate messages across distant nodes, many messages are condensed into fixed-size vectors issueing a phenomena known in the literature as \textbf{over-squashing}~\citep{alon2020bottleneck}. While this concern has been recognized and partially linked to graph-topological attributes like edges with \textbf{high negative curvature}~\citep{topping2022understanding} and \textbf{high commute time}~\citep{velingker2022affinity}, several pertinent questions remain unanswered. Among these are the roles of model depth and width in mitigating over-squashing and its relation to graph spectrum~\citep{karhadkar2022fosr} and underlying topology~\citep{deac2022expander}.

% Message Passing Neural Networks (MPNNs) are instances of Graph Neural Networks that leverage the graph to send messages over the edges. This inductive bias leads to a phenomenon known as over-squashing, where a node feature is insensitive to information contained at distant nodes. Despite recent methods introduced to mitigate this issue, an understanding of the causes for over-squashing and of possible solutions are lacking. In this theoretical work, we prove that: (i) Neural network width can mitigate over-squashing, but at the cost of making the whole network more sensitive; (ii) Conversely, depth cannot help mitigate over-squashing: increasing the number of layers leads to over-squashing being dominated by vanishing gradients; (iii) The graph topology plays the greatest role, since over-squashing occurs between nodes at high commute time. Our analysis provides a unified framework to study different recent methods introduced to cope with over-squashing and serves as a justification for a class of methods that fall under graph rewiring.

\paragraph{The goal of this section.} The analysis of~\citet{topping2022understanding} represents the current theoretical understanding of the over-squashing problem. %is restrained to scalar features and depth mm equal to the distance rr among the nodes of interest, 
However, it leaves some important open questions which are addressed in this section: %not clear 
(i) The role of the {\bf width} in mitigating over-squashing; (ii) What happens when the {\bf depth} exceeds the distance among two nodes of interest; (iii) How over-squashing is related to the graph structure (beyond local curvature-bounds) and its {\bf spectrum}. Therefore, this section provides {\em a unified framework to explain how spatial and spectral approaches alleviate over-squashing}.

\paragraph{Contributions and outline.} An $\MPNN$ is generally constituted by two main parts: a choice of architecture, and an underlying graph over which it operates. This section provides an investigatation how these factors participate in the over-squashing phenomenon focusing on the width and depth of the $\MPNN$, as well as on the graph-topology. %Our contributions are as follows:

\begin{itemize}
    \item  \Cref{sec:width} formally state, how the \emph{width} can mitigate over-squashing (\Cref{cor:bound_MLP_MPNN}), albeit at the potential cost of %poorer 
    generalization. %l\textcolor{blue}{Add section pointer?}
    \item \Cref{sec:depth}, shows that depth may not be able to alleviate over-squashing. In particular, two regimes are identified: the first one, the number of layers is comparable to the graph diameter, and \Cref{cor:over-squasing_distance}  proves that over-squashing is likely to occur among distant nodes. In fact, the distance at which over-squashing happens is strongly dependent on %both the width and regularity of the architecture, as well as the on
    the graph topology. In the second regime, an arbitrary (large) number of layers are considered. Therefore, due to \Cref{thm:vanishing}, in this stage the $\MPNN$ is, generally, %the very same Jacobian bound resulting in over-squashing are, ultimately, responsible for 
    dominated by vanishing gradients. This result is of independent interest, since it characterizes analytically conditions of vanishing gradients of the loss for a large class of $\MPNN$s that also include residual connections.
    \item \Cref{sec:topology} shows that the \emph{topology} of the graph has the greatest impact on over-squashing. In fact, \Cref{thm:effective_resistance} states that over-squashing happens among nodes with high commute time. This provides a unified framework to explain why all spatial and spectral {\em rewiring} approaches (discussed in \Cref{sec:related_work}) do mitigate over-squashing.
\end{itemize}

%\paragraph{The graph-rewiring angle.} In our new framework we are not modifying a given architecture, but rather changing the computational graph adopted by the specific $\MPNN$. This effectively can be interpreted as rewiring the large-scale resolutions of the graph (encoded in the diffusion matrices $\mathbf{A}^{k}\mathbf{A}^{k}$) and can be a first significant step towards characterizing powerful but efficient GNN variants that take any given $\MPNN$ and make it more powerful and less prone to over-squashing (and over-smoothing) by geometrically modifying the underlying computational template.

\begin{figure}[t]
    \centering
    \includegraphics[width=.8\textwidth]{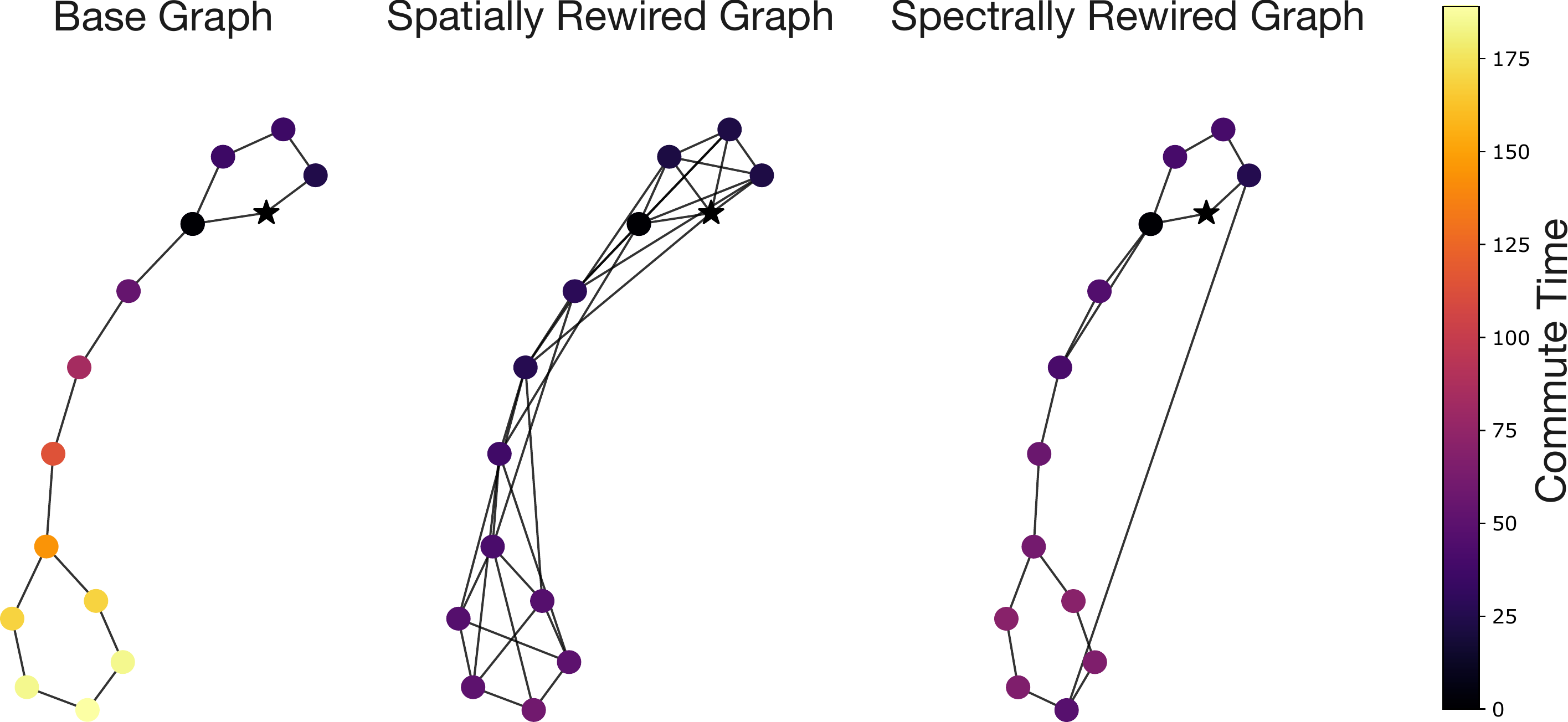}
    \vspace{-1.9mm}
    \caption{Effect of different rewirings $\mathcal{R}$ on the graph connectivity. % topology and its connectivity. 
    The colouring %of the nodes 
    denotes Commute Time -- defined in~\Cref{sec:topology} -- w.r.t. to the star node. % reference node highlighted with a star. 
    From left to right, the graphs shown are: the base, spatially rewired and spectrally rewired. The added edges significantly reduce the Commute Time and hence mitigate over-squashing in light of~\Cref{thm:effective_resistance}. }%\Cref{sec:topology}.}
    \vspace{-3mm}\label{fig:effective-resistance}
\end{figure}

\subsection{The impact of width}\label{sec:width}
This section addresses whether the width of the underlying $\MPNN$ can mitigate over-squashing and to what extent this is possible. In order to do that, the sensitivity analysis in \citet{topping2022understanding} is extended to higher-dimensional node features. In particular, consider a class of $\MPNN$s parameterised by neural networks, of the form: 
\begin{equation}\label{eq:MPNN_mlp}
    \mathbf{h}_{v}^{(t+1)} = \up\Big(c_{\rs}\W_{\rs}^{(t)} \mathbf{h}_{v}^{(t)} + c_{\mpas}\W_{\mpas}^{(t)}\sum_{u}\Anorm_{vu}\mathbf{h}^{(t)}_{u}\Big),
\end{equation}
\noindent where $\sigma$ is a pointwise-nonlinearity, $\W_{\rs}^{(t)},\W_{\mpas}^{(t)}\in\R^{p\times p}$ are learnable weight matrices and $\Anorm$ is a graph shift operator. Note that~\Cref{eq:MPNN_mlp} includes common $\MPNN$s such as $\mathsf{GCN}$~\citep{kipf2017graph}, $\mathsf{SAGE}$ \citep{hamilton2017inductive}, and $\mathsf{GIN}$ \citep{xu2019powerful}, where $\Anorm$ is one of $\mathbf{D}^{-1/2}\mathbf{A}\mathbf{D}^{-1/2}$, $\mathbf{D}^{-1}\mathbf{A}$ and $\mathbf{A}$, respectively, with $\mathbf{D}$ the diagonal degree matrix. In~\Cref{app:sec_width}, this analysis is extended to a more general class of $\MPNN$s (see~\Cref{thm:bound_general}), which includes stacking multiple nonlinearities. It is worth noting that the positive scalars $c_{\rs},c_{\mpas}$ represent the weighted contribution of the residual term and of the aggregation term, respectively. To simplify notations, a set of message-passing matrices that depend on $c_{\rs},c_{\mpas}$ are introduced. % that enter any of our theoretical results.

\begin{definition}\label{def:oper}
For a graph shift operator $\Anorm$ and constants $c_{\rs},c_{\mpas} > 0$,  define $\oper := c_{\rs}\mathbf{I} + c_{\mpas}\Anorm\in\R^{n\times n}$ to be the message-passing matrix adopted by the $\MPNN$.
\end{definition}

As in \citet{xu2018representation} and \citet{ topping2022understanding}, this section analyse the propagation of information in the $\MPNN$ via the Jacobian of node features after $m$ layers.  % (e.g. its atom type, if \gph\gph is a molecule). 

%the quantity
% \begin{equation}
%     \omega^{\infty}_{m} := \max_{t\in [m]}\{\lvert\lvert \OMEga^{(t)}\rvert\rvert_{\infty}, \lvert\lvert\W^{(t)}\rvert\rvert_{\infty}\}.
% \end{equation}
\begin{theorem}[\textbf{Sensitivity bounds}]\label{cor:bound_MLP_MPNN}
Consider an $\MPNN$ as in~\Cref{eq:MPNN_mlp} for $m$ layers, with $c_{\up}$ the Lipschitz constant of the nonlinearity $\sigma$ and $w$ the maximal entry-value over all weight matrices. % let $w := \max_{t\in [m]}\{\lvert\lvert \W_{\rs}^{(t)}\rvert\rvert_{\mathrm{max}}, \lvert\lvert\W_{\mpas}^{(t)}\rvert\rvert_{\mathrm{max}}\}$. 
For $v,u\in\V$ and width $p$, it holds
\begin{equation}\label{eq:upper_bound_mlp}
    \left\| \frac{\partial \mathbf{h}_{v}^{(m)}}{\partial \mathbf{h}_{u}^{(0)}}\right\|_{L_1} \leq  (\underbrace{\vphantom{\oper^m}c_{\up}wp}_{\mathrm{model}})^{m}\underbrace{(\oper^{m})_{vu}}_{\mathrm{topology}},
\end{equation}
with $\oper^{m}$ the $m^{th}$-power of $\oper$ introduced in \Cref{def:oper}.
\end{theorem}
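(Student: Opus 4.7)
The plan is to establish the bound by induction on the number of layers $m$, combining the chain rule for the Jacobian across layers with submultiplicativity of the induced $L_1$ matrix norm. The key observation is that once the sensitivity is measured in $L_1$, the residual and aggregation contributions weighted respectively by $c_{\rs}$ and $c_{\mpas}$ reassemble exactly into the message-passing operator $\oper = c_{\rs}\mathbf{I} + c_{\mpas}\Anorm$ from \Cref{def:oper}, which cleanly separates the bound into a model-dependent factor and a topology-dependent factor.

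For the base case $m=1$, I would differentiate~\Cref{eq:MPNN_mlp} directly and apply the chain rule to obtain
\[
\frac{\partial \mathbf{h}_v^{(1)}}{\partial \mathbf{h}_u^{(0)}} = \mathrm{diag}\bigl(\up'(\cdot)\bigr)\Bigl(c_{\rs}\W_{\rs}^{(0)}\mathbf{I}_{vu} + c_{\mpas}\W_{\mpas}^{(0)}\Anorm_{vu}\Bigr).
\]
Since $\|\mathrm{diag}(\up'(\cdot))\|_{L_1}\le c_{\up}$ by the Lipschitz assumption on $\up$, each $p\times p$ weight matrix has induced $L_1$ norm at most $pw$ (the column-sum bound, with $p$ entries each of magnitude at most $w$), and the triangle inequality together with submultiplicativity yield
\[
\Bigl\|\frac{\partial \mathbf{h}_v^{(1)}}{\partial \mathbf{h}_u^{(0)}}\Bigr\|_{L_1} \le c_{\up}\,pw\,\bigl(c_{\rs}\mathbf{I}_{vu} + c_{\mpas}\Anorm_{vu}\bigr) = c_{\up}\,pw\,\oper_{vu},
\]
which matches~\Cref{eq:upper_bound_mlp} at $m=1$.

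For the inductive step from $m$ to $m+1$, I would apply the chain rule through the intermediate layer,
\[
\frac{\partial \mathbf{h}_v^{(m+1)}}{\partial \mathbf{h}_u^{(0)}} = \sum_{q\in\V}\frac{\partial \mathbf{h}_v^{(m+1)}}{\partial \mathbf{h}_q^{(m)}}\cdot\frac{\partial \mathbf{h}_q^{(m)}}{\partial \mathbf{h}_u^{(0)}},
\]
and then take $L_1$ norms. Using the triangle inequality, the one-layer estimate from the base case applied to the first factor (with shifted layer indices), the inductive hypothesis on the second factor, and the non-negativity of $\oper_{vq}$, I obtain
\[
\Bigl\|\frac{\partial \mathbf{h}_v^{(m+1)}}{\partial \mathbf{h}_u^{(0)}}\Bigr\|_{L_1} \le \sum_{q\in\V}\! c_{\up}\,pw\,\oper_{vq}\cdot (c_{\up}\,pw)^{m}(\oper^{m})_{qu} = (c_{\up}\,pw)^{m+1}(\oper^{m+1})_{vu},
\]
closing the induction.

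The main technical obstacle is the bookkeeping around the $L_1$ matrix norm: one must be careful that $\|\W\|_{L_1}\le pw$ follows only after bounding column sums by $p\cdot w$, that submultiplicativity is preserved through the Hadamard product with the diagonal Jacobian of $\up$, and that the sum over intermediate nodes $q$ collapses into matrix multiplication only because all relevant entries $\oper_{vq}$ and $(\oper^{m})_{qu}$ are non-negative (guaranteed by $c_{\rs},c_{\mpas}>0$ and $\Anorm$ being a non-negative graph shift operator). Once this is settled, the residual-plus-aggregation structure automatically assembles into powers of $\oper$, delivering the clean factorisation into a model term $(c_{\up}wp)^{m}$ and a purely topological term $(\oper^{m})_{vu}$.
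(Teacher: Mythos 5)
Your proposal is correct and follows essentially the same route as the paper: an induction over layers via the chain rule, bounding the nonlinearity by its Lipschitz constant $c_{\up}$ and the weight matrices by $wp$, and using non-negativity of $\oper$ to collapse the sum over intermediate nodes into the entry $(\oper^{m})_{vu}$. The only difference is presentational — you phrase the bookkeeping through induced $L_1$ matrix norms and submultiplicativity, whereas the paper argues entrywise with summation over feature indices — and this does not change the substance of the argument.
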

%We recall that $A^{k}_{vu}$ is the {\em number} of walks from $v$ to $u$ of length $k$.
\noindent %As in \citet{topping2022understanding}, 
Over-squashing occurs if the right hand side of Eq.~\eqref{eq:upper_bound_mlp} is too small -- this will be related to the distance among $v$ and $u$ in %clarified more precisely in 
\Cref{subsec:shallow}. A small derivative of $\mathbf{h}_v^{(m)}$ with respect to $\mathbf{h}_u^{(0)}$ means that after $m$ layers, {\em the feature at $v$ is mostly insensitive to the information initially contained at $u$}, and hence that messages have not been propagated effectively. \Cref{cor:bound_MLP_MPNN} clarifies how the model can impact over-squashing through (i) its Lipschitz regularity $c_{\up}, w$ and (ii) its width $p$. In fact, given a graph $\gph$ such that $(\oper^{m})_{vu}$ decays exponentially with $m$, the $\MPNN$ can compensate by increasing the width $p$ and the magnitude of $w$ and $c_\sigma$.  %we assume that the magnitude of each weight is at most $\omega > 0$, then $\omega_{m} \leq \omega\cdot f$. Therefore, 
This confirms %we corroborate 
analytically the discussion in \citet{alon2020bottleneck}: \textbf{a larger hidden dimension $p$ does mitigate over-squashing}. However, this is not an optimal solution since increasing the contribution of the model (i.e. the term $c_{\up}wp$) may lead to over-fitting and poorer generalization \citep{bartlett2017spectrally}. Taking larger values of $c_{\up},w,p$ affects the model {\em globally} and does not target the sensitivity of specific node pairs induced by the topology via $\oper$. %lead to an increase in the sensitivity of the model, {\em independently of the pair of nodes} we consider. \textcolor{blue}{giulia: increasing $c_{\up},w,p$ affects the model \textit{globally} and does not target explicitly the lack of sensitivity (?) between a specific pair of nodes} 

%In fact, in the bound Eq.~\eqref{eq:upper_bound_mlp} one needs to, mainly, account for the impact of topology via the operators $\oper^{m}$ -- this will be investigated in \Cref{subsec:shallow} and \Cref{sec:topology}. % the main reason behind over-squashing arises from the entries $vu$ of the operators $\oper^{m}$. In \Cref{sec:depth} we will discuss when such entries can indeed be problematic and whether depth can alleviate this issue. 

% by compensating for when pairs of nodes $v,u$ are such that $(c_{\rs}\mathbf{I} + c_{\mpas}\Anorm)^{m}_{vu}$ is too small. Next, we discuss how the previous bound implies that distant nodes struggle to exchange information in an $\MPNN$. In fact, we show that mutual distance is only one aspect leading to over-squashing. 

\begin{tcolorbox}[boxsep=0mm,left=2.5mm,right=2.5mm]
\textbf{Message of the Section:} {\em The Lipschitz regularity, weights, and width of the underlying $\MPNN$ can help mitigate the effect of over-squashing. However, this is a remedy that comes at the expense of generalization and does not address the real culprit behind over-squashing: the graph-topology}.   
\end{tcolorbox}

\subsection{The impact of depth}\label{sec:depth}
Consider a graph $\gph$ and %assume we wish to solve 
a task with `long-range' dependencies, meaning that there exists (at least) a node $v$ whose embedding has to account for information contained at some node $u$ situated at a considerably large distance $r \gg 1$. One natural attempt at resolving over-squashing amounts to increasing the number of layers $m$ to compensate for the distance. However, evidence suggests that simply increasing the depth of an $\MPNN$ does not effectively mitigate over-squashing. The findings reveal that: (i) If depth $m$ mirrors the distance, over-squashing is bound to occur among distant nodes. Moreover, the distance at which this occurs is intrinsically linked to the underlying topology; (ii) Upon incorporating a high number of layers to encompass long-range interactions, certain precise conditions are outlined under which $\MPNN$s face the vanishing gradients problem.

%the same bounds on the Jacobian that induce over-squashing, are ultimately responsible for vanishing gradients in $\MPNN$.

% We identify two regimes, which we refer to as {\em shallow-diameter} and {\em deep}, respectively. In the first case, the depth of the $\MPNN$ is comparable to the distance among nodes. We show that over-squashing is indeed more likely to occur among distant nodes, however we also highlight how the notion of distance at which over-squashing is expected to happen is a function of the underlying topology. In the deep regime instead, we take an arbitrary (large) number of layers, and derive how the bounds on the Jacobian that induce over-squashing, are ultimately responsible for vanishing gradients in $\MPNN$.

\subsection{The shallow-diameter regime: over-squashing occurs among distant nodes}\label{subsec:shallow}

Consider the scenario above, with two nodes $v,u$, whose interaction is important for the task, at distance $r$. First, focus on the regime $m \sim r$ referred to this as the {\em shallow-diameter} regime, since the number of layers $m$ is comparable to the diameter of the graph. 

From now on, let $\Anorm = \mathbf{D}^{-1/2}\mathbf{A}\mathbf{D}^{-1/2}$, and recall that $\mathbf{A}$ is the adjacency matrix and $\mathbf{D}$ is the degree matrix. This is not restrictive, but allows to derive more explicit bounds and, later, bring into the equation the spectrum of the graph. Notice that results can be extended easily to $\mathbf{D}^{-1}\mathbf{A}$, given that this matrix is similar to $\Anorm$, and, in expectation, to $\mathbf{A}$ by normalizing the Jacobian as in \citet{xu2019powerful} and Section A in the Appendix of \citet{topping2022understanding}. %We recall that the scalar $(\mathbf{A}^{k})_{vu}$ represents the number of walks from $v$ to $u$ of length $k$. 
%We recall that $c_{\mpas}$ is the strength of the message-passing aggregation in Eq.~\eqref{eq:MPNN_mlp}.

 %We derive a bound analogous to \Cref{cor:bound_MLP_MPNN} in the case of $\mathsf{GAT}$ in \Cref{app:sec_theory1}.

\begin{theorem}[\textbf{Over-squashing among distant nodes}]\label{cor:over-squasing_distance} Given an $\MPNN$ as in~\Cref{eq:MPNN_mlp}, with $c_{\mpas} \leq 1$, let $v,u\in\mathsf{V}$ be at distance $r$. Let $c_{\up}$ be the Lipschitz constant of $\sigma$, $w$ the maximal entry-value over all weight matrices, $d_{\mathrm{min}}$ the minimal degree of $\gph$, and $\gamma_{\ell}(v,u)$ the number of walks from $v$ to $u$ of maximal length $\ell$. For any $0 \leq k < r$, there exists $C_{k} > 0$ {\bf independent} of $r$ and of the graph, such that 
\begin{equation}\label{eq:cor_distance}
   \left\| \frac{\partial \mathbf{h}_{v}^{(r+k)}}{\partial \mathbf{h}_{u}^{(0)}}\right\|_{L_1} \leq C_{k}\gamma_{r+k}(v,u)\Big(\frac{2c_{\up}wp}{d_{\mathrm{min}}}\Big)^r.
\end{equation}
\end{theorem}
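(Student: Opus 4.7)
The plan is to derive~\Cref{cor:over-squasing_distance} directly from the sensitivity bound in~\Cref{cor:bound_MLP_MPNN} by specialising to $m=r+k$ and the normalised shift operator $\Anorm=\mathbf{D}^{-1/2}\mathbf{A}\mathbf{D}^{-1/2}$. This reduces the statement to the purely combinatorial estimate
$$
(\oper^{r+k})_{vu}\;\leq\; C_{k}'\,\frac{2^{r}\,\gamma_{r+k}(v,u)}{d_{\min}^{r}},
$$
for some $C_k'$ depending only on $k$ and on $c_{\rs}$. Once this is in place, multiplying through by the factor $(c_{\up}wp)^{r+k}$ coming from~\Cref{cor:bound_MLP_MPNN} and absorbing the leftover $(c_{\up}wp)^{k}$ into the constant gives exactly the claimed bound with $C_{k}=C_{k}'(c_{\up}wp)^{k}$, which indeed depends only on $k$ and on model hyper-parameters, never on $r$ nor on the specific graph.

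For the combinatorial estimate, I would expand $\oper^{r+k}=(c_{\rs}\mathbf{I}+c_{\mpas}\Anorm)^{r+k}$ via the binomial theorem,
$$
(\oper^{r+k})_{vu}\;=\;\sum_{j=0}^{r+k}\binom{r+k}{j}c_{\rs}^{\,r+k-j}c_{\mpas}^{\,j}(\Anorm^{j})_{vu},
$$
and then exploit two observations. First, since the graph-distance from $v$ to $u$ equals $r$, every term $(\Anorm^{j})_{vu}$ with $j<r$ vanishes, so only the $k+1$ indices $j\in\{r,\dots,r+k\}$ survive. Second, writing each surviving entry as a sum over walks $\pi=(v=\pi_0,\pi_1,\dots,\pi_j=u)$ of length exactly $j$, the explicit normalisation gives
$$
(\Anorm^{j})_{vu}\;=\;\frac{1}{\sqrt{d_v d_u}}\sum_{\pi:\,v\to u,\,|\pi|=j}\;\prod_{\ell=1}^{j-1}\frac{1}{d_{\pi_\ell}}\;\leq\;\frac{W_{j}(v,u)}{d_{\min}^{\,j}},
$$
where $W_{j}(v,u)$ counts walks of length exactly $j$ from $v$ to $u$ and $d_{\min}\geq 1$. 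Since each such walk is one of those counted by $\gamma_{r+k}(v,u)$ whenever $j\leq r+k$, the crude estimate $W_{j}(v,u)\leq \gamma_{r+k}(v,u)$ holds.

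Combining these ingredients, using $c_{\mpas}\leq 1$ and factoring $d_{\min}^{-r}$ out of the sum, I obtain
$$
(\oper^{r+k})_{vu}\;\leq\;\frac{\gamma_{r+k}(v,u)}{d_{\min}^{\,r}}\sum_{j=r}^{r+k}\binom{r+k}{j}c_{\rs}^{\,r+k-j}\;\leq\;(k+1)\max(1,c_{\rs})^{k}\,2^{\,r+k}\,\frac{\gamma_{r+k}(v,u)}{d_{\min}^{\,r}},
$$
where the last step bounds each binomial coefficient by $2^{r+k}$ and uses that the exponent $r+k-j$ runs over $\{0,\dots,k\}$. Setting $C_k':=(k+1)\max(1,c_{\rs})^{k}2^{k}$ and then $C_{k}:=C_k'(c_{\up}wp)^{k}$ yields the desired inequality. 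The main bookkeeping obstacle I anticipate is in the walk expansion: one must carefully track the boundary factor $(d_v d_u)^{-1/2}$ versus the interior factors $d_{\pi_\ell}^{-1}$ and verify that the uniform bound $d_{\min}^{-j}$ is genuinely attained when $v$ or $u$ themselves are high-degree hubs, so that no implicit graph-dependence (such as an average degree or graph size) leaks into $C_k$. Once this is handled, the remainder of the argument collapses to elementary binomial estimates.
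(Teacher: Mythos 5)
Your proposal is correct and follows essentially the same route as the paper's proof: invoke the sensitivity bound of \Cref{cor:bound_MLP_MPNN} with $m=r+k$, expand $\oper^{r+k}$ binomially, discard the first $r$ terms because $d_\gph(v,u)=r$, and bound $(\Anorm^{j})_{vu}$ by the walk count divided by $d_{\mathrm{min}}^{j}$ (your check that the boundary factor $(d_v d_u)^{-1/2}\le d_{\mathrm{min}}^{-1}$ is exactly the needed observation). The only difference is in the final elementary step: the paper uses the sharper estimate $\binom{r+k}{r+q}\le\bigl(1+\tfrac{k}{k+1}\bigr)^{r-k}(1+k)^{k}\binom{k}{q}$, which yields a slightly refined bound with $c_{\mpas}$ inside the base $\bigl(2c_{\up}wpc_{\mpas}/d_{\mathrm{min}}\bigr)^{r}$, whereas your cruder bound $\binom{r+k}{j}\le 2^{r+k}$ is absorbed by the factor $2^{r}$ already present in the statement, so both arguments prove the theorem as stated.
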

To understand the bound above, fix $k < r$ and assume that nodes $v,u$ are `badly' connected, meaning that the number of walks $\gamma_{r+k}(v,u)$ of length at most $r+k$, is small. If $2\, c_{\up}wp < d_{\mathrm{min}}$, then the bound on the Jacobian in~\Cref{eq:cor_distance} {\em decays exponentially with the distance} $r$. 
Note that the bound above considers $d_{\mathrm{min}}$ and $\gamma_{r+k}$ as a worst case scenario. If one has a better understanding of the topology of the graph, sharper bounds can be derived by estimating $(\oper^{r})_{vu}$. % leveraging can be obtained.} %our bound looks at the worst-case scenario; in fact, 
%in expectation over the possible paths, 
%one also has exponential decay for some pair of nodes if we replace the minimal degree with the {\em average} degree.
%Therefore, in the shallow-diameter regime, over-squashing does occur among distant nodes when the sensitivity of a node $v$ on information contained at a node $u$ decays exponentially with the distance. In particular, 
\Cref{cor:over-squasing_distance} implies that, when the depth $m$ is comparable to the diameter of $\gph$,
\textbf{\em over-squashing becomes an issue if the task depends on the interaction of nodes $v,u$ at `large' distance $r$.} %such that, given $\gph$ and an $\MPNN$ with certain bounds on the weights and the hidden-dimension, the task depends on the interaction of nodes $v,u$ at a distance $r$ such that Eq.~\eqref{eq:cor_distance} is scaling like $\delta^{r}$, for some $\delta < 1$}. 
In fact, %depending on the width and regularity of the weights as well as the density and average connectivity of the graph, 
\Cref{cor:over-squasing_distance} %also provides an understanding for the notion of `large' distance and 
shows that the distance at which the Jacobian sensitivity falls below a given threshold, depends on both the model, via $c_{\up}, w,p$, and on the graph, through $d_{\mathrm{min}}$ and $\gamma_{r+k}(v,u)$. This implies that~\Cref{cor:over-squasing_distance} generalizes the analysis in \citet{topping2022understanding} in multiple ways: (i) it holds for any width $p > 1$; (ii) it includes cases where $m > r$; (iii) it provides explicit estimates in terms of number of walks and degree information. 
%A sharper bound is reported in \Cref{app:sec_depth}.

{\bf Remark.} What if $2c_{\up}wp > d_{\mathrm{min}}$? Taking larger weights and hidden dimension increases the sensitivity of node features. However, this occurs {\em everywhere} in the graph the same. Accordingly, nodes at shorter distances will, on average, still have sensitivity exponentially larger than nodes at large distance. This is validated in the synthetic experiments in~\Cref{app:on_oversq}, where the weights do not have constraints on.

%As a consequence of the theory developed thus far, we see that \textbf{\em any spatial method} discussed in \Cref{sec:preliminaries} manages to \textbf{\em mitigate over-squashing} by replacing $\gph$ with a rewired graph $\rew(\gph)$ whose exponent $r$ in Eq.~\eqref{eq:cor_distance} is, on average, smaller.

\paragraph{The deep regime: vanishing gradients dominate}\label{subsec:deep}
Now the focus will be on the regime where the number of layers $m \gg r$ is %unbounded, and 
large. %\textcolor{darkorchid}{maybe state here that $m \gg r$ so there's a clue on what does it means large}. 
%In this case, the bounds in \Cref{cor:over-squasing_distance} are no longer applicable. %, since we are exploring the scenario where $m \gg r$. 
In this case, vanishing gradients can occur and make the entire model insensitive. % over-squashing cannot be alleviated by increasing $m$. In fact, it is not just the sensitivity of distant nodes that is affected, but the sensitivity of the whole $\MPNN$. Upon close inspection, one finds that bounds on the Jacobian of node features as in \Cref{cor:bound_MLP_MPNN} lead to bounds in the gradient of the loss $\mathcal{L}$ through backpropagation. Namely,
Given a weight $\theta^{(k)}$ entering a layer $k$, one can write the gradient of the loss after $m$ layers as \citep{pascanu2013difficulty}
\begin{equation}
    \frac{\partial \mathcal{L}}{\partial \theta^{(k)}} = \sum_{v,u\in V}\Big(\frac{\partial \mathcal{L}}{\partial \mathbf{h}^{(m)}_v}\frac{\partial \mathbf{h}_u^{(k)}}{\partial \vphantom{\mathbf{h}^{(k)}_u}\theta^{(k)}}\Big)\underbrace{\frac{\partial \mathbf{h}_v^{(m)}}{\partial \mathbf{h}_{u}^{(k)}}}_{\mathrm{sensitivity}}
\end{equation}
\noindent Here there are %can then readily apply the sensitivity analysis of \Cref{cor:bound_MLP_MPNN} to prove under which 
provided the {\bf exact conditions} for $\MPNN$s to incur the vanishing gradient problem, intended as the gradients of the loss decaying exponentially with the number of layers $m$. %the number of layers $m$. 
%We recall that $\cheeg$ is the Cheeger constant introduced in \Cref{def:cheeger}, while $d_{\mathrm{max}}$ is the maximal degree.  %\textcolor{red}{Should emphasize this theorem a bit more I think, potentially saying that this provides exact sufficient conditions for the vanishing gradient problem in a large class of MPNN and shows how this is related to the graph topology precisely via the sensitivity bounds.}
\begin{theorem}[\textbf{Vanishing gradients}]\label{thm:vanishing} Consider an $\MPNN$ as in Eq.~\eqref{eq:MPNN_mlp} for $m$ layers with a quadratic loss $\mathcal{L}$. Assume that (i) $\sigma$ has Lipschitz constant $c_{\up}$ and $\sigma(0) = 0$, and (ii) weight matrices have spectral norm bounded by $\mu > 0$. %maximal entry-value $w$. 
Given any weight $\theta$ entering a layer $k$, there exists a constant $C > 0$ independent of $m$, such that
\begin{align}
    \left\vert \frac{\partial \mathcal{L}}{\partial \theta}\right\vert &\leq C\left(c_{\up}\mu(c_{\rs} + c_{\mpas})\right)^{m-k}\left(1  +\left(c_{\up}\mu(c_{\rs} + c_{\mpas})\right)^{m}\right).
\end{align}
\noindent In particular, if $c_{\up}\mu(c_{\rs} + c_{\mpas}) < 1$, then the gradients of the loss decay to zero exponentially fast with $m$.
\end{theorem}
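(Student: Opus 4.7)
The plan is to bound the gradient via chain rule, factoring it into a loss--output sensitivity, a layer-to-layer Jacobian, and a weight derivative:
\[
\frac{\partial\mathcal{L}}{\partial\theta} = \sum_{v,u} \frac{\partial\mathcal{L}}{\partial\mathbf{h}_v^{(m)}}\,\frac{\partial\mathbf{h}_v^{(m)}}{\partial\mathbf{h}_u^{(k)}}\,\frac{\partial\mathbf{h}_u^{(k)}}{\partial\theta}.
\]
Writing $\alpha := c_{\up}\mu(c_{\rs}+c_{\mpas})$, the aim is to show each factor is controlled by an appropriate power of $\alpha$, so that the product exhibits the claimed exponential behaviour in $m$.

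First I would establish a \emph{feature-norm bound}. Vectorising one layer of Eq.~\eqref{eq:MPNN_mlp} as $\mathbf{h}^{(t+1)}=\up(\mathbf{M}^{(t)}\mathbf{h}^{(t)})$, where $\mathbf{M}^{(t)}$ combines the graph operator $\oper$ with the weight matrices through a Kronecker construction, the bounds $\|\Anorm\|\le 1$ (from symmetric normalisation) and $\|\W_{\rs}^{(t)}\|,\|\W_{\mpas}^{(t)}\|\le\mu$ yield $\|\mathbf{M}^{(t)}\|\le\mu(c_{\rs}+c_{\mpas})$. Combined with $\up(0)=0$ and Lipschitz continuity of $\up$ (which imply $\|\up(\mathbf{x})\|\le c_{\up}\|\mathbf{x}\|$), induction on $t$ gives $\|\mathbf{h}^{(t)}\|\le\alpha^{t}\|\mathbf{h}^{(0)}\|$. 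Next, the one-layer Jacobian equals $\mathbf{D}^{(t)}\mathbf{M}^{(t)}$ with $\|\mathbf{D}^{(t)}\|\le c_{\up}$ (a diagonal matrix collecting values of $\up'$, understood in Clarke's sense where necessary), so iterating the chain rule gives $\|\partial\mathbf{h}^{(m)}/\partial\mathbf{h}^{(k)}\|\le\alpha^{m-k}$.

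For the weight derivative, if $\theta$ is a single scalar entry of $\W_{\rs}^{(k)}$ or of $\W_{\mpas}^{(k)}$, then $\partial\mathbf{h}^{(k)}/\partial\theta$ is a rank-one operator acting on $\mathbf{h}^{(k-1)}$ via a factor of $\up'$ together with a basis matrix scaled by $c_{\rs}$ or by $c_{\mpas}\Anorm$; since $\|\Anorm\|\le 1$, the feature-norm bound at layer $k-1$ yields $\|\partial\mathbf{h}^{(k)}/\partial\theta\|\le c_{\up}(c_{\rs}+c_{\mpas})\alpha^{k-1}\|\mathbf{h}^{(0)}\|$. For the quadratic loss, $\|\partial\mathcal{L}/\partial\mathbf{h}^{(m)}\|\le\|\mathbf{h}^{(m)}\|+\|\mathbf{y}\|\le\alpha^m\|\mathbf{h}^{(0)}\|+\|\mathbf{y}\|$. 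Applying the triangle inequality across the double sum and multiplying the three estimates,
\[
\Big|\frac{\partial\mathcal{L}}{\partial\theta}\Big| \le \tilde{C}\,\alpha^{k-1}\,\alpha^{m-k}\bigl(\alpha^m\|\mathbf{h}^{(0)}\|+\|\mathbf{y}\|\bigr),
\]
and absorbing the $k$-dependent but $m$-independent factors $\alpha^{k-1},\|\mathbf{h}^{(0)}\|,\|\mathbf{y}\|$ into a single constant $C$ produces the stated inequality, from which exponential decay as $m\to\infty$ when $\alpha<1$ is immediate.

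\textbf{Main obstacle.} The delicate step will be the weight-derivative estimate: one must verify that regardless of whether $\theta$ sits in the residual branch $\W_{\rs}^{(k)}$ or in the aggregation branch $\W_{\mpas}^{(k)}$, the same $\alpha^{k-1}$ scaling emerges with the correct $(c_{\rs}+c_{\mpas})$ prefactor. This relies crucially on $\|\Anorm\|\le 1$ (which uses the symmetric normalisation choice) and on $\up(0)=0$; without the latter, feature norms would develop an $O(1)$ drift independent of $\|\mathbf{h}^{(0)}\|$, and the clean exponential dependence on $m-k$ would be spoiled by stray terms surviving the $m\to\infty$ limit.
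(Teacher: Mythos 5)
Your proposal follows essentially the same route as the paper's proof: the chain-rule factorisation of $\partial\mathcal{L}/\partial\theta$, a Kronecker-product spectral bound giving $\|\mathbf{H}^{(t)}\|_F\le\bigl(c_{\up}\mu(c_{\rs}+c_{\mpas})\bigr)^{t}\|\mathbf{H}^{(0)}\|_F$ via $\up(0)=0$ and Lipschitzness, the layer-to-layer Jacobian bound $\bigl(c_{\up}\mu(c_{\rs}+c_{\mpas})\bigr)^{m-k}$, the quadratic-loss estimate $\|\mathbf{H}^{(m)}\|_F+\|\mathbf{Y}\|_F$, and absorption of $m$-independent terms into $C$. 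The only cosmetic difference is that you bound $\partial\mathbf{h}^{(k)}/\partial\theta$ explicitly (yielding an extra $\alpha^{k-1}$ factor, a mild sharpening), whereas the paper simply notes that this term is independent of $m$ and absorbs it into the constant; both are valid and yield the stated inequality.
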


\noindent The problem of vanishing gradients for graph convolutional networks have been studied from an empirical perspective~\citep{li2019deepgcns,li2021training}.~\Cref{thm:vanishing} provides sufficient conditions for the vanishing of gradients to occur in a large class of $\MPNN$s that also include (a form of) residual connections through the contribution of $c_{\rs}$ in~\Cref{eq:MPNN_mlp}. This extends a behaviour studied for Recurrent Neural Networks~\citep{bengio1994learning,hochreiter1997long, pascanu2013difficulty, rusch2021coupled, rusch2021unicornn} to the $\MPNN$ class. Some discussion on vanishing gradients for $\MPNN$s can be found in~\cite{ruiz2020gated} and~\cite{rusch2022graph}. A few final comments are in order. (i) The bound in \Cref{thm:vanishing} seems to `hide' the contribution of the graph. This is, in fact, because the spectral norm of the graph operator $\oper$ is $c_{\rs} + c_{\mpas}$ -- An investigation of more general graph shift operators~\citep{dasoulas2021learning} is left to future work. (ii)~\Cref{cor:over-squasing_distance} shows that if the distance $r$ is large enough and the number of layers $m$ is chosen such hat. $m \sim r$, over-squashing arises among nodes at distance $r$. Taking the number of layers large enough though, may incur the vanishing gradient problem %many layers. However, 
\Cref{thm:vanishing}. % shows that %if the task has sufficiently long interactions, by stacking many layers, ]
%we can incur in the problem of vanishing gradients. 
In principle, there might be an intermediate regime where $m$ is larger than $r$, but {\em not} too large, in which the depth could help with over-squashing before it leads to vanishing gradients. Given a graph $\gph$, and bounds on the Lipschitz regularity and width, there exists $\tilde{r}$, depending on the topology of $\gph$, such that if the task has interactions at distance $r > \tilde{r}$, no number of layers can allow the $\MPNN$ class to solve it. This is left for future work. %Understanding the {\em phase transition} from over-squashing as in \Cref{cor:over-squasing_distance} when $m \sim r$, to vanishing gradients when $m \gg 1$, is a subtle problem that still needs to be investigated. 

%{\bf important comments}: (i) differently from Recurrent Neural Networks, for $\MPNN$ the speed at which gradient vanishing occurs, is also dependent on the graph topology via density and connectivity (Cheeger constant); (ii) over-squashing and vanishing gradients are {\em two sides of the same coin}, since the same Jacobian bounds result in small sensitivity of nodes at distance $r$ when $m \sim r$ (\Cref{cor:over-squasing_distance}), and in gradient vanishing when $m\gg r$. %(iii) if we increase the sensitivity of the whole network so that $\textcolor{blue}{\epsilon} > 1$, in general we may incur in the opposite problem of exploding gradients -- see \Cref{thm:exploding} in the Appendix.

%over-squashing becomes 
%\textbf{\textcolor{blue}{Possible important take-home message from this section is: if over-squashing and vanishing gradients really are two sides of the same coin, traditional approaches to mitigate the latter (as different forms of gating for example) can be effectively used to address the former as well.}}    

% \paragraph{Validating the theory.} Possible synthetic experiments
\begin{tcolorbox}[boxsep=0mm,left=2.5mm,right=2.5mm]
\textbf{Message of the Section:} \em Increasing the depth $m$ will, in general, not fix over-squashing. As $m$ increases, %Assume that the task depends on the interaction of nodes at distance $r \gg 1$. When $m$ is comparable to $r$, over-squashing makes distant nodes being, mostly, insensitive to each other. If the depth increases further, then the vanishing gradient problem kicks in, and we lose sensitivity across the graph. Accordingly, 
$\MPNN$s transition from over-squashing (\Cref{cor:over-squasing_distance}) to vanishing gradients (\Cref{thm:vanishing}). % and will generally fail to capture long-range relations.
\end{tcolorbox}
\vspace{-5pt}
%\paragraph{} {}

% with exponential vanishing of the gradients of the loss.}
\subsection{The impact of topology }\label{sec:topology}
This section discusses the impact of graph topology, particularly the graph spectrum, on over-squashing. This allows to draw a unified framework that shows why existing approaches manage to alleviate over-squashing by either spatial or spectral rewiring (\Cref{sec:related_work}).  

\paragraph{On over-squashing and access time} Throughout the section over-squashing is related to well-known properties of random walks on graphs. To this aim, it is worth to review basic concepts about random walks.

\paragraph{Access and commute time.} A Random Walk (RW) on a graph $\gph$ is a Markov chain where, at each step, it moves from a node $v$ to one of its neighbors with probability proportional to $1/d_v$, where $d_v$ is the degree of node $v$. %where if the state $t$ is given by node $u$, then state $t+1$ is one of the neighbours of $u$ with probability $(d_u)^{-1}$. %If we also allow the walk to rest at a given node with probability $\mu > 0$, then we refer to it as a $\mu$-{\em lazy} RW. 
Several properties about RWs have been studied. Of particular interest in this context are the notions of {\em access time} $\mathsf{t}(v,u)$ and {\em commute time} $\tau(v,u)$ (see~\Cref{fig:effective-resistance}). The access time $\mathsf{t}(v,u)$ (also known as {\em hitting time}) is the  expected number of steps
before node $u$ is visited for a RW starting from node $v$. The commute time instead, represents the expected number of steps in a
RW starting at $v$ to reach node $u$ and {\em come back}. A high access (commute) time means that nodes $v,u$ generally struggle to visit each other in a RW -- this can happen if nodes are far-away, but it is in fact more general and strongly dependent on the topology.

Some connections between over-squashing and the topology have already been derived (\Cref{cor:over-squasing_distance}), but up to this point `topology' has entered the picture through 'distances' only. % however `topology' until this point has always represented an obstruction to sensitivity. 
In this section, over-squashing is further linked to other quantities related to the topology of the graph, such as access time, commute time and the Cheeger constant. Ultimately this section provides a unified framework to understand how existing approaches manage to mitigate over-squashing via graph-rewiring.

 %The commute time then describes how easily nodes $v,u$ can communicate with each other in $\gph$. % and hence account for a general notion of `mutual connectedness'. 

\paragraph{Integrating information across different layers.} Consider a family of $\MPNN$s of the form
\begin{equation}\label{eq:mpnn_simplified}
    \mathbf{h}_{v}^{(t)} = \mathsf{ReLU}\Big(\W^{(t)}\Big(c_{\rs}\mathbf{h}_v^{(t-1)} + c_{\mpas}(\Anorm\mathbf{h}^{(t-1)})_v\Big)\Big).
\end{equation}
\noindent Similarly to \citet{kawaguchi2016deep,xu2018representation}, the following assumptions are required:

%assume that all paths in the computation graph
%of the model are activated with the same probability of
%success ρ

\begin{assumption}\label{assumption_main_body} All paths in the computation graph
of the model are activated with the same probability of
success $\rho$.
\end{assumption}
% \noindent Under \Cref{assumption}, for $k \leq m$, we can compute
% \begin{equation*}
%     \mathbb{E}\left[\frac{\partial \mathbf{h}_{v}^{(m)}}{\partial\mathbf{h}_{u}^{(k)}}\right] = \rho^{m-k}\prod_{\ell = k+1}^{m}\W^{(\ell)}(\oper^{m-k})_{vu}.
% \end{equation*}
% \noindent We note that this is no longer an upper bound but an identity (in expectation). %Accordingly, we can now compare the sensitivity of a node $v$ to multiple nodes. In particular, pick two nodes $v\neq u$ and two layers $k < m$. %At layer $m$, $\mathbf{h}_v^{(m)}$ is a function of $\{\mathbf{h}_{u}^{(k)}, u \in \V\}$. 
Take two nodes $v\neq u$ at distance $r\gg 1$ and consider an $\MPNN$ that sends information {\em from $u$ to $v$}. Given a layer $k < m$ of the $\MPNN$, by~\Cref{cor:over-squasing_distance} it might be expected that $\mathbf{h}_v^{(m)}$ is much more sensitive to the information contained {\em at the same} node $v$ at an earlier layer $k$, i.e. $\mathbf{h}_{v}^{(k)}$, rather than to the information contained at a distant node $u$, i.e. $\mathbf{h}_{u}^{(k)}$. %The same argument of course applies if we swap $u$ with $v$ and consider the sensitivity of $\mathbf{h}_u^{(m)}$ to $\mathbf{h}_u^{(k)}$ and $\mathbf{h}_{v}^{(k)}$. 
\noindent Accordingly, consider the following quantity: %We want to determine how sensitive $\mathbf{h}_v^{(m)}$ is to the information sent by $u$ at layer $k$. compared to the  studying the sensitivity
\begin{align*}
\mathbf{J}_{k}^{(m)}(v,u) := \frac{1}{d_v}\frac{\partial \mathbf{h}_{v}^{(m)}}{\partial \mathbf{h}_{v}^{(k)}} - \frac{1}{\sqrt{d_v d_u}}\frac{\partial \mathbf{h}_{v}^{(m)}}{\partial \mathbf{h}_{u}^{(k)}}. 
\end{align*}
%\\&= z^{m}\tilde{\W}^{(m)}((\oper^{m})_{vv} + (\oper^m)_{uu} - 2(\oper^m)_{vu}).
%\noindent where we have used the symmetry of $\oper$. 
\noindent Notice that the normalization by degree stems from the choice $\Anorm = \mathbf{D}^{-1/2}\mathbf{A}\mathbf{D}^{-1/2}$. Here it is provided an intuition for this term. %, assume that $v,u$ are badly connected in the graph as per \Cref{cor:over-squasing_distance}. 
Say that node $v$ at layer $m$ of the $\MPNN$ is mostly insensitive to the information sent from $u$ at layer $k$. Then, on average, $\|\partial\mathbf{h}_v^{(m)} / \partial \mathbf{h}_u^{(k)} \| \ll \| \partial\mathbf{h}_v^{(m)} / \partial \mathbf{h}_v^{(k)} \| $. In the opposite case instead, on average, $\| \partial\mathbf{h}_v^{(m)} / \partial \mathbf{h}_u^{(k)} \| \sim \| \partial\mathbf{h}_v^{(m)} / \partial \mathbf{h}_v^{(k)} \| $. Therefore $\| \mathbf{J}^{(m)}_k(v,u)\|$ will be {\em larger} when $v$ is (roughly) independent of the information contained at $u$ at layer $k$. Therefore, the same argument can be extended by accounting for messages sent at each % that are mostly insensitive to each other in the $\MPNN$. To fully assess how nodes $v$ and $u$ exchange information across the entire architecture, 
%we need to account for the messages sent at each 
layer $k \leq m$. %We then introduce: %extend the general construction as
%Therefore, we set:
\begin{definition}\label{def:obst}
The Jacobian obstruction of node $v$ with respect to node $u$ after $m$ layers is $
    \obst^{(m)}(v,u) =  \sum_{k = 0}^{m}\|\mathbf{J}_k^{(m)}(v,u)\|.
$
\end{definition}
\noindent 
% Since $z$ and the weights are the same for all nodes, we use $\oper^{m}$ to assess the sensitivity of $v$ on $u$ and viceversa. Accordingly, given two nodes $v,u$, we introduce the scalar quantity -- we comment on the further degree normalization in \Cref{app:sec_theory2}:
% \begin{equation*}
%     J^{(m)}(v,u) = \frac{1}{d_v}(\oper^{m})_{vv} + \frac{1}{d_u}(\oper^{m})_{uu} - \frac{2}{\sqrt{d_v d_u}}(\oper^{m})_{vu}
% \end{equation*}
\noindent %Since $\oper$ is positive semi-definite, $J^{(m)}(v,u) \geq 0$. 
%We first note that $\obst^{(m)}(v,u) \geq 1/d_v + 1/d_u$ if $v\neq u$. 
As motivated above, a larger $\obst^{(m)}(v,u)$ means that, after $m$ layers, the representation of node $v$ is more likely to be insensitive to information contained at $u$ and conversely, a small $\obst^{(m)}(v,u)$ means that nodes $v$ is, on average, able to receive information from $u$. %for the downstream task in the $\MPNN$. 
Differently from the Jacobian bounds of the earlier sections, here the contribution coming from all layers $k \leq m$ is considered (note the sum over layers $k$ in~\Cref{def:obst}). %The intuition of comparing the sensitivity of a node $v$ with a different node $u$ and then to itself, resembles the concept of commute time $\tau(v,u)$. In fact, this is not a coincidence:
\begin{theorem}[\textbf{Over-squashing and access-time}]\label{thm:access} Consider an $\MPNN$ as in Eq.~\eqref{eq:mpnn_simplified} and let Assumption \ref{assumption_main_body} hold. If $\nu$ is the smallest singular value across all weight matrices and $c_{\rs},c_{\mpas}$ are such that $\nu (c_{\rs} + c_{\mpas}) = 1$, then, %There exists $\epsilon = \epsilon(\gph, \rho, \nu, c_{\rs},c)$ only depending on the spectral gap, such that 
in expectation,  %ndom variable with positive expectation independent of the path in \gph\gph, then 
%then the Jacobian distance satisfies 
% \[
% \obst^{(m)}(v,u) \geq \frac{\rho}{\nu c_{\mpas}}\frac{\mathsf{t}(u,v)}{2\lvert \E\rvert} - 2\rho\frac{\left(1 - \nu c_{\mpas}\frac{\cheeg^2}{2}\right)^{m+1}}{\nu c_{\mpas}\cheeg^{2}}\frac{n-1}{d_{\mathrm{min}}},
% \]
% \noindent In particular, we see that 
\[
\obst^{(m)}(v,u) \geq \frac{\rho}{\nu c_{\mpas}}\frac{\mathsf{t}(u,v)}{2\lvert \E\rvert}  + o(m), 
\]
\noindent with $o(m)\rightarrow 0$ exponentially fast with $m$. %up to errors decaying to zero exponentially fast with the number of layers $m$.
% \noindent where $\la$ More generally, if $\rho\nu(c_{\rs} + c_{\mpas}) < 1$, then there exists $\epsilon_\gph$ depending on the spectral gap of the graph and on $\rho,\nu,c_{\rs},c_{\mpas}$ -- but not on nodes $v,u$ -- such that  
% \begin{align*}
% \obst^{(m)}(v,u) \geq \epsilon_\gph\left(1-\left(\rho\nu(c_{\rs} + c_{\mpas}) - \rho\nu c_{\mpas}\frac{\cheeg^{2}}{2}\right)^{m+1}\right)\frac{\mathsf{t}(u,v)}{2\lvert\E\rvert\rho\nu c_{\mpas}}.
%     \end{align*}

\end{theorem}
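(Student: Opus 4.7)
The plan is to unroll the Jacobian $\partial \mathbf{h}_v^{(m)}/\partial \mathbf{h}_u^{(k)}$ through the chain rule into a sum over walks of length $m-k$ from $v$ to $u$, take expectations using Assumption~\ref{assumption_main_body}, and recognise the resulting series as the discrete Green's function of the lazy random walk driven by $\nu\oper$, whose access time is $\mathsf{t}(u,v)/(\nu c_\mpas)$.

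First I would compute the Jacobian. Differentiating~\Cref{eq:mpnn_simplified} layer by layer gives the one-step contribution $\partial \mathbf{h}_{v_\ell}^{(\ell)}/\partial \mathbf{h}_{v_{\ell-1}}^{(\ell-1)} = \mathrm{diag}(\sigma'_\ell)\W^{(\ell)}(\oper)_{v_\ell v_{\ell-1}}$, so the chain rule expands $\partial \mathbf{h}_v^{(m)}/\partial \mathbf{h}_u^{(k)}$ into a sum over walks $v=v_m,v_{m-1},\dots,v_k=u$ of a scalar factor $\prod_\ell(\oper)_{v_\ell v_{\ell-1}}$ times the matrix factor $\prod_\ell \mathrm{diag}(\sigma'_\ell)\W^{(\ell)}$. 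By Assumption~\ref{assumption_main_body} each walk has all its ReLU gates firing with probability $\rho$, so in expectation the matrix factor becomes $\rho\prod_{j=k+1}^m\W^{(j)}$, while the scalar factors sum to $(\oper^{m-k})_{vu}$. Thus
$$\mathbb{E}[\mathbf{J}_k^{(m)}(v,u)]=\rho\,\alpha_{m-k}(v,u)\prod_{j=k+1}^m\W^{(j)},\qquad \alpha_j(v,u):=\frac{(\oper^j)_{vv}}{d_v}-\frac{(\oper^j)_{vu}}{\sqrt{d_v d_u}}.$$

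Next I apply Jensen's inequality $\mathbb{E}[\|X\|]\geq\|\mathbb{E}[X]\|$ together with $\|A\|\geq\sigma_{\min}(A)$ and the submultiplicativity $\sigma_{\min}(AB)\geq\sigma_{\min}(A)\sigma_{\min}(B)$ to obtain $\mathbb{E}[\|\mathbf{J}_k^{(m)}(v,u)\|]\geq\rho\,\nu^{m-k}|\alpha_{m-k}(v,u)|$. Summing over $k$, reindexing $j=m-k$, absorbing the powers of $\nu$ via $\nu^j\oper^j=(\nu\oper)^j$, and using the pointwise inequality $|x|\geq x$ produces
$$\mathbb{E}[\obst^{(m)}(v,u)]\geq \rho\sum_{j=0}^{m}\left[\frac{((\nu\oper)^j)_{vv}}{d_v}-\frac{((\nu\oper)^j)_{vu}}{\sqrt{d_v d_u}}\right].$$

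The normalisation $\nu(c_\rs+c_\mpas)=1$ is now decisive: $\nu\oper$ is the symmetric form of the lazy random-walk transition matrix $\tilde{\mathbf{P}}=\nu c_\rs\mathbf{I}+\nu c_\mpas\mathbf{D}^{-1}\mathbf{A}$, reversible with stationary distribution $\pi_v=d_v/(2|\E|)$ and with hitting times obeying $\tilde{\mathsf{t}}(u,v)=\mathsf{t}(u,v)/(\nu c_\mpas)$, since every lazy step advances with probability $\nu c_\mpas$. Using the similarity $\tilde{\mathbf{P}}=\mathbf{D}^{-1/2}(\nu\oper)\mathbf{D}^{1/2}$ to convert $(\nu\oper)^j$-entries into $\tilde{\mathbf{P}}^j$-entries collapses each bracket above to $((\tilde{\mathbf{P}}^j)_{vv}-(\tilde{\mathbf{P}}^j)_{uv})/d_v$, and the classical Green's-function identity for reversible chains, $\tilde{\mathsf{t}}(u,v)=\pi_v^{-1}\sum_{k=0}^\infty[(\tilde{\mathbf{P}}^k)_{vv}-(\tilde{\mathbf{P}}^k)_{uv}]$, identifies the infinite version of the sum with $\mathsf{t}(u,v)/(2|\E|\nu c_\mpas)$. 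Truncating to $j\leq m$ incurs an error bounded by a geometric tail in the second-largest eigenvalue of $\nu\oper$ (strictly less than $1$ on any connected $\gph$), which supplies the announced $o(m)$ correction.

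The main obstacle is the bookkeeping that aligns the powers of $\nu$ absorbed from the singular-value bound with the aggregation coefficient $c_\mpas$ so that the lazy-chain access time collapses precisely to $\mathsf{t}(u,v)/(\nu c_\mpas)$; once this alignment is set, the rest is a careful application of the chain rule under Assumption~\ref{assumption_main_body} together with a standard identity from reversible Markov-chain theory.
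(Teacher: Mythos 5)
Your argument is correct, and up to the final identification it is the same proof as in the paper: chain-rule expansion of the Jacobian, expectation under Assumption~\ref{assumption_main_body} giving $\rho\prod_{s=k+1}^{m}\W^{(s)}(\oper^{m-k})_{vu}$, Jensen plus the minimal-singular-value bound to extract $\rho\,\nu^{m-k}$, and summation of the resulting geometric-type series using $\nu(c_{\rs}+c_{\mpas})=1$. Where you diverge is the last step: the paper expands $\oper$ in the eigenbasis of the normalized Laplacian, uses that the $\lambda_0$-component cancels, sums the geometric series in each eigendirection, and then invokes Lov\'asz's spectral formula $\mathsf{t}(u,v)=2\lvert\E\rvert\sum_{\ell\geq 1}\lambda_\ell^{-1}\bigl(\eigen_\ell^2(v)/d_v-\eigen_\ell(v)\eigen_\ell(u)/\sqrt{d_v d_u}\bigr)$, which also yields a fully explicit exponential error term via $\lambda_1$ and $\lambda^\ast$. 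You instead recognize $\nu\oper$ as the symmetrization of the lazy transition matrix $\tilde{\mathbf{P}}=\nu c_{\rs}\mathbf{I}+\nu c_{\mpas}\mathbf{D}^{-1}\mathbf{A}$, use symmetry of $\nu\oper$ (equivalently detailed balance) to collapse each bracket to $\bigl((\tilde{\mathbf{P}}^{j})_{vv}-(\tilde{\mathbf{P}}^{j})_{uv}\bigr)/d_v$, apply the fundamental-matrix identity for ergodic chains, and rescale lazy hitting times by $1/(\nu c_{\mpas})$; the truncation tail is then controlled by the second-largest eigenvalue modulus of $\nu\oper$, which is strictly below one thanks to laziness ($c_{\rs}>0$), so you do not even need the non-bipartiteness convention the appendix imposes. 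The two routes are mathematically equivalent (Lov\'asz's formula is itself the spectral form of the fundamental-matrix identity); yours trades the explicit eigen-expansion and explicit constant in the $o(m)$ term for two standard Markov-chain facts, which is a perfectly sound and arguably cleaner probabilistic packaging of the same estimate.
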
 
% \noindent We note that the explicit form of $\epsilon_\gph$ is as follows: if $\lambda_1$ is the spectral gap of the graph, then \[
% \epsilon_\gph = \frac{\lambda_1}{\lambda_1 + \frac{1 - \rho\nu(c_{\rs}+c_{\mpas})}{\rho\nu c_{\mpas}}}
% \]
\noindent Notice that an exact expansion of the term $o(m)$ is reported in~\Cref{app:on_oversq}. %the assumption on the nonlinearity is similar to \citet{kawaguchi2016deep,xu2018representation} and commented in detail in \Cref{app:sec_topology}. Moreover, 
Also observe that more general bounds are possible if $\nu(c_{\rs} + c_{\mpas}) < 1$ -- however, they will progressively become less informative in the limit $\nu (c_{\rs} + c_{\mpas}) \rightarrow 0$. \Cref{thm:access} shows that the obstruction is a function of the access time $\mathsf{t}(u,v)$; {\bf high access time, on average, translates into high obstruction for node $v$ to receive information from node $u$ inside the $\MPNN$}. This resonates with the intuition that access time is a measure of how easily a `diffusion' process starting at $u$ reaches $v$. In particular, the obstruction provided by the access time cannot be fixed by increasing the number of layers and in fact this is independent of the number of layers, further corroborating the analysis in \Cref{sec:depth}. %bound derived in the previous result is, in fact, independent of the number of layers, meaning that one can interpret the access time as an intrinsic, geometric obstruction for the propagation of information in an $\MPNN$. 
Next, over-squashing is related to commute time, and hence, to effective resistance.

\paragraph{On over-squashing and commute time}\label{subsec:effective}
Let's restrict the attention to a slightly more special form of over-squashing. To this aim, consider nodes $v,u$ exchanging information both ways -- differently from before where node $v$ receives information from node $u$. Following the same intuition described previously, consider the symmetric quantity: %We want to determine how sensitive $\mathbf{h}_v^{(m)}$ is to the information sent by $u$ at layer $k$. compared to the  studying the sensitivity
\begin{align*}
\tilde{\mathbf{J}}_{k}^{(m)}(v,u) &:= \Big(\frac{1}{d_v}\frac{\partial \mathbf{h}_{v}^{(m)}}{\partial \mathbf{h}_{v}^{(k)}} - \frac{1}{\sqrt{d_v d_u}}\frac{\partial \mathbf{h}_{v}^{(m)}}{\partial \mathbf{h}_{u}^{(k)}}\Big)
\\ &+ \Big(\frac{1}{d_u}\frac{\partial \mathbf{h}_{u}^{(m)}}{\partial \mathbf{h}_{u}^{(k)}}  - \frac{1}{\sqrt{d_v d_u}}\frac{\partial \mathbf{h}_{u}^{(m)}}{\partial \mathbf{h}_{v}^{(k)}}\Big). 
\end{align*}
%\\&= z^{m}\tilde{\W}^{(m)}((\oper^{m})_{vv} + (\oper^m)_{uu} - 2(\oper^m)_{vu}).
%\noindent where we have used the symmetry of $\oper$. 
 %, assume that $v,u$ are badly connected in the graph as per \Cref{cor:over-squasing_distance}. 
Once again,  $\|\tilde{\mathbf{J}}^{(m)}_k(v,u)\|$ is expected to be larger if nodes $v,u$ are failing to communicate in the $\MPNN$, and conversely to be smaller whenever the communication is sufficiently robust. Similarly, merge the information collected at each layer $k\leq m$.
% Say that node $v$ at layer $m$ of the $\MPNN$ is mostly insensitive to the information sent from $u$ at layer $k$. Then, on average, we expect $\lvert\lvert \partial\mathbf{h}_v^{(m)} / \partial \mathbf{h}_u^{(k)} \rvert\rvert \ll \lvert\lvert \partial\mathbf{h}_v^{(m)} / \partial \mathbf{h}_v^{(k)} \rvert\rvert $, and similarly when we swap $u$ with $v$. In the opposite case instead, we expect, on average, that $\lvert\lvert \partial\mathbf{h}_v^{(m)} / \partial \mathbf{h}_u^{(k)} \rvert\rvert \sim \lvert\lvert \partial\mathbf{h}_v^{(m)} / \partial \mathbf{h}_v^{(k)} \rvert\rvert $. Therefore $\lvert\lvert \mathbf{J}^{(m)}_k(v,u)\rvert\rvert$ will be {\em larger} when $v$ is (roughly) independent of the information contained at $u$ at layer $k$ and similarly when swapping $v$ with $u$. We extend the same argument by accounting for messages sent at each % that are mostly insensitive to each other in the $\MPNN$. To fully assess how nodes $v$ and $u$ exchange information across the entire architecture, 
% %we need to account for the messages sent at each 
% layer $k \leq m$. %We then introduce: %extend the general construction as
% \newcommand{\obst}{\mathsf{O}}%Therefore, we set:
\begin{definition}
The symmetric Jacobian obstruction of nodes $v,u$ after $m$ layers is $
    \tilde{\obst}^{(m)}(v,u) =  \sum_{k = 0}^{m}\|\tilde{\mathbf{J}}_k^{(m)}(v,u)\|.
$
\end{definition}
\noindent 
% Since $z$ and the weights are the same for all nodes, we use $\oper^{m}$ to assess the sensitivity of $v$ on $u$ and viceversa. Accordingly, given two nodes $v,u$, we introduce the scalar quantity -- we comment on the further degree normalization in \Cref{app:sec_theory2}:
% \begin{equation*}
%     J^{(m)}(v,u) = \frac{1}{d_v}(\oper^{m})_{vv} + \frac{1}{d_u}(\oper^{m})_{uu} - \frac{2}{\sqrt{d_v d_u}}(\oper^{m})_{vu}
% \end{equation*}
\noindent %Since $\oper$ is positive semi-definite, $J^{(m)}(v,u) \geq 0$. 
% We now have all the ingredients to relate the symmetric obstruction to the commuteWe first note that $\obst^{(m)}(v,u) \geq 1/d_v + 1/d_u$ if $v\neq u$. As motivated above, a large $\obst^{(m)}(v,u)$ means that, after $m$ layers, the representation of node $v$ is more likely to be insensitive to information contained at $u$ and conversely, a small $\obst^{(m)}(v,u)$ means that nodes $v,u$ were, on average, able to exchange information for the downstream task in the $\MPNN$. Differently from the Jacobian bounds of the earlier sections, here we consider the contribution coming from all layers $k \leq m$ (note the sum over layers $k$ in the definition \Cref{def:obst}). 
\noindent The intuition of comparing the sensitivity of a node $v$ with a different node $u$ and to itself, and then swapping the roles of $v$ and $u$, resembles the concept of commute time $\tau(v,u)$. In fact, this is not a coincidence:
\noindent% We state the main result of this Section. 

\begin{theorem}[\textbf{Over-squashing and commute-time}]\label{thm:effective_resistance} Consider an $\MPNN$ as in Eq.~\eqref{eq:mpnn_simplified} with $\mu$ the maximal spectral norm of the weight matrices and $\nu$ the minimal singular value. Let Assumption \ref{assumption_main_body} hold. 
If $\mu (c_{\rs} + c_{\mpas}) \leq 1$, then there exists $\epsilon_\gph$, independent of nodes $v,u$, such that in expectation, %ndom variable with positive expectation independent of the path in \gph\gph, then 
%then the Jacobian distance satisfies 
    \begin{align*}
\epsilon_\gph(1 - o(m))\frac{\rho}{\nu c_{\mpas}}\frac{\tau(v,u)}{2\lvert\E\rvert} \leq \tilde{\obst}^{(m)}(v,u) \leq \frac{\rho}{\mu c_{\mpas}}\frac{\tau(v,u)}{2\lvert\E\rvert},
    \end{align*}
with $o(m)\rightarrow 0$ exponentially fast with $m$ increasing.
% \noindent for $\kappa = \rho\nu c_{\mpas}\cheeg^{2}/2$.
% In particular, we see that $\tilde{\obst}^{(m)}(v,u)$ is proportional to the commute time $\tau(v,u)$ up to errors decaying to zero exponentially fast with $m$.
\end{theorem}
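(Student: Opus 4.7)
The plan is to reduce the symmetric obstruction $\tilde{\obst}^{(m)}(v,u)$ to a spectral quadratic form in the pseudo-inverse of the normalized Laplacian $\Lnorm = \Inorm - \Anorm$, and then invoke the classical identity $\tau(v,u) = 2|\E|\,R(v,u)$ relating commute time to effective resistance. First, I would expand each Jacobian $\partial\mathbf{h}_i^{(m)}/\partial\mathbf{h}_j^{(k)}$ by the chain rule exactly as in the proofs of \Cref{cor:bound_MLP_MPNN} and \Cref{thm:access}: under \Cref{assumption_main_body}, the diagonal activation masks contribute $\rho$ per path, so in expectation each Jacobian factorises as $\rho\,(\prod_{\ell=k+1}^{m}\W^{(\ell)})(\oper^{m-k})_{ij}$. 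Substituting into $\tilde{\mathbf{J}}_k^{(m)}(v,u)$ and exploiting the symmetry of $\oper$ collapses the four terms into a single quadratic form
\begin{equation*}
\mathbb{E}\bigl[\tilde{\mathbf{J}}_k^{(m)}(v,u)\bigr] \;=\; \rho\,\Bigl(\prod_{\ell=k+1}^{m}\W^{(\ell)}\Bigr)\bigl(\mathbf{q}^{\top}\oper^{m-k}\mathbf{q}\bigr), \qquad \mathbf{q} := d_v^{-1/2}\mathbf{e}_v - d_u^{-1/2}\mathbf{e}_u.
\end{equation*}
Summing over $k$, the obstruction is then governed, up to the weight products, by the partial Neumann sum $\mathbf{q}^{\top}\bigl(\sum_{j=0}^{m}\oper^{j}\bigr)\mathbf{q}$.

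The second step is to collapse this Neumann sum to a pseudo-inverse of $\Lnorm$. The key observation is that $\mathbf{q}$ is orthogonal to the Perron eigenvector $\phi_0 \propto \mathbf{D}^{1/2}\mathbf{1}$ of $\Anorm$, since $\phi_0^{\top}\mathbf{q} = \sqrt{d_v}/\sqrt{d_v} - \sqrt{d_u}/\sqrt{d_u} = 0$. Hence only the non-trivial eigenvalues $\lambda_i < 1$ of $\Anorm$ contribute, and on $\phi_0^{\perp}$ the hypothesis $\mu(c_{\rs}+c_{\mpas})\leq 1$ makes $\mu\oper$ a strict contraction, whose second-largest eigenvalue governs the rate at which the truncation tail $\sum_{j>m}(\mu\oper)^{j}$ decays, producing the $o(m)$ remainder. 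Using $\|\prod_{\ell}\W^{(\ell)}\|\leq \mu^{m-k}$ together with the elementary spectral inequality $(1-\mu c_{\rs}) - \mu c_{\mpas}\lambda_i \geq \mu c_{\mpas}(1-\lambda_i)$ (which is precisely what $\mu(c_{\rs}+c_{\mpas})\leq 1$ guarantees) yields $(\Inorm-\mu\oper)^{+}\preceq (\mu c_{\mpas})^{-1}\Lnorm^{+}$ on $\phi_0^{\perp}$. Combining with the identity $\mathbf{q}^{\top}\Lnorm^{+}\mathbf{q} = R(v,u) = \tau(v,u)/(2|\E|)$ delivers the upper bound $\tilde{\obst}^{(m)}(v,u)\leq (\rho/(\mu c_{\mpas}))\cdot \tau(v,u)/(2|\E|)$.

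The lower bound follows by a symmetric argument with $\mu$ replaced by $\nu$ and $\|\prod_{\ell}\W^{(\ell)}\mathbf{x}\|\geq \nu^{m-k}\|\mathbf{x}\|$ applied along the subspace in which $\oper^{m-k}\mathbf{q}$ lives. The graph-dependent constant $\epsilon_{\gph}$ emerges because $(\Inorm-\nu\oper)^{+}$ is not a scalar multiple of $\Lnorm^{+}$ in general; instead one obtains $(\Inorm-\nu\oper)^{+}\succeq \epsilon_{\gph}(\nu c_{\mpas})^{-1}\Lnorm^{+}$ on $\phi_0^{\perp}$ with
\begin{equation*}
\epsilon_{\gph} \;=\; \min_{i\neq 0}\frac{\nu c_{\mpas}(1-\lambda_i)}{(1-\nu c_{\rs}) - \nu c_{\mpas}\lambda_i},
\end{equation*}
a purely spectral quantity measuring the misalignment between the propagation spectrum of $\oper$ and the Laplacian spectrum. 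The $(1-o(m))$ factor again arises from truncating the Neumann series at $j = m$, and decays exponentially at the rate of the spectral gap of $\Anorm$.

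The main obstacle I anticipate lies in the lower bound: controlling $\|\prod_{\ell}\W^{(\ell)}\mathbf{x}\|$ from below by $\nu^{m-k}\|\mathbf{x}\|$ is not automatic, since one weight matrix can rotate $\mathbf{x}$ into a small-singular-value direction of the next. One has to either restrict to an invariant subspace adapted to $\oper$, or track the cascade of singular-vector rotations layer by layer. A further delicate point is handling the ReLU activation patterns so that the probability $\rho$ factorises cleanly -- the same bookkeeping invoked implicitly in the proof of \Cref{thm:access} -- guaranteeing that upper and lower bounds share identical architectural prefactors, so that the spectral information carried by $\tau(v,u)/(2|\E|)$ can be isolated as the only topology-dependent term.
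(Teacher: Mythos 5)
Your route is essentially the paper's own proof: factorize the expected Jacobians under \Cref{assumption_main_body}, collapse the symmetric obstruction into the scalar quadratic form $\mathbf{q}^\top\oper^{m-k}\mathbf{q}$ with $\mathbf{q}=d_v^{-1/2}\mathbf{e}_v-d_u^{-1/2}\mathbf{e}_u$, drop the Perron mode, sum the geometric series with $\mu$- (resp.\ $\nu$-) weighted terms, and compare with $\Lnorm^{+}$ using $\mu(c_{\rs}+c_{\mpas})\leq 1$, Lov\'asz's spectral formula $\mathbf{q}^\top\Lnorm^{+}\mathbf{q}=\res(v,u)$ and $\tau(v,u)=2\lvert\E\rvert\res(v,u)$, with your $\epsilon_\gph=\min_{i\neq 0}\frac{\nu c_{\mpas}(1-\lambda_i)}{(1-\nu c_{\rs})-\nu c_{\mpas}\lambda_i}$ coinciding with the paper's $\lambda_1/\bigl(\lambda_1+\tfrac{1-\nu(c_{\rs}+c_{\mpas})}{\nu c_{\mpas}}\bigr)$ once the $\Anorm$-eigenvalues are translated into Laplacian ones. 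The obstacle you anticipate for the lower bound is not actually one: because the graph dependence factors out as a scalar, the weights enter only through $\bigl\|\prod_{s}\W^{(s)}\bigr\|\geq\sigma_{\min}\bigl(\prod_{s}\W^{(s)}\bigr)\geq\nu^{m-k}$ (minimal singular values compose multiplicatively), which is exactly how the paper argues, together with positivity of $\oper$ (secured there by $c_{\rs}\geq c_{\mpas}$ and non-bipartiteness) so that the termwise quadratic forms are nonnegative.
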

Notice that an explicit expansion of the $o(m)$-term is reported in the proof of the Theorem in the Appendix.
 %We recall that, in the statement below, $\tau(v,u)$ is the commute time of nodes $v$ and $u$. %First, we recall that $\res(v,u) = \tau(v,u)(2\lvert \mathsf{E}\rvert)^{-1}$, so any comment can be rephrased in terms of the commute time $\tau(v,u)$. 
\noindent By the previous discussion, a {\bf smaller} $\tilde{\obst}^{(m)}(v,u)$ means %Since we discussed how the smaller $J^{(m)}(v,u)$, the 
$v$ is more sensitive to $u$ in the $\MPNN$ (and viceversa when $\tilde{\obst}^{(m)}(v,u)$ is large). Therefore, \Cref{thm:effective_resistance} implies that nodes at small commute time will exchange information better in an $\MPNN$ and conversely for those at high commute time.
%$\tilde{J}^{(m)}(v,u) \sim \res(v,u)/c_{\mpas}$ up to errors decaying to zero exponentially with $m$. Therefore, 
% if we have pairs of nodes $(v,u_0)$ and $(v,u_1)$ s.t. $\res(v,u_0) > \res(v,u_1)$, then for $m$ large enough we find that
% \begin{equation}
%     \tilde{J}^{(m)}(v,u_0) - \tilde{J}^{(m)}(v,u_1) \geq \frac{\res(v,u_0) - \res(v,u_1)}{2 c_{\mpas}}.
% \end{equation}
%\noindent since $\tilde{J}^{(m)}(v,u)$ is the more positive, the less exchange of information among $v$ and $u$ occurs after $m$ layers, we see that $\MPNN$ {\em have a bias towards ignoring messages from high effective-resistance nodes}.
%In fact, nodes at high effective resistance lead to higher values of the Jacobian distance, meaning less sensitivity due to over-squashing. 
This has some {\bf important consequences}:
\begin{itemize}
    \item [(i)] When the task only depends on local interactions, the property of $\MPNN$ of reducing the sensitivity to messages from nodes with high commute time {\em can} be beneficial since it decreases harmful redundancy. 
    \item[(ii)] Over-squashing is an issue when the task depends on the interaction of nodes with high commute time. 
    \item[(iii)] The commute time represents an obstruction to the sensitivity of an $\MPNN$ which is {\em independent of the number of layers}, since the bounds in  \Cref{thm:effective_resistance} are independent of $m$ (up to errors decaying exponentially fast with $m$). 
\end{itemize}

\noindent Notice that the same comments hold in the case of access time as well if, for example, the task depends on node $v$ receiving information from node $u$ but not on $u$ receiving information from $v$.
\paragraph{A unified framework}\label{subsec:unified} 

%\paragraph{Why spectral methods work.} 
\paragraph{Why spectral-rewiring works.}First, it it discussed and justified why the spectral approaches discussed in \Cref{sec:related_work} mitigate over-squashing. This comes as a consequence of \citet{lovasz1993random} and \Cref{thm:effective_resistance}:
\begin{corollary}\label{cor:spectral_methods}
 Under the assumptions of \Cref{thm:effective_resistance}, for any $v,u\in\mathsf{V}$, it holds:
 \begin{equation*}
 \tilde{\obst}^{(m)}(v,u)\leq \frac{4}{\rho\mu c_{\mpas}}\frac{1}{\cheeg^2}.
 \end{equation*}
 % \begin{equation*}
 %     \tilde{J}^{(m)}(v,u)\leq \frac{1}{c_{\mpas}\lambda_1}\left(\frac{1}{d_v} + \frac{1}{d_u}\right).
 % \end{equation*}
\end{corollary}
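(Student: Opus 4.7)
The plan is to feed the upper bound already established in \Cref{thm:effective_resistance} into the classical chain of inequalities due to Lovász (1993) that connects commute time to the Cheeger constant via the spectral gap of the graph. Since the quantity $\tilde{\obst}^{(m)}(v,u)$ has just been controlled pointwise by a multiple of $\tau(v,u)/(2\lvert\E\rvert)$ with constants independent of the pair $(v,u)$, the corollary reduces to a \emph{uniform} estimate of commute time on $\gph$ in terms of $\cheeg$.

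Concretely, I would proceed in four steps. First, invoke directly the right-hand inequality of \Cref{thm:effective_resistance}, giving
\[
\tilde{\obst}^{(m)}(v,u) \;\leq\; \frac{\rho}{\mu c_{\mpas}}\,\frac{\tau(v,u)}{2\lvert\E\rvert}.
\]
Second, use the well-known identity $\tau(v,u) = 2\lvert\E\rvert\, R(v,u)$, where $R(v,u)$ is the effective resistance between $v$ and $u$ when $\gph$ is regarded as an electrical network with unit conductances; this reduces the problem to bounding $R(v,u)$ uniformly. Third, apply Lovász's spectral bound $R(v,u) \leq 2/\lambda_2$, with $\lambda_2$ the smallest nonzero eigenvalue of the normalized Laplacian $\Inorm - \Anorm$. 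Fourth, invoke Cheeger's inequality $\lambda_2 \geq \cheeg^2/2$, so that $R(v,u) \leq 4/\cheeg^2$. Substituting back yields the claimed bound, uniformly over all pairs $v,u\in\V$.

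The main obstacle is essentially bookkeeping: one must make sure that the version of Lovász's inequality being invoked uses the \emph{same} normalization of the Laplacian as the operator $\Anorm = \Dnorm^{-1/2}\Anorm\Dnorm^{-1/2}$ adopted throughout \Cref{sec:topology}, and that the Cheeger constant $\cheeg$ is defined with the conductance (edge-boundary over volume) convention consistent with the cited result. Once these conventions are aligned, no further analytic work is required and the bound follows by direct substitution; any slack in the constants can then be absorbed into the stated factor of $4$.
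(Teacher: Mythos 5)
Your proposal matches the paper's own (sketched) argument: the corollary is obtained precisely by feeding the upper bound of \Cref{thm:effective_resistance} into \citet{lovasz1993random} via $\tau(v,u)=2\lvert\E\rvert\,\res(v,u)$, the spectral estimate $\res(v,u)\leq 2/\lambda_1$ for the normalized Laplacian, and the Cheeger inequality $\lambda_1>\cheeg^2/2$. The only bookkeeping point is that your substitution actually yields the sharper constant $\tfrac{4\rho}{\mu c_{\mpas}\cheeg^{2}}$, which implies the bound as stated since $\rho\leq 1$.
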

\noindent \Cref{cor:spectral_methods} essentially tells that the obstruction among {\em all} pairs of nodes decreases (so better information flow) if the $\MPNN$ operates on a graph $\gph$ with larger Cheeger constant. This rigorously justifies why recent works like \citet{arnaiz2022diffwire, deac2022expander, karhadkar2022fosr} manage to alleviate over-squashing by %propose to propagate 
propagating information on a rewired graph $\rew(\gph)$ with larger Cheeger constant $\cheeg$. %Accordingly \Cref{cor:spectral_methods}, we deduce that all these {\em spectral-rewiring} methods manage to mitigate over-squashing since they decrease the Jacobian obstruction $\obst^{(m)}(v,u)$ and hence lead to a higher-sensitivity among nodes overall. 
%$\lambda_1(\rew(\gph)) > \lambda_1(\gph)$, which leads to lower effective resistance and hence tighter upper bounds on the Jacobian distance among nodes. % meaning that, on average, it is {\em easier} to exchange information in $\rew(\gph)$ than in $\gph$. 
This result also highlights why bounded-degree expanders are particularly suited %for propagation of information across distant nodes -
- as leveraged in \citet{deac2022expander} -- given that their commute time is only $\mathcal{O}(\lvert\E\rvert)$ \citep{chandra1996electrical}, making the bound in \Cref{thm:effective_resistance} scale as $\mathcal{O}(1)$ w.r.t. the size of the graph. In fact, the concurrent work of \citet{black2023understanding} leverages directly the effective resistance of the graph $\mathsf{Res}(v,u) = \tau(v,u)/2\lvert\mathsf{E}\rvert$ to guide a rewiring that improves the graph connectivity and hence mitigates over-squashing.
%Similarly to the spatial methods then, the spectral methods also reduce a distance, being that $\tilde{J}^{(m)}$ rather than $d_{\gph}$. We finally note that increasing the spectral gap is equivalent to increasing the Cheeger constant of $\gph$; this shows a connection between our analysis and the existence of bottlenecks as measured by the Cheeger constant \citep{topping2022understanding, karhadkar2022fosr}.

\paragraph{Why spatial-rewiring works.} %In fact, we can draw connections to {\em spatial-rewiring} methods too. 
\cite{chandra1996electrical} proved that the commute time satisfies: $\tau(v,u) = 2\lvert \mathsf{E}\rvert \res(v,u)$, with $\res(v,u)$ the {\bf effective resistance} of nodes $v,u$. %A slightly different notion is that of {\em effective resistance} 
$\res(v,u)$ measures the voltage difference between nodes $v,u$ if a unit current flows through the graph from $v$ to $u$ and each edge is taken to represent a unit resistance~\citep{thomassen1990resistances,dorfler2018electrical}, and has also been used in~\cite{velingker2022affinity} as a form of structural encoding. Therefore, consider that~\Cref{thm:effective_resistance} can be \textbf{\em equivalently rephrased as saying that nodes at high-effective resistance struggle to exchange information in an $\MPNN$} and viceversa for node at low effective resistance. A result known as Rayleigh's monotonicity principle~\citep{thomassen1990resistances}, asserts that the \emph{total} effective resistance $\res_{\gph} = \sum_{v,u} \res{(v,u)}$ decreases when adding new edges -- which offers a new interpretation as to why spatial methods help combat over-squashing. 

\paragraph{What about curvature?} This analysis also sheds further light on the relation between over-squashing and curvature derived in \citet{topping2022understanding}. If the effective resistance is bounded from above, this leads to lower bounds for the resistance curvature introduced in \citet{devriendt2022discrete} and hence, under some assumptions, for the Ollivier curvature too \citep{ollivier2007ricci, ollivier2009ricci}. This analysis then recovers why preventing the curvature from being `too' negative has benefits in terms of %propagation of information and hence of 
reducing over-squashing. % \citep{topping2022understanding}.

\begin{tcolorbox}[boxsep=0mm,left=2.5mm,right=2.5mm]
\textbf{Message of the Section:} \em  $\MPNN$s struggle to send information among nodes with high commute (access) time (equivalently, effective resistance). This connection between over-squashing and commute (access) time provides a unified framework for explaining why spatial and spectral-rewiring approaches manage to alleviate over-squashing.
\end{tcolorbox}

\subsection{Discussion}\label{sec:conclusion}

{\bf What was done?} In this section, the role played by width, depth, and topology in the over-squashing phenomenon have been investigated. In particular, this section proved that, while width can partly mitigate this problem, depth is, instead, generally bound to fail since over-squashing spills into vanishing gradients for a large number of layers. In fact, as shown, the graph-topology plays the biggest role, with the commute (access) time providing a strong indicator for whether over-squashing is likely to happen independently of the number of layers. As a consequence of this analysis, is possible to draw a unified framework where rigorously justifications are provided regarding all recently proposed rewiring methods do alleviate over-squashing.

\paragraph{Limitations.} 
%First, we note that our bounds can be formulated in terms of the spectral norm of the weight matrices rather than the maximal entry-value -- we have opted for a less sharp but more readable approach here. Moreover, 
The analysis in this work primarily applies to $\MPNN$s that assign uniform weight to each edge contribution, subject to degree normalization. In the opposite case, which, for example, includes $\mathsf{GAT}$ \citep{velivckovic2018graph} and $\mathsf{GatedGCN}$ \citep{bresson2017residual}, over-squashing can be further mitigated %. In fact, % of a task only depending on the interaction between two nodes $v,u$ at distance $r$, 
%an anisotropic $\MPNN$, in principle, has the ability to 
by pruning the graph, % and reduce it to a line-path, 
hence alleviating the dispersion of information. However, the attention (gating) mechanism can fail if it is not able to identify which branches to ignore and can even amplify over-squashing by further reducing `useful' pathways. In fact, $\mathsf{GAT}$ still fails on the $\mathsf{Graph}$ $\mathsf{Transfer}$ task of \Cref{sec:depth}, albeit it seems to exhibit slightly more robustness. Extending the Jacobian bounds to this case is not hard, but will lead to less transparent formulas: a thorough analysis of this class, is left for future work. %A similar argument holds if the aggregation map is the $\mathsf{max}$. 
Moreover, determining when the sensitivity is 'too' small is generally also a function of the resolution of the readout, which have not been considered. Finally, \Cref{thm:effective_resistance} holds in expectation over the nonlinearity and, generally, \Cref{def:obst} encodes an average type of behaviour: a more refined (and exact) analysis is left for future work. %removing the expectation, requires a much deeper investigation we reserve for future work. More generally, the obstruction quantity introduced in \Cref{def:obst} encodes an average type of behaviour, more refined analysis is needed.

%\textcolor{blue}{More direct ways of assessing the impact of over-squashing}

\paragraph{Where to go from here.} This section shows the necessity of further analysis on the relation between over-squashing and vanishing gradient deserves. In particular, it seems that there is a phase transition that $\MPNN$s undergo from over-squashing of information between distant nodes, to vanishing of gradients at the level of the loss. In fact, this connection suggests that traditional methods that have been used in RNNs and GNNs to mitigate vanishing gradients, may also be beneficial for over-squashing. On a different note, this section has not touched on the important problem of over-smoothing; the theoretical connections derived so far, based on the relation between over-squashing, commute time, and Cheeger constant, suggest a much deeper interplay between these two phenomena. Finally, while this analysis confirms that both spatial and spectral-rewiring methods provably mitigate over-squashing, it does not tell which method is preferable, when, and why. The theoretical investigation of over-squashing provided here also help tackle this important methodological question.

\chapter{Enhancing Graph Representation with Topological Approaches}\label{chap:tnns}

\section{Simplicial Attention Networks}\label{sec:san}

It should be clear at this point that Message Passing Neural Networks (MPNNs)~\citep{gilmer2017neural} are able to provide exceptional performance in graph representation learning tasks. However, motivated by the ability of these discrete domains to capture higher-order connectivity structures, there has recently been a shift beyond traditional graphs towards more complex topological spaces like simplicial~\citep{ebli2020simplicial, bunch2020simplicial, bodnar2021weisfeiler, yang2022simplicial} and cell complexes~\citep{bodnar2021weisfeiler, hajij2020cell}..
The introduction of Simplicial Neural Networks (SNNs) has opened up new directions in tasks like missing data imputation~\citep{ebli2020simplicial}, link prediction~\citep{chen2022bscnets}, graph classification~\citep{bunch2020simplicial, bodnar2021weisfeiler}, and trajectory prediction~\citep{bodnar2021weisfeiler, roddenberry2021principled}. However, a critical aspect in these methods is the strong coupling between the computational graph induced by the message passing operations and the combinatorial structure of the underlying domain. Moreover, they consider \textit{isotropic} aggregations, meaning that a simplex $\sigma$ aggregates the messages from its neighbours without accounting for the importance of the message. This results in a dramatic drop in expressive power, with the consequence being that models lack generalisation capabilities for out-of-distribution data. Inspired by graph attention networks~\citep{velivckovic2018graph} and dynamic graph attention networks~\citep{brody2021attentive}, this section introduces Simplicial Attention Networks (SAN). This class of neural models learn to dynamically adapt their focus based on the relevance of the simplices' features.

\

\paragraph{Simplicial Attention}\makeatletter\def\@currentlabel{Simplicial Attention}\makeatother\label{par:simplicial_attention}
The core idea behind the adaptability of these architectures is grounded in the {\em simplicial attention}, two independent topology-aware self-attention mechanisms designed to separately calibrate the information being aggregated from simplices within the {\em upper and lower neighbourhoods} of a simplex $\sigma$. 

Let $\kph = (\V, \mathsf{S})$ be a simplicial complex of order $K$, such that $\sigma, \tau \in \kph$ and $\tau \in \mathcal{N}_{\uparr}(\sigma)$ or $\tau \in \mathcal{N}_{\doarr}(\sigma)$. Both simplices $\sigma$ and $\tau$ are also equipped with latent representations, $\mathbf{h}_\sigma \in \mathbb{R}^d$ for simplex $\sigma$ and $\mathbf{h}_\tau \in \mathbb{R}^d$  for simplex $\tau$. For clarity, references to the $l$-th layer in equations will be assumed implicit and thus omitted.

\begin{figure}[t]
    \centering
    \includegraphics[width=.7\textwidth]{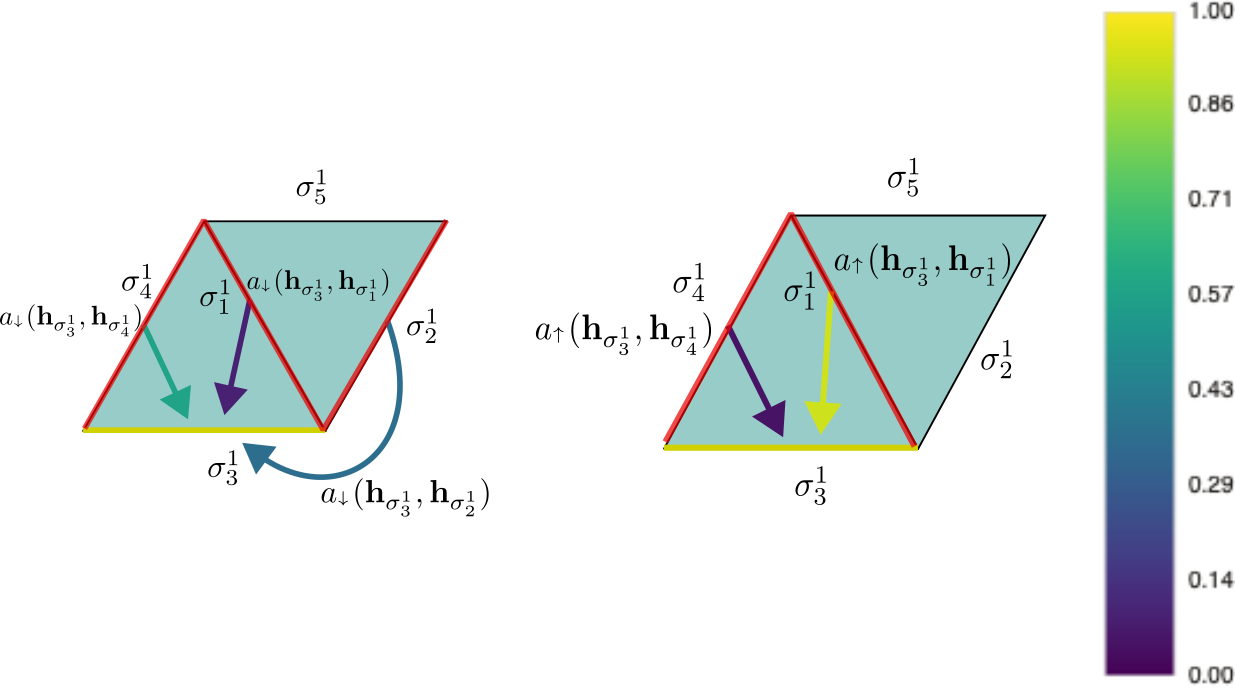}
     \caption{Illustration of the Simplicial Attention mechanism. The left panel illustrates the Lower Attention, it evaluates the reciprocal importance of two 1-simplices (edges) sharing a common 0-simplex (node). The right panel showcases the Upper Attention, emphasizing the significance of edges within the same triangle. In \textcolor{amber}{yellow} it is indicated the receiver while \textcolor{red}{red} is used for senders.}
     \label{fig:simp_att}
\end{figure}

Therefore, the importance of the latent representations within the {\em upper neighbourhood} of $\sigma$ is measured by the \textbf{upper scoring} function $s_{\uparr} : \mathbb{R}^d \times \mathbb{R}^d \rightarrow \mathbb{R}$ while for the features of {\em lower neighbouring} simplices this task is handled by  the \textbf{lower scoring} function $s_{\doarr} : \mathbb{R}^d \times \mathbb{R}^d \rightarrow \mathbb{R}$ functions. In particular, $s_{\uparr}$ and $s_{\doarr}$ are parametrised using two independent neural networks as:

\begin{align}
s_{\uparr} (\mathbf{h}_\sigma, \mathbf{h}_\tau) &= \text{LeakyReLU}\left( \mathbf{a}_{\uparr}^\top \big[  \mathbf{W}_{\uparr} \mathbf{h}_\sigma \parallel \mathbf{W}_{\uparr}  \mathbf{h}_\tau \big]  \right),\label{eq:upper_attention_scores_v1} \\[1.5pt]
s_{\doarr} (\mathbf{h}_\sigma, \mathbf{h}_\tau) &= \text{LeakyReLU}\left( \mathbf{a}_{\doarr}^\top \big[  \mathbf{W}_{\doarr} \mathbf{h}_\sigma \parallel \mathbf{W}_{\doarr}  \mathbf{h}_\tau \big]  \right),\label{eq:lower_attention_scores_v1}
\end{align}

where $\mathbf{W}_{\uparr}, \mathbf{W}_{\doarr} \in \mathbb{R}^{d' \times d}$  are {\em learnable} weight matrices\footnote{Imposing $\mathbf{W}_{\uparr} = \mathbf{W}_{\doarr}$ leads to the SAT architecture~\citep{goh2022simplicial}.} while $\mathbf{a}_{\uparr}, \mathbf{a}_{\uparr} \in \mathbb{R}^{2d'}$ are {\em learnable} vectors of attention coefficients. Here $\parallel$ denotes concatenation. It is worth emphasizing that involving two distinct set of parameters is a design choice made to separate the different topological properties contained in the upper and lower neighborhoods of a simplex $\sigma$.

\

Although the scoring functions defined in~\Cref{eq:upper_attention_scores_v1} and in~\Cref{eq:lower_attention_scores_v1} are those originally developed in graph attention networks~\citep{velivckovic2018graph},
%with blazing outcomes,  
they provide what is called a {\em static} attention mechanism. This fact might restrict the ranking of attention scores to be unconditioned on the query node, limiting its expressiveness. To increase the expressive power of the model, a dynamic masked self-attention can be employed by replacing the scoring functions~\citep{brody2021attentive}:

%(\mathbf{h}_\sigma, \mathbf{h}_\tau)$
\begin{align}
s_{\uparr} (\mathbf{h}_\sigma, \mathbf{h}_\tau) &= \mathbf{a}_{\uparr}^\top \text{LeakyReLU}\left( \mathbf{W}_{\uparr}  [\mathbf{h}_\sigma \parallel \mathbf{h}_\tau]  \right), \label{eq:upper_attention_scores_v2} \\[1.5pt]
s_{\doarr} (\mathbf{h}_\sigma, \mathbf{h}_\tau) &= \mathbf{a}_{\doarr}^\top \text{LeakyReLU}\left( \mathbf{W}_{\doarr} [\mathbf{h}_\sigma \parallel \mathbf{h}_\tau]  \right),\label{eq:lower_attention_scores_v2}
\end{align}

where $\mathbf{W}_{\uparr}, \mathbf{W}_{\doarr} \in \mathbb{R}^{d' \times 2d}$ are learnable weight matrices respectively responsible for the upper and lower attention, $a_{\uparr}, a_{\doarr} \in \mathbb{R}^{d'}$ are learnable vectors of attention coefficients. 

\

Regardless of the particular choice of scoring functions, it is critical to ensure that the magnitude of the scores does not disproportionately affect the aggregation operation, thus preventing unstable or biased learning. The {\em standard approach to induce a more stable model} is to scale them to sum up to one via the softmax function across the neighbours:

\begin{align}
\alpha^{\uparr}_{\sigma, \tau} &= \underset{\tau \in \mathcal{N}_{\uparr}(\sigma)}{\text{softmax}}\,(s_{\uparr} (\mathbf{h}_\sigma, \mathbf{h}_\tau)), \label{eq:softmax_upper_attention} \\[1.5pt]
\alpha^{\doarr}_{\sigma, \tau} &= \underset{\tau \in \mathcal{N}_{\doarr}(\sigma)}{\text{softmax}}\,(s_{\doarr} (\mathbf{h}_\sigma, \mathbf{h}_\tau)). \label{eq:softmax_lower_attention}
\end{align}

This operation ensures that the {\em normalised attention coefficients} are comparable across different neighborhoods. Moreover, it provides a probabilistic interpretation of the scores to better understand how the model is allocating its attention across different parts of $\kph$.

Therefore, the normalised attention coefficients are used to compute a combination of the features corresponding to them, to obtain the final latent representations:

\begin{align}
    a_{\uparr} (\mathbf{h}_\sigma, \mathbf{h}_\tau) &= \alpha^{\uparr}_{\sigma, \tau} \mathbf{W}_{\uparr}, \label{eq:upper_simplicial_attention} \\[1.5pt]
    a_{\doarr} (\mathbf{h}_\sigma, \mathbf{h}_\tau) &= \alpha^{\doarr}_{\sigma, \tau} \mathbf{W}_{\doarr}, \label{eq:lower_simplicial_attention}  \\[3.5pt]
    \underset{\textbf{(a)}~\text{Upper Simplicial Attention}\vrule height 2.5ex width 0pt}{\mathbf{h}_{\uparr} = \underset{\tau \in \mathcal{N}_{\uparr}(\sigma)}{\agg} (\underbrace{a_{\uparr} (\mathbf{h}_\sigma, \mathbf{h}_\tau)}_{\text{upper attention}}} \, \mathbf{h}_\tau),& \quad \underset{\textbf{(b)}~\text{Lower Simplicial Attention}\vrule height 2.5ex width 0pt}{\mathbf{h}_{\doarr} = \underset{\tau \in \mathcal{N}_{\doarr}(\sigma)}{\agg} (\underbrace{a_{\doarr} (\mathbf{h}_\sigma, \mathbf{h}_\tau)}_{\text{lower attention}}} \, \mathbf{h}_\tau). \label{eq:simplicial_attention}
\end{align}

A pictorial overview of the simplicial attention mechanism is presented in~\Cref{fig:simp_att}.

It is unreasonable to think that a single attention head could be sufficient to capture the overall complexity of a phenomena of interest. To augment the expressive power of the simplicial attention operation and reduce instabilities, it is possible to compute $H$ distinct attention heads, which independently process the relationships within the upper and lower neighborhoods and aggregate the results through concatenation, sum, or mean.

\begin{align}\label{eq:multihead_simplicial_attention}
    \mathbf{h}_{\uparr} = \underset{\,\tau \in \mathcal{N}_{\uparr}(\sigma)}{\agg} (\underset{\,h}{\agg}({a^{(h)}_{\uparr} (\mathbf{h}_\sigma, \mathbf{h}_\tau)} \, \mathbf{h}_\tau)),  \\[1.5pt]
   \mathbf{h}_{\doarr} = \underset{\,\tau \in \mathcal{N}_{\doarr}(\sigma)}{\agg} (\underset{\,h}{\agg}({a^{(h)}_{\doarr} (\mathbf{h}_\sigma, \mathbf{h}_\tau)} \, \mathbf{h}_\tau)).  \\[1.5pt].
\end{align}

Notice that, if $\agg_{\, h}$ is implemented via concatenation, the output dimension is multiplied by a factor $H$, the number of attention heads.

\

\paragraph{Update and Readout}

Once upper and lower latent representations are obtained, they are combined together alongside with the current features to get the updated representation $\mathbf{h}_\sigma^{\textsf{new}}$.

\begin{equation}
    \mathbf{h}_\sigma^{\textsf{new}} = \com ( \mathbf{h}_\sigma, \mathbf{h}_{\uparr}, \mathbf{h}_{\doarr})\label{eq:combine_san}
\end{equation}

After $L$ layers of simplicial attention, the representation of the complex is computed as: 

\begin{equation}\label{eq:readout_san}
    \mathbf{h}_{\kph} = \out\Bigl( \{\{\{\mathbf{h}_\sigma^L\}\}\} \Bigr), 
\end{equation}

where $\{\{\mathbf{h}_\sigma^L\}\}$ is the multi-set of simplices's features at layer $L$ and $\out$ is a {\em readout function}. For each dimension of the complex, the representations of the simplices at dimension $k$ are computed by applying a max, mean, or sum readout operation, then the result is forwarded to a dense layer to obtain predictions.

\

In essence, the simplicial attention mechanism let messages to be sent from a simplex $\tau$ towards an adjacent simplex $\sigma$ and separately measures the relative importance of $\mathbf{h}_{\tau}$. Differently from previous graphs attention mechanisms~\citep{velivckovic2018graph, brody2021attentive}, simplicial complexes have an extended notion of adjacency. In particular, for two simplices $\sigma$ and $\tau$,  their relative connectivity in $\kph$ establishes if they are upper or lower neighbours. Notice also that $\sigma$ and $\tau$ might be both upper and lower neighbours without loss of generality. Consequently, simplicial neural networks equipped with the attention mechanism presented in this section are able to dynamically learn to attend neighbouring simplices according to the importance of their latent representations. Moreover, these architectures are able to address the relevance of the features based on both a {\em local context}~(via the lower scroes \Cref{eq:lower_attention_scores_v1}) and a {\em global context}~(via the upper scores \Cref{eq:upper_attention_scores_v1}).

\section{Cell Attention Networks}\label{sec:can}

The assumptions of Simplicial Attention Networks~(\Cref{sec:san}) require data as a simplicial complex $\kph$ with feature vectors $\xnorm_\sigma$ associated to its simplices. To drastically improve the flexibility of Simplicial Attention Networks, this section proposes Cell Attention Networks (CANs) to learn from graph data and perform topological representation learning tasks through {\em topological attention} on the messages exchanged by the edges of a cell complex. Cell Attention Networks are designed as a powerful learning tools that aim to extend Graph Attention Networks~\citep{velivckovic2018graph} by leveraging the connectivity induced by a cell complex $\cph$ to perform a \textbf{masked self-attention mechanisms over its edges}. Therefore, cell attention networks are designed with a {\em hierarchical} scheme. Since the aim of this method is to be able to process inputs as attributed graphs, a {\bf structural lift} embeds the input graphs into regular cell complexes. Then, since the masked self-attention will be defined on messages exchanged between edges,  a {\bf functional lift} operation is applied to node features for deriving edge features. After that, it performs the {\bf Cell Attention}, a message passing scheme able to attend neighbouring edges of the complex based on the features' importance. It is worth reminding that, as with the $1$-simplices in simplicial complexes, in a cell complex $\cph$, edges are equipped with two types of neighbourhoods: the upper and the lower, as discussed in~\Cref{sec:background:cell_complexes}. This implies that as the Simplicial Attention, the Cell Attention operation is composed by {\em two independent masked self-attention mechanism}, respectively responsible for the upper and the lower neighbourhoods of an edge $e$ in $\cph$.

\ 

To cope with scalability issues of message passing operations over cell complexes imposed by the huge amount of messages that flow within $\mathcal{N}_{\uparr}(e)$ and $\mathcal{N}_{\doarr}(e)$, after each layer of message passing a {\bf differentiable pooling} operation is applied to the edges. Moreover, by aggregating the features before pooling the complex, is possible to obtain collection of {\bf hierarchical representations} that describe the underling phenomena at {\bf different scales}. Finally, the sequence of representations is aggregated to obtain complex-wise predictions. 

% As in the previous sections, we denote the input graph(s) with $\gph=(\V, \E)$ and the input node features of node $i \in \V$ with $\mathbf{x}_i \in \mathbb{R}^{F_n}$.

\paragraph{Structural Lift}

To incorporate input graphs $\gph$ into regular cell complexes $\cph$, it is necessary to define an operation that attach two-dimensional disks as cells $\sigma$ to all the $R$-induced (or chordless) cycles of $\gph$ {\bf without compromising its original connectivity}. The parameter $R$ is referred as the maximum ring size of $\cph$ and can be considered a positive integer bounded by a small constant.

\begin{definition}[Structural Lifting Map~\citep{bodnar2021weisfeilercell}]\label{def:structural_lift} A structural lifting map $s: \gph \rightarrow \cph $ is a skeleton preserving function that incorporates a graph $\gph$ into a regular cell complex $\cph$, such that, for any graph $\gph$, the 1-skeleton (i.e., the underlying graph) of $\cph = s(\gph)$ and $\gph$ are isomorphic.
\end{definition}

\paragraph{Functional Lift}
In real-world applications, it is common to have data as attributed graphs without explicit edge features. To allow for message passing operations over the edges of $\cph$, after the structural lift it is necessary to populate edge features via a \emph{functional lift}. This operation assigns feature vectors $\mathbf{x}_{e} \in \mathbb{R}^{F_e}$ to each edge $e$ of $\cph$ by concatenating the features of the vertices $u, v \in \mathcal{B}(e)$ to be forwarded to an MLP with two dense layers.

\begin{definition}[Functional Lift]
    A functional lift is a \emph{learnable} function $f: \mathbb{R}^{F_n} \times \mathbb{R}^{F_n} \rightarrow \mathbb{R}^{F_e}$: 
\begin{equation}\label{eq:functional_lift}
    \mathbf{x}_{e} = f(\mathbf{x}_u,\mathbf{x}_v)= \sigma \big ( \Wnorm_1 \, [\mathbf{x}_u \parallel \mathbf{x}_v]\big ) \, \Wnorm_2 , \quad  u, v \in \mathcal{B}(e), \; \forall e \in \E,
\end{equation}
\end{definition}

where $\Wnorm_1 \in \mathbb{R}^{F_e \times 2 F_n}$, $\Wnorm_2 \in \mathbb{R}^{F_e \times F_e}$, and $\parallel$ denotes the concatenation operator. Since the order of the nodes connected by an edge does not alter the corresponding  edge features, $f$ {\em is invariant to node permutations}. It might happen that data comes naturally with edge features. In that case, they are concatenated to $\mathbf{x}_{e}$ and consider $F_e$ as the sum of the number of learned features and the provided ones. 

\

\paragraph{Cell Attention}\makeatletter\def\@currentlabel{Cell Attention}\makeatother\label{par:cell_attention}

\begin{figure}[!htb]
    \centering
    \includegraphics[width=\textwidth]{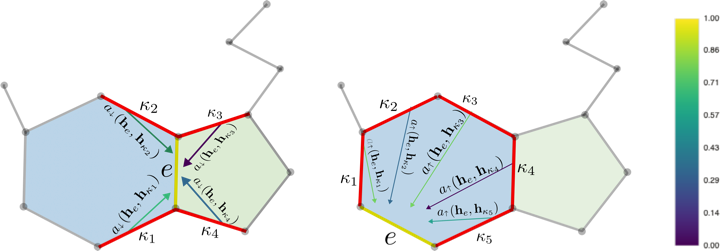}
     \caption{Illustration of the Cell Attention mechanism. The left panel illustrates the Lower Attention, it evaluates the reciprocal importance of two edges sharing a common node. The right panel showcases the Upper Attention, emphasizing the significance of edges within the same ring. In \textcolor{amber}{yellow} it is indicated the receiver while \textcolor{red}{red} is used for senders.}
     \label{fig:att}
\end{figure}

% \begin{figure}[!htb]
%      \centering
%      \begin{subfigure}[b]{0.45\textwidth}
%          \centering
%          \includegraphics[width=.8\textwidth]{}
%          \caption{Lower Attention.}
%          \label{fig:lower_cell_att}
%      \end{subfigure}
%      \hspace{5pt}
%      \begin{subfigure}[b]{0.45\textwidth}
%          \centering
%          \includegraphics[width=.8\textwidth]{}
%          \caption{Upper Attention.}
%          \label{fig:upper_cell_att}
%      \end{subfigure}
%     \caption{Illustration of the Cell Attention mechanism. The left panel (\ref{fig:lower_cell_att}) illustrates the Lower Attention, it evaluates the reciprocal importance of two edges sharing a common node. The right panel (\ref{fig:upper_cell_att}) showcases the Upper Attention, emphasizing the significance of edges within the same ring.}
%     \label{fig:att}
% \end{figure}

It is reasonable to think that the philosophy behind~\ref{par:simplicial_attention} can be naturally extended to cell complexes. In fact, this section addresses some critical considerations to straightforwardly adapt the principles of~\Cref{eq:simplicial_attention} for individually account the importance of edges' latent features when aggregating information coming from upper and lower neighbouroods. This operation takes the name of {\bf cell attention} and is exploits the connectivity of the edges (1-cells) within $\cph$ to design an efficient attention mechanism for topological message passing schemes over cell complexes.
As mentioned in~\Cref{sec:background:cell_complexes}, there are various types of adjacencies that can be taken into account when dealing with cell complexes. Here, for an edge $e$, only its upper and lower neighbourhoods are employed. This choice allows to capture long-range and higher-order relationships via the upper neighbourhood while the lower neighbourhood maintains local information. Moreover, this approach keeps the number of operations to be linear in the initial number of edges of the complex, which can be intended as a favorable trade-off between complexity and performance. 

\

It is extremely important to note that after each layer of message passing, a pooling operation reduces the size of the complex by aggregating edges. In particular, cell attention networks perform the topological message passing scheme on a sequence of cell complexes $\{\cph^{(l)}\}_{l=1}^{L}$ such that  $\cph^{(l+1)} \subseteq \cph^{(l)}$ since $\E^{(l+1)} \subseteq \E^{(l)}$ an the more deep the network is, the less elements the upper and lower neighbourhoods have. 
This reminder is provided because references to specific layers in the network's operation will be implicit and omitted in the equations to enhance clarity and reduce clutter.

\

%In particular, since the messages are exchanged between the edges of $\cph$, in each layer $l$, this message-passing scheme exploits upper and lower edge adjacencies $\mathcal{N}_{\uparr}(e)$ and $\mathcal{N}_{\doarr}(e)$, associated with the cell complex $\cph^{(l)}$, $l=1,\ldots,L.$

At each layer, an \textbf{upper cell attention} $a_{\uparr}: \mathbb{R}^{F_e} \times \mathbb{R}^{F_e} \rightarrow \mathbb{R}$,  evaluates the reciprocal importance \textbf{lower cell attention} $a_{\doarr}: \mathbb{R}^{F_e} \times \mathbb{R}^{F_e}  \rightarrow \mathbb{R}$ measure the reciprocal importance of  of two edges that are part of the same ring and the importance of two edges's features that share a common node, respectively. Therefore, upper and lower embeddings are updated 
%in the $l-th$ message passing round 
as:

\begin{equation}\label{eq:cell_attenton}
    \underset{\textbf{(a)}~\text{Upper Cell Attention}\vrule height 2.5ex width 0pt}{\mathbf{h}_{\uparr} = \underset{k \in \mathcal{N}_{\uparr}(e)}{\agg} \big ( a_\uparr(\mathbf{h}_{e},\mathbf{h}_{k})\, \mathbf{h}_{k} \big)}, \quad
        \underset{\textbf{(b)}~\text{Lower Cell Attention}\vrule height 2.5ex width 0pt}{\mathbf{h}_{\doarr} = \underset{k \in \mathcal{N}_{\doarr}(e)}{\agg} \big( a_\doarr(\mathbf{h}_{e},\mathbf{h}_{k}) \, \mathbf{h}_{k} \big)},
\end{equation}

where $\agg$ is a permutation invariant aggregation function (e.g., sum, mean, max), $\com$ is a learnable update function. Specifically, the upper~($a_\uparrow$,~\textcolor{darkred}{Equation}~\ref{eq:cell_attenton}\textcolor{darkred}{a}) and lower~($a_\downarrow$,~\textcolor{darkred}{Equation}~\ref{eq:cell_attenton}\textcolor{darkred}{b}) cell attention functions  can be implemented with the same spirit as~\ref{par:simplicial_attention}:
let $s_{\uparr} : \mathbb{R}^{F_e} \times \mathbb{R}^{F_e} \rightarrow \mathbb{R}$ be the \textbf{upper scoring} and  $s_{\doarr} : \mathbb{R}^{F_e} \times \mathbb{R}^{F_e} \rightarrow \mathbb{R}$ the \textbf{lower scoring}. In particular,  $s_{\uparr}$ and $s_{\doarr}$ are {\em responsible for learning the importance of edges' features} while computing the $\agg$ operation. Let $\mathbf{h}_e \in \mathbb{R}^{F_e}$ be a latent representation of edge $e$ and $\mathbf{h}_k \in \mathbb{R}^{F_e}$ the one for adjacent edge $k$. The scoring functions can be both implemented following~\cite{velivckovic2018graph} via {\bf cell attention}:

\begin{align}
s_{\uparr} (\mathbf{h}_{e}, \mathbf{h}_{k}) &= \text{LeakyReLU}\left( \mathbf{a}_{\uparr}^\top \big[  \mathbf{W}_{\uparr} \mathbf{h}_e \parallel \mathbf{W}_{\uparr}  \mathbf{h}_k \big]  \right),\label{eq:upper_cell_attention_scores_v1} \\[1.5pt]
s_{\doarr} (\mathbf{h}_{e}, \mathbf{h}_{k}) &= \text{LeakyReLU}\left( \mathbf{a}_{\doarr}^\top \big[  \mathbf{W}_{\doarr} \mathbf{h}_e \parallel \mathbf{W}_{\doarr}  \mathbf{h}_k \big]  \right),\label{eq:lower_cell_attention_scores_v1}
\end{align}

where $\mathbf{W}_{\doarr}, \mathbf{W}_{\uparr} \in \mathbf{F_e \times F_e}$ and $\mathbf{a}_{\doarr}, \mathbf{a}_{\uparr} \in \mathbb{R}^{2F_e}$. It is also possible to employ a {\bf dynamic cell attention} using scoring functions inspired by~\cite{brody2021attentive}: 

\begin{align}
s_{\uparr} (\mathbf{h}_e, \mathbf{h}_k) &= \mathbf{a}_{\uparr}^\top \text{LeakyReLU}\big( \mathbf{W}_{\uparr}  [\mathbf{h}_e \parallel \mathbf{h}_k]  \big),\label{eq:upper_cell_attention_scores_v2}  \\[1.5pt]
s_{\doarr} (\mathbf{h}_e, \mathbf{h}_k) &= \mathbf{a}_{\doarr}^\top \text{LeakyReLU}\big( \mathbf{W}_{\doarr} [\mathbf{h}_e \parallel \mathbf{h}_k]  \big),\label{eq:lower_cell_attention_scores_v2}
\end{align}

where $\mathbf{W}_{\doarr}, \mathbf{W}_{\uparr} \in \mathbb{R}^{F_e \times 2F_e}$ are learnable weight matrices and $\mathbf{a}_{\doarr}, \mathbf{a}_{\uparr} \in \mathbb{F_e}$ are two independent vectors of attention coefficients. As pointed out in~\cite{brody2021attentive}, by changing the order of the operations, the message passing scheme of dynamic cell attention is strictly more expressive than the one that involves~\Cref{eq:lower_attention_scores_v1} and~\Cref{eq:upper_cell_attention_scores_v1}. A pictorial example of the cell attention mechanism is provided in~\Cref{fig:att}.

Once the scores are obtained, to make them comparable across the neighbours, they are normalised using the softmax function:

\begin{align}
\alpha^{\uparr}_{e, k} &= \underset{e \in \mathcal{N}_{\uparr}(e)}{\text{softmax}}\,(s_{\uparr} (\mathbf{h}_e, \mathbf{h}_k)), \label{eq:softmax_upper_attention_can} \\[1.5pt]
\alpha^{\doarr}_{e, k} &= \underset{k \in \mathcal{N}_{\doarr}(e)}{\text{softmax}}\,(s_{\doarr} (\mathbf{h}_e, \mathbf{h}_k)). \label{eq:softmax_lower_attention_can}
\end{align}

Therefore, the upper and lower embeddings are computed as:

\begin{equation}\label{eq:cell_attenton_update}
   {\mathbf{h}_{\uparr} = \underset{k \in \mathcal{N}_{\uparr}(e)}{\agg} \big ( \alpha^{\uparr}_{e, k} \mathbf{W}_{\uparr}\, \mathbf{h}_{k} \big)}, \quad
        {\mathbf{h}_{\doarr} = \underset{k \in \mathcal{N}_{\doarr}(e)}{\agg} \big( \alpha^{\doarr}_{e, k} \mathbf{W}_{\doarr} \, \mathbf{h}_{k} \big)},
\end{equation}

As firstly proposed in~\cite{velivckovic2018graph}, multi-head attention can be employed to stabilize fluctuations within the self-attention mechanism. In particular, it consist in aggregating $H$ independent cell attentions~(\Cref{eq:cell_attenton}) using a concatenation, sum or averaging: 

\begin{align}\label{eq:multihead_cell_attention}
    \mathbf{h}_{\uparr} = \underset{\,k \in \mathcal{N}_{\uparr}(e)}{\agg} (\underset{\,h}{\agg}({a^{(h)}_{\uparr} (\mathbf{h}_e, \mathbf{h}_k)} \, \mathbf{h}_k)),  \\[1.5pt]
   \mathbf{h}_{\doarr} = \underset{\,k \in \mathcal{N}_{\doarr}(e)}{\agg} (\underset{\,h}{\agg}({a^{(h)}_{\doarr} (\mathbf{h}_e, \mathbf{h}_k)} \, \mathbf{h}_k)).  \\[1.5pt].
\end{align}

If a concatenation is used as aggregation function, the output dimension is multiplied by the number of attention heads involved. The latent representation of edge $e$ is therefore updated as:

\begin{equation}\label{eq:top_att:can:message_passing_scheme},
    \widetilde{\mathbf{h}}_{e} =  \com\big(\mathbf{h}_{e}, \, \mathbf{h}_{\uparr}, \mathbf{h}_{\doarr}\big)
\end{equation}

\

\paragraph{Edge Pooling}\makeatletter\def\@currentlabel{Edge Pooling}\makeatother\label{par:cell_attention_edge_pooling} 

\begin{figure}[!htb]
    \centering
    \includegraphics[width=\textwidth]{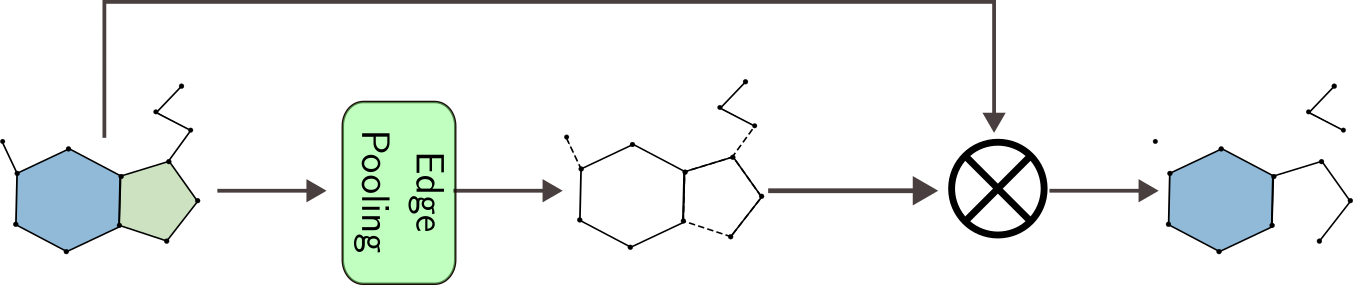}
    \caption{Visual representation of the edge pooling operation: At each layer, every edge of the complex is receives a score through a self-attention mechanism to determine its importance. Only the top-k scored edges are forwarded to the next layer. The structure of the complex is then adjusted: since the pooling affects the overall connectivity, a rewiring must be performed based on the topology of the edges removed.}
    \label{fig:pooling}
\end{figure}

To increase the scalability of the architecture, perform scale separation and learn a hierarchical representation of the complex, this section introduces a self-attention edge pooling technique. It extends the method used in~\cite{lee2019self} to compute a self-attention score $\gamma_{e} \in \mathbb{R}$ for each edge of the complex via a {\em learnable} function $a_{\wp}: \mathbb{R}^{F_e} \rightarrow \mathbb{R}$ : 
\begin{equation}\label{pooling_scores}
    \gamma_{e} = a_\wp \left(\widetilde{\mathbf{h}}_{e} \right).
\end{equation}

In particular, let $\rho \in (0, 1]$ be the \emph{pooling ratio}, that is the fraction of the edges that will be retained after the pooling layer. Moreover, let and $\aleph_e = \lceil \, \rho \cdot \, \vert \E \vert \,  \rceil$ the actual number of edges kept. Therefore, the edges that will be kept are the ones associated with the {\em top-k highest value} of the pooling scores.

At this point, the set of edges is updated as:  $\E^{\textsf{new}} = \{ e  : e \in \E \textrm{ and } \gamma_e \in \text{top-k}(\{\gamma_{e}\}, \aleph_e \}$, where $\text{top-k}(\cdot)$ is the set of the highest $\aleph_e$ self-attention scores. Finally, the latent representation of an edge $e$ kept after the pooling stage is scaled accordingly:
\begin{equation}\label{pooling_scaling}
    \mathbf{h}_{e}^{\textsf{new}} =  \gamma_{e}\widetilde{\mathbf{h}}_{e} , \;\; \forall  e  \in \E^{\textsf{new}}.
\end{equation}

The edge pooling stage {\bf alters the connectivity structure of $\cph$}. Thus, it has to be adjusted to obtain a consistent updated complex $\cph^{\textsf{new}}$. 
To this aim, the procedure depicted in~\Cref{fig:pooling} is applied: If an edge $e$ belongs to $\E$ but  is not contained in $ \E^{\textsf{new}}$, the lower connectivity is updated by disconnecting the nodes that are on the boundary of $e$, while the upper connectivity is updated by removing the rings that have $e$ on their boundaries. 

\

\paragraph{Readout} As~\cite{cangea2018towards}, a hierarchical version of the aforementioned attentional edge pooling operation is considered. To this aim, an intra-layer $\agg$ operation is applied on the latent features  $\mathbf{h}_e^{\textsf{new}}$ to obtain an embedding of the whole complex $\cph^{\textsf{new}}$ as:
\begin{equation}\label{local_readout}
    \mathbf{h}_{\cph^{\textsf{new}}} = \underset{e \in \E^{\textsf{new}}}{\agg} \big ( \mathbf{h}_{e}^{\textsf{new}} \big).
\end{equation} 

\

\begin{figure}[t]
    \centering
    \includegraphics[width=\textwidth]{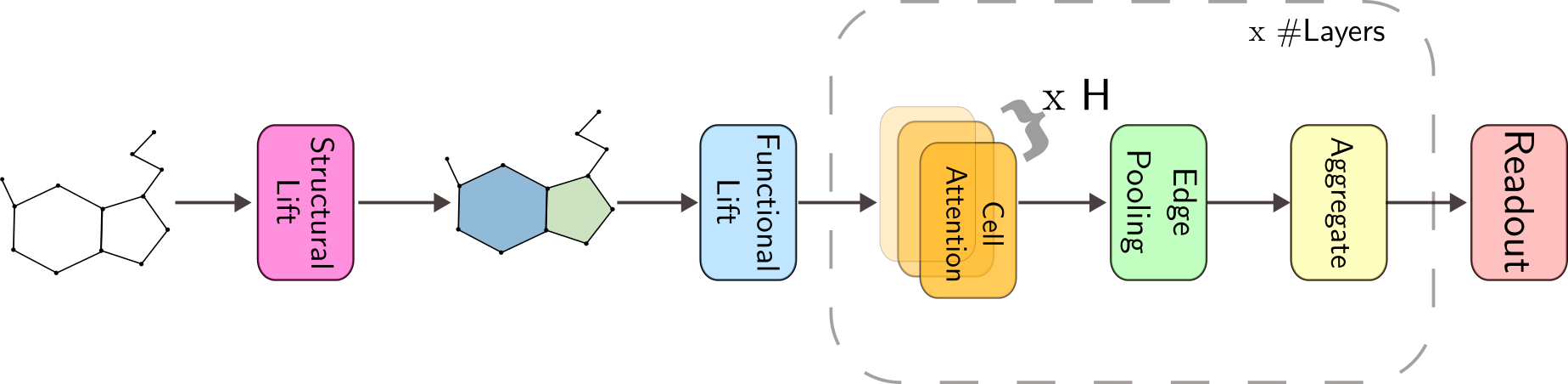}
    \caption{Schematic overview of the Cell Attention Network (CAN) architecture. The process begins with a structural lifting map, transforming a graph $\gph$ into a cell complex $\cph$. Following this, edge features are derived from node features through a functional lift. The core of the network consists of $m$ cell attention layers, each performing a message-passing operation, edge pooling stage, followed by an aggregation. The architecture finally combines the hierarchical features to obtain complex-wise prediction via readout.}
    \label{fig:tan}
\end{figure}

By integrating this operation to all the layers,it results in a sequence $\{\mathbf{h}_{\cph^{(l)}}\}$ of complex-wise hierarchical representation. After the last hidden layer, a final (global) readout operation is performed by aggregating all the previously computed complexes embeddings: 
\begin{equation}\label{global_readout}
\mathbf{h}_{\cph} = \underset{l}{\agg} \big( \mathbf{h}_{\cph^{(l)}} \big).
\end{equation}

Finally, $\mathbf{h}{\cph}$ is fed to a multi-layer perceptron (MLP) to obtain complex-wise predictions. The complete overview of the cell attention network architecture is pictored in~\Cref{fig:tan}.
\section{Enhanced Topological Message Passing}

Graph Neural Networks excel at learning from graph-structured data but face limitations in handling long-range interactions and modeling higher-order structures. Cellular Isomorphism Networks (CINs) address these challenges through a message-passing scheme on a cell complex topology.

Despite their advantages, CINs make use only of boundary and upper messages which do not consider a direct interaction between the rings present in the underlying complex. Accounting for these interactions is critical for accurately learning representations of complex real-world phenomena such as the dynamics of supramolecular assemblies, neural activity within the brain, and gene regulation processes presented in~\Cref{sec:intro:tnns_4_science}. In this section, a powerful topological message passing scheme that accounts for ring interactions is introduced. This enhanced scheme overcomes these limitations by enabling cells within each layer to receive lower messages. By providing a more comprehensive representation of higher-order and long-range interactions, CIN++ achieves state-of-the-art results on large-scale and long-range chemistry benchmarks.

\begin{figure}[!htb]
    \centering
    \includegraphics[scale= 0.69]{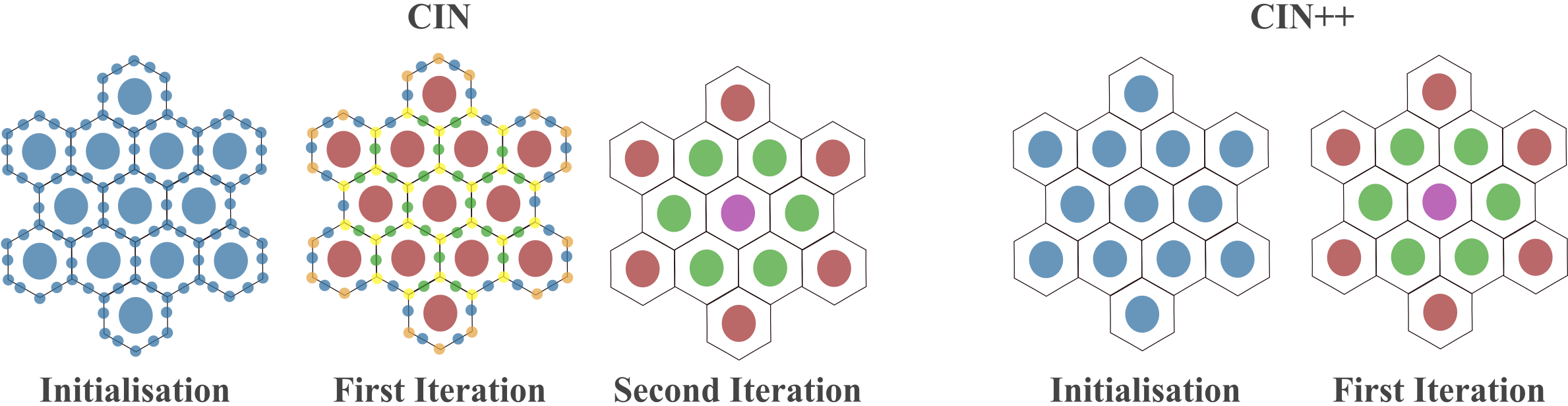}
    \caption{In molecular graphs featuring regions with a high concentration of rings, incorporating lower messages into cellular isomorphism networks expedites the convergence of the 2-cell colors.}
    \label{fig:ring coloring}
\end{figure}

\

\paragraph{Contribution} This section introduces a new message-passing scheme for cell complexes, leveraging the benefits of complex topological spaces. Motivated by the fact that cell complexes provide a natural framework to represent higher-dimensional structures and topological features that are inherent in the realm of chemistry, throughout this section, the focus is set on this domain. In particular, CIN++ includes messages that flow within the lower neighbourhood of the underlying cell complex. These messages are exchanged between edges that share a common vertex and between rings that are glued through an edge to better capture group interactions and to avoid potential bottlenecks. Experimental results, detailed later in~\Cref{sec:exp_cinpp}, demonstrate that CIN++ offers a deeper understanding of chemical systems compared to other models, showcasing top-tier performance on benchmarks, including ZINC and Peptides,. The ability of CIN++ model to understand higher-dimensional structures and topological features could have an immediate and significant impact in the areas of computational chemistry and drug discovery.

\paragraph{On the convergence speed of Cellular Isomorphism Networks} 

Cellular Isomorphism Networks (CINs) model higher-order signals through a proven, powerful hierarchical message-passing scheme in cell complexes. Examining CIN's coloring procedure reveals that edges initially receive messages from the upper neighborhood, and only in the subsequent iteration do they refine the ring colors~(\Cref{fig:ring coloring} (left)). Although this coloring refinement procedure holds the same expressive power (\cite{bodnar2021weisfeilercell}, Thm. 7), it is possible to achieve {\em faster convergence} by including messages from the cells' lower neighborhood. This allows for a direct interaction between the rings of the complex which removes the bottleneck caused by edges waiting for upper messages before updating ring colours~(\Cref{fig:ring coloring} (right)).

\subsubsection{Enhancing Topological Message Passing}\label{sec:etmp}

This section describes the operations involved in the enhanced topological message-passing scheme that regulates CIN++. In particular, the enhancement consists of the inclusion of lower messages in Cellular Isomorphism Networks~(CIN, \cite{bodnar2021weisfeilercell}). As will be shown later in this section, including lower messages will let the information flow within a broader neighbourhood of the complex via the messages exchanged between the rings that are lower adjacent and escaping potential bottlenecks~\citep{alon2020bottleneck} via messages between lower adjacent edges. 

\subsubsection{Boundary Messages}

\begin{figure}[!htb]
     \centering
     \begin{subfigure}[b]{0.45\textwidth}
         \centering
         \includegraphics[width=.5\textwidth]{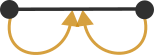}
         \caption{Boundary messages from nodes to the edge that joins them.}
         \label{fig:boundary_messages_edges}
     \end{subfigure}
     \hspace{5pt}
     \begin{subfigure}[b]{0.45\textwidth}
         \centering
         \includegraphics[width=.5\textwidth]{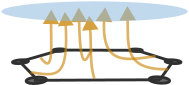}
         \caption{Boundary messages directed from edges to an inner ring.}
         \label{fig:boundary_messages_rings}
     \end{subfigure}
    \caption{Boundary message flow within a 2-dimensional cell complex: \textbf{(a)} from node pairs to their connecting edge and \textbf{(b)} from surrounding edges to enclosed rings.}
    \label{fig:bm}
\end{figure}

A cell $\sigma$ that is either an edge or a ring, receives messages from its boundary elements denoted by $\tau \in \mathcal{B}(\sigma)$. Thus, the feature vector $\mathbf{h}_{\mathcal{B}}$ is obtained through a permutation invariant aggregation that takes as input all the {\em boundary messages} $\mathsf{m}_{\mathcal{B}}$ between the feature vector $\mathbf{h}_{\sigma}$ and all the feature vectors of its boundary elements, $ \mathbf{h}_{\tau}$ as in~\Cref{fig:bm}. To reduce clutter and improve clarity, the particular cell $\sigma$ which receives the messages is left implicit.

\begin{equation}\label{eq:boundary_msg}
\mathbf{h}_{\mathcal{B}} = \underset{\tau \in \mathcal{B}(\sigma)}{\mathsf{agg}} \big(\mathsf{m}_{\mathcal{B}}\big( \mathbf{h}_{\sigma}, \mathbf{h}_{\tau}\big)\big).
\end{equation}

This operation is responsible for lifting the information from lower cells to higher-order ones, enabling bottom-up communication across the cells of the complex. Leveraging the theory developed in~\cite{xu2019powerful} for graphs and later on in~\cite{bodnar2021weisfeilercell} for regular cell complexes, to maximize the representational power of the underlying network, the boundary message function is implemented as:

\begin{equation}\label{eq:bm_impl}
    \mathbf{h}_{\mathcal{B}} =  \mlp_{\mathcal{B}}\big( (1+\epsilon_{\mathcal{B}}) \,  \mathbf{h}_{\sigma} + \sum_{\tau \in \mathcal{B}(\sigma)} \mathbf{h}_{\tau} \big), \nonumber
\end{equation}

where $\mlp_{\mathcal{B}}$ has 2 fully-connected layers. Considering that 0-cells (vertices)  do not have boundary elements, 1-cells (edges) have only two boundary elements and the maximum ring size of $\cph$ is bounded by a small constant, the number of boundary messages scales with $\mathcal{O}(\vert \cph \vert)$.  The number of parameters involved in this operation is $\mathcal{O}(d^2)$, provided by the outer Multi-Layer Perceptron (MLP). In this work, no parameter sharing is employed across the dimensions of the complex (i.e., a distinct MLP is used for each layer of the network and for each dimension of the complex). 

\subsubsection{Upper Messages}

\begin{figure}[!htb]
     \centering
     \begin{subfigure}[b]{0.45\textwidth}
         \centering
         \includegraphics[width=.5\textwidth]{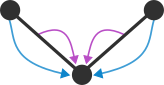}
         \caption{Upper messages flowing amongst nodes within a 2D cell complex. Co-boundary information from edges in common is also included.}
         \label{fig:upper_messages_nodes}
     \end{subfigure}
     \hspace{5pt}
     \begin{subfigure}[b]{0.45\textwidth}
         \centering
         \includegraphics[width=.5\textwidth]{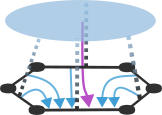}
         \caption{Visualization of upper messages shared between edges that form a cohesive ring in a 2D cell complex. Rings' messages are also included as information from the co-boundary neighbourhood.}
         \label{fig:upper_messages_edges}
     \end{subfigure}
    \caption{Schematic representation of upper message exchanges within a two-dimensional cell complex: \textbf{(a)} between nodes (i.e., the canonical message passing scheme), and  \textbf{(b)} between edges that bound a ring. The process also integrates messages from co-boundary adjacent cells.}
    \label{fig:um}
\end{figure}

These are the messages that each cell $\sigma$ receives from its upper neighbouring cells $\tau \in \mathcal{\mathcal{N}_{\uparr}}(\sigma)$ (i.e., the blue arrows in~\Cref{fig:um}) and from common co-boundary cells $\delta \in \mathcal{C}o(\sigma, \tau)$ (i.e., the purple arrows in~\Cref{fig:um}). The information coming from the upper neighbourhood of $\sigma$ and the common co-boundary elements is denoted as $\mathbf{h}_{\uparr}$. It obtained via a permutation invariant aggregation that takes as input all the {\em upper messages} $\mathsf{m}_{\uparr}$ between the feature vector $\mathbf{h}_\sigma$, all the feature vectors in its upper neighbourhood $\mathbf{h}_\tau$ and all the cells in the common co-boundary neighbourhood, $\mathbf{h}_\delta$. Formally:

\begin{equation}\label{eq:upp_msg}
\mathbf{h}_{\uparr} = \underset{\underset{{\delta \in \mathcal{C}o(\sigma, \tau)}}{\tau \in \mathcal{N}_{\uparr}(\sigma)}}{\mathsf{agg}}\big(\mathsf{m}_{\uparr}\big( \mathbf{h}_{\sigma}, \mathbf{h}_{\tau} ,  \mathbf{h}_{\delta}\big)\big)
\end{equation}

This operation will let the information flow within a {\em narrow} neighbourhood of $\sigma$, ensuring consistency and coherence with respect to the underlying topology of the complex. The function $m_{\uparr}$ is therefore implemented as:

\begin{equation}\label{eq:um_impl}              
    \mathbf{h}_{\uparr} = \mlp_{\uparr}\big( (1+\varepsilon_{\uparr}) \, \mathbf{h}_{\sigma} + \sum_{\stackrel{\tau \in \mathcal{N}_{\uparr}(\sigma)}{{\delta \in \mathcal{C}o(\sigma, \tau)}}} \mlp_{\mathsf{m}_{\uparr}} \big( \mathbf{h}_{\tau} \parallel \mathbf{h}_{\delta} \big) \big), \nonumber
\end{equation}

In this context, $\mlp_{\mathsf{m}_{\uparr}}$ denotes a single-layer fully-connected network, complemented by a point-wise non-linearity, while $\mlp_{\uparr}$ is implemented as a two-layer dense layer. The amount of upper messages that a cell $\tau \in \mathcal{B}(\sigma)$ exchanges with its adjacent cells is given by $2 \cdot \binom{\left| \mathcal{B}(\sigma) \right|}{2}$. Considering the assumption that the boundary of the cells is bounded by a fixed constant, the total number of messages correlates linearly with the magnitude of the complex, that is, the number of cells in $\cph$. The total number of learnable parameters is also on the order of $\mathcal{O}(d^2)$, a consequence of the two MLPs utilized in the message function.

\subsubsection{Lower Messages}

\begin{figure}[!htb]
     \centering
     \begin{subfigure}[b]{0.45\textwidth}
         \centering
         \includegraphics[width=.5\textwidth]{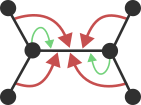}
         \caption{Edges exchanging lower messages based on shared nodes.}
         \label{fig:lower_messages_edges}
     \end{subfigure}
     \hspace{5pt}
     \begin{subfigure}[b]{0.45\textwidth}
         \centering
         \includegraphics[width=.7\textwidth]{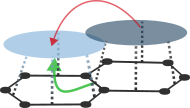}
         \caption{Rings communicating via lower messages through common bounding edges.}         \label{fig:lower_messages_rings}
     \end{subfigure}
    \caption{Visualization of lower message exchange in a 2D cell complex. \textbf{(a)} messages traverse edge pairs through shared nodes, and \textbf{(b)} between rings via shared boundary edges.}
    \label{fig:lm}
\end{figure}

These are the messages that each cell $\sigma$ receives from its lower neighbouring cells $\tau \in \mathcal{\mathcal{N}_{\doarr}}(\sigma)$ (i.e., the red arrows in~\Cref{fig:lm}) and from common boundary cells $\delta \in \mathcal{B}(\sigma, \tau)$ (i.e., the green arrows in~\Cref{fig:lm}). A function that aggregates the information coming from the upper neighbourhood of $\sigma$ and the common co-boundary elements is denoted as $m_{\doarr}$. It consists in a permutation invariant aggregation that takes as input all the {\em lower messages} $m_{\doarr}$ between the feature vector $\mathbf{h}_\sigma$, all the feature vectors in its lower neighbourhood $\mathbf{h}_\tau$ and all the cells in the common boundary neighbourhood, $\mathbf{h}_\delta$. Formally:

\begin{equation}\label{eq:lower_msg}
\mathbf{h}_{\doarr}  = \underset{\stackrel{\tau \in \mathcal{N}_\doarr(\sigma)}{{\delta \in \mathcal{B}(\sigma, \tau)}}}{\agg} \big(\mathsf{m}_{\doarr}\big( \mathbf{h}_{\sigma}, \mathbf{h}_{\tau},  \mathbf{h}_{\delta} \big)\big)
\end{equation}

As pictorially shown in~\Cref{fig:lm} (a), this operation would help a {\em broader} diffusion of the information between edges that are not necessarily part of a ring. Also, it will let the rings of the complex communicate directly~(\Cref{fig:lm} (b)). Similarly to the upper messages, $\mathsf{m}_{\doarr}$ is implemented via:

\begin{equation}\label{eq:lm_impl}              
    \mathbf{h}_{\doarr} = \mlp_{\doarr}\big( (1+\varepsilon_{\doarr}) \, \mathbf{h}_{\sigma} + \sum_{\stackrel{\tau \in \mathcal{N}_{\doarr}(\sigma)}{{\delta \in \mathcal{B}(\sigma, \tau)}}} \mlp_{\mathsf{m}_{\doarr}} \big( \mathbf{h}_{\tau} \parallel \mathbf{h}_{\delta} \big) \big), \nonumber
\end{equation}

As for the upper messages, $\mlp_{\mathsf{m}_{\doarr}}$ denotes a single-layer fully-connected network, succeeded by a point-wise non-linearity, while $\mlp_{\doarr}$ represents an MLP with two-layers fully connected. The amount of lower messages that a cell $\tau \in \mathcal{C}o(\sigma)$ exchanges with its neighbours is given by $2 \cdot \binom{\left| \mathcal{C}o(\sigma) \right|}{2}$. Since the assumptions include that the cells have a number of co-boundary neighbours that is bounded by a fixed constant, the total number of messages scales linearly with the number of cells in the complex. The two MLPs involved in the message function induces an amount of learnable parameters on the order of $\mathcal{O}(d^2)$.

\subsubsection{Update and Readout}
Update and Readout operations are performed as:

\begin{equation}\label{eq:update}
    \mathbf{h}_{\sigma}^{\textsf{new}} = \com\Bigl( \mathbf{h}_\sigma,
    \mathbf{h}_{\mathcal{B}}, \mathbf{h}_{\uparr}, \mathbf{h}_{\doarr}, \Bigr).
\end{equation}

The update function $\com$ is implemented using a single fully connected layer followed by a point-wise non-linearity that uses a different set of parameters for each layer of the model and for each dimension of the complex. Notice how the update function receives additional information provided by the messages that a cell $\sigma$ receives from its lower neighbourhood.
After $L$ layers, the representation of the complex is computed as: 

\begin{equation}\label{eq:readout}
    \mathbf{h}_{\cph} = \out\Bigl( \{\{\{\mathbf{h}_\sigma^L\}\}\}_{dim(\sigma)=0}^{2} \Bigr), 
\end{equation}

where $\{\{\mathbf{h}_\sigma^L\}\}$ is the multi-set of cell's features at layer $L$. In practice, the representation of the complex is computed in two stages: first, for each dimension of the complex, the representation of the cells at dimension $k$ is computed by applying a mean or sum readout operation. This results in one representation for the vertices $\mathbf{h}_\V$, one for the edges $\mathbf{h}_\E$ and one for the rings $\mathbf{h}_\mathsf{R}$. Then, a representation for the complex $\cph$ is computed as: $\mathbf{h}_{\cph} = \mlp_{\out,\V} \big( \mathbf{h}_\V \big) + \mlp_{\out,\E} \big( \mathbf{h}_\E \big) + \mlp_{\out,\mathsf{R}} \big( \mathbf{h}_\mathsf{R} \big)$, where each $\mlp_{\out,\cdot}$ is implemented as a single fully-connected layer followed by a non-linearity. Finally, $\mathbf{h}_{\cph}$ is forwarded to a final dense layer to obtain the predictions.

A neural architecture that updates the cell's representation using the message passing scheme defined in~\Cref{eq:update} and obtains complex-wise representations as in~\Cref{eq:readout} takes the name of {\em Enhanced Cell Isomorphism Network} (CIN++). The expressive power of CIN++ can be directly derived from the expressiveness results reported in~\cite{bodnar2021weisfeilercell}.

\begin{theorem}
\label{thm:cin++express}
Let $\mathcal{F} : \cph \rightarrow \mathbb{R}^d$ be a CIN++ network. With a sufficient number of layers and injective neighbourhood aggregators $\mathcal{F}$ is able to map any pair of
complexes $(\cph_1, \cph_2)$ in an embedding space that the Cellular Weisfeiler-Lehman (CWL) test is able to tell if $\cph_1$ and  $\cph_2$ are non-isomorphic.
\end{theorem}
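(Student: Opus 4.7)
The plan is to reduce this expressivity claim to the analogous result for Cellular Isomorphism Networks established by~\citet{bodnar2021weisfeilercell} (Thm.~7 therein), which already shows that CIN with injective aggregators and enough layers matches the discriminative power of the CWL test. The only structural difference between CIN and CIN++ is the additional lower-message aggregate $\mathbf{h}_{\doarr}$ fed into $\com$ (see~\Cref{eq:update}); hence it suffices to argue that CIN++ is \emph{at least} as expressive as CIN, and then inherit the CWL lower bound.

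The first step is a simulation argument: CIN++ subsumes CIN as a function class. Instantiating the lower-message MLPs $\mlp_{\doarr}$ and $\mlp_{\mathsf{m}_{\doarr}}$ with zero weights makes $\mathbf{h}_{\doarr}$ a constant vector independent of the input complex, so the update $\com(\mathbf{h}_\sigma, \mathbf{h}_{\mathcal{B}}, \mathbf{h}_{\uparr}, \mathbf{h}_{\doarr})$ effectively collapses to a function of $(\mathbf{h}_\sigma, \mathbf{h}_{\mathcal{B}}, \mathbf{h}_{\uparr})$ alone. Any function realizable by a CIN of depth $L$ is then realizable by a CIN++ of the same depth, so the CIN expressivity statement transfers verbatim.

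The second, more informative step is a direct layer-wise induction. The inductive hypothesis is that after $t$ rounds the multiset of CIN++ features $\{\{\mathbf{h}_\sigma^{t}\}\}$ is at least as refined as the multiset of CWL colors $\{\{c_\sigma^{t}\}\}$ after $t$ refinement rounds; the base case follows from agreement of initial features and initial colors. For the inductive step, the CWL update hashes injectively the tuple formed by a cell's current color, the multiset of boundary colors, and the multiset of upper-plus-common-co-boundary color pairs. By the assumed injectivity of the boundary aggregator~(\Cref{eq:bm_impl}) and of the upper aggregator~(\Cref{eq:um_impl}), together with the standard fact (Lemma~5 of~\citealp{xu2019powerful}, also invoked in~\citet{bodnar2021weisfeilercell}) that sum-pooled MLP embeddings are injective on multisets over countable feature spaces, the CIN++ update produces features at least as refined as the CWL colors; the additional $\mathbf{h}_{\doarr}$ coordinate can only refine further. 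The readout~(\Cref{eq:readout}), being a sum of per-dimension injective aggregations, then preserves any distinction made by CWL at the complex level.

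The main obstacle is largely bookkeeping rather than conceptual: one must verify carefully that combining the cell's own feature with three distinct neighborhood multisets inside a single $\com$ step preserves injectivity, a statement that ultimately rests on the sum-injectivity lemma over countable multisets used throughout the GIN and CIN analyses. Since this lemma already accommodates the boundary and upper aggregations in~\citet{bodnar2021weisfeilercell}, the additional verification reduces to noting that the lower-message aggregate is supplied as a separate argument of $\com$ and can be zeroed out whenever injectivity is the only concern, which is precisely what the simulation argument exploits.
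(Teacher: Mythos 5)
Your proposal is correct, and its core argument (the layer-wise induction under injective aggregators) is essentially the half of the paper's proof that the stated theorem actually needs: the paper sets $c^l$ to be the CWL colouring and $h^l$ the CIN++ colouring and proves, by the same kind of induction on layers, that $h^l \sqsubseteq c^l$ whenever the aggregation in~\Cref{eq:update} is injective and the depth suffices, which is exactly your ``CIN++ features are at least as refined as CWL colours'' step. Where you differ is twofold. First, you add a simulation shortcut (zeroing $\mlp_{\doarr}$ and $\mlp_{\mathsf{m}_{\doarr}}$ so that CIN++ contains CIN and inherits Theorem~7 of \citet{bodnar2021weisfeilercell}); this is a legitimate and more elementary route to the distinguishing power, but note it does not by itself discharge the theorem as worded, since the simulated network has a constant (hence non-injective) lower aggregator while the hypothesis asks for injective neighbourhood aggregators — your inductive second step is what actually covers that reading, so the proof stands. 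Second, the paper also proves the converse refinement $c^l \sqsubseteq h^l$ (without any injectivity assumption), concluding the full equivalence $h^l \equiv c^l$ and hence that CIN++ is no \emph{more} powerful than CWL, which it then uses to inherit all of CIN's properties via Lemma~26 of \citet{bodnar2021weisfeilercell}; your argument deliberately omits this upper-bound direction, which is fine for the statement as given but yields a strictly weaker conclusion than the paper's.
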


\chapter{Experimental Analysis}\label{chap:exp}

\section{Experiments On Oversquashing}

The goal in the three graph transfer tasks - $\mathsf{Ring}$, $\mathsf{CrossedRing}$, and $\mathsf{CliquePath}$ - is for the $\MPNN$ to `transfer' the features contained at the target node to the source node. $\mathsf{Ring}$ graphs are cycles of size $n$, in which the target and source nodes are placed at a distance of $\lfloor n / 2 \rfloor$ from each other. $\mathsf{CrossedRing}$ graphs are also cycles of size $n$, but include "crosses" between the auxiliary nodes. Importantly, the added edges do not reduce the minimum distance between the source and target nodes, which remains $\lfloor n / 2 \rfloor$. $\mathsf{CliquePath}$ graphs contain a  $\lfloor n / 2  \rfloor$-clique and a path of length $\lfloor n / 2  \rfloor$. The source node is placed on the clique and the target node is placed at the end of the path. The clique and path are connected in such a way that the distance between the source and target nodes is $\lfloor n / 2  \rfloor + 1$, in other words the source node requires one hop to gain access to the path.

\begin{figure}[!htbp]
    \centering
    \includegraphics[width=\textwidth]{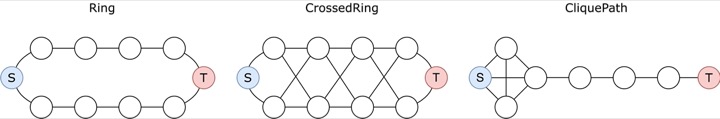}
    \caption{Topological structure of $\mathsf{RingTransfer}$, $\mathsf{CrossedRingTransfer}$, and $\mathsf{CliquePath}$. The nodes marked with an $S$ are the source nodes, while the nodes with a $T$ are the target nodes. All tasks are shown for a distance between the source and target nodes of $r=5$.}
    \label{fig:graph-transfer-example}
\end{figure}

\Cref{fig:graph-transfer-example} shows examples of the graphs contained in the $\mathsf{Ring}$, $\mathsf{CrossedRing}$, and $\mathsf{CliquePath}$ tasks, for when the distance between the source and target nodes is $r=5$. In the experiments the input dimension is fixed to $p=5$ and the target node is assigned a randomly one-hot encoded feature vector; for this reason, the random guessing baseline obtains $20\%$ accuracy. The source node is assigned a vector of all $0s$ and the auxiliary nodes are instead assigned vectors of $1s$. Following~\cite{bodnar2021weisfeilercell}, $5000$ graphs are generated for the training set and $500$ graphs for the test set for each task. In the experiments, are reported the mean accuracy over the test set. The train lasted for $100$ epochs, with depth of the $\MPNN$ equal to the distance between the source and target nodes $r$. Unless specified otherwise, the hidden dimension is fixed to $64$. During training and testing, a mask is applied over all nodes to focus only on the source node to compute losses and accuracy scores.

\subsection{Validating the impact of width}

%{\bf Graph Transfer}
This section validates empirically the message from~\Cref{cor:bound_MLP_MPNN}: if the task presents long-range dependencies, increasing the hidden dimension mitigates over-squashing and therefore has a positive impact on the performance. Consider the following 'graph transfer' task, building upon~\cite{bodnar2021weisfeilercell}: given a graph, consider source and target nodes placed at a distance $r$ from each other. Assign a one-hot encoded label to the target and a constant unitary feature vector to all other nodes. The goal is to assign to the source node the feature vector of the target. Partly due to over-squashing, performance is expected to degrade as $r$ increases. 

To validate that this holds irrespective of the graph structure, this is tested across three graph topologies, called  $\mathsf{CrossedRing}$, $\mathsf{Ring}$ and $\mathsf{CliquePath}$. While the topology is also expected to affect the performance (as confirmed in~\Cref{sec:depth}), given a fixed topology, it is expected that the model would benefit from an increase of hidden dimension.
 
To verify this behaviour,  the $\mathsf{GCN}$~\citep{kipf2017graph} architecture is employed on the three graph transfer tasks increasing the hidden dimension, but keeping the number of layers equal to the distance between source and target, as shown in~\Cref{fig:graph-transfer-hidden-dim}. The results verify the intuition from the theorem that a higher hidden dimension helps the $\mathsf{GCN}$ model solve the task across larger distances 
across the three graph-topologies.

\begin{figure}[t]
    \centering
    \includegraphics[width=0.8\textwidth]{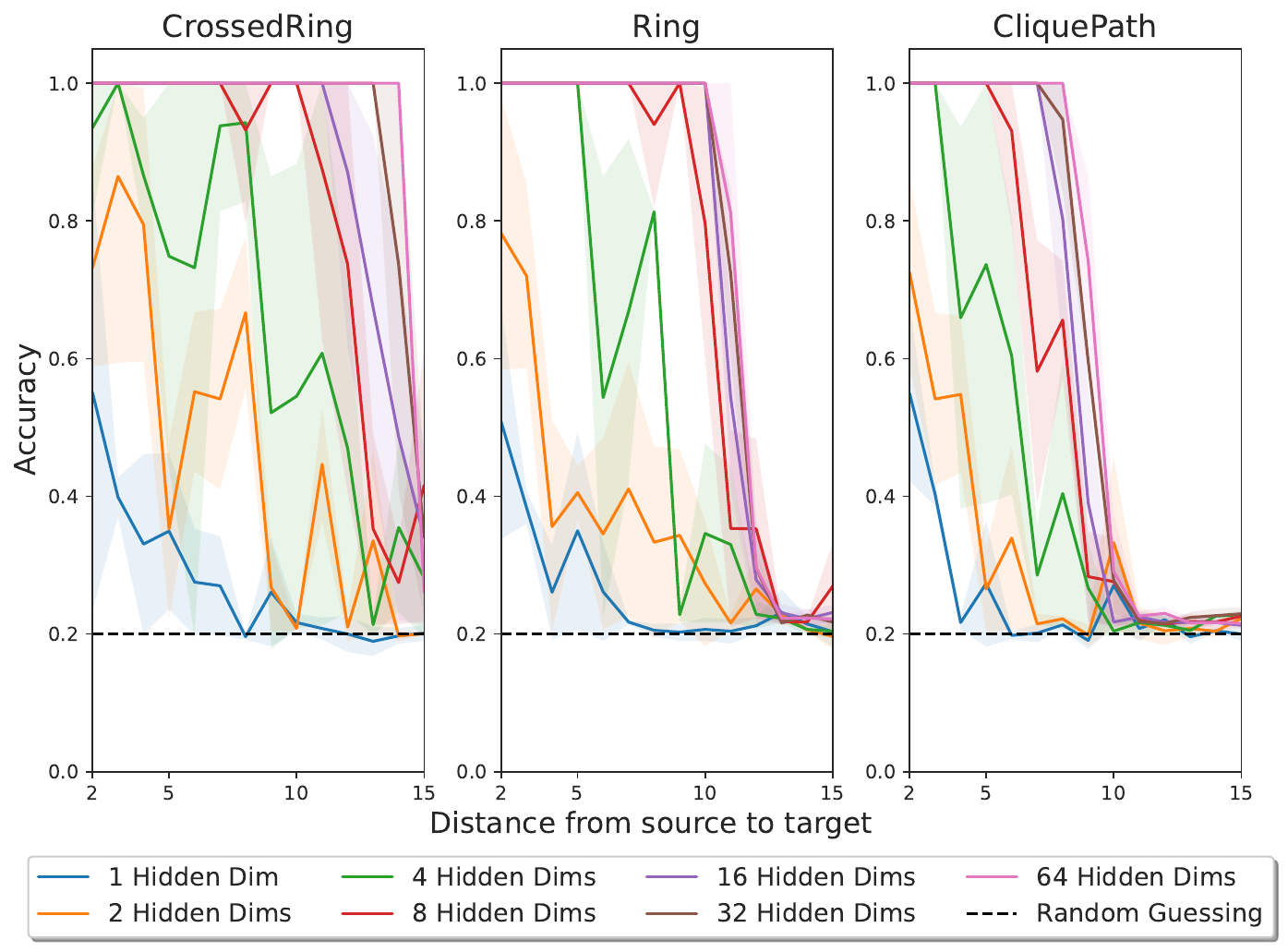}
    \caption{Performance of $\mathsf{GCN}$ on the $\mathsf{CrossedRing}$, $\mathsf{Ring}$, and $\mathsf{CliquePath}$ tasks obtained by varying the hidden dimension. Increasing the hidden dimension helps mitigate the over-squashing effect, in accordance with~\Cref{cor:bound_MLP_MPNN}.}
    \label{fig:graph-transfer-hidden-dim}
\end{figure}

% \begin{figure}[!htbp]
%     \centering
%     \includegraphics[width=\textwidth]{figures/tmp-synth-baryce_vs_er.pdf}
%     \caption{Performance of $\mathsf{GIN}$, $\mathsf{GCN}$, and  $\mathsf{SAGE}$ on the $\mathsf{RingTransfer}$, $\mathsf{CrossedRingTransfer}$, and $\mathsf{CliquePath}$ tasks. Increasing the hidden dimension helps mitigate the over-squashing effect.}
%     \label{fig:graph-transfer-hidden-dim}
% \end{figure}

% {\bf Take-home message:} {\em The sensitivity and width of the underlying $\MPNN$ can help mitigate the effect of over-squashing. However, this is a remedy that comes at the expense of generalization and does not address the real culprit behind over-squashing: the graph-topology.}

\subsection{Validating the impact of depth}
% \textcolor{blue}{We use the graph transfer task introduced above to validate \Cref{cor:over-squasing_distance}}. 
% We use the graph transfer tasks introduced above, to investigate the effect of the graph topology on over-squashing.
The evidence in~\Cref{cor:over-squasing_distance}, provides a strong indication of difficulty of a task by calculating an upper bound on the Jacobian.  Consider the same graph transfer tasks introduced above, namely $\mathsf{CrossedRing}$, $\mathsf{Ring}$, and $\mathsf{CliquePath}$. For these special cases, consider a refined version of the r.h.s in~\Cref{eq:cor_distance}: in particular, $k = 0$ (i.e. the depth coincides with the distance among source and target) and the term $\gamma_{r}(v,u)(d_{\mathrm{min}})^{-r}$ can be replaced by the exact quantity $(\oper^{r})_{vu}$. 
Fixing a distance $r$ between source $u$ and target $v$ then, for example the $\mathsf{GCN}$-case has $\oper = \Anorm$ so that the term $(\oper^{r})_{vu}$ can be computed explicitly: %namely \textcolor{red}{the one below is actually the value of $\Anorm$, refer to appendix for thorough discussion}
\begin{alignat*}{2}
&(\oper^{r})_{vu} = (3/2)^{-(r-1)} \qquad &&\text{ for } \mathsf{CrossedRing}\\
&(\oper^{r})_{vu} =2^{-(r-1)} \quad &&\text{ for }  \mathsf{Ring}\\ 
& (\oper^{r})_{vu} = 2^{-(r-2)}/(r\sqrt{r-2}) &&\text{ for } 
 \mathsf{CliquePath}.
\end{alignat*}
 %the dependence of the sensitivity bounds on the topology is given by: 
%in $\mathsf{CliquePath}$ scales as $2^{-(r-2)}/(r\sqrt{r-2})$, for $\mathsf{Ring}$ $2^{-(r-1)}$ and for $\mathsf{CrossedRing}$ $(2/3)^{-(r-1)}$.
Given %a distance $r$ from source and target and 
an $\MPNN$, terms like $c_{\up},w,p$ entering~\Cref{cor:over-squasing_distance} are independent of the graph-topology and hence can be assumed to behave, roughly, the same across different graphs.  %, for $\mathsf{Ring}$ $\left(\frac{1}{2}\right)^{r-1}$ and for $\mathsf{CliquePath}$ $\left(\frac{1}{2}\right)^{r-2}\left(\frac{1}{r\sqrt{r-2}}\right)$. 
As a consequence, over-squashing is likely to be more problematic for $\mathsf{CliquePath}$, followed by $\mathsf{Ring}$, and less prevalent comparatively in $\mathsf{CrossedRing}$.
%\textcolor{blue}{say the values of $p$ $c$ etc for GIN, Sage, GCN and mention GAT.}
~\Cref{fig:graph-transfer-main} shows the behaviour of $\mathsf{GIN}$~\citep{xu2019powerful}, $\mathsf{SAGE}$~\citep{hamilton2017inductive}, $\mathsf{GCN}$~\citep{kipf2017graph}, and $\mathsf{GAT}$~\citep{velivckovic2018graph} on the aformentioned tasks.  $\mathsf{CliquePath}$ is the consistently hardest task, followed by $\mathsf{Ring}$, and $\mathsf{CrossedRing}$. Furthermore, the decline in performance to the level of random guessing for the {\em same} architecture across different graph topologies highlights that this drop cannot be simply labelled as `vanishing gradients' since for certain topologies the same model can, in fact, achieve perfect accuracy. This validates that the underlying topology has a strong impact on the distance at which over-squashing is expected to happen. Moreover, this confirms that in the regime where the depth $m$ is comparable to the distance $r$, over-squashing will occur if $r$ is large enough. %\textcolor{blue}{here again it seems that you focus only on topology, but this section is on depth. Rephrase.}

%\textcolor{blue}{Add here: as a follow-up from the last sentence, let's see what happens when we increase the number of layers.}

\begin{figure}[t]
    \centering
    \includegraphics[width=0.7\textwidth]{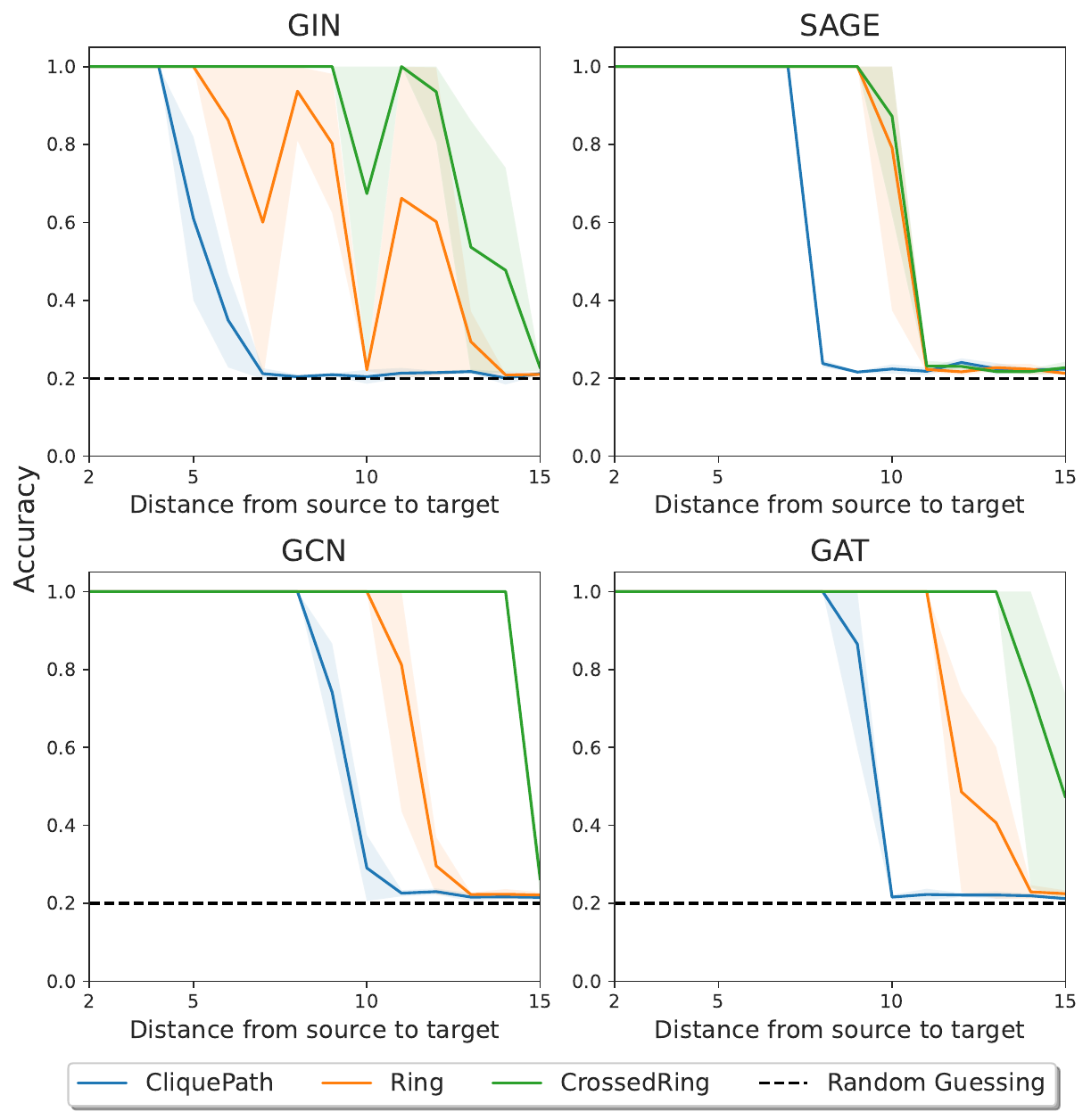}
    \caption{Performance of $\mathsf{GIN}$, $\mathsf{SAGE}$, $\mathsf{GCN}$, and $\mathsf{GAT}$ on the $\mathsf{CliquePath}$, $\mathsf{Ring}$, and $\mathsf{CrossedRing}$ tasks. In the case where depth and distance are comparable, over-squashing highly depends on the topology of the graph as the distance increases.}
    \label{fig:graph-transfer-main}
\end{figure}

\paragraph{Insights and observations.}  Finally, note that the results in~\Cref{fig:graph-transfer-main} also validate the theoretical findings of~\Cref{thm:effective_resistance}. If $v,u$ represent target and source nodes on the different graph-transfer topologies, then $\res(v,u)$ is highest for $\mathsf{CliquePath}$ and lowest for the $\mathsf{CrossedRing}$. Once again, the distance is only a partial information. Effective resistance provides a better picture for the impact of topology to over-squashing and hence the accuracy on the task; in~\Cref{app:signal-prop} the framework is further validate that via a synthetic experiment where the propagation of a signal in a $\MPNN$ is affected by the effective resistance of $\gph$.
%\textcolor{blue}{Positive comment here on the promise of graph-rewiring approaches}

%%%%%%%%%%%%%%%%%%%%%%%%%%%%%%%%%%%%%%%%%%%%%%%%%%%%%%%%%%%%%%%%%%%%%%%%%%%%%%%
%%%%%%%%%%%%%%%%%%%%%%%%%%%%%%%%%%%%%%%%%%%%%%%%%%%%%%%%%%%%%%%%%%%%%%%%%%%%%%%

\subsection{Validating the impact of topology}
\label{app:signal-prop}

\begin{figure}[t]
    \centering
    \includegraphics[width=.8\textwidth]{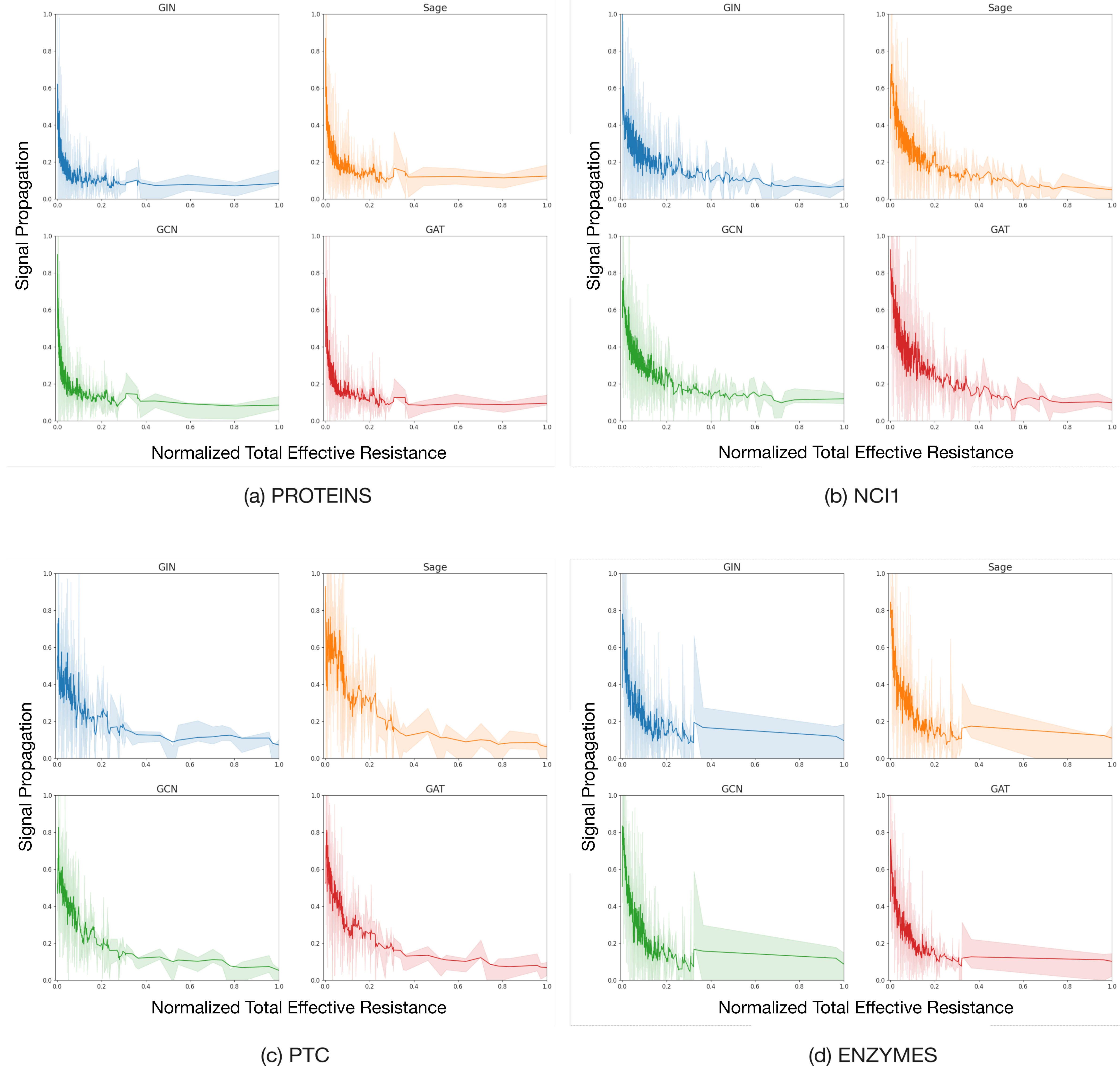}
    \caption{Decay of the amount of information propagated through the graphs w.r.t. the normalized total effective resistance (commute time) for: (a) $\mathsf{PROTEINS}$; (b) $\mathsf{NCI1}$; (c) $\mathsf{PTC}$; (d) $\mathsf{ENZYMES}$. For each dataset it is reported the decay for: (i) $\mathsf{GIN}$ (top-left); (ii) $\mathsf{SAGE}$ (top-right), (iii) $\mathsf{GCN}$  (bottom-left) and (iv)
    $\mathsf{GAT}$(bottom-right).}
    \label{fig:signal-diffusion-main}
\end{figure}

In this section, there are extensive synthetic experiments on the $\mathsf{PROTEINS}$, $\mathsf{NCI1}$, $\mathsf{PTC}$, $\mathsf{ENZYMES}$ datasets with the aim to provide empirical evidence to the fact that the total effective resistance of a graph, $\res_{\gph} = \sum_{v,u} \res{(v,u)}$~\citep{ellens2011effective}, is related to the ease of information propagation in an $\MPNN$. The experiment is designed as follows: first fix a source node $v\in\V$ assigning it a $p$-dimensional unitary feature vector, and assigning the rest of the nodes zero-vectors. Then %The aim is to show that after the diffusion through $m$ layers of an $\MPNN$, the propagation of a signal that starts at $v$, defined as:
consider the quantity 
\begin{equation*}
h^{(m)}_\odot = \frac{1}{p \max_{u\neq v} d_\gph (v,u)}  \sum_{f=1}^{p}\sum_{u \neq v}\frac{h_u^{(m), f}}{\| h_u^{(m),f} \|} d_\gph(v,u),
\end{equation*}

to be the amount of signal (or `information') that has been propagated through $\gph$ by an $\MPNN$ with $m$ layers. Then,the (normalized) propagation distance over $\gph$ is  measured by averaging it over all the $p$ output channels. Propagation distance refers to the average distance to which the initial 'unit mass' has been propagated to - a larger propagation distance means that on average the unit mass has travelled further w.r.t. to the source node. The goal is to show that $h^{(m)}_\odot$ is {\em inversely proportional to $\res_{\gph}$}. In other words, graphs with \emph{lower} total effective resistance should have a \emph{larger} propagation distance. The experiment is repeated for each graph $\gph$ that belongs to the dataset $\mathcal{D}$.  The process starts by randomly choosing the source node $v$, then set $\mathbf{h}_v$ to be an arbitrary feature vector with unitary mass (i.e. $\| \mathbf{h}_v \|_{L_1} = 1$) and assigning the zero-vector to all other nodes (i.e. $\mathbf{h}_u = \boldsymbol{0}, \; u \neq v$). The framework assumes $\MPNN$s with
%$m = \lfloor 1/{\lvert \mathcal{D} \rvert} \sum_{\gph \in \mathcal{D}} diam(\gph) \rfloor$ 
a number of layers $m$ close to the average diameter of the graphs in the dataset, input and hidden dimensions $p=5$ and ReLU activations. In particular, the resistance of $\gph$ is estimated by sampling $10$ nodes with uniform probability for each graph and report $h^{(m)}_\odot$ accordingly. ~\Cref{fig:signal-diffusion-main} shows that {\em $\MPNN$s are able to propagate information further when the effective resistance is low}, validating empirically the impact of the graph topology on over-squashing phenomena. It is worth to emphasize that in this experiment, the parameters of the $\MPNN$ are randomly initialized and without an underlying training task. This implies that this setup isolates the problem of propagating the signal throughout the graph, separating it from vanishing gradient phenomenon.
% When we increase the number of layers (m), $J^{(m)}(v,u)$ becomes larger which reduces the sensitivity of the $\MPNN$ to capture meaningful information from  $v$ interacting with $u$ (\Cref{thm:effective_resistance}).]}

%\input{Include/Chapters/5_Experiments/5.1.2_exp_learn2rewire}
\section{Experiments Simplicial Attention Networks}

In this section, the performance of simplicial attention networks is assessed on two different tasks: trajectory prediction~\citep{schaub2020random} (inductive learning), and missing data imputation in citation complexes~\citep{ebli2020simplicial, yang2022simplicial} (transductive learning)\footnote{SAN implementation \& datasets are available at \url{https://github.com/lrnzgiusti/Simplicial-Attention-Networks}}. A summary of the datasets and the tasks is presented in~\Cref{tab:summary}.

\subsection{Benchmarks and Datasets}

\begin{table}[t]
\normalsize
\caption{Summary of datasets and tasks of our experiments.}
\centering
\label{tab:summary}
\begin{tabular}{|cccc|}
\hline

\begin{tabular}[l]{@{}c@{}}Info   \end{tabular}           & 
\begin{tabular}[c]{@{}c@{}}Synthetic Flow \end{tabular}                                       & 
\begin{tabular}[c]{@{}c@{}}Ocean Drifters \end{tabular}                                       & 
\multicolumn{1}{c|}{\begin{tabular}[c]{@{}c@{}}Citation Complex \end{tabular}}            \\ \hline

\begin{tabular}[c]{@{}l@{}} \textbf{Type of task}  \\ \textbf{\#Nodes} \\ \textbf{\#Edges} \\ \textbf{\#Triangles}  \\ \textbf{\#Classes} \\ \textbf{\#Training Nodes} \\ \textbf{\#Test Nodes}    \end{tabular} & 
\begin{tabular}[c]{@{}c@{}}Inductive \\ 186 \\ 527 \\ 340 \\ 2 \\ 1000 \\  200 \end{tabular} &
\begin{tabular}[c]{@{}c@{}}Inductive \\ 133 \\ 320 \\ 186 \\ 2 \\ 160 \\ 40 \end{tabular} & 
\multicolumn{1}{c|}{\begin{tabular}[c]{@{}c@{}}Trasductive \\ 352 \\ 1474 \\ 3285 \\ - \\ - \\ - \end{tabular}}\\ \hline
\end{tabular}
\end{table}

\subsubsection{Trajectory Prediction}
Trajectory prediction tasks are used to address many problems in location-based services, e.g., route recommendation~\citep{zheng2014modeling}, or inferring the missing portions of a given trajectory~\citep{wu2016probabilistic}. 
%these methods have a broad number of real world applications such as routing in networking, robotics planning and data accountability. 
%However, these methods rely on first-order Markov models to estimate transition probabilities, which are intrinsically not able to capture the long-term dependencies and domain constraints. 
%Recent works \cite{wu2017modeling,cordonnier2019extrapolating} have addressed the long-term dependency problems using recurrent neural networks and graph neural networks. 
Inspired by~\cite{schaub2020random}, the studies in~\cite{roddenberry2021principled, bodnar2021weisfeiler, goh2022simplicial} utilize simplicial neural networks to tackle trajectory prediction. In the sequel, the same experimental setup of~\cite{bodnar2021weisfeiler} is employed for a fair comparison.\smallskip\\

\begin{figure}[!htb]
    \centering
    \includegraphics[scale=.3]{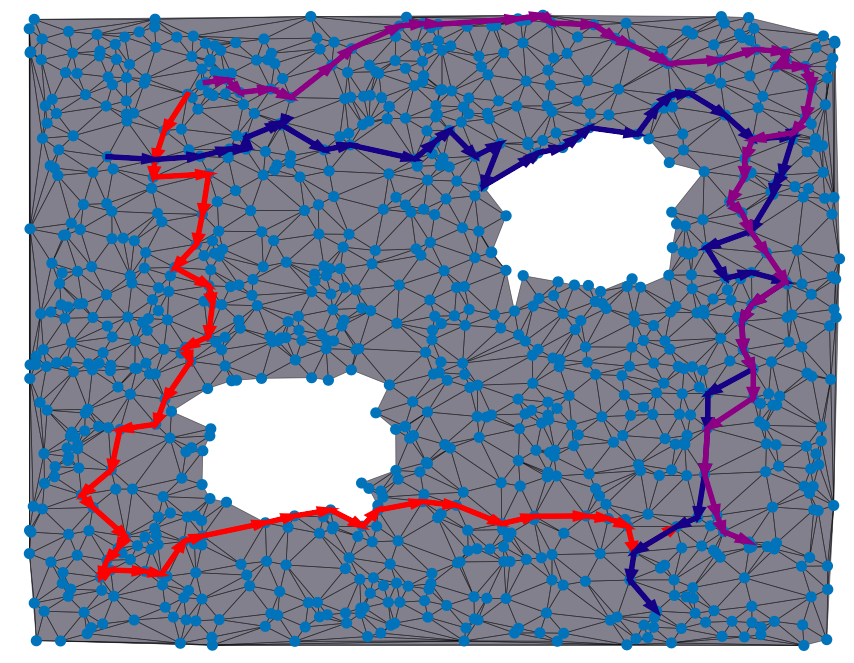}
    \caption{Illustration of the synthetic flow dataset. Points are uniformly sampled within a unit square and connected using a Delaunay triangulation to form the domain. Trajectories start from the top-left and progress to the bottom-right, closely approaching one of two distinct holes. The learning goal is to discern which hole a given trajectory is closest to.}
    \label{fig:synth}
\end{figure}

\paragraph{Synthetic Flow} The architecture is firstly tested on the synthetic flow dataset from~\cite{bodnar2021weisfeiler}. The simplicial complex is generated by sampling $400$ points uniformly at random in the unit square, and then a Delaunay triangulation is applied to obtain the domain of the trajectories. The set of trajectories is generated on the simplicial complex shown in~\Cref{fig:synth}: Each trajectory starts from the top left corner and goes through the entire map until the bottom right corner, passing close to either the bottom-left hole or the top-right hole. Thus, the learning task is to identify which of the two holes is the closest one on the path. The dataset has $1000$ training examples and $200$ test examples. \smallskip\\

\begin{figure}[!htb]
    \centering
    \includegraphics[scale=.3]{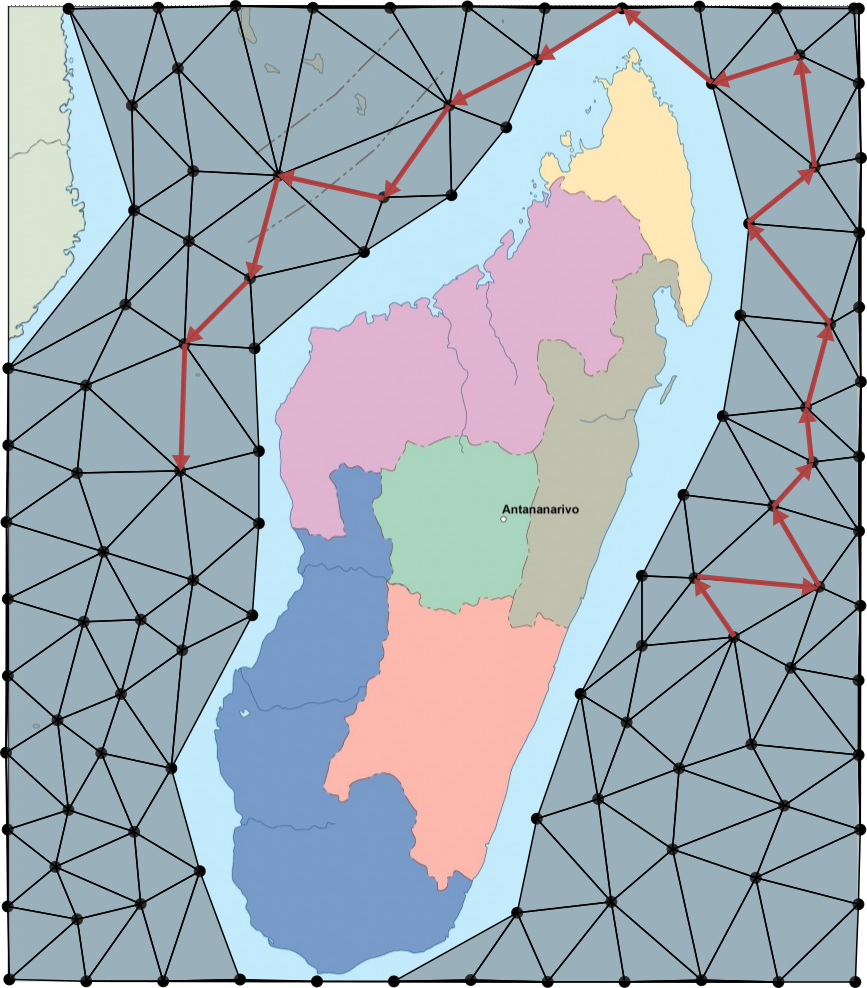}
    \caption{Discretized map of ocean drifter tracks near Madagascar, represented as a simplicial complex with a central and top-left islands. The learning objective is to distinguish between clockwise and counter-clockwise flow motions around the island.}
    \label{fig:ocean}
\end{figure}

\paragraph{Ocean Drifters} Another dataset examined involves real-world ocean drifter tracks near Madagascar from 2011 to 2018~\citep{schaub2020random}. The map surface is discretized into a simplicial complex with a hole in the centre, which represents the presence of the island. The discretization process is done by tiling the map into a regular hexagonal grid. Each hexagon represents a $0$-simplex (vertex), and if there is a nonzero net flow from one hexagon to its surrounding neighbors, a $1$-simplex (edge) is placed between them. All the 3-cliques of the $1$-simplex are considered to be $2$-simplex (triangles) of the simplicial complex shown in~\Cref{fig:ocean}. Thus, following the experimental setup of~\cite{bodnar2021weisfeiler}, the learning task is to distinguish between the clockwise and counter-clockwise motions of flows around the island. The dataset is composed of $160$ training trajectories and $40$ test trajectories. The flows belonging to each trajectory of the test set use random orientations.

\

\begin{table}[t]
\normalsize
\caption{Trajectory classification test accuracy.}
\centering
\label{tab:trajectory}
\begin{tabular}{llcc}
\hline

\begin{tabular}[c]{@{}c@{}}Model\end{tabular} & 
\begin{tabular}[c]{@{}c@{}}Activation   \end{tabular}           & 
\begin{tabular}[c]{@{}c@{}}Synthetic Flow (\%)\end{tabular}                                       & 
\multicolumn{1}{c|}{\begin{tabular}[c]{@{}c@{}}Ocean Drifters (\%)\end{tabular}}            \\ \hline

MPSN~\citep{bodnar2021weisfeiler}                                                 &
\begin{tabular}[c]{@{}c@{}}Id\\ ReLU \\ Tanh \end{tabular} & 
\begin{tabular}[c]{@{}c@{}}82.6 $\pm$ 3.0\\ 50.0  $\pm$ 0.0\\ 95.2 $\pm$ 1.8 \end{tabular} & 
\multicolumn{1}{c}{\begin{tabular}[c]{@{}c@{}}73.0 $\pm$ 2.7\\ 46.5 $\pm$ 5.7\\ 72.5 $\pm$ 0.0\end{tabular}}\\ \hline

SCNN~\citep{yang2022simplicial}                                              & 
\begin{tabular}[c]{@{}c@{}}Id \\ ReLU\\ Tanh\end{tabular} & 
\begin{tabular}[c]{@{}c@{}}66.5 $\pm$ 0.16\\ 100 $\pm$ 0.0\\ 67.2  $\pm$ 0.16 \end{tabular} & 
\begin{tabular}[c]{@{}l@{}}98.1 $\pm$ 0.01\\ 97.0 $\pm$ 0.01\\ 97.0  $\pm$ 0.16  \end{tabular}           \\ \hline

SAT~\citep{goh2022simplicial}                                            & 
\begin{tabular}[c]{@{}c@{}}Id\\ ReLU \\ Tanh \end{tabular} & 
\begin{tabular}[c]{@{}c@{}}99.7 $\pm$ 0.0\\ 100 $\pm$ 0.0\\ 100 $\pm$ 0.0\end{tabular} & 
\begin{tabular}[c]{@{}l@{}}97.0 $\pm$ 0.01\\ 95.0 $\pm$ 0.00\\ 95.0 $\pm$ 0.01 \end{tabular}                      \\ \hline

% SAN $\left(K^{(h)} = 0\right)$                                               & 
% \begin{tabular}[c]{@{}c@{}}Id \\ ReLU\\ Tanh\end{tabular} & 
% \begin{tabular}[c]{@{}c@{}}100 $\pm$ 0.0\\ 100 $\pm$ 0.0\\ 100 $\pm$ 0.0 \end{tabular} & 
% \begin{tabular}[c]{@{}l@{}}97.1 $\pm$ 0.02\\ 97.0 $\pm$ 0.2\\ 97.5 $\pm$ 0.02\end{tabular}          \\ \hline

SAN                                                & 
\begin{tabular}[c]{@{}c@{}}Id \\ ReLU\\ Tanh \end{tabular} & 
\begin{tabular}[c]{@{}c@{}}\second{100 $\pm$ 0.0}\\ \third{100 $\pm$ 0.0}\\ \first{100 $\pm$ 0.0}\end{tabular} & 
\begin{tabular}[c]{@{}l@{}}\first{99.0 $\pm$ 0.01}\\ \second{98.5 $\pm$ 0.01}\\ \third{98.5 $\pm$ 0.01}\end{tabular}             \\ \hline
\end{tabular}
\end{table}

Both experiments are inductive learning problems. In particular, it is employed a single layer simplicial attention network with a single attention head, $4$ output features, and upper and lower filter lengths $K^{\doarr} = K^{\uparr} = 3$. To perform the classification task, an MLP is used as a readout layer with softmax non-linearity.  The network is trained via ADAM optimizer~\citep{kingma2014adam} and cross-entropy loss, with initial learning rate set to $0.01$, a step reduction of $0.77$, and a patience of $10$ epochs. To avoid overfitting, an $l_2$ regularization with $\lambda_{l_2} = 0.003$ is used and Dropout~\citep{srivastava2014dropout} with  probability equal to $p_{\text{drop}} = 0.6$. In~\Cref{tab:trajectory} there is a comparison between the accuracy of the simplicial attention network averaged over $5$ different seeds. For each seed, the network is trained with an early stopping criteria with a patience of $100$ epochs. The architecture is therefore compared alongside with MPSN~\citep{bodnar2021weisfeiler}, SCN~\citep{yang2022simplicial}, and SAT~\cite{goh2022simplicial}. For the MPSN architecture, the metrics are the ones reported in~\cite{bodnar2021weisfeiler}. As shown in~\Cref{tab:trajectory}, simplicial attention networks achieves the best results among the state of the art models in both the synthetic and real-world scenarios. In particular, for the synthetic example, SAN architecture achieves 100\% of accuracy independently on the used non-linearity.

\paragraph{Citation Complex Imputation}

\begin{table}[t]
\normalsize
\caption{Missing Data Imputation test accuracy}
\centering
\label{tab:imputation_cc}

\begin{adjustbox}{max width=\linewidth}
\begin{tabular}{clcccccc}
\hline

\begin{tabular}[c]{@{}c@{}}\%Miss/Order\\ $N_k$\end{tabular} & 
\begin{tabular}[c]{@{}c@{}}Method   \end{tabular}           & 
\begin{tabular}[c]{@{}c@{}}0\\ 352\end{tabular}                                       & 
\begin{tabular}[c]{@{}c@{}}1\\ 1474\end{tabular}                                      & 
\begin{tabular}[c]{@{}c@{}}2\\ 3285\end{tabular}                                      & 
\begin{tabular}[c]{@{}c@{}}3\\ 5019\end{tabular}                                      & 
\begin{tabular}[c]{@{}c@{}}4\\ 5559\end{tabular}                                      & 
\multicolumn{1}{c}{\begin{tabular}[c]{@{}c@{}}5\\ 4547\end{tabular}}            \\ \hline

10\%                                                 &
\begin{tabular}[c]{@{}c@{}}SNN \; \citep{ebli2020simplicial}\\ SCNN\;\citep{yang2022simplicial}\\SCNN (ours)\\SAT\;\citep{goh2022simplicial} \\SAN\end{tabular} & 
\begin{tabular}[c]{@{}c@{}}\second{91 $\pm$ 0.3}\\ \third{91  $\pm$ 0.4}\\ 90 $\pm$ 0.3\\ 18 $\pm$ 0.0 \\ \first{91 $\pm$ 0.4} \end{tabular} & 
\begin{tabular}[c]{@{}c@{}}\third{91 $\pm$ 0.2}\\ \second{91  $\pm$ 0.2}\\ 91 $\pm$ 0.3\\ 31 $\pm$ 0.0 \\ \first{95 $\pm$ 1.9}\end{tabular} & 
\begin{tabular}[c]{@{}c@{}}\second{91 $\pm$ 0.2}\\ \third{91  $\pm$ 0.2}\\ 91 $\pm$ 0.3\\ 28 $\pm$ 0.1 \\ \first{95 $\pm$ 1.9}\end{tabular} & 
\begin{tabular}[c]{@{}c@{}}91 $\pm$ 0.2\\ \third{91  $\pm$ 0.2}\\ \second{93 $\pm$ 0.2}\\ 34 $\pm$ 0.1 \\ \first{97 $\pm$ 1.6}\end{tabular} & 
\begin{tabular}[c]{@{}c@{}}\third{91 $\pm$ 0.2}\\ 91  $\pm$ 0.2\\ \second{92 $\pm$ 0.2}\\ 53 $\pm$ 0.1 \\ \first{98 $\pm$ 0.9}\end{tabular} & 
\multicolumn{1}{c}{\begin{tabular}[c]{@{}c@{}}90 $\pm$ 0.4\\ \third{91 $\pm$ 0.2}\\ \second{94 $\pm$ 0.1}\\ 55 $\pm$ 0.1 \\ \first{98 $\pm$ 0.7} \end{tabular}}\\ \hline

20\%                                                 & 
\begin{tabular}[c]{@{}c@{}}SNN \citep{ebli2020simplicial}\\ SCNN \citep{yang2022simplicial}\\SCNN (ours)\\SAT\;\citep{goh2022simplicial} \\ SAN\end{tabular} & 
\begin{tabular}[c]{@{}c@{}}\second{81 $\pm$ 0.6}\\ 81 $\pm$ 0.7\\ \third{81 $\pm$ 0.6}\\ 18 $\pm$ 0.0   \\ \first{82 $\pm$ 0.8} \end{tabular} & 
\begin{tabular}[c]{@{}c@{}}\third{82 $\pm$ 0.3}\\ 82 $\pm$ 0.3\\ \second{83 $\pm$ 0.7}\\ 30 $\pm$ 0.0   \\ \first{91 $\pm$ 2.4} \end{tabular} & 
\begin{tabular}[c]{@{}c@{}}\second{81 $\pm$ 0.6}\\ 81 $\pm$ 0.7\\ \third{81 $\pm$ 0.6}\\ 29 $\pm$ 0.1   \\ \first{82 $\pm$ 0.8} \end{tabular} & 
\begin{tabular}[c]{@{}c@{}}\third{82 $\pm$ 0.3}\\ 82 $\pm$ 0.3\\ \second{88 $\pm$ 0.4}\\ 35 $\pm$ 0.1   \\ \first{96 $\pm$ 0.4} \end{tabular} & 
\begin{tabular}[c]{@{}c@{}}\third{81 $\pm$ 0.6}\\ 81 $\pm$ 0.7\\ \second{86 $\pm$ 0.7} \\ 50 $\pm$ 0.1  \\ \first{96 $\pm$ 1.3} \end{tabular} & 
\begin{tabular}[c]{@{}l@{}}82 $\pm$ 0.5\\ \third{83 $\pm$ 0.3}\\ \second{89 $\pm$ 0.6} \\ 58 $\pm$ 0.1  \\ \first{97 $\pm$ 0.9} \end{tabular}                      \\ \hline

30\%                                                 & 
\begin{tabular}[c]{@{}c@{}}SNN \citep{ebli2020simplicial}\\ SCNN \citep{yang2022simplicial}\\SCNN (ours)\\SAT\;\citep{goh2022simplicial} \\ SAN\end{tabular} & 
\begin{tabular}[c]{@{}c@{}}\third{72 $\pm$ 0.6}\\ \second{72 $\pm$ 0.5}\\ 72  $\pm$ 0.6 \\ 19 $\pm$ 0.0 \\ \first{75 $\pm$ 2.1}\end{tabular} & 
\begin{tabular}[c]{@{}c@{}}\third{73 $\pm$ 0.4}\\ 73 $\pm$ 0.4\\ \second{76  $\pm$ 0.6} \\ 33 $\pm$ 0.1 \\ \first{89 $\pm$ 2.1}\end{tabular} & 
\begin{tabular}[c]{@{}c@{}}\second{81 $\pm$ 0.6}\\ 81 $\pm$ 0.7\\ \third{81 $\pm$ 0.6} \\ 25 $\pm$ 0.1 \\ \first{82 $\pm$ 0.8} \end{tabular} & 
\begin{tabular}[c]{@{}c@{}}\second{82 $\pm$ 0.3}\\ \third{82 $\pm$ 0.3}\\ 82 $\pm$ 1.2 \\ 33 $\pm$ 0.0 \\ \first{94 $\pm$ 0.4} \end{tabular} & 
\begin{tabular}[c]{@{}c@{}}\second{81 $\pm$ 0.6}\\ \third{81 $\pm$ 0.7}\\ 80 $\pm$ 0.7 \\ 47 $\pm$ 0.1 \\ \first{95 $\pm$ 0.5} \end{tabular} & 
\begin{tabular}[c]{@{}l@{}}73 $\pm$ 0.5\\ \third{74 $\pm$ 0.3}\\ \second{86  $\pm$ 0.8}  \\ 53 $\pm$ 0.1 \\ \first{96 $\pm$ 0.5}\end{tabular}           \\ \hline

40\%                                                 & 
\begin{tabular}[c]{@{}c@{}}SNN \citep{ebli2020simplicial}\\ SCNN \citep{yang2022simplicial}\\SCNN (ours)\\SAT\;\citep{goh2022simplicial} \\ SAN\end{tabular} & 
\begin{tabular}[c]{@{}c@{}}\third{63 $\pm$ 0.7}\\ \second{63 $\pm$ 0.6}\\ 63 $\pm$ 0.7 \\ 20 $\pm$ 0.0 \\ \first{67 $\pm$ 1.9}\end{tabular} & 
\begin{tabular}[c]{@{}c@{}}\third{64 $\pm$ 0.3}\\ 64 $\pm$ 0.3\\ \second{67 $\pm$ 1.1} \\ 29 $\pm$ 0.0 \\ \first{85 $\pm$ 2.8}\end{tabular} & 
\begin{tabular}[c]{@{}c@{}}\second{81 $\pm$ 0.6}\\ 81 $\pm$ 0.7\\ \third{81 $\pm$ 0.6} \\ 22 $\pm$ 0.0 \\ \first{82 $\pm$ 0.8} \end{tabular} & 
\begin{tabular}[c]{@{}c@{}}\second{82 $\pm$ 0.3}\\ \third{82 $\pm$ 0.3}\\ 79 $\pm$ 1.0 \\ 43 $\pm$ 0.1 \\ \first{91 $\pm$ 0.9} \end{tabular} & 
\begin{tabular}[c]{@{}c@{}}\second{81 $\pm$ 0.6}\\ \third{81 $\pm$ 0.7}\\ 74 $\pm$ 1.1 \\ 51 $\pm$ 0.1  \\ \first{93 $\pm$ 1.1} \end{tabular} & 
\begin{tabular}[c]{@{}l@{}}65 $\pm$ 0.3\\ \third{65 $\pm$ 0.2}\\ \second{83 $\pm$ 0.9} \\ 50 $\pm$ 0.1 \\ \first{95 $\pm$ 1.6}\end{tabular}          \\ \hline

50\%                                                 & 
\begin{tabular}[c]{@{}c@{}}SNN \citep{ebli2020simplicial}\\ SCNN \citep{yang2022simplicial}\\SCNN (ours)\\SAT\;\citep{goh2022simplicial} \\ SAN\end{tabular} & 
\begin{tabular}[c]{@{}c@{}}54 $\pm$ 0.7\\ \third{54 $\pm$ 0.6}\\ \second{55 $\pm$ 0.9} \\ 19 $\pm$ 0.0 \\ \first{61 $\pm$ 1.9}\end{tabular} & 
\begin{tabular}[c]{@{}c@{}}55 $\pm$ 0.5\\ \third{55 $\pm$ 0.4}\\ \second{60 $\pm$ 1.1} \\ 30 $\pm$ 0.1 \\ \first{79 $\pm$ 4.3}\end{tabular} & 
\begin{tabular}[c]{@{}c@{}}\second{81 $\pm$ 0.6}\\ 81 $\pm$ 0.7\\ \third{81 $\pm$ 0.6} \\ 22 $\pm$ 0.0 \\ \first{82 $\pm$ 0.8} \end{tabular} & 
\begin{tabular}[c]{@{}c@{}}\second{82 $\pm$ 0.3}\\ \third{82 $\pm$ 0.3}\\ 71 $\pm$ 1.3 \\ 32 $\pm$ 0.1 \\ \first{88 $\pm$ 1.5} \end{tabular} & 
\begin{tabular}[c]{@{}c@{}}\second{81 $\pm$ 0.6}\\ \third{81 $\pm$ 0.7}\\ 68 $\pm$ 1.3 \\ 43 $\pm$ 0.0 \\ \first{92 $\pm$ 0.7} \end{tabular} & 
\begin{tabular}[c]{@{}l@{}}\third{56 $\pm$ 0.3}\\ 56 $\pm$ 0.3\\ \second{79 $\pm$ 2.0} \\ 48 $\pm$ 0.1 \\ \first{94 $\pm$ 1.1}\end{tabular}             \\ \hline
\end{tabular}
\end{adjustbox}
\end{table}

Missing data imputation is a learning task that consists of estimating missing values in a dataset. 
%Popular methods for replacing missing values use global summary statistics computed from the entire dataset, or build predictive models operating independently on every instance. 
GNN can be used to tackle this task as in~\cite{spinelli2020missing}, but recently the works~\cite{ebli2020simplicial, yang2022simplicial} have handled the missing data imputation problem using simplicial complexes. Here it is used the same experimental settings of~\cite{ebli2020simplicial}. The task consist in estimating the number of citation of a collaboration between $k+1$ authors over a co-authorship complex. This is as a transductive learning setup, where the labels of the $k$-simplex are the number of citation of the $k+1$ authors. To address this task, a simplicial attention network with $4$ layers, $256$ hidden features for the first three layers, and a filter length over upper and lower neighborhoods $K^{\doarr} = K^{\uparr} = 2$. The final layer computes a single output feature that will be used as estimate of the $k$-simplex' labels. ReLU non-linearities are placed as activation after each layer.  To train the network, a Xavier initialization~\citep{glorot2010understanding} is used, sampling from a uniform distribution with a gain of $\sqrt{2}$, ADAM optimizer~\citep{kingma2014adam} with $0.1$ as initial learning rate equipped with a step reduction on plateaus with a patience of $100$ epochs, and masked $\ell_1$ loss with an early stopping criteria with patience of $500$ epochs. Accuracy is computed by considering a citation value correct if its estimate is within $\pm5\%$ of the true value. In~\Cref{tab:imputation_cc}, it is reported the mean performance and the standard deviation of simplicial attention network averaged over $10$ different masks for missing data. The results are compared with SNN~\citep{ebli2020simplicial}, SCNN~\citep{yang2022simplicial}, and SAT~\cite{goh2022simplicial} for different simplex orders and percentages of missing data. Both SAT and the proposed simplicial attenton network exploit single-head attention. To fairly evaluate the benefits of the attention mechanism, the proposed method is compare with with SCNN~\citep{yang2022simplicial} (denoted as "SCNN (ours)" ) using the same experimental setup. From~\Cref{tab:imputation_cc}, is possible to notice that simplicial attention networks achieve the best performance for each order and percentage of missing data, with huge gains as the order and the percentage grow, illustrating the importance of incorporating self-attention mechanisms in simplicial neural networks.

\section{Experiments Cell Attention Networks}

\begin{table}[t]
\begin{center}
\caption{Details of the datasets used in our experiments.}
\label{tab:tu_dataset_details}
\begin{tabular}{lccccc}
\toprule
\multicolumn{1}{c}{Info}  & \multicolumn{1}{c}{MUTAG} & \multicolumn{1}{c}{PTC} & \multicolumn{1}{c}{PROTEINS} & \multicolumn{1}{c}{NCI1} & \multicolumn{1}{c}{NCI109} \\  \bottomrule
\textbf{\# Graphs} & 
\multicolumn{1}{c}{$188$} & 
\multicolumn{1}{c}{$336$} & 
\multicolumn{1}{c}{$1113$} & 
\multicolumn{1}{c}{$4110$} & 
\multicolumn{1}{c}{$4127$} \\
\textbf{\# Classes}            & 
\multicolumn{1}{c}{$2$} & 
\multicolumn{1}{c}{$2$} & 
\multicolumn{1}{c}{$2$}& 
\multicolumn{1}{c}{$2$} & 
\multicolumn{1}{c}{$2$} \\
\textbf{\# Node Feat.}            & 
\multicolumn{1}{c}{$7$} & 
\multicolumn{1}{c}{$20$} & 
\multicolumn{1}{c}{$3$}& 
\multicolumn{1}{c}{$37$} & 
\multicolumn{1}{c}{$38$} \\
\textbf{\# Edge Feat.}            & 
\multicolumn{1}{c}{$4$} & 
\multicolumn{1}{c}{$4$} & 
\multicolumn{1}{c}{$0$}& 
\multicolumn{1}{c}{$0$} & 
\multicolumn{1}{c}{$0$} \\
\textbf{Avg. Nodes}            & 
\multicolumn{1}{c}{$17.93$} & 
\multicolumn{1}{c}{$13.97$} & 
\multicolumn{1}{c}{$39.06$}& 
\multicolumn{1}{c}{$29.87$} & 
\multicolumn{1}{c}{$29.68$} \\
\textbf{Avg. Edges}            & 
\multicolumn{1}{c}{$19.79$} & 
\multicolumn{1}{c}{$14.32$} & 
\multicolumn{1}{c}{$72.82$}& 
\multicolumn{1}{c}{$32.30$} & 
\multicolumn{1}{c}{$32.13$} \\
\textbf{Avg. 3 Cells.}            & 
\multicolumn{1}{c}{$0.00$} & 
\multicolumn{1}{c}{$0.04$} & 
\multicolumn{1}{c}{$27.40$}& 
\multicolumn{1}{c}{$0.04$} & 
\multicolumn{1}{c}{$0.04$} \\
\textbf{Avg. 4 Cells.}            & 
\multicolumn{1}{c}{$0.00$} & 
\multicolumn{1}{c}{$0.01$} & 
\multicolumn{1}{c}{$14.08$}& 
\multicolumn{1}{c}{$0.03$} & 
\multicolumn{1}{c}{$0.03$} \\
\textbf{Avg. 5 Cells.}            & 
\multicolumn{1}{c}{$0.36$} & 
\multicolumn{1}{c}{$0.19$} & 
\multicolumn{1}{c}{$5.68$}& 
\multicolumn{1}{c}{$0.75$} & 
\multicolumn{1}{c}{$0.74$} \\
\textbf{Avg. 6 Cells.}            & 
\multicolumn{1}{c}{$2.5$} & 
\multicolumn{1}{c}{$1.12$} & 
\multicolumn{1}{c}{$8.72$}& 
\multicolumn{1}{c}{$2.66$} & 
\multicolumn{1}{c}{$2.7$} \\
\bottomrule
\end{tabular}
\end{center}
\end{table}

\paragraph{Computational Resources and Code Assets} 

In all experiments an NVIDIA\textsuperscript{\textregistered} RTX 3090 GPU with 10,496 CUDA cores and 24GB of GPU memory was used on a personal computing platform with an Intel\textsuperscript{\textregistered} Xeon\textsuperscript{\textregistered} Gold 5218 CPU @ 2.30GHz using Ubuntu 22.04 LTS 64-bit.

The model was implemented in PyTorch~\citep{NEURIPS2019_9015} by building on top of the Simplicial Attention Networks library\footnote{\url{https://github.com/lrnzgiusti/Simplicial-Attention-Networks}}~\citep{giusti2022simplicial} and PyTorch Geometric library\footnote{\url{https://github.com/pyg-team/pytorch_geometric/}}~\citep{fey2019fast}. High-performance lifting operations utilize the graph-tool Python library\footnote{\url{https://graph-tool.skewed.de/}} and are parallelised via Joblib\footnote{\url{https://joblib.readthedocs.io/en/latest/}}.
PyTorch, NumPy~\citep{harris2020array}, SciPy~\citep{virtanen2020scipy} and Joblib are made available under the BSD license, Matplotlib~\citep{hunter2007matplotlib} under the PSF license, graph-tool under the GNU LGPL v3 license. CW Networks and PyTorch Geometric are made available under the MIT license.

\subsection{Benchmarks and Datasets}

\begin{figure}[!htb]
    \centering
    \includegraphics[width=.6\textwidth]{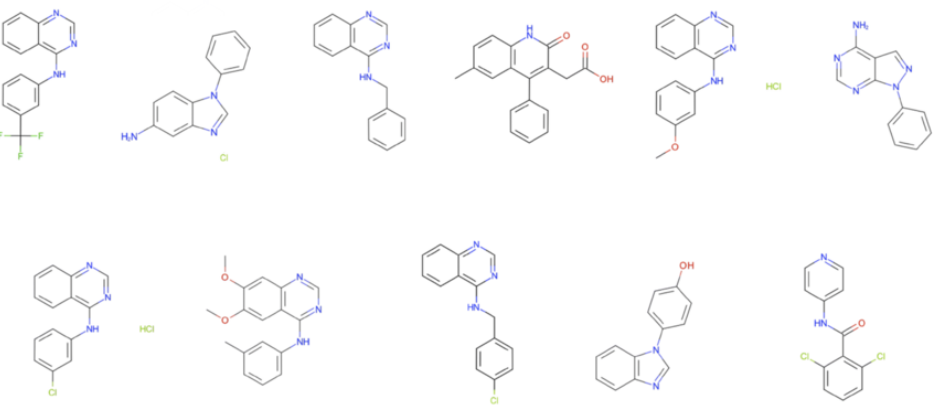}
    \caption{The \textsf{TUDataset} molecular benchmark is a set of five different datasets composed mainly by small molecular compounds in which the learning task is to classify the attributed graph that represent the molecule. Here, node features represent the atom type while edge features encode the type of molecular bonding between the atoms.}
    \label{fig:tu_dataset_small_molec}
\end{figure}

The performance of Cell Attention Network models~\Cref{sec:can} is evaluated on several real-world graph classification problems, focusing on \textsf{TUDataset} molecular benchmarks~\citep{morris2020tudataset}. In every experiment, if the dataset is equipped with edge features, they are concatenated to the result of the functional lift~(\Cref{eq:functional_lift})). The benchmark is composed of small molecules with class labels such as \textsf{MUTAG}~\citep{kazius2005derivation} and \textsf{PTC}~\citep{helma2001predictive}. In the former dataset, the task is  to identify mutagenic molecular compounds for potentially commercial drugs, while in the latter the goal is to identify chemical compounds based on their carcinogenicity in rodents. The \textsf{PROTEINS} dataset~\citep{dobson2003distinguishing} is composed mainly by macromolecules. Here, nodes represent secondary structure elements and are annotated by their type. Nodes are connected by an edge if the two nodes are neighbours on the amino acid sequence or one of three nearest neighbors in space; the task is to understand if a protein is an enzyme or not. Using this type of data in a cell complex based architecture has an underlying importance since molecules have polyadic structures. Finally, \textsf{NCI1} and \textsf{NCI109} are two datasets aimed at identifying chemical compounds against the activity of non-small lung cancer and ovarian cancer cells~\citep{wale2008comparison}. Considering the aforementioned datasets, cell attention is compared with other state of the art techniques in graph representation learning. Since there are no official splits for training and inference phases, to validate the proposed architecture, it is used a 10-fold cross-validation reporting the maximum of the average validation accuracy across folds as in~\cite{bodnar2021weisfeilercell}.
\subsection{Comparative Performance Analysis}

\begin{table}[t]
\centering
\caption{Hyperparameter used for the experiments on TUDatasets.}
\label{tab:hyper-params-can}
\begin{tabular}{lccccc}
\toprule 
\multicolumn{1}{l}{Parameter}  & \multicolumn{1}{c}{MUTAG} & \multicolumn{1}{c}{PTC} & \multicolumn{1}{c}{PROTEINS} & \multicolumn{1}{c}{NCI1} & \multicolumn{1}{c}{NCI109} \\ \bottomrule
Lift Heads                   & \multicolumn{1}{c}{$1$} & \multicolumn{1}{c}{$32$} & \multicolumn{1}{c}{$256$} & \multicolumn{1}{c}{$128$} & \multicolumn{1}{c}{$128$} \\
Lift Activation            & \multicolumn{1}{c}{\textit{ELU}} & \multicolumn{1}{c}{\textit{ELU}} & \multicolumn{1}{c}{\textit{ELU}} & \multicolumn{1}{c}{\textit{ELU}} & \multicolumn{1}{c}{\textit{ELU}} \\
Lift Dropout                    & \multicolumn{1}{c}{$0.0$} & \multicolumn{1}{c}{$0.0$} & \multicolumn{1}{c}{$0.05$} & \multicolumn{1}{c}{$0.2$} & \multicolumn{1}{c}{$0.2$}\\
Hidden Dim.           & \multicolumn{1}{c}{$[32,32]$} & \multicolumn{1}{c}{$[32,32]$} & \multicolumn{1}{c}{$[128,128]$} & \multicolumn{1}{c}{$[32,32,32,32]$} & \multicolumn{1}{c}{$[32,32,32,32]$} \\
Att.  Heads             & \multicolumn{1}{c}{$[1,1]$} & \multicolumn{1}{c}{$[2,2]$} & \multicolumn{1}{c}{$[1,1]$} & \multicolumn{1}{c}{$[4,4,4,4]$} & \multicolumn{1}{c}{$[4,4,4,4]$} \\
Att. Aggregation      & \multicolumn{1}{c}{\textit{-}} & \multicolumn{1}{c}{\textit{cat}} & \multicolumn{1}{c}{\textit{-}} & \multicolumn{1}{c}{\textit{cat}} & \multicolumn{1}{c}{\textit{cat}} \\
Att.  Activation       & \multicolumn{1}{c}{\textit{LReLU}} & \multicolumn{1}{c}{\textit{LReLU}} & \multicolumn{1}{c}{\textit{Tanh}} & \multicolumn{1}{c}{\textit{Tanh}} & \multicolumn{1}{c}{\textit{Tanh}} \\
$\com$ Activation                & \multicolumn{1}{c}{ELU} & \multicolumn{1}{c}{ELU} & \multicolumn{1}{c}{Tanh} & \multicolumn{1}{c}{ELU} & \multicolumn{1}{c}{ELU} \\
Classif. Dim.                & \multicolumn{1}{c}{$8$} & \multicolumn{1}{c}{$4$} & \multicolumn{1}{c}{$128$} & \multicolumn{1}{c}{$256$} & \multicolumn{1}{c}{$32$} \\
Batch Size                 & \multicolumn{1}{c}{$64$} & \multicolumn{1}{c}{$128$} & \multicolumn{1}{c}{$128$} & \multicolumn{1}{c}{$128$} & \multicolumn{1}{c}{$128$}\\
Neg. Slope                & \multicolumn{1}{c}{$0.1$} & \multicolumn{1}{c}{$0.1$} & \multicolumn{1}{c}{$0.3$} & \multicolumn{1}{c}{$0.08$} & \multicolumn{1}{c}{$0.07$}\\
Pool Ratio         & 
\multicolumn{1}{c}{$1.0$} & \multicolumn{1}{c}{$0.75$} & \multicolumn{1}{c}{$0.6$} & \multicolumn{1}{c}{$0.5$} & \multicolumn{1}{c}{$0.75$} \\
Pool Type               & \multicolumn{1}{c}{\textit{Hier.}} & \multicolumn{1}{c}{\textit{Glob.}} & \multicolumn{1}{c}{\textit{Hier.}} & \multicolumn{1}{c}{\textit{Glob.}} & \multicolumn{1}{c}{\textit{Glob.}} \\
Dropout                    & \multicolumn{1}{c}{$0.1$} & \multicolumn{1}{c}{$0.6$} & \multicolumn{1}{c}{$0.3$} & \multicolumn{1}{c}{$0.15$} & \multicolumn{1}{c}{$0.05$}\\
Learning Rate               & \multicolumn{1}{c}{$3e^{-3}$} & \multicolumn{1}{c}{$1e^{-3}$} & \multicolumn{1}{c}{$3e^{-3}$} & \multicolumn{1}{c}{$3e^{-4}$} & \multicolumn{1}{c}{$3e^{-3}$}    \\
\bottomrule
\end{tabular}
\end{table}

The performance of the CAN model is reported in~\Cref{tab:tud} and the hyperparameters used are in~\Cref{tab:hyper-params-can}. The proposed architecture is compared along with those of graph kernel methods: Random Walk Kernel (RWK,~\cite{gartner2003graph}), 
Graph Kernel (GK,~\cite{shervashidze2009efficient}),
Propagation Kernels (PK,~\cite{neumann2016propagation}), 
Weisfeiler-Lehman graph kernels (WLK,~\cite{shervashidze2011weisfeiler}); other GNNs: Diffusion-Convolutional Neural Networks (DCNN,~\cite{atwood2016diffusion}),  Deep Graph Convolutional Neural Network (DGCNN,~\cite{zhang2018end}), Invariant and Equivariant Graph Networks (IGN,~\cite{maron2019invariant}), Graph Isomorphism Networks (GIN,~\cite{xu2019powerful}), Provably Powerful Graph Networks (PPGNs,~\cite{maron2019provably}), Natural Graph Networks (NGN,~\cite{de2020natural}), Graph Substructure Network (GSN~\cite{bouritsas2022improving}) and topological networks: Convolutional Cell Complex Neural Networks (CCNN~\cite{hajij2020cell}), Simplicial Isomorphism Network (SIN,~\cite{bodnar2021weisfeiler},
Cell Isomorphism Network (CIN,~\cite{bodnar2021weisfeilercell}). As shown in~\Cref{tab:tud}, Cell Attention Networks achieves very high performance on this benchmark, and performs similarly to Cell Isomorphism Networks in the last experiment (i.e.,  \textsf{NCI109})\footnote{The code implementation for the proposed architecture is available at: \url{https://github.com/lrnzgiusti/can}}.

\subsection{Ablation Study}

\begin{figure}[t]
    \centering
    \includegraphics[width=.8\textwidth]{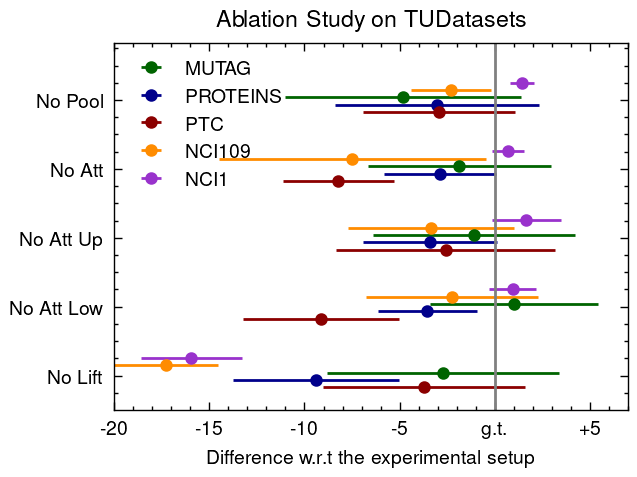}
    \caption{TUDataset: Results of the ablation of different CAN features with respect to~\Cref{tab:tud} (g.t.). The ablation study shows the benefits of incorporating all the proposed operations into the message passing procedure when operating on data defined over cell complexes.}
    \label{fig:ablation}
\end{figure}

This section dedicates a detailed look at the performance of each operation involved in cell attention networks by performing different ablation studies and show their individual importance and contributions. To perform the ablation study, the hyper-parameters are kept fixed as in~\Cref{tab:hyper-params-can} and  the cell attention network operations are sequentially removed one-by-one. Removing the functional lift refers to assign a feature $\mathbf{x}_e$ to an edge $e$ using a scalar product between the features $\mathbf{x}_u$ and $\mathbf{x}_v$ for $u,v \in \mathcal{B}(e)$ (i.e., $\mathbf{x}_e = \langle \mathbf{x}_u, \mathbf{x}_v \rangle$). Removing the attention means setting  $\mathbf{a}_{\uparr}$ and/or $\mathbf{a}_{\doarr}$ to yield the coefficients of the upper and lower Laplacians indexed by the label of their input. The case in which both attentions are removed can be seen as a particular implementation of the cell complex neural network~\citep{hajij2020cell}. Removing the pooling is equal to set the pooling ratio $\rho$ in~\Cref{par:cell_attention_edge_pooling} equal to $1$ and remove eventual intermediate readout computations involved in the hierarchical pooling setup. The ablation in~\Cref{fig:ablation} shows a drastic drop in the overall performance when removing parts of Cell Attention Network. Of particular interest is the study on \textsf{NCI1}, which shows a slightly higher accuracy in every case  he attention coefficients are kept fixed and without the pooling, but a drastic drop in the performance is observed when the edge features are no longer learned. Moreover there are no evident patterns inside the ablation study except for \textsf{NCI109}, which shows the same behavior as \textsf{NCI1} when the lift layer is removed. This fact can be explained by noticing that the aforementioned datasets experience, on average, a very similar topology~(\Cref{tab:tu_dataset_details}).
\section{Experiments CIN++}\label{sec:exp_cinpp}

\subsection{Experimental Setup}
\begin{table}[t]
\vspace{-18pt}
\caption{Performance results on ZINC benchmark. The best performance are indicated with gold \tikzcircle[gold,fill=gold]{2pt}, silver \tikzcircle[silver,fill=silver]{2pt}, and bronze \tikzcircle[bronze,fill=bronze]{2pt} colors.} 
  \vspace{2px}
  \label{tab:zinc-500k}
  \small
  % \fontsize{8}{9}\selectfont
  % \scriptsize
  % \setlength\tabcolsep{1pt}
  \centering
  \resizebox{0.98\textwidth}{!}{ \renewcommand{\arraystretch}{1.0}
    \begin{tabular}{clcccc}
    \toprule
    \multirow{2}{*}{Method} & \multirow{2}{*}{Model} & \multirow{2}{*}{Time (s)} & \multirow{2}{*}{Params} & \multicolumn{2}{c}{Test MAE}\\
    & & & & ZINC-Subset & ZINC-Full \\ \midrule
    
    \multirow{7}{*}{MPNNs}
    & GIN~\citep{xu2019powerful} & 8.05 & 509,549 & 0.526$\pm$0.051 &  0.088$\pm$0.002 \\
    & GraphSAGE~\citep{hamilton2017inductive} & 6.02 & 505,341 & 0.398$\pm$0.002 & 0.126$\pm$0.003 \\
    & GAT~\citep{velivckovic2018graph} & 8.28 & 531,345 & 0.384$\pm$0.007 & 0.111$\pm$0.002 \\
    & GCN~\citep{kipf2017graph} & 5.85 & 505,079 & 0.367$\pm$0.011 & 0.113$\pm$0.002 \\

    & MoNet~\citep{monti2017geometric} & 7.19 & 504,013 & 0.292$\pm$0.006 & 0.090$\pm$0.002 \\
    & \textls[-25]{GatedGCN-PE\citep{bresson2017residual}} & 10.74 & 505,011 & 0.214$\pm$0.006 & - \\
    & MPNN(sum)~\citep{gilmer2017neural} & - &  480,805 & 0.145$\pm$0.007 & - \\
    & PNA~\citep{corso2020principal} & - & 387,155 & 0.142$\pm$0.010 & - \\ 
    \midrule
    \multirow{2}{*}{\begin{tabular}[c]{@{}c@{}}Higher-order\\GNNs\end{tabular}} & RingGNN~\citep{chen2019equivalence} & 178.03 & 527,283 &  0.353$\pm$0.019 & - \\
    & 3WLGNN~\citep{maron2019provably} & 179.35 & 507,603 &  0.303$\pm$0.068 & - \\
    \midrule
    Substructure GNNs & GSN~\citep{bouritsas2022improving} & - & $\sim$500k & 0.101$\pm$0.010 & - \\
    \midrule
    
    \multirow{5}{*}{\begin{tabular}[c]{@{}c@{}}Subgraph\\GNNs\end{tabular}}
    & NGNN~\citep{zhang2021nested} & - & $\sim$500k & 0.111$\pm$0.003 & 0.029$\pm$0.001 \\
    & DSS-GNN~\citep{bevilacqua2022equivariant} & - & 445,709 & 0.097$\pm$0.006 & - \\
    & GNN-AK~\citep{zhao2022stars} & - & $\sim$500k & 0.105$\pm$0.010 & - \\
    & GNN-AK+~\citep{zhao2022stars} & - & $\sim$500k & 0.091$\pm$0.011 & - \\
    & SUN~\citep{frasca2022Understanding} & 15.04 & 526,489 & 0.083$\pm$0.003 & - \\ \midrule
    
    \multirow{4}{*}{\begin{tabular}[c]{@{}c@{}}Graph\\Transformers\end{tabular}}
    & GT~\citep{dwivedi2021generalization} & - & 588,929 & 0.226$\pm$0.014 & - \\
    & SAN~\citep{kreuzer2021rethinking} & - & 508,577 & 0.139$\pm$0.006 & - \\
    & Graphormer~\citep{ying2021transformers} & 12.26 & 489,321 & 0.122$\pm$0.006 & 0.052$\pm$0.005 \\ 
    & URPE~\citep{luo2022your} & 12.40 & 491,737 & 0.086$\pm$0.007 & \third{0.028$\pm$0.002} \\
    \midrule
    GD-WL & Graphormer-GD~\citep{zhang2023rethinking}   & 12.52 & 502,793 & ~~\second{0.081$\pm$0.009}& ~~\first{0.025$\pm$0.004} \\
    \midrule 

    \multirow{4}{*}{\begin{tabular}[c]{@{}c@{}}Topological NNs\end{tabular}} & CIN-Small~\citep{bodnar2021weisfeilercell} & - & $\sim$100k & 0.094$\pm$0.004 & 0.044$\pm$0.003 \\ 
    & CIN~\citep{bodnar2021weisfeilercell}  & 7.96 & 475,316  & ~~\third{0.081$\pm$0.006} & ~~0.029$\pm$0.007 \\
    & CAN~\citep{giusti2022cell}  & 9.34 & 499,912  & ~~0.087$\pm$0.004 & ~~0.038$\pm$0.005 \\
     & CIN++  & 8.29 & 501,967 & ~~\first{0.077$\pm$0.004} & ~~\second{0.027$\pm$0.007} \\\bottomrule
    \end{tabular}
    }
    \vspace{-10pt}
\end{table}

This section proposed an empirical validation of the properties of the proposed message-passing scheme in different real-world scenarios involving graph-structured data. The experiments are performed on a large-scale molecular benchmark (ZINC)~\citep{dwivedi2020benchmarking} and a long-range graph benchmark (Peptides)~\citep{dwivedi2022long}. Unless otherwise specified, in each Multi-Layer Perceptron,  Batch Normalization~\citep{ioffe2015batch} between the linear transformations and ReLU activations Adam is used with a starting learning rate of $0.001$, which is halved whenever the validation loss reaches a plateau after a patience value set to $20$. Moreover, an early stopping criterion is employed. It terminates the training when the learning rate reaches a threshold. Unless stated otherwise, the early stopping threshold is fixed to $1e^{-5}$.

\paragraph{Computational Resources and Code Assets} 

All the experiments were performed using an NVIDIA\textsuperscript{\textregistered} Tesla V100 GPUs with 5,120 CUDA cores and 32GB GPU memory on a personal computing platform with an Intel\textsuperscript{\textregistered} Xeon\textsuperscript{\textregistered} Gold 5218 CPU @ 2.30GHz using Ubuntu 18.04.6 LTS.

The model has been implemented in PyTorch~\citep{NEURIPS2019_9015} by building on top of CW Networks library\footnote{\url{https://github.com/twitter-research/cwn/}}~\citep{bodnar2021weisfeilercell} and PyTorch Geometric library\footnote{\url{https://github.com/pyg-team/pytorch_geometric/}}~\citep{fey2019fast}. High-performance lifting operations use the graph-tool\footnote{\url{https://graph-tool.skewed.de/}} Python library and are parallelized via Joblib\footnote{\url{https://joblib.readthedocs.io/en/latest/}}.
PyTorch, NumPy~\citep{harris2020array}, SciPy~\citep{virtanen2020scipy} and Joblib are made available under the BSD license, Matplotlib~\citep{hunter2007matplotlib} under the PSF license, graph-tool under the GNU LGPL v3 license. CW Networks and PyTorch Geometric are made available under the MIT license.

\subsection{Benchmarks and Datasets}

\subsubsection{Large-Scale Molecular Benchmarks} 

\paragraph{ZINC}
Topological message passing is here evaluated on a large-scale molecular benchmark from the {\em ZINC} database~\citep{ZINCdataset}. The benchmark is composed of two datasets: {\em ZINC-Full} (consisting of 250K molecular graphs) and {\em ZINC-Subset} (an extract of 12k graphs from ZINC-Full) from~\cite{dwivedi2020benchmarking}. 

The number of nodes (or atoms) in the graphs ranges from 3 to 132, with an average size of approximately 24 nodes. The majority of the graphs have between 10 and 30 nodes. The average degree in the graphs is approximately 2 and the average diameter of the graphs is approximately 12.4 nodes (or atoms) and the maximum diameter was 62 nodes. Regarding the edges (or bonds), the average number of edges in the graphs is approximately 50 composed of 98\% by single bonds, while the remaining 2\% are aromatic bonds.

\

\begin{figure}[!htb]
    \centering
    \includegraphics[width=.8\textwidth]{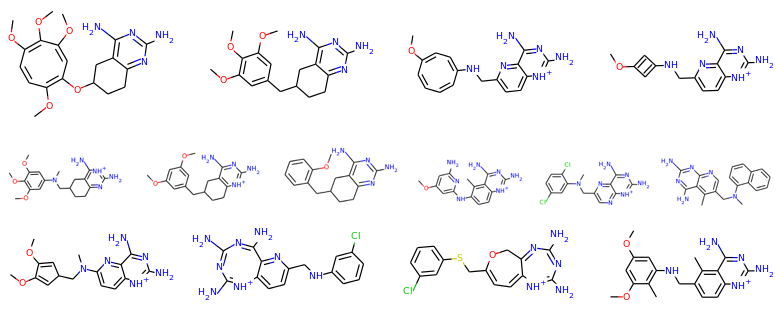}
    \caption{Visualization of molecular graphs contained in the ZINC dataset. Each graph represents a unique molecule with atoms as nodes and chemical bonds as edges. The graph-level targets are the penalized water-octanol partition coefficient (logP) that characterizes a molecule's drug-likeness.}
    \label{fig:zinc}
\end{figure}

\begin{figure}[!htb]
    \centering
    \includegraphics[width=1\textwidth]{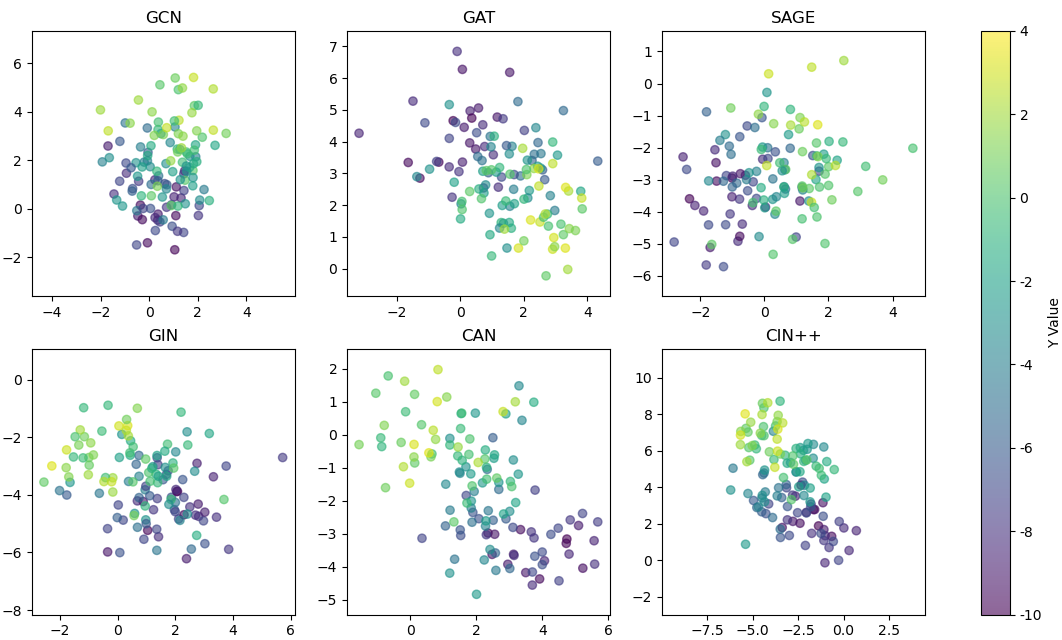}
    \caption{t-SNE visualizations representing the hidden features  from six different trained models on the ZINC dataset, displaying the clustering of molecular structures by their penalized logP values. CIN++ outperforms others with distinct clustering, followed by CAN, while GCN, GAT, SAGE, and GIN show greater overlap, suggesting a gradation in the models' ability to exploit complex chemical properties.}
    \label{fig:tsne_zinc}
\end{figure}

These are two graph regression task datasets for drug-constrained solubility prediction, built on top of the ZINC database provided by the Irwin and Shoichet Laboratories in the Department of Pharmaceutical Chemistry at the University of California, San Francisco (UCSF)~\cite{ZINCdataset}. Each graph represents a molecule, where the features over the nodes specify which atom it represents while edge features specify the type of chemical bond between two atoms~\Cref{fig:zinc}. Graph-level targets correspond to the penalised water-octanol partition coefficient -- logP, an important metric in drug design that depends on chemical structures and molecular properties and characterizes the drug-likeness of a molecule~\citep{gomez2018automatic}.

\paragraph{MOLHIV}
\begin{table}[!t]
    \centering
    \begin{minipage}[t]{0.80\textwidth}
        \centering
         \caption{ZINC-Subset (MAE), ZINC-Full (MAE) and Mol-HIV.}
        \label{tab:mol_dataset}
           \resizebox{\columnwidth}{!}{
          \begin{tabular}{l ccc}
            \toprule
            \multirow{2}{*}{Model} & 
            
            ZINC-Subset  &
            ZINC-Full & 
            MOLHIV\\
            
            &
            (MAE $\downarrow$) &
            (MAE $\downarrow$)&
            (ROC-AUC $\uparrow$) \\
            \midrule
            
            GCN~\citep{kipf2017graph} & 
            0.469$\pm$0.002 &
            N/A &
            76.06$\pm$0.97 \\

            GAT~\citep{velivckovic2018graph} & 
            0.463$\pm$0.002 &
            N/A &
            N/A \\
            
            GatedGCN~\citep{bresson2017residual} &
            0.363$\pm$0.009 &
            N/A &
            N/A \\

            GIN~\citep{xu2019powerful}  & 
            0.252$\pm$0.014 &
            0.088$\pm$0.002 &
            77.07$\pm$1.49 \\
            
            PNA~\citep{corso2020principal} & 
            0.188$\pm$0.004 &
            N/A &
            79.05$\pm$1.32 \\
            
            DGN~\citep{beaini2020directional} & 
            0.168$\pm$0.003 &
            N/A &
            79.70$\pm$0.97 \\
            
            HIMP~\citep{Fey2020_himp} &
            0.151$\pm$0.006 &
            0.036$\pm$0.002 &
            78.80$\pm$0.82\\

            GSN~\citep{bouritsas2022improving} & 
            0.108$\pm$0.018 &
            N/A &
            77.99$\pm$1.00 \\
            
            \midrule
            
            CIN-small~\citep{bodnar2021weisfeilercell} & 
            0.094$\pm$0.004 &
            \third{0.044$\pm$0.003} &
            \third{80.55$\pm$1.04} \\
            
            CIN~\citep{bodnar2021weisfeilercell} & 
            \second{0.079$\pm$0.006} &
            \second{0.022$\pm$0.002} &
            \first{80.94$\pm$0.57} \\

            \midrule

            \textbf{CIN++-small} & 
            \third{0.091$\pm$0.003} &
            0.044$\pm$0.004 &
            80.26$\pm$1.02 \\
            
            \textbf{CIN++}  & 
            \first{0.074$\pm$0.004} &
            \first{0.021$\pm$0.001} &
            \second{80.63$\pm$0.94} \\
            
            \bottomrule
          \end{tabular}%
          }
    \end{minipage}
    %\vspace{-11pt}
\end{table}

\begin{figure}[!htb]
    \centering
    \includegraphics[width=.8\textwidth]{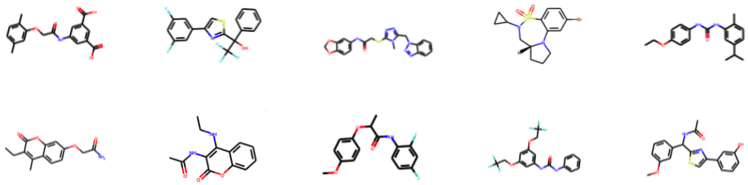}
    \caption{Representation of molecules in the \texttt{ogbg-molhiv} dataset from the Open Graph Benchmark. Individual nodes denote atoms, while edges depict chemical bonds. Various node and edge features such as atomic number, chirality, bond type, and stereochemistry are utilized to encapsulate the chemical properties of the molecule. Adapted from~\cite{hu2021ogb}.}
    \label{fig:mol-hiv}
\end{figure}

The model is further validated experimentally using the \texttt{ogbg-molhiv} molecular dataset from the Open Graph Benchmark~\citep{hu2020open}. Each graph is a representation of a molecule, where the nodes stand for atoms and the edges for chemical bonds~\Cref{fig:mol-hiv}. The node features, which are 9-dimensional, include: the atomic number, chirality, and other atom-specific attributes such as formal charge and ring inclusion. The edge features, which are 3-dimensional, incorporate the bond type, bond stereochemistry, and an additional feature that indicates the presence of a conjugated bond. The statistics of the graphs in the dataset are similar to the ones discussed for the ZINC benchmark. The task is to predict the ability of compounds to inhibit HIV replication.

\subsubsection{Long-Range Graph Benchmarks}

\begin{table}[t]
    \caption{Performance results for \pepfunc (graph classification) and \pepstruct (graph regression). Best scores are highlighted using gold \tikzcircle[gold,fill=gold]{2pt}, silver \tikzcircle[silver,fill=silver]{2pt}, and bronze \tikzcircle[bronze,fill=bronze]{2pt} colors.}
    \label{tab:experiments_peptides}
    \begin{adjustwidth}{-2.5 cm}{-2.5 cm}\centering
    \scalebox{0.9}{
    \setlength\tabcolsep{4pt} % default value: 6pt
    \begin{tabular}{l c c c c}\toprule
    \multirow{2}{*}{\textbf{Model}} & \multicolumn{2}{c}{\pepfunc} & \multicolumn{2}{c}{\pepstruct} \\\cmidrule(lr){2-3}\cmidrule(lr){4-5}
    &\textbf{Train AP} &\textbf{Test AP $\uparrow$} &\textbf{Train MAE} &\textbf{Test MAE $\downarrow$} \\\midrule
    MLP & 0.4217$\pm$0.0049 & 0.4060$\pm$0.0021 & 0.4273$\pm$0.0011 & 0.4351$\pm$0.0008 \\
    GCN &0.8840$\pm$0.0131 &0.5930$\pm$0.0023 &0.2939$\pm$0.0055 &0.3496$\pm$0.0013 \\
    GCNII  & 0.7271$\pm$0.0278 & 0.5543$\pm$0.0078 & 0.2957$\pm$0.0025 & 0.3471$\pm$0.0010\\
    GINE  &0.7682$\pm$0.0154 &0.5498$\pm$0.0079 &0.3116$\pm$0.0047 &0.3547$\pm$0.0045 \\
    GatedGCN  &0.8695$\pm$0.0402 &0.5864$\pm$0.0077 &0.2761$\pm$0.0032 &0.3420$\pm$0.0013 \\
    GatedGCN+RWSE  &0.9131$\pm$0.0321 &0.6069$\pm$0.0035 &0.2578$\pm$0.0116 &0.3357$\pm$0.0006 \\ \midrule
    Transformer+LapPE  &0.8438$\pm$0.0263 & 0.6326$\pm$0.0126 &0.2403$\pm$0.0066 &\third{0.2529$\pm$0.0016} \\
    SAN+LapPE  &0.8217$\pm$0.0280 &\third{0.6384$\pm$0.0121} &0.2822$\pm$0.0108 &0.2683$\pm$0.0043 \\
    SAN+RWSE  &0.8612$\pm$0.0219 &\second{0.6439$\pm$0.0075} &0.2680$\pm$0.0038 & 0.2545$\pm$0.0012 \\ \midrule
    
    CIN  &0.8076$\pm$0.0109& 0.6323$\pm$0.0054 &0.2309$\pm$0.0028 &\second{0.2523$\pm$0.0007} \\
    
    CIN++  &0.8943$\pm$0.0226&\first{0.6569$\pm$0.0117} &0.2290$\pm$0.0079 &\first{0.2523$\pm$0.0013} \\
    \bottomrule
    \end{tabular}
    }
    \vspace{-10pt}
    \end{adjustwidth}
\end{table}

To test the effectiveness of enhanced topological message passing for discovering long-range interactions CIN++ is evaluated on a long-range molecular benchmark~\citep{dwivedi2022long}. The datasets used from the benchmark are derived from 15,535 peptides that compose the SATPdb database~\citep{singh2016satpdb}.  In both tasks of this benchmark, each graph corresponds to a peptide molecule~\citep{dwivedi2022long}. \\

\begin{figure}[!htb]
    \centering
    \includegraphics[width=.7\textwidth]{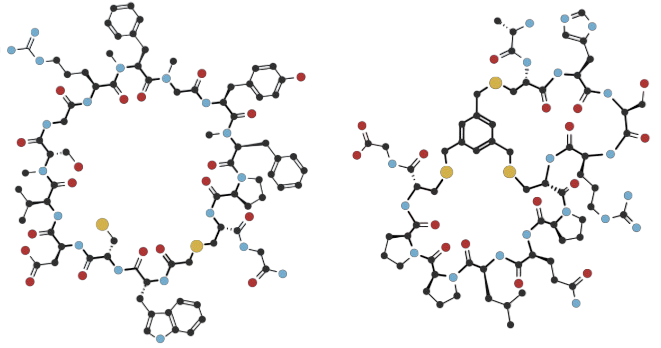}
    \caption{Graph Representation of two peptides made up on arrangements of amino acids connected through peptide linkages. Each node represents a heavy atom, while the edges show the covalent bonds between them. It worth emphasize the complexity of peptide molecular structures in contrast to smaller drug-like molecules. Adapted from~\cite{vinogradov2019macrocyclic}.}

    \label{fig:peptides}
\end{figure}

Peptides, in the realm of biology, are depicted as compact polymers of amino acids, which are covalently bonded through peptide linkages formed between the carboxyl group of one amino acid and the amino group of another~\Cref{fig:peptides}. These molecules execute a diverse spectrum of functions in living organisms, serving as signaling molecules~\citep{feng1997cloning}, protective agents of the immune system~\citep{janeway1997immunobiology}, structural constituents~\citep{o1993peptide}, transporters~\citep{torchilin2008tat}, enzymes~\citep{rastelli2010fast}, and even as a nutritional source~\citep{erdmann2008possible}. \\

Since each amino acid is composed of many heavy atoms, the molecular graph of a peptide is much larger than that of a small drug-like molecule. The long-range molecular benchmark proposes two datasets for \peptides property prediction where the graphs are derived such that the nodes correspond to the heavy (non-hydrogen) atoms of the peptides while the edges represent the bonds that join them. \\

The peptides datasets have a diameter about 5 times larger ($\approx$ 57) and contain 6 times more atoms than the molecular graphs present in the {\em ZINC} benchmark, with an average node degree of 2.04. The average shortest path is 20.89. The requirements for long-range interactions and sensitivity to the graph's global properties are met through the three-dimensional structural dependencies intrinsic to the peptide chains combined with a substantial raise in number of nodes in the graphs.

\subsubsection{TUDataset}

\begin{table}[!t]
    \centering
    % \captionsetup[table]{skip=\belowcaptionskip}
    \caption{TUDatasets. The first part shows the performance of graph kernel methods. The second assess graph neural networks while the third part is for topological neural networks. The best performance are indicated with gold \tikzcircle[gold,fill=gold]{2pt}, silver \tikzcircle[silver,fill=silver]{2pt}, and bronze \tikzcircle[bronze,fill=bronze]{2pt} colors }
    % \hspace{-2cm}
    \label{tab:tud}
    \resizebox{\linewidth}{!}{%
    \begin{tabular}{l  lllll}
    % \begin{adjustwidth}{-.5in}{-.in}
        \toprule
        Model & 
        MUTAG &
        PTC\_MR &
        PROTEINS &
        NCI1 &
        NCI109\\
        \midrule       
        %\parbox[t]{1mm}{\multirow{10}{*}{\rotatebox[origin=c]{90}{same splits}}} 
        RWK~\citep{gartner2003graph} & 
         79.2$\pm$2.1 & % <-- MUTAG
         55.9$\pm$0.3 & % <-- PTC
         59.6$\pm$0.1 & % <-- PROTEINS
         $>$3 days & % <-- NCI1
         N/A \\% <-- NCI109
        
        GK ($k=3$)~\citep{shervashidze2009efficient} &
        81.4$\pm$1.7 & % <-- MUTAG
        55.7$\pm$0.5 & % <-- PTC
        71.4$\pm$0.3 &  % <-- PROTEINS
        62.5$\pm$0.3 & % <-- NCI1
        62.4$\pm$0.3  \\% <-- NCI109

        PK~\citep{neumann2016propagation} & 
         76.0$\pm$2.7& % <-- MUTAG
         59.5$\pm$2.4 & % <-- PTC
         73.7$\pm$0.7 & % <-- PROTEINS
         82.5$\pm$0.5 & % <-- NCI1
         N/A  \\% <-- NCI109

        WL kernel~\citep{shervashidze2011weisfeiler} &
          90.4$\pm$5.7 & % <-- MUTAG
          59.9$\pm$4.3 & % <-- PTC
          75.0$\pm$3.1 & % <-- PROTEINS
          \first{86.0$\pm$1.8} & % <-- NCI1
          N/A \\% <-- NCI109

        \midrule
         
        DCNN~\citep{atwood2016diffusion} & 
        N/A&  % <-- MUTAG
        N/A & % <-- PTC
        61.3$\pm$1.6 & % <-- PROTEINS
        56.6$\pm$1.0 & % <-- NCI1
        N/A  \\% <-- NCI109

        DGCNN~\citep{zhang2018end} & 
        85.8$\pm$1.8 & % <-- MUTAG
        58.6$\pm$2.5 & % <-- PTC
        75.5$\pm$0.9 & % <-- PROTEINS
        74.4$\pm$0.5 & % <-- NCI1
        N/A \\% <-- NCI109
        
        IGN~\citep{maron2019invariant} &
        83.9$\pm$13.0 & % <-- MUTAG
        58.5$\pm$6.9 & % <-- PTC
        76.6$\pm$5.5 & % <-- PROTEINS
        74.3$\pm$2.7 & % <-- NCI1
        72.8$\pm$1.5 \\% <-- NCI109
        
        GIN~\citep{xu2019powerful} & 
        89.4$\pm$5.6 & % <-- MUTAG
        64.6$\pm$7.0 &  % <-- PTC
        76.2$\pm$2.8 & % <-- PROTEINS
        82.7$\pm$1.7 &  % <-- NCI1
        N/A \\% <-- NCI109

        PPGNs~\citep{maron2019provably} &
        90.6$\pm$8.7 & % <-- MUTAG
        66.2$\pm$6.6 & % <-- PTC
        \third{77.2$\pm$4.7} &  % <-- PROTEINS
        83.2$\pm$1.1 & % <-- NCI1
        82.2$\pm$1.4  \\% <-- NCI109

        Natural GN~\citep{de2020natural} &
        89.4$\pm$1.6 & % <-- MUTAG
        66.8$\pm$1.7 & % <-- PTC
        71.7$\pm$1.0 & % <-- PROTEINS
        82.4$\pm$1.3 &  % <-- NCI1
        N/A \\% <-- NCI109

        GSN~\citep{bouritsas2022improving} &
        92.2 $\pm$ 7.5 & % <-- MUTAG
        68.2 $\pm$ 7.2 & % <-- PTC
        76.6 $\pm$ 5.0 & % <-- PROTEINS
        83.5 $\pm$ 2.0 &  % <-- NCI1
        N/A \\ % <-- NCI109
       
        \midrule
        
        SIN~\citep{bodnar2021weisfeiler} & 
        N/A  & % <-- MUTAG
        N/A &  % <-- PTC
        76.4 $\pm$ 3.3 & % <-- PROTEINS
        82.7 $\pm$ 2.1 & % <-- NCI1
        N/A \\ % <-- NCI109 

        CIN~\citep{bodnar2021weisfeilercell} & 
        \third{92.7 $\pm$ 6.1} & % <-- MUTAG
        \third{68.2 $\pm$ 5.6} & % <-- PTC
        77.0 $\pm$ 4.3 & % <-- PROTEINS
        83.6 $\pm$ 1.4 & % <-- NCI1
        \second{84.0 $\pm$ 1.6}  \\ % <-- NCI109
        
        CAN~\citep{giusti2022cell} & 
        \second{94.1 $\pm$ 4.8} & % <-- MUTAG
        \second{72.8 $\pm$ 8.3} & % <-- PTC
        \second{78.2 $\pm$ 2.0} &  % <-- PROTEINS
        \third{84.5 $\pm$ 1.6}  & % <-- NCI1
        \third{83.6 $\pm$ 1.2}  \\ % <-- NCI109

        \midrule
       {{\bf CIN++}} & 
        \first{94.4 $\pm$ 3.7} & % <-- MUTAG
        \first{73.2 $\pm$ 6.4} & % <-- PTC
        \first{80.5 $\pm$ 3.9} & % <-- PROTEINS
        \second{85.3 $\pm$ 1.2} & % <-- NCI1
        \first{84.5 $\pm$ 2.4}  \\ % <-- NCI109
        \bottomrule

    %\end{adjustwidth}
    \end{tabular}
    }
\end{table}

\begin{figure}[!htb]
    \centering
    \includegraphics[width=.8\textwidth]{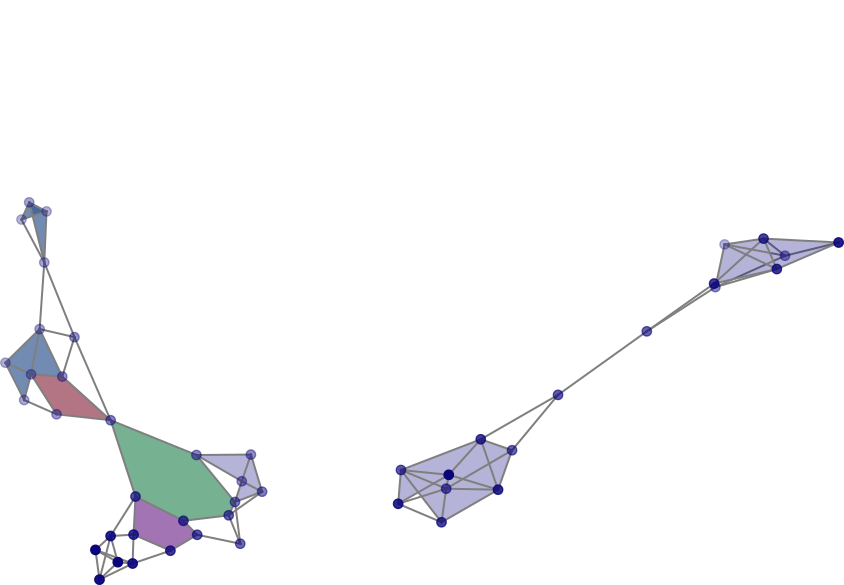}
    \caption{The protein complexes lifted from graphs in the \textsf{PROTEINS} datasets from the TUDataset molecular benchmark. In \textcolor{mblue}{blue} are denoted rings with three nodes (triangles), \textcolor{mred}{red} squares, \textcolor{mpurp}{purple} pentagons, \textcolor{mgreen}{green} hexagons,  Notably, the right structure resembles the graph in~\Cref{fig:effective-resistance}, characterized only by triangles as 2-cells.}
    \label{fig:protein_complexes}
\end{figure}

The \texttt{TUDataset}~\citep{morris2020tudataset} is a rich repository of graph-based datasets, serving as a benchmark for learning tasks on graph-structured data. Specifically, the assessment is performed on dataset composed of small molecules and bioinformatics. The \texttt{MUTAG} dataset, for instance, comprises nitroaromatic compounds, where the task is to predict their mutagenicity on Salmonella typhimurium~\citep{debnath1991structure}. \\

The dataset is structured as graphs, with vertices representing atoms labeled by atom type and edges representing bonds between the corresponding atoms, consisting of 188 samples of chemical compounds with 7 discrete node labels. Another dataset used is \texttt{PTC}, a collection of 344 chemical compounds, each represented as a graph, with the goal to report carcinogenicity for rodents, and 19 node labels for each node~\citep{toivonen2003statistical}.

The \texttt{NCI1} and \texttt{NCI109} and dataset, from the cheminformatics domain, represents each chemical compound as a graph, where vertices and edges respectively representing atoms and bonds between atoms. The dataset pertains to anti-cancer screens with chemicals evaluated for their effectiveness against cell lung cancer~\citep{wale2008comparison}. Each vertex label denotes the corresponding atom type, encoded via a one-hot-encoding scheme into a binary vector. The \texttt{PROTEINS} dataset~\Cref{fig:protein_complexes} is utilized in the field of bioinformatics for protein function prediction~\citep{borgwardt2005protein}. The task is to predict functional class membership of enzymes and non-enzymes.

\subsection{Comparative Performance Analysis}

\paragraph{ZINC}
These experiments follow the experimental setup of~\cite{bodnar2021weisfeilercell} with the exception that the architecture uses $3$ layers with a hidden dimension of $64$. This restricts the parameter budget of the model to $500$K parameters. The training and evaluation follow the specification in~\cite{dwivedi2020benchmarking}. All results are illustrated in~\Cref{tab:zinc-500k} and in~\Cref{tab:mol_dataset}. {\em Without any use of feature augmentation} such as positional encoding, the proposed model exhibits particularly strong performance on these benchmarks: it attains state-of-the-art results by a significant margin on {\em ZINC-Subset},  outperforming other models by a significant margin and is on par with the best baselines for {\em ZINC-Full}. For the {\em ZINC-Subset}, a qualitative result is also reported in~\Cref{fig:tsne_zinc}, where the feature representations from various models are visualized through t-SNE~\cite{van2008visualizing}. The figure shows that CIN++ exhibits the most clear clustering of data points, suggesting a superior qualitative result in capturing the molecular characteristics relevant to the penalized logP values, as compared to Cell Attention Netowrk and the other graph neural networks.

\paragraph{MOLHIV}
For this dataset, a maximum ring size of $6$ assign nodes as $2$-cells. The architecture and hyperparameter settings mirror those referenced in previous studies~\citep{bodnar2021weisfeilercell, Fey2020_himp}. In~\Cref{tab:mol_dataset}, it is presented the average test ROC-AUC metrics at the epoch of optimal validation performance across $10$ random weight initializations. For this dataset, a lower performance than CIN is achieved, but superior to many other established models.

As evidenced in~\Cref{tab:mol_dataset}, CIN++ performs significantly well on the \texttt{ogbg-molhiv} dataset, making it the second-best performing model. The simpler version, \textbf{CIN++-small}, also demonstrates commendable results with an average test ROC-AUC, surpassing several other models and landing it in the top three. This illustrates that while the CIN model is the front-runner, the proposed models have effectively made use of the inherent graph structures and features to make predictive assessments about the molecules' capabilities to inhibit HIV replication.

\paragraph{Peptides}
For this benchmark, the proposed method is evaluated on the tasks of peptide structure prediction (\pepstruct) and peptide function prediction (\pepfunc). {\em For both datasets, any feature augmentation is employed such as positional or structural encoding}. The parameter budget has been constrained to $500$K. The assessment is then repeated with $4$ different seeds and reported the mean of the test AP and MAEs at the time of early stopping in~\Cref{tab:experiments_peptides}. For \pepstruct, a cellular lifting map is used that considers all induced cycles of dimension up to $8$ as rings. Here, CIN++ implements $3$ layers with $64$ as a hidden dimension, a batch size of $128$ and a sum aggregation to obtain complex-level embeddings. For \pepfunc, 2 cells are attached to all the induced cycles of dimension up to $6$. For this dataset was employed a CIN++ model with $4$ layers with an embedding dimension of $50$, and a batch size of $64$. A Dropout~\citep{srivastava2014dropout} with a probability of 0.15 is inserted.  With respect to the other benchmarks, the starting learning rate was set to $4e^{-4}$, with a weight decay of $5e^{-5}$.. The final readout is performed with a mean aggregation. As shown in~\Cref{tab:experiments_peptides} this model achieves very high performance on these tasks even without any use of feature augmentation.

\paragraph{TUDataset}
Moreover, the performance of enhanced topological message passing scheme is also assessed against graph kernel methods, graph neural networks as well as topological neural networks. In this set of experiments, the model employs the same model configurations used in~\cite{bodnar2021weisfeilercell}.  Therefore, in~\Cref{tab:tud} it is reported that the proposed scheme achieves state of the art results on four out of five different evaluations. The exception is for \texttt{NCI1} where the propsed method achieves the second place after WL kernel~\citep{shervashidze2011weisfeiler}.

\chapter{Conclusions}\label{chap:conclusions}

This thesis has presented an innovative approach to measure the impact of several factors that reduce the capabilities of message-passing neural networks in capturing long-range interactions. In particular, it has been shown how the width and depth of the $\MPNN$, and the underlying graph topology can influence the performance of structured learning tasks that depend on long-range interactions. Moreover, to cope with such limitations, this thesis has proposed topological approaches to {\bf naturally decouple the computational graph from the input graph}. In particular, it is possible to mitigate the bottlenecks of graph neural networks while capturing higher-order relationships without a significant increase in the complexity of the underlying model by designing proper message-passing schemes on discrete topological spaces. However, current state-of-the-art models do not naturally account for a principled way to model efficient topological message passing schemes accounting both higher-order interactions and the feature's importance. To this aim, this thesis has addressed these challenges by advancing the methods presented in the {\em topological deep learning} literature including attentional schemes on topological spaces and an enhancement of Cellular Isomorphism Networks~\citep{bodnar2021weisfeilercell}. The newly proposed topological message passing scheme, named CIN++, enables a direct interaction within high-order structures of the underlying cell complex, by letting messages flow within its lower neighbourhood without sacrificing the model's expressivity. By allowing the exchange of messages between higher-order structures, the model's capacity to capture multi-way relationships in the data is significantly enhanced. We have demonstrated that the ability to model long-range and group interactions is critical for capturing real-world chemistry-related problems. In particular, the natural affinity of cellular complexes for representing higher-dimensional structures and topological features will provide a more detailed understanding of complex chemical systems compared to traditional models.

\section{Broader Impacts} 
This work provides evidence of how the proposed topological message-passing schemes allows the integration of local and global information within a discrete topological space. In particular, the proposed architecture capture complex dependencies and long-range interactions more effectively. This work is foreseen to have a broad impact within the fields of computational chemistry, network neuroscience, and physics, as it offers a robust and versatile framework for predicting meaningful properties of complex systems by accurately modeling group dependencies and capturing long-range interactions.

\section{Limitations} 
While this thesis demonstrates that topological message-passing effectively models higher-order dependencies and long-range interactions in complex systems, it is reasonable to acknowledge that the complexity of the proposed methods inherently increases due to additional operations performed on top of those provided by classic $\MPNN$s. For example, the structural lifting maps~(\Cref{def:structural_lift}) and the additional messages sent throughout the complex~(\ref{par:simplicial_attention},~\ref{par:cell_attention}). However, much of the computational overhead introduced by cellular lifting can be mitigated by mapping all graphs present in the datasets into cell complexes in a preprocessing stage and storing them for later use. Additionally, the overhead of the topological message-passing schemes is mitigated by the fact that the operations within the same layer are naturally decoupled. Efficient network implementations make it possible to update the representation of a cell $\sigma$ in a concurrent execution~\citep{Besta22parallel}, amortizing the cost to be proportional to the largest neighbourhood of $\sigma$.

\section{Recommendations for Future Research}

One of the most promising avenues for further exploration is the application of topological neural networks to the fields of science mentioned in~\Cref{sec:intro:tnns_4_science}. Moreover, the field of algorithmic topology aims to solve problems that are often computationally intractable or fall within the NP-hard complexity class.Moreover, it turns out that algorithmic knot theory can also be used in the very same fields of science mentioned before. For example, in chemistry, molecular chirality can be determined by the nodes' chirality within the knots~\cite{patone2011know}; in physics, through the relationship between the Yang-Baxter equation~\citep{jimbo1989introduction} and knots' invariants. Finally, algorithmic knot theory can be used to model an essential biological process such as DNA recombination~\citep{sumners2020role}. By combining the techniques presented in this thesis, combined with the principles of neural algorithmic reasoning~\citep{velivckovic2021neural} it should be possible to approximate solutions to algorithmic topology problems with a feasible amount of computational resources.

%   BACK MATTER
%   BIBLIOGRAPHY
\cleardoublepage
\addcontentsline{toc}{chapter}{Bibliography}
\bibliography{Include/Backmatter/Bibliography}

%   APPENDIX
\cleardoublepage
\appendix % to tell LaTeX that the following chapters are appendices
\chapter{Glossary}\label{app:glossary}
\begin{table}[!htb]
\begin{center}
\caption{Summary of Notations: {\em Structural Elements}. Notation for topological constructs such as graphs, simplicial complexes, regular cell complexes, and their associated components.}
\label{tab:structural_notation}
\begin{tabular}{ll}
\hline
\multicolumn{2}{c}{\textbf{Structual Elements}}\\
\toprule
$\gph = (\V,\E)$ & A graph, $\V$ and $\E$ are respectively the sets of nodes and edges.\\\vspace{2pt}
$\kph = (\V, \mathsf{S})$ & A simplicial complex, $\mathsf{S}$ is the ensemble set of simplices.\\\vspace{2pt}
$\cph = (\V, \mathcal{P}_{\cph})$ & A regular cell complex, $\mathcal{P}_{\cph}$ is the set of cells.\\\vspace{2pt}
$v_i$ & A node, element of $\V$. \\\vspace{2pt}
$e_i  = (v_i, v_j) $ & An edge, element of $\E$. \\\vspace{3pt}
$\sigma_i^k = (\sigma_1^{k-1}, \ldots, \sigma_k^{k-1})$ & A k-simplex,  element of $\mathsf{S}$. It holds $\sigma_{j}^{k-1} \face \sigma_{i}^{k}$. \\\vspace{2pt}
$r_i = (e_1, \ldots, e_{\vert r_i \vert} )$ & A ring,  element of $\pph$. $\vert r_i \vert$ is the size of the $i$-th ring. \\\vspace{2pt}
$\mathcal{B}({\sigma})$ & Boundary of $\sigma$.  \\\vspace{2pt}
$\mathcal{C}o({\sigma})$ &  Co-boundary of $\sigma$. \\ \vspace{2pt}
$\mathcal{N}_{\uparr}({\sigma})$ & Upper neighbourhood of $\sigma$.  \\\vspace{2pt}
$\mathcal{N}_{\doarr}(\sigma)$ & Lower neighbourhood of $\sigma$.  \\\vspace{2pt}
$\sigma \face \tau$ & Boundary relationship (i.e. $\sigma \in \mathcal{B}(\tau)$ ). \\\vspace{2pt}
$\mathcal{B}({\sigma, \tau})$ & Boundary elements in common between $\sigma$ and $\tau$.  \\\vspace{2pt}
$\mathcal{C}o({\sigma, \tau})$ & Co-boundary elements in common between $\sigma$ and $\tau$. \\
\bottomrule

\end{tabular}
\end{center}
\end{table}

\newpage

\begin{table}[!htb]
\begin{center}
\caption{Summary of Notations: {\em Functional Elements}. Notation used for functional aspects, including feature vectors, information exchange, and message passing operations.}

\label{tab:functional_notation}
\begin{tabular}{ll}
\hline
\multicolumn{2}{c}{\textbf{Functional Elements}}\\
\toprule
$\xnorm_v$ & Graph signal defined over a node $v$. \\\vspace{2pt}
$\mathbf{h}_v$ & Latent representation of a node $v$. \\\vspace{2pt}
$\mathcal{R}$ & Rewiring map. \\\vspace{2pt}
$\Snorm$ & Graph Shift Operator. \\\vspace{2pt}
$\mathsf{GNN}_{\theta}$ & Graph neural network parametrized by $\theta$. \\\vspace{2pt}
$\mathbf{W}_{\doarr}$. & Learnable weight matrix in $\R^{d' \times d}$. \\\vspace{2pt}
$\mathsf{m}$ & Message function. \\\vspace{2pt}
$\agg$ & Permutation invariant aggregation function. \\\vspace{2pt}
$\com$ & Update function.\\\vspace{2pt}
$\out$ & Readout function. \\\vspace{2pt}
$\lvert \partial \mathbf{h}_{v}^{(r)} / \partial \mathbf{h}_{u}^{(0)}\rvert$ & Sensitivity of node $v$ to the features of node $u$ after $r$ layers. \\\vspace{2pt}
$\xnorm_{\sigma}$ & Topological signal defined over a cell $\sigma$.\\\vspace{2pt}
$\mathbf{h}_{\sigma}$ & Latent representation of the cell $\sigma$.\\\vspace{2pt}
$\mathbf{h}_{\mathcal{B}}, \mathbf{h}_{\mathcal{C}o}, \mathbf{h}_{\uparr}, \mathbf{h}_{\doarr}$ & Boundary, Co-Boundary, Upper, Lower latent representations.\\\vspace{2pt}
$\mathsf{m}_{\mathcal{B}}, \mathsf{m}_{\mathcal{C}o}, \mathsf{m}_{\uparr}, \mathsf{m}_{\doarr}$ & Boundary, Co-Boundary, Upper, Lower message functions.\\\vspace{2pt}
$\mathbf{W}_{\uparr}, \mathbf{W}_{\doarr}$ & Upper and lower weight matrices in $\R^{d' \times d}$. \\\vspace{2pt}
$\mathbf{a}_{\uparr}, \mathbf{a}_{\doarr}$. & Upper and lower vectors of attention coefficients.\\\vspace{2pt}
$s_{\uparr}, s_{\doarr}$ & Upper and lower scoring functions. \\\vspace{2pt}
$a_{\uparr}, a_{\doarr}$ & Upper and lower attention functions. \\\vspace{2pt}
$\alpha^{\uparr}_{\sigma, \tau}, \alpha^{\doarr}_{\sigma, \tau}$ & Upper and lower weight coefficients between simplices/cells $\sigma$ and $\tau$. \\\vspace{2pt}
$\mathbf{h}_{\kph}$ & Latent representation of a simplicial complex $\kph$.\\\vspace{2pt}
$\mathbf{h}_{\cph}$ & Latent representation of a cell complex $\cph$.\\\vspace{2pt}
$\mathbf{h}_{\xph}$ & Latent representation of a discrete topological space $\xph$. \\
\bottomrule
\end{tabular}
\end{center}
\end{table}

\chapter{Appendix of On Oversquashing in MPNNs}\label{app:on_oversq}

\section{General preliminaries}\label{app:sec_preliminaries}
Assume a graph $\gph$ with nodes $ \mathsf{V}$ and edges $\mathsf{E}\subset \V \times \V$, to be simple, undirected, and connected. Let $n = \lvert \V \rvert$ and %for any integer nn, we 
write $[n]:= \{1,\ldots,n\}$. Denote the adjacency matrix by $\mathbf{A}\in\R^{n\times n}$. Compute the degree of $v\in \V$ by $d_v = \sum_{u}A_{vu}$ and write % so that the degree matrix is  % = \lvert \mathcal{N}_1(v)\rvert, with , with \mathcal{N}_1(v) the 1-hop neighbourhood of  the 1-hop neighbourhood of v$. In particular, the degree matrix is 
$\mathbf{D} = \mathrm{diag}(d_1,\ldots,d_n)$. One can take different normalizations of $\mathbf{A}$, so write $\Anorm\in\R^{n\times n}$ for a Graph Shift Operator (GSO), i.e., an $n\times n$ matrix satisfying $\Anorm_{vu} \neq 0$ if and only if $(v,u)\in \mathsf{E}$; typically, $\Anorm \in \{\mathbf{A},\mathbf{D}^{-1}\mathbf{A},\mathbf{D}^{-1/2}\mathbf{A}\mathbf{D}^{-1/2}\}$. Finally, $d_{\gph}(v,u)$ is the {\bf shortest walk} ({\bf geodesic}) distance between nodes $v$ and $u$. 

\paragraph{Graph spectral properties: the eigenvalues.} The (normalized) graph Laplacian is defined as $\Lnorm = \mathbf{I} - \mathbf{D}^{-\frac{1}{2}}\mathbf{A}\mathbf{D}^{-\frac{1}{2}}$. This is a symmetric, positive semi-definite operator on $\gph$. Its eigenvalues can be ordered as $\lambda_0 < \lambda_1 \leq \ldots \leq \lambda_{n-1}$. The smallest eigenvalue $\lambda_0$ is always zero, with multiplicity given by the number of connected components of $\gph$ \citep{chung1997spectral}. Conversely, the largest eigenvalue $\lambda_{n-1}$ is always strictly smaller than $2$ whenever the graph is not bipartite. Finally, recall that the smallest, positive, eigenvalue $\lambda_1$ is known as the {\bf spectral gap}. Several of the proofs presented here rely on this quantity to provide convergence rates. Also, recall that the spectral gap is related to the Cheeger constant -- introduced in~\Cref{def:cheeger} -- of $\gph$ via the Cheeger inequality:
\begin{equation}\label{eq:cheeger_inequality}
    2\cheeg \geq \lambda_{1} > \frac{\cheeg^{2}}{2}.
\end{equation}

\paragraph{Graph spectral properties: the eigenvectors.}
Throughout this section, let $\{\eigen_{\ell}\}$ be a family of orthonormal eigenvectors of $\Lnorm$. In particular, note that the eigenspace associated with $\lambda_0$ represents the space of signals that respect the graph topology the most (i.e. the smoothest signals), so that is possible to write $(\eigen_0)_v = \sqrt{d_v}/2\lvert\E\rvert$, for any $v\in\V$.

From now on, assume that the graph is {\em not} bipartite, so that $\lambda_{n-1} < 2$. %In any problem we are interested in this work, graphs come attributed: 
Let $\mathbf{H}^{(0)}\in\R^{n\times p}$ be the matrix representation of node {\em features}, with $p$ denoting the hidden dimension. Features of node $v$ produced by layer $l$ of an $\MPNN$ are denoted by $\mathbf{h}_{v}^{(l)}$ and write their components as $(\mathbf{h}_{v}^{(l)})^{\alpha} := h_{v}^{(l),\alpha}$, for $\alpha \in [p]$.

\paragraph{Einstein summation convention.} To ease notations when deriving the bounds on the Jacobian, the proof below often rely on Einstein summation convention, meaning that, unless specified otherwise, sums are always repeated across indices: for example, when writing terms like $x_{\alpha}y^{\alpha}$, the symbol $\sum_{\alpha}$ is left implicit.

\section{\texorpdfstring{Proofs of \Cref{sec:width}}{}}\label{app:sec_width}

This Section demonstrates the results in~\Cref{sec:width}. In fact, it will be derived a sensitivity bound far more general than \Cref{cor:bound_MLP_MPNN} that, in particular, extends to $\MPNN$s that can stack multiple layers (MLPs) in the aggregation phase. Let's introduce a class of $\MPNN$s of the form:
\begin{equation}\label{eq:isotropic_MPNN}
    \mathbf{h}_{v}^{(l)} = \mathsf{up}^{(l)}\Big(\mathsf{rs}^{(l)}(\mathbf{h}_{v}^{(l-1)}) + \mathsf{mp}^{(l)}\Big(\sum_{u}\Anorm_{vu}\mathbf{h}_{u}^{(l-1)}\Big) \Big)
\end{equation}
\noindent for learnable update, residual, and message-passing maps $\mathsf{up}^{(l)},\mathsf{rs}^{(l)},\mathsf{mp}^{(l)}:\R^{p}\rightarrow \R^{p}$. Note that~\Cref{eq:isotropic_MPNN} includes common $\MPNN$s like $\mathsf{GCN}$ \citep{kipf2017graph}, $\mathsf{SAGE}$ \citep{hamilton2017inductive}, and $\mathsf{GIN}$ \citep{xu2019powerful}, where $\Anorm$ is $\mathbf{D}^{-1/2}\mathbf{A}\mathbf{D}^{-1/2}$, $\mathbf{D}^{-1}\mathbf{A}$ and $\mathbf{A}$, respectively. An $\MPNN$ usually has Lipschitz maps, with Lipschitz constants typically depending on regularization of the weights to promote generalization. An $\MPNN$ as in~\Cref{eq:isotropic_MPNN} is $(c_{\mathsf{up}},c_{\mathsf{rs}},c_{\mathsf{mp}})$-regular, if for $t\in[m]$ and $\alpha\in[p]$, it holds
\begin{align*}
    \| \nabla (\mathsf{up}^{(l)})^{\alpha}\|_{L_{1}} \leq c_{\mathsf{up}}, \quad \| \nabla (\mathsf{rs}^{(l)})^{\alpha}\|_{L_{1}} \leq c_{\mathsf{rs}}, \quad
    \| \nabla (\mathsf{mp}^{(l)})^{\alpha}\|_{L_{1}} \leq  c_{\mathsf{mp}}.
\end{align*}
\noindent As in~\cite{xu2018representation, topping2022understanding}, the interest is on the propagation of information in the $\MPNN$ via the Jacobian of node features after $m$ layers. A small derivative of $\mathbf{h}_v^{(m)}$ with respect to $\mathbf{h}_u^{(0)}$ means that -- {\bf at the first-order} -- the representation at node $v$ is mostly insensitive to the information contained at $u$ (e.g. its atom type, if $\gph$ is a molecule).
\begin{theorem}\label{thm:bound_general} Given a $(c_{\mathsf{up}},c_{\mathsf{rs}},c_{\mathsf{mp}})$-regular $\MPNN$ for $m$ layers and nodes $v,u\in\mathsf{V}$, it holds 
\begin{equation}
   \left\|\frac{\partial \mathbf{h}_{v}^{(m)}}{\partial \mathbf{h}_{u}^{(0)}}\right\|_{L_1} \leq p\cdot c^{m}_{\mathsf{up}}\left(\left(c_{\mathsf{rs}}\mathbf{I} +  c_{\mathsf{mp}}\Anorm\right)^{m}\right)_{vu}.
\end{equation}
\end{theorem}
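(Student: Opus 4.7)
The plan is to proceed by induction on the number of layers $m$, using the chain rule at each step. The main obstacle will be organising the multi-index bookkeeping from the chain rule so that the regularity bounds on $\mathsf{up}^{(l)}, \mathsf{rs}^{(l)}, \mathsf{mp}^{(l)}$ can be applied on the correct index each time, and so that the recursion closes into matrix multiplication by $\oper = c_{\mathsf{rs}}\mathbf{I} + c_{\mathsf{mp}}\Anorm$.

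Concretely, for $l \geq 0$ and nodes $v,u$ introduce the Jacobian matrix $J^{(l),\alpha\beta}_{vu} := \partial h_v^{(l),\alpha}/\partial h_u^{(0),\beta}$, together with the auxiliary \emph{row-sum} quantity $R^{(l)}_{vu} := \max_{\delta}\sum_{\beta}|J^{(l),\delta\beta}_{vu}|$. I will first show the recursive entrywise bound $R^{(l)} \leq c_{\mathsf{up}}(\oper R^{(l-1)})$. Differentiating~\Cref{eq:isotropic_MPNN} via the chain rule gives
\begin{equation*}
J^{(l),\alpha\beta}_{vu} = \sum_\gamma \partial_\gamma \mathsf{up}^{(l),\alpha} \Bigl[\sum_\delta \partial_\delta \mathsf{rs}^{(l),\gamma}\, J^{(l-1),\delta\beta}_{vu} + \sum_{\delta,w}\partial_\delta \mathsf{mp}^{(l),\gamma}\,\Anorm_{vw}\, J^{(l-1),\delta\beta}_{wu}\Bigr].
\end{equation*}
Applying the triangle inequality, summing over $\beta$, and using that $\sum_\delta |\partial_\delta \mathsf{rs}^{(l),\gamma}|\, r_\delta \leq c_{\mathsf{rs}}\max_\delta r_\delta$ (and the analogous inequality for $\mathsf{mp}^{(l)}$) will bound the inner bracket by $c_{\mathsf{rs}} R^{(l-1)}_{vu} + c_{\mathsf{mp}}\sum_w \Anorm_{vw} R^{(l-1)}_{wu} = (\oper R^{(l-1)})_{vu}$. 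Factoring out $\sum_\gamma |\partial_\gamma \mathsf{up}^{(l),\alpha}|\leq c_{\mathsf{up}}$ and taking the maximum over $\alpha$ yields the claimed recursion.

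From the recursion and the base case $R^{(0)} = \mathbf{I}$, an immediate induction gives $R^{(m)}_{vu} \leq c_{\mathsf{up}}^m (\oper^m)_{vu}$. To conclude, I will convert the row-sum control into the entrywise $L_1$ norm by the trivial estimate $\|J^{(m)}_{vu}\|_{L_1} = \sum_{\alpha,\beta}|J^{(m),\alpha\beta}_{vu}| \leq p\cdot R^{(m)}_{vu}$, since there are $p$ rows. Combining the two yields $\|J^{(m)}_{vu}\|_{L_1} \leq p\cdot c_{\mathsf{up}}^m (\oper^m)_{vu}$, as claimed. The main technical care is in showing that the recursion closes nicely at the level of the row-sum quantity (rather than at the entrywise level, where the chain rule would produce a mismatch of indices and an undesired accumulation of $p$-factors at every layer); once the right auxiliary quantity is chosen, the induction is essentially mechanical.
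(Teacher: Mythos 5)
Your proof is correct and takes essentially the same route as the paper's: induction on the number of layers, applying the chain rule together with the $L_1$ bounds on the gradients of $\mathsf{up}^{(l)},\mathsf{rs}^{(l)},\mathsf{mp}^{(l)}$ so that the recursion closes into powers of $c_{\mathsf{rs}}\mathbf{I}+c_{\mathsf{mp}}\Anorm$, with the factor $p$ introduced only once at the end. The only difference is bookkeeping: the paper closes the induction at the level of a single entrywise bound $\bigl|\partial h_v^{(m),\alpha}/\partial h_u^{(0),\beta}\bigr|\le c_{\mathsf{up}}^{m}\bigl((c_{\mathsf{rs}}\mathbf{I}+c_{\mathsf{mp}}\Anorm)^{m}\bigr)_{vu}$, uniform in $\alpha,\beta$, and sums over $\alpha$ at the very end --- so, contrary to your closing remark, the entrywise recursion does close without accumulating $p$-factors (precisely because the regularity constants are $L_1$ norms of the gradients), and your row-sum quantity $R^{(l)}_{vu}$ is simply an equally valid, marginally tidier way to control the full entrywise $L_1$ sum with a single factor of $p$.
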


\begin{proof}
The result above will be proven by induction on the number of layers $m$. Fix $\alpha,\beta \in [p]$. In the case of $m=1$, get (omitting to write the arguments where the maps are being evaluated, and {\bf using the Einstein summation convention over repeated indices}):
\begin{equation*}
    \Big| \frac{\partial h_v^{(1),\alpha}}{\partial h_u^{(0),\beta}}\Big| = \Big| \partial_{p}\mathsf{up}^{(0),\alpha}\Big(\partial_r\mathsf{rs}^{(0),p}\frac{\partial h_v^{(0),r}}{\partial h_u^{(0),\beta}} + \partial_q\mathsf{mp}^{(0),p}\Anorm_{vz}\frac{\partial h_z^{(0),q}}{\partial h_u^{(0),\beta}}\Big) \Big|,
\end{equation*}
\noindent which can be readily reduced to
\begin{equation*}
     \Big| \frac{\partial h_v^{(1),\alpha}}{\partial h_u^{(0),\beta}}\Big| = \Big| \partial_{p}\mathsf{up}^{(0),\alpha}\Big(\partial_\beta\mathsf{rs}^{(0),p}\Lnorm_{vu} + \partial_\beta\mathsf{mp}^{(0),p}\Anorm_{vu}\Big) \Big| \leq c_{\mathsf{up}}\left(c_{\mathsf{rs}}\mathbf{I}+ c_{\mathsf{mp}}\Anorm\right)_{vu},
\end{equation*}
\noindent thanks to the Lipschitz bounds on the $\MPNN$, which confirms the case of a single layer (i.e. $m=1$). Also, assume the bound to be satisfied for $m$ layers and use induction to derive
\begin{align*}
    \Big| \frac{\partial h_v^{(m+1),\alpha}}{\partial h_u^{(0),\beta}}\Big| &= \Big| \partial_{p}\mathsf{up}^{(m),\alpha}\Big(\partial_r\mathsf{rs}^{(m),p}\frac{\partial h_v^{(m),r}}{\partial h_u^{(0),\beta}} + \partial_q\mathsf{mp}^{(m),p}\Anorm_{vz}\frac{\partial h_z^{(m),q}}{\partial h_u^{(0),\beta}}\Big) \Big| \\
    &\leq \Big|\partial_{p}\mathsf{up}^{(m),\alpha}\Big|\Big(\left\vert \partial_r\mathsf{rs}^{(m),p}\right\vert \left(c_{\mathsf{up}}^{m}\left(\left(c_{\mathsf{rs}}\mathbf{I} + c_{\mathsf{mp}}\Anorm\right)^{m}\right)_{vu}\right) + \Big|\partial_q\mathsf{mp}^{(m),p} \Big| \Anorm_{vz}\left(c_{\mathsf{up}}^{m}\left(\left(c_{\mathsf{rs}}\mathbf{I} + c_{\mathsf{mp}}\Anorm\right)^{m}\right)_{zu}\right) \Big) \\
    &\leq \Big|\partial_{p}\mathsf{up}^{(m),\alpha}\Big|\Big(c_{\mathsf{rs}} \left(c_{\mathsf{up}}^{m}\left(\left(c_{\mathsf{rs}}\mathbf{I} + c_{\mathsf{mp}}\Anorm\right)^{m}\right)_{vu}\right) + c_{\mathsf{mp}} \Anorm_{vz}\left(c_{\mathsf{up}}^{m}\left(\left(c_{\mathsf{rs}}\mathbf{I} + c_{\mathsf{mp}}\Anorm\right)^{m}\right)_{zu}\right) \Big) \\
    &\leq c_{\mathsf{up}}^{m+1}\left(c_{\mathsf{rs}}\left(\left(c_{\mathsf{rs}}\mathbf{I} + c_{\mathsf{mp}}\Anorm\right)^{m}\right)_{vu} + c_{\mathsf{mp}}\Anorm_{vz}\left(\left(c_{\mathsf{rs}}\mathbf{I} + c_{\mathsf{mp}}\Anorm\right)^{m}\right)_{vu}\right) \\
    & = c_{\mathsf{up}}^{m+1}\Big(\left(c_{\mathsf{rs}}\mathbf{I} + c_{\mathsf{mp}}\Anorm\right)^{m+1}\Big)_{vu},
\end{align*}
\noindent using the Lipschitz bounds on the maps $\mathsf{up},\mathsf{rs},\mathsf{mp}$. This completes the induction argument.
\end{proof}

\noindent From now on the focus will be on the class of $\MPNN$ adopted in~\Cref{sec:on_oversq}, whose layer are report below for convenience:
\begin{equation*}
    \mathbf{h}_{v}^{(l+1)} = \up\Big(c_{\rs}\W_{\rs}^{(l)} \mathbf{h}_{v}^{(l)} + c_{\mpas}\W_{\mpas}^{(l)}\sum_{u}\Anorm_{vu}\mathbf{h}^{(l)}_{u}\Big).
\end{equation*}
\noindent The general argument can be adapted to derive \Cref{cor:bound_MLP_MPNN}.

\begin{proof}[Proof of \Cref{cor:bound_MLP_MPNN}] One can follow the steps in the proof of \Cref{thm:bound_general} and, again, proceed by induction. The case $m=1$ is straightforward, so consider the inductive step and assume the bound to hold for $m$ arbitrary. Given $\alpha,\beta \in [p]$, it holds
\begin{align*}
 \Big| \frac{\partial h_v^{(m+1),\alpha}}{\partial h_u^{(0),\beta}}\Big| &\leq \lvert \up' \rvert \Big( c_{\rs}\left\vert(\W_{\rs})^{(m)}_{\alpha\gamma}\right\vert\Big|\frac{\partial h_v^{(m),\gamma}}{\partial h_u^{(0),\beta}}\Big| + c_{\mpas}\left\vert (\W_{\mpas})^{(m)}_{\alpha\gamma}\right\vert \Anorm_{vz} \Big| \frac{\partial h_z^{(m),\gamma}}{\partial h_u^{(0),\beta}}\Big|\Big) \\
 &\leq c_{\up}w\left(c_{\rs}\left\|  \frac{\partial \mathbf{h}_v^{(m)}}{\partial \mathbf{h}_u^{(0)}}\right\|_{L_1} + c_{\mpas}\Anorm_{vz}\left\|  \frac{\partial \mathbf{h}_z^{(m)}}{\partial \mathbf{h}_u^{(0)}}\right\|_{L_1}  \right) \\
 &\leq c_{\sigma}w\left(c_{\sigma}wp\right)^{m}\left( c_{\rs}\left(\left(c_{\rs}\mathbf{I} + c_{\mpas}\Anorm\right)^{m}\right)_{vu} + c_{\mpas}\Anorm_{vz}\left(\left(c_{\rs}\mathbf{I} + c_{\mpas}\Anorm\right)^{m}\right)_{zu}\right) \\
 &\leq c_{\sigma}w\left(c_{\sigma}wp\right)^{m} \Big(\left(c_{\rs}\mathbf{I} + c_{\mpas}\Anorm\right)^{m+1}\Big)_{vu}.
\end{align*}
\noindent By summing over $\alpha$ on the left will conclude the proof (this will generate an extra $p$ factor on the right hand side).

\end{proof}
\section{\texorpdfstring{Proofs of \Cref{sec:depth}}{}}\label{app:sec_depth}

{\bf Convention:} From now on always consider $\Anorm = \mathbf{D}^{-1/2}\mathbf{A}\mathbf{D}^{-1/2}$. The bounds in this Section extend easily to $\mathbf{D}^{-1}\mathbf{A}$ in light of the similarity of the two matrices since $\Anorm^{k} = \mathbf{D}^{1/2}\left(\mathbf{D}^{-1}\mathbf{A}\right)^{k}\mathbf{D}^{-1/2}$. For the unnormalized matrix $\mathbf{A}$ instead, things are slightly more subtle. In principle, this matrix is not normalized, and in fact, the entry $(\mathbf{A}^{k})_{vu}$ coincides with the number of walks from $v$ to $u$ of length $k$. In general, this will not lead to bounds decaying exponentially with the distance. However, in expectation over the computational graph as in \citet{xu2018representation}, Appendix A of \citet{topping2022understanding} and \Cref{sec:topology}, one finds that nodes at smaller distance will still have sensitivity exponentially larger than nodes at large distance. This is also confirmed by $\mathsf{Graph}$ $\mathsf{Transfer}$ synthetic experiments, where $\mathsf{GIN}$ struggles with long-range dependencies (in fact, even slightly more than $\mathsf{GCN}$, which uses the symmetrically normalized adjacency $\Anorm$).

A sharper bound for~\Cref{eq:cor_distance} will be proven foreword, it is important to notice that it contains~\Cref{cor:over-squasing_distance} as a particular case.

\begin{theorem}
Given an $\MPNN$ as in~\Cref{eq:MPNN_mlp}, let $v,u\in\mathsf{V}$ be at distance $r$. Let $c_{\up}$ be the Lipschitz constant of $\sigma$, $w$ the maximal entry-value over all weight matrices, $d_{\mathrm{min}}$ be the minimal degree, and $\gamma_{\ell}(v,u)$ be the number of walks from $v$ to $u$ of maximal length $\ell$. For any $0 \leq k < r$, it holds

\begin{equation}
   \left\| \frac{\partial \mathbf{h}_{v}^{(r+k)}}{\partial \mathbf{h}_{u}^{(0)}}\right\|_{L_1} \leq \gamma_{r+k}(v,u) (c_{\up}(c_{\rs} + c_{\mpas})w p(k+1))^k \Big(\frac{2c_{\up}wpc_{\mpas}}{d_{\mathrm{min}}}\Big)^r.
\end{equation}
%Given an $\MPNN$ as in Eq.~\eqref{eq:MPNN_mlp}, let $v,u\in\mathsf{V}$ such that $d_{\gph}(v,u) = r$, $0 \leq k < r$, and assume each weight to have magnitude at most $\omega$. Let $d_{\mathrm{min}}$ be the minimal degree, $\rho := c_{\up}(c_{\rs} + c_{\mpas})\omega f$ and $\epsilon := c_{\up}c_{\mpas}\omega f(d_{\mathrm{min}})^{-1}$. Then 
\end{theorem}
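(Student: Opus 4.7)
The starting point is the Jacobian bound of Theorem~\ref{cor:bound_MLP_MPNN},
\[
\left\| \frac{\partial \mathbf{h}_v^{(m)}}{\partial \mathbf{h}_u^{(0)}}\right\|_{L_1} \leq (c_{\up} w p)^m (\oper^m)_{vu},
\qquad \oper = c_{\rs}\mathbf{I} + c_{\mpas}\Anorm,
\]
applied with $m = r+k$. The entire proof then reduces to an entry-wise estimate of $(\oper^{r+k})_{vu}$ that separates the exponential decay in the distance $r$ from the slower $k$-dependent prefactor.

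Since $\mathbf{I}$ and $\Anorm$ commute, the first move is the binomial expansion
\[
(\oper^{r+k})_{vu} = \sum_{j=0}^{r+k} \binom{r+k}{j} c_{\rs}^{r+k-j} c_{\mpas}^j (\Anorm^j)_{vu}.
\]
The crucial structural observation is that $(\Anorm^j)_{vu} = 0$ whenever $j < r$: $\Anorm$ inherits the sparsity pattern of $\mathbf{A}$, so $(\Anorm^j)_{vu}$ is a sum of weighted walks of length $j$ between $v$ and $u$, and no such walk can exist below the graph distance $r$. Hence only $k+1$ terms survive, indexed by $j \in \{r,\ldots,r+k\}$.

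For each surviving entry I would bound $(\Anorm^j)_{vu}$ by a walk-counting argument: writing it as a sum over walks of length $j$ of products of edge weights $1/\sqrt{d_{x_i} d_{x_{i+1}}} \leq 1/d_{\mathrm{min}}$ gives $(\Anorm^j)_{vu} \leq \gamma_{=j}(v,u)/d_{\mathrm{min}}^{j} \leq \gamma_{r+k}(v,u)/d_{\mathrm{min}}^{j}$. Substituting, factoring $c_{\mpas}^r/d_{\mathrm{min}}^r$ out and reindexing with $i = j - r$, one arrives at
\[
(\oper^{r+k})_{vu} \leq \gamma_{r+k}(v,u)\, \frac{c_{\mpas}^r}{d_{\mathrm{min}}^r} \sum_{i=0}^{k} \binom{r+k}{r+i} c_{\rs}^{k-i} \Big(\frac{c_{\mpas}}{d_{\mathrm{min}}}\Big)^{i}.
\]

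The last step controls the residual binomial sum. Using $d_{\mathrm{min}} \geq 1$, the crude majorisation $\binom{r+k}{r+i} \leq 2^{r+k} = 2^r \cdot 2^k$, and the elementary inequality $\sum_{i=0}^{k} c_{\rs}^{k-i} c_{\mpas}^{i} \leq (c_{\rs}+c_{\mpas})^k$ (which follows from the binomial theorem), the sum is at most $2^{r+k}(c_{\rs}+c_{\mpas})^k$. Multiplying back by $(c_{\up} w p)^{r+k}$ and regrouping powers between the $r$- and $k$-blocks yields $\gamma_{r+k}(v,u)\, (2 c_{\up} w p\, c_{\mpas}/d_{\mathrm{min}})^r (2 c_{\up} w p (c_{\rs}+c_{\mpas}))^k$; finally, the elementary inequality $2^k \leq (k+1)^k$ for every $k \geq 0$ (with equality at $k = 0, 1$) absorbs the stray $2^k$ into the $k$-th power, producing exactly the stated bound. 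The main obstacle is this combinatorial step: the binomial coefficients $\binom{r+k}{r+i}$ are precisely where the $r$- and $k$-dependences threaten to get tangled, and one has to pick a majorisation which is loose enough to decouple them but still tight enough to preserve the clean $(2 c_{\up} w p\, c_{\mpas}/d_{\mathrm{min}})^r$ geometric decay rate.
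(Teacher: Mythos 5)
Your proof is correct, and up to the reindexed sum it is the same argument as the paper's: apply the sensitivity bound of \Cref{cor:bound_MLP_MPNN} with $m=r+k$, expand $\oper^{r+k}=(c_{\rs}\mathbf{I}+c_{\mpas}\Anorm)^{r+k}$ binomially, kill the first $r$ terms because $(\Anorm^{j})_{vu}=0$ for $j<r$, and bound each surviving $(\Anorm^{j})_{vu}$ by $\gamma_{r+k}(v,u)/d_{\mathrm{min}}^{j}$ via walk counting. The only place you diverge is the final combinatorial step: the paper controls $\binom{r+k}{r+q}$ by the ratio estimate $\binom{r+k}{r+q}\le\bigl(1+\tfrac{k}{k+1}\bigr)^{r-k}(1+k)^{k}\binom{k}{q}$, then sums exactly via the binomial theorem and simplifies $(1+k)^{2}/(2k+1)\le 1+k$, whereas you use the blanket bound $\binom{r+k}{r+i}\le 2^{r+k}$, the inequality $\sum_{i}c_{\rs}^{k-i}c_{\mpas}^{i}\le(c_{\rs}+c_{\mpas})^{k}$, and then absorb the stray factor via $2^{k}\le(k+1)^{k}$. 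Both routes use $d_{\mathrm{min}}\ge 1$ (legitimate for the simple connected graphs assumed in the paper, and the paper's own last step does the same) and land on exactly the stated constants; yours is a bit cruder in the $k$-block but simpler to verify, while the paper's finer estimate would yield a slightly sharper intermediate bound before the final relaxation, which is irrelevant here since the target already carries the loose $(k+1)^{k}$ prefactor.
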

\begin{proof}
    Fix $v,u\in\V$ as in the statement and let $0 \leq k < r$. By using the sensitivity bounds in \Cref{cor:bound_MLP_MPNN} and writing that 
    \begin{equation*}
         \left\| \frac{\partial \mathbf{h}_{v}^{(r+k)}}{\partial \mathbf{h}_{u}^{(0)}}\right\|_{L_1} \leq \left(c_{\up}wp\right)^{r+k}\Big(\left(c_{\rs}\mathbf{I} + c_{\mpas}\Anorm\right)^{r+k}\Big)_{vu} = \left(c_{\up}wp\right)^{r+k}\sum_{i = 0}^{r+k}\binom{r+k}{i}c_{\rs}^{r+k - i}c_{\mpas}^{i}(\Anorm^{i})_{vu}.
    \end{equation*}
\noindent Since nodes $v,u$ are at distance $r$, the first $r$ terms of the sum above vanish. Since $\Anorm = \mathbf{D}^{-1/2}\mathbf{A}\mathbf{D}^{-1/2}$, the polynomial in the previous equation can be bounded by
\begin{align*}
    \sum_{i = 0}^{r+k}\binom{r+k}{i}c_{\rs}^{r+k - i}c_{\mpas}^{i}(\Anorm^{i})_{vu} &= \sum_{i = r}^{r+k}\binom{r+k}{i}c_{\rs}^{r+k - i}c_{\mpas}^{i}(\Anorm^{i})_{vu} \leq \gamma_{r+k}(v,u)\sum_{i = r}^{r+k}\binom{r+k}{i}c_{\rs}^{r+k - i}\left(\frac{c_{\mpas}}{d_{\mathrm{min}}}\right)^{i} \\
    &= \gamma_{r+k}(v,u) \sum_{q = 0}^{k}\binom{r+k}{r+q}c_{\rs}^{k - q}\left(\frac{c_{\mpas}}{d_{\mathrm{min}}}\right)^{r + q} \\ 
    &= \gamma_{r+k}(v,u)\left(\frac{c_{\mpas}}{d_{\mathrm{min}}}\right)^{r} \sum_{q = 0}^{k}\binom{r+k}{r+q}c_{\rs}^{k - q}\left(\frac{c_{\mpas}}{d_{\mathrm{min}}}\right)^{q}. 
\end{align*}
\noindent A simple estimate for 
\begin{align*}
    \binom{r+k}{r+q} &= \frac{(r+k)(r-1+k)\cdots(1+k)}{(r+q)(r-1+q)\cdots(1+q)}\binom{k}{q} \leq \frac{(r+k)(r-1+k)\cdots(1+k)}{r!}\binom{k}{q}  \\
    &\leq \left(1 + \frac{k}{r}\right)\cdots (1+k) \binom{k}{q} \leq \left(1+\frac{k}{k+1}\right)^{r-k}(1+k)^{k}\binom{k}{q} \\
\end{align*}
\noindent can be provided by expanding the polynomial above can be as:
\begin{align*}
 \sum_{i = 0}^{r+k}\binom{r+k}{i}c_{\rs}^{r+k - i}c_{\mpas}^{i}(\Anorm^{i})_{vu} &\leq \gamma_{r+k}(v,u)\left(1+\frac{k}{k+1}\right)^{r-k}(1+k)^{k}\left(\frac{c_{\mpas}}{d_{\mathrm{min}}}\right)^{r} \sum_{q = 0}^{k}\binom{k}{q}c_{\rs}^{k - q}\left(\frac{c_{\mpas}}{d_{\mathrm{min}}}\right)^{q} \\
 &=  \gamma_{r+k}(v,u)\left(\frac{(1+k)^{2}}{2k+1}\left(c_{\rs} + \frac{c_{\mpas}}{d_{\mathrm{min}}}\right)\right)^{k}\left(\left(1+\frac{k}{k+1}\right)\frac{c_{\mpas}}{d_{\mathrm{min}}}\right)^{r} \\
 &\leq \gamma_{r+k}(v,u)\left(\frac{(1+k)^{2}}{2k+1}\left(c_{\rs} + \frac{c_{\mpas}}{d_{\mathrm{min}}}\right)\right)^{k}\left(\frac{2c_{\mpas}}{d_{\mathrm{min}}}\right)^{r}.
 \end{align*}
 \noindent Combining all the ingredients together, the bound can be written as 
 \begin{align*}
     \left| \frac{\partial \mathbf{h}_{v}^{(r+k)}}{\partial \mathbf{h}_{u}^{(0)}}\right\|_{L_1} &\leq\gamma_{r+k}(v,u) \left(c_{\up}wp\right)^{r+k}\left(\frac{(1+k)^{2}}{2k+1}\left(c_{\rs} + \frac{c_{\mpas}}{d_{\mathrm{min}}}\right)\right)^{k}\left(\frac{2c_{\mpas}}{d_{\mathrm{min}}}\right)^{r} \\
     & = \gamma_{r+k}(v,u)\left(c_{\up}wp\,\frac{(1+k)^{2}}{2k+1}\left(c_{\rs} + \frac{c_{\mpas}}{d_{\mathrm{min}}}\right)\right)^{k}\left(\frac{2c_{\up}wpc_{\mpas}}{d_{\mathrm{min}}}\right)^{r} \\
     &\leq \gamma_{r+k}(v,u)\left(c_{\up}\left(c_{\rs} + c_{\mpas}\right)wp(1+k)\right)^{k}\left(\frac{2c_{\up}wpc_{\mpas}}{d_{\mathrm{min}}}\right)^{r},
 \end{align*}
 \noindent which completes the proof. Notice that this also proves~\Cref{cor:over-squasing_distance}.
\end{proof}

\subsection{Vanishing gradients result}

Here it will be reported and proven a more explicit version of~\Cref{thm:vanishing}.

\begin{theorem}[\textbf{Vanishing gradients}]\label{thm:main_vanishing_app} Consider an $\MPNN$ as in Eq.~\eqref{eq:MPNN_mlp} for $m$ layers with a quadratic loss $\mathcal{L}$. Assume that (i) $\sigma$ has Lipschitz constant $c_{\up}$ and $\sigma(0) = 0$, and (ii) that all weight matrices have spectral norm bounded by $\mu > 0$. %maximal entry-value $w$. 
Given any weight $\theta$ entering a layer $k$, there exists a constant $C > 0$ independent of $m$, such that
\begin{align}
    \left\vert \frac{\partial \mathcal{L}}{\partial \theta}\right\vert &\leq C\left(c_{\up}\mu(c_{\rs} + c_{\mpas})\right)^{m-k}\left(1  +\left(c_{\up}\mu(c_{\rs} + c_{\mpas})\right)^{m}\right),
\end{align}
\noindent where $\lvert\lvert \mathbf{H}^{(0)}\rvert\rvert_{F}$ is the Frobenius norm of the input node features.
\end{theorem}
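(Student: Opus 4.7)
The plan is to decompose $\partial \mathcal{L}/\partial \theta$ via the chain rule through layer $k$ and to bound each resulting factor separately. Writing
\begin{equation*}
\frac{\partial \mathcal{L}}{\partial \theta} = \Big\langle \frac{\partial \mathcal{L}}{\partial \mathbf{H}^{(m)}},\; \frac{\partial \mathbf{H}^{(m)}}{\partial \mathbf{H}^{(k)}} \cdot \frac{\partial \mathbf{H}^{(k)}}{\partial \theta} \Big\rangle_{F},
\end{equation*}
Cauchy--Schwarz bounds the magnitude of the gradient by the product $\|\partial \mathcal{L}/\partial \mathbf{H}^{(m)}\|_{F} \cdot \|\partial \mathbf{H}^{(m)}/\partial \mathbf{H}^{(k)}\|_{op} \cdot \|\partial \mathbf{H}^{(k)}/\partial \theta\|_{F}$. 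The first factor collects all dependence on the training target, the second is where the geometric factor in the depth $m-k$ originates, and the third only depends on layers $\leq k$ and is therefore a constant in $m$.

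The central forward-pass estimate comes first. Since $\sigma(0) = 0$ and $\sigma$ is $c_{\up}$-Lipschitz, $\|\sigma(\mathbf{X})\|_{F} \leq c_{\up}\|\mathbf{X}\|_{F}$ for every matrix $\mathbf{X}$. Combined with $\|\Anorm\|_{op} \leq 1$ (immediate for $\Anorm = \mathbf{D}^{-1/2}\mathbf{A}\mathbf{D}^{-1/2}$ since its spectrum lies in $[-1,1]$) and the spectral-norm bounds $\|\mathbf{W}_{\rs}^{(l)}\|_{op}, \|\mathbf{W}_{\mpas}^{(l)}\|_{op} \leq \mu$, applying the triangle inequality to~\Cref{eq:MPNN_mlp} in matrix form gives the one-step contraction $\|\mathbf{H}^{(l+1)}\|_{F} \leq c_{\up}\mu(c_{\rs}+c_{\mpas})\|\mathbf{H}^{(l)}\|_{F}$. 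Iterating $m$ times yields $\|\mathbf{H}^{(m)}\|_{F} \leq (c_{\up}\mu(c_{\rs}+c_{\mpas}))^m \|\mathbf{H}^{(0)}\|_{F}$, and since $\mathcal{L}$ is quadratic its gradient is affine in $\mathbf{H}^{(m)}$, so $\|\partial \mathcal{L}/\partial \mathbf{H}^{(m)}\|_{F} \leq C_1 + C_2 (c_{\up}\mu(c_{\rs}+c_{\mpas}))^m$ for constants $C_1,C_2$ absorbing $\|\mathbf{Y}\|_F$ and $\|\mathbf{H}^{(0)}\|_F$.

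For the middle factor I would vectorise $\mathbf{H}^{(l)}$ so that the one-step Jacobian reads $\partial \mathbf{H}^{(l+1)}/\partial \mathbf{H}^{(l)} = \mathrm{Diag}(\sigma'(\cdot)) \cdot \big((c_{\rs}\mathbf{I}_n + c_{\mpas}\Anorm) \otimes \mathbf{W}^{(l)} \big)$, whose operator norm is at most $c_{\up}\mu(c_{\rs}+c_{\mpas})$ by sub-multiplicativity, the spectral bound, and $\|\Anorm\|_{op}\leq 1$. Composing $m-k$ such factors yields $\|\partial \mathbf{H}^{(m)}/\partial \mathbf{H}^{(k)}\|_{op} \leq (c_{\up}\mu(c_{\rs}+c_{\mpas}))^{m-k}$. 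Since $\partial \mathbf{H}^{(k)}/\partial \theta$ only involves $\mathbf{H}^{(k-1)}$ and a single derivative of $\sigma$, the forward bound at depth $k-1$ controls it by a constant $C'$ that depends on $k$, $\|\mathbf{H}^{(0)}\|_{F}$ and the regularity constants but \textit{not} on $m$. Assembling the three estimates and absorbing everything into a single $C$ produces
\begin{equation*}
\Big| \frac{\partial \mathcal{L}}{\partial \theta} \Big| \leq C \, (c_{\up}\mu(c_{\rs}+c_{\mpas}))^{m-k}\Big(1 + (c_{\up}\mu(c_{\rs}+c_{\mpas}))^m\Big),
\end{equation*}
as claimed.

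The main obstacle is bookkeeping rather than a deep idea: one must carefully vectorise the matrix-valued iteration so that the one-step Jacobian genuinely has the claimed Kronecker form, and handle consistently the distinction between operator norm (which composes geometrically through the $m-k$ middle factors) and Frobenius norm (needed on the two endpoint factors for Cauchy--Schwarz to close the estimate). A secondary but essential point is that the decay depends crucially on $\|\Anorm\|_{op} \leq 1$; for a general shift operator an additional factor $\|\Anorm\|_{op}^{m-k}$ would enter and could either amplify or mitigate the decay, which is precisely why the convention $\Anorm = \mathbf{D}^{-1/2}\mathbf{A}\mathbf{D}^{-1/2}$ fixed at the start of Appendix B is used here.
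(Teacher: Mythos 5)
Your proposal is correct and follows essentially the same route as the paper: a chain-rule split through layer $k$, the forward Frobenius bound $\|\mathbf{H}^{(m)}\|_F \leq (c_{\up}\mu(c_{\rs}+c_{\mpas}))^m\|\mathbf{H}^{(0)}\|_F$ from $\sigma(0)=0$ and Lipschitzness to control the quadratic-loss gradient, a Kronecker-product operator-norm estimate giving the $(c_{\up}\mu(c_{\rs}+c_{\mpas}))^{m-k}$ factor, and an $m$-independent constant for $\partial \mathbf{H}^{(k)}/\partial\theta$. The only cosmetic slip is writing the one-step Jacobian as a single Kronecker product $(c_{\rs}\mathbf{I}+c_{\mpas}\Anorm)\otimes\mathbf{W}^{(l)}$, which presumes $\W_{\rs}^{(l)}=\W_{\mpas}^{(l)}$; with two distinct matrices it is the sum $c_{\rs}\W_{\rs}^{(l)}\otimes\mathbf{I}+c_{\mpas}\W_{\mpas}^{(l)}\otimes\Anorm$ (as in the paper), and the same bound $\mu(c_{\rs}+c_{\mpas})$ follows by the triangle inequality.
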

\begin{proof}
Consider a quadratic loss $\mathcal{L}$ of the form 
\begin{equation*}
    \mathcal{L}(\mathbf{H}^{(m)}) = \frac{1}{2}\sum_{v\in\V}\| \mathbf{h}_v^{(m)} - \mathbf{y}_v\|^{2},
\end{equation*}
\noindent and let $\mathbf{Y}$ represent the node ground-truth values. Given a weight $\theta$ entering layer $k < m$, it is possible to write the gradient of the loss as
\begin{equation*}
    \Big| \frac{\partial \mathcal{L}(\mathbf{H}^{(m)})}{\partial\theta}\Big| = \Big| \sum_{v,u\in\V}\sum_{\alpha,\beta\in[p]}\frac{\partial \mathcal{L}}{\partial h_v^{(m),\alpha}}\frac{\partial h_v^{(m),\alpha}}{\partial h_u^{(k),\beta}}\frac{\partial h_u^{(k),\beta}}{\partial \theta}\Big|.
\end{equation*}
\noindent Once $k$ is fixed,, the term $\lvert \partial h_u^{(k),\beta} / \partial \theta \rvert$ is independent of $m$ and is possible to bound it by some constant $C$. Since the loss is quadratic, to bound $\partial \mathcal{L} / \partial h_v^{(m),\alpha}$, it suffices to bound the solution of the $\MPNN$ after $m$ layers. First, use the Kronecker product formalism to rewrite the $\MPNN$-update in matricial form as 
\begin{equation}
    \mathbf{H}^{(m)} = \up\left(\left(c_{\rs}\OMEga^{(m)}\otimes \mathbf{I} + c_{\mpas}\W^{(m)}\otimes \Anorm\right)\mathbf{H}^{(m-1)}\right).
\end{equation}
\noindent Thanks to the Lipschitzness of $\up$ and the requirement $\up(0) = 0$, is possible to derive
\begin{equation*}
  \| \mathbf{H}^{(m)} \|_{F} \leq c_{\up}\| c_{\rs}\OMEga^{(m)}\otimes \mathbf{I} + c_{\mpas}\W^{(m)}\otimes \Anorm\|_{2} \|\mathbf{H}^{(m-1)}\|_{F},
\end{equation*}
\noindent where $F$ indicates the Frobenius norm. Since the largest singular value of $\mathbf{B}\otimes \mathbf{C}$ is bounded by the product of the largest singular values, it is easy to deduce that -- recall that the largest eigenvalue of $\Anorm = \mathbf{D}^{-1/2}\mathbf{A}\mathbf{D}^{-1/2}$ is $1$:
\begin{equation}\label{app:eq:norm_bounded}
    \| \mathbf{H}^{(m)} \|_{F} \leq c_{\up}\mu(c_{\rs} + c_{\mpas})\|\mathbf{H}^{(m-1)}\|_{F} \leq (c_{\up}\mu(c_{\rs} + c_{\mpas}))^{m}\|\mathbf{H}^{(0)}\|_{F},
\end{equation}
\noindent which affords a control of the gradient of the loss w.r.t. the solution at the final layer being the loss quadratic. Then, find
\begin{align}
     \Big| \frac{\partial \mathcal{L}(\mathbf{H}^{(m)})}{\partial\theta}\Big| &\leq C\Big| \sum_{v,u\in\V}\sum_{\alpha,\beta\in[p]}\frac{\partial \mathcal{L}}{\partial h_v^{(m),\alpha}}\frac{\partial h_v^{(m),\alpha}}{\partial h_u^{(k),\beta}}\Big| \notag \\ 
     &\leq C \sum_{v,u\in\V}\sum_{\beta\in[p]}\left\|\frac{\partial \mathcal{L}}{\partial \mathbf{h}_v^{(m)}}\right\|\left\|\frac{\partial \mathbf{h}_v^{(m)}}{\partial h_u^{(k),\beta}}\right\| \notag \\
     &\leq C \sum_{v,u\in\V}\sum_{\beta\in[p]}\Big(\|\mathbf{H}^{(m)}\|_{F} + \|\mathbf{Y}\|_{F}\Big)\left\|\frac{\partial \mathbf{h}_v^{(m)}}{\partial h_u^{(k),\beta}}\right\| \notag \\
     &\leq C \sum_{v,u\in\V}\sum_{\beta\in[p]}\Big((c_{\up}\mu(c_{\rs} + c_{\mpas}))^{m}\|\mathbf{H}^{(0)}\|_{F} + \|\mathbf{Y}\|_{F}\Big)\left\|\frac{\partial \mathbf{h}_v^{(m)}}{\partial h_u^{(k),\beta}}\right\| \label{eq:app:proof:vanishing:1}
\end{align}
\noindent where in the last step used~\Cref{app:eq:norm_bounded}. Now it will be given a {\bf new} bound on the sensitivity -- {\em differently from the analysis in earlier Sections. 
Given that is necessary to integrate over all possible pairwise contributions to compute the gradient of the loss, the topological information depending on the choice of $v,u$ is no loger needed.} The idea below, is to apply the Kronecker product formalism to derive a single operator in the tensor product of feature and graph space acting on the Jacobian matrix -- this allows to derive much sharper bounds. Note that, once a node $u$ is fixed and a $\beta\in[p]$, is possible to write
\begin{align*}
\left\|\frac{\partial \mathbf{H}^{(m)}}{\partial h_u^{(k),\beta}}\right\|^{2} &\leq \sum_{v\in\V}\sum_{\alpha\in [p]}c_{\up}^{2}\Big( c_{\rs}\OMEga^{(m)}_{\alpha\gamma}\frac{\partial h_v^{(m-1),\gamma}}{\partial h_{u}^{(k),\beta}} + c_{\mpas}\W^{(m)}_{\alpha\gamma}\Anorm_{vz}\frac{\partial h_z^{(m-1),\gamma}}{\partial h_{u}^{(k),\beta}}  \Big)^{2} \\
&= c_{\up}^{2}\sum_{v\in\V}\sum_{\alpha\in [p]}\Big(\Big(c_{\rs}\OMEga^{(m)}\otimes \mathbf{I} + c_{\mpas}\W^{(m)}\otimes \Anorm \Big)\frac{\partial \mathbf{H}^{(m-1)}}{\partial h_u^{(k),\beta}}\Big)_{v,\alpha}^{2} \\
&\leq c_{\up}^{2}\| c_{\rs}\OMEga^{(m)}\otimes \mathbf{I} + c_{\mpas}\W^{(m)}\otimes \Anorm \|_{2}^{2} \left\|\frac{\partial \mathbf{H}^{(m-1)}}{\partial h_u^{(k),\beta}}\right\|_{F}^{2} 
\end{align*}
\noindent meaning that 
\begin{equation*}
    \left\|\frac{\partial \mathbf{H}^{(m)}}{\partial h_u^{(k),\beta}}\right\| \leq (c_{\up}\mu(c_{\rs} + c_{\mpas}))^{m-k},
\end{equation*}
\noindent where (i) the largest singular value of the weight matrices is $\mu$, (ii) that the largest eigenvalue of $c_{\rs}\mathbf{I} + c_{\mpas}\Anorm$ is $c_{\rs} + c_{\mpas}$ (as follows from $\Anorm = \mathbf{I} - \Lnorm$, and the spectral analysis of $\Lnorm$), (iii) that $\| \partial\mathbf{H}^{(k)} / \partial h_u^{(k),\beta} \| = 1$. The proof is complexed once the term $\| \mathbf{Y}\|$ is absorbed in the constant $C$ in~\Cref{eq:app:proof:vanishing:1}.
\end{proof}

\section{\texorpdfstring{Proofs of \Cref{sec:topology}}{}}\label{app:sec_topology}
\noindent This Section considers the convolutional family of $\MPNN$ in~\Cref{eq:mpnn_simplified}.
Before proving the main results of this Section, it is necessary to comment the main assumption on the nonlinearity and formulate it more explicitly. Take $k < m$. When the sensitivity of $\mathbf{h}_v^{(m)}$ w.r.t $\mathbf{h}_u^{(k)}$ is computed, it yields a sum of different terms over all possible paths from $v$ to $u$ of length $m-k$. In this case, the derivative of %the nonlinearity 
$\mathsf{ReLU}$ acts as a Bernoulli variable evaluated along all these possible paths. Similarly to \citet{kawaguchi2016deep,xu2018representation}, it is necessary that  following assumption holds:

%assume that all paths in the computation graph
%of the model are activated with the same probability of
%success ρ

\begin{assumption}\label{assumption} Assume that all paths in the computation graph
of the model are activated with the same probability of
success $\rho$. %the Bernoulli variables induced by the derivative of $\mathsf{ReLU}$ are i.i.d. and have same probability of success $\rho$ for each path in the graph. 
The expectation $\mathbb{E}[\partial \mathbf{h}_{v}^{(m)} / \partial \mathbf{h}_{u}^{(k)}]$, means taking the average over such Bernoulli variables. 
\end{assumption}

\noindent Thanks to~\Cref{assumption}, is possible to follow the same argument in the proof of Theorem 1 in~\cite{xu2018representation} to derive 
\begin{equation*}
   \mathbb{E}\left[\frac{\partial \mathbf{h}_{v}^{(m)}}{\partial\mathbf{h}_{u}^{(k)}}\right] = \rho\prod_{s = k+1}^{m}\W^{(s)}(\oper^{m-k})_{vu}.
\end{equation*}

\noindent Now, let's proceed to prove the relation between sensitivity analysis and access time.

\begin{proof}[Proof of \Cref{thm:access}]
Under \Cref{assumption}, The term $\mathbf{J}_k^{(m)}(v,u)$ can be rewritten as 
\begin{align*}
    \mathbb{E}\Big[\mathbf{J}_k^{(m)}(v,u)\Big] &= \mathbb{E}\Big[ \frac{1}{d_v}\frac{\partial \mathbf{h}_{v}^{(m)}}{\partial \mathbf{h}_{v}^{(k)}} - \frac{1}{\sqrt{d_v d_u}}\frac{\partial \mathbf{h}_{v}^{(m)}}{\partial \mathbf{h}_{u}^{(k)}}\Big] \\ 
    &= \rho\prod_{s = k+1}^{m}\W^{(s)}\Big(\frac{1}{d_v}(\oper^{m-k})_{vv} - \frac{1}{\sqrt{d_v d_u}}(\oper^{m-k})_{vu} \Big).
\end{align*}
\noindent 
% We note that the term within brackets can be equivalently reformulated as
% \begin{equation*}
%     \frac{1}{d_v}(\oper^{m-k})_{vv} + \frac{1}{d_u}(\oper^{m-k})_{uu} - 2(\oper^{m-k})_{vu} = \langle \frac{\mathbf{e}_v}{\sqrt{d_v}} - \frac{\mathbf{e}_u}{\sqrt{d_u}}, \oper^{m-k}\left(\frac{\mathbf{e}_v}{\sqrt{d_v}} - \frac{\mathbf{e}_u}{\sqrt{d_u}}\right)\rangle
% \end{equation*}
% \noindent where $\mathbf{e}_v$ is the vector with $1$ at entry $v$, and zero otherwise. In particular, since $\oper = c_{\rs}\mathbf{I} + c_{\mpas}\mathbf{D}^{-1/2}\mathbf{A}\mathbf{D}^{-1/2}$, we can rely on the spectral decomposition of the graph Laplacian -- see the conventions and notations introduced in \Cref{app:sec_preliminaries} -- to write
% \begin{equation}\label{app:eq_spectral_decomposition}
%     \oper = \sum_{\ell = 0}^{n-1}\left(c_{\rs} + c_{\mpas}(1-\lambda_{\ell})\right)\eigen_{\ell}\eigen_{\ell}^\top,
% \end{equation}
% \noindent where we recall that $\Lnorm \eigen_\ell = \lambda_\ell \eigen_\ell$. We note an {\bf important fact}: since, by assumption, $c_{\rs} \geq c_{\mpas}$ and $\lambda_{n-1} < 2$, whenever $\gph$ is not bipartite, we derive that $\oper$ is a {\em positive definite operator}. 
\noindent Since $\oper = c_{\rs}\mathbf{I} + c_{\mpas}\mathbf{D}^{-1/2}\mathbf{A}\mathbf{D}^{-1/2}$, the spectral decomposition of the graph Laplacian can be employed -- see the conventions and notations introduced in \Cref{app:sec_preliminaries} -- to write
\begin{equation*}
    \oper = \sum_{\ell = 0}^{n-1}\left(c_{\rs} + c_{\mpas}(1-\lambda_{\ell})\right)\eigen_{\ell}\eigen_{\ell}^\top,
\end{equation*}
\noindent where $\Lnorm \eigen_\ell = \lambda_\ell \eigen_\ell$. Therefore, is possible to bound (in {\bf expectation}) the Jacobian obstruction by
\begin{align*}
    \obst^{(m)}(v,u) &= \sum_{k=0}^{m}\| \mathbf{J}^{(m)}_k(v,u) \| \geq \sum_{k = 0}^{m} \rho\nu^{m-k}\Big|\Big(\frac{1}{d_v}(\oper^{m-k})_{vv} - \frac{1}{\sqrt{d_v d_u}}(\oper^{m-k})_{vu} \Big)\Big| \\
    &\geq \rho \Big|\sum_{k = 0}^{m} \nu^{m-k}\Big(\frac{1}{d_v}(\oper^{m-k})_{vv} - \frac{1}{\sqrt{d_v d_u}}(\oper^{m-k})_{vu} \Big)\Big| \\
    &= \rho \Big|\sum_{k = 0}^{m} \nu^{m-k}\sum_{\ell = 0}^{n-1}\Big(c_{\rs} + c_{\mpas}(1-\lambda_{\ell})\Big)^{m-k}\left(\frac{\eigen^{2}_\ell(v)}{d_v} - \frac{\eigen_\ell(v)\eigen_\ell(u)}{\sqrt{d_u d_v}}\right)\Big| \\
    &= \rho\Big| \sum_{\ell = 0}^{n-1}\Big(\sum_{k = 0}^{m} \nu^{m-k}\left(c_{\rs} + c_{\mpas}(1-\lambda_{\ell})\right)^{m-k}\Big)\Big(\frac{\eigen^{2}_\ell(v)}{d_v} - \frac{\eigen_\ell(v)\eigen_\ell(u)}{\sqrt{d_u d_v}}\Big)\Big|\\
    &= \rho\Big|\sum_{\ell = 1}^{n-1}\sum_{k = 0}^{m} \left(\nu(c_{\rs} + c_{\mpas}(1-\lambda_{\ell}))\right)^{m-k}\Big(\frac{\eigen^{2}_\ell(v)}{d_v} - \frac{\eigen_\ell(v)\eigen_\ell(u)}{\sqrt{d_u d_v}}\Big)\Big|,
\end{align*}
\noindent where the last equality uses $\eigen_0(v) = \sqrt{d_v}/(2\lvert \E\rvert)$ for each $v\in\V$. By expanding the geometric sum using the assumption $\nu(c_{\rs} + c_{\mpas}) = 1$ and writing
\begin{equation*}
\obst^{(m)}(v,u) \geq \rho\Big|\sum_{\ell = 1}^{n-1}\frac{1 - \left(\nu(c_{\rs} + c_{\mpas}(1-\lambda_\ell))\right)^{m+1}}{1 - \nu(c_{\rs} + c_{\mpas}) +\nu c_{\mpas}\lambda_\ell}\Big(\frac{\eigen^{2}_\ell(v)}{d_v} - \frac{\eigen_\ell(v)\eigen_\ell(u)}{\sqrt{d_u d_v}}\Big)\Big|;
\end{equation*}
\noindent since $\nu(c_{\rs} + c_{\mpas}) = 1$, is possible to simplify the lower bound as
\begin{equation*}
\obst^{(m)}(v,u) \geq \rho\Big|\sum_{\ell = 1}^{n-1}\frac{1}{\nu c_{\mpas}\lambda_\ell} \Big(\frac{\eigen^{2}_\ell(v)}{d_v} - \frac{\eigen_\ell(v)\eigen_\ell(u)}{\sqrt{d_u d_v}}\Big)\Big| - \rho\Big| \sum_{\ell = 1}^{n-1}\frac{\left(\nu(c_{\rs} + c_{\mpas}(1-\lambda_\ell))\right)^{m+1}}{\nu c_{\mpas}\lambda_\ell}\Big(\frac{\eigen^{2}_\ell(v)}{d_v} - \frac{\eigen_\ell(v)\eigen_\ell(u)}{\sqrt{d_u d_v}}\Big) \Big|. 
\end{equation*}
\noindent By \citet[Theorem 3.1]{lovasz1993random}, the first term is equal to $\lvert (\nu c_{\mpas})^{-1}\mathsf{t}(u,v)/2\lvert\E\rvert \rvert$ which is a positive number. Concerning the second term, recall that the eigenvalues of the graph Laplacian are ordered from smallest to largest and that $\eigen_\ell$ is a unit vector, so 
\begin{equation*}
\obst^{(m)}(v,u) \geq \frac{\rho}{\nu c_{\mpas}}\frac{\mathsf{t}(u,v)}{2\lvert \E\rvert} - \frac{\rho(1 - \nu c_{\mpas}\lambda^\ast)^{m+1}}{\nu c_{\mpas}\lambda_1}\frac{n-1}{d_{\mathrm{min}}},
\end{equation*}
\noindent with $\lambda^\ast$ such that $\lvert 1 - \lambda^\ast \rvert = \max_{\ell > 0}\lvert 1 - \lambda_\ell \rvert$ which completes the proof.

\end{proof}
\begin{proof}[Proof of \Cref{thm:effective_resistance}]
This proof follows the same strategy used in the proof of \Cref{thm:access}. Under \Cref{assumption}, the term $\mathbf{J}_k^{(m)}(v,u)$ can be written as 
\begin{align*}
    \mathbb{E}\Big[\mathbf{J}_k^{(m)}(v,u)\Big] &= \mathbb{E}\left[ \frac{1}{d_v}\frac{\partial \mathbf{h}_{v}^{(m)}}{\partial \mathbf{h}_{v}^{(k)}} - \frac{1}{\sqrt{d_v d_u}}\frac{\partial \mathbf{h}_{v}^{(m)}}{\partial \mathbf{h}_{u}^{(k)}} + \frac{1}{d_u}\frac{\partial \mathbf{h}_{u}^{(m)}}{\partial \mathbf{h}_{u}^{(k)}}  - \frac{1}{\sqrt{d_v d_u}}\frac{\partial \mathbf{h}_{u}^{(m)}}{\partial \mathbf{h}_{v}^{(k)}}\right] \\
    &= \rho\prod_{s = k+1}^{m}\W^{(s)}\Big(\frac{1}{d_v}(\oper^{m-k})_{vv} + \frac{1}{d_u}(\oper^{m-k})_{uu} - 2(\oper^{m-k})_{vu} \Big)
\end{align*}
\noindent where it has been used the symmetry of $\oper$. Notice that the term within brackets can be equivalently reformulated as
\begin{equation*}
    \frac{1}{d_v}(\oper^{m-k})_{vv} + \frac{1}{d_u}(\oper^{m-k})_{uu} - 2(\oper^{m-k})_{vu} = \langle \frac{\mathbf{e}_v}{\sqrt{d_v}} - \frac{\mathbf{e}_u}{\sqrt{d_u}}, \oper^{m-k}\Big(\frac{\mathbf{e}_v}{\sqrt{d_v}} - \frac{\mathbf{e}_u}{\sqrt{d_u}}\Big)\rangle
\end{equation*}
\noindent where $\mathbf{e}_v$ is the vector with $1$ at entry $v$, and zero otherwise. In particular,
% , since $\oper = c_{\rs}\mathbf{I} + c_{\mpas}\mathbf{D}^{-1/2}\mathbf{A}\mathbf{D}^{-1/2}$, we can rely on the spectral decomposition of the graph Laplacian -- see the conventions and notations introduced in \Cref{app:sec_preliminaries} -- to write
% \begin{equation}\label{app:eq_spectral_decomposition}
%     \oper = \sum_{\ell = 0}^{n-1}\left(c_{\rs} + c_{\mpas}(1-\lambda_{\ell})\right)\eigen_{\ell}\eigen_{\ell}^\top,
% \end{equation}
% \noindent where we recall that $\Lnorm \eigen_\ell = \lambda_\ell \eigen_\ell$. 
Please notice an {\bf important fact}: since, by assumption, $c_{\rs} \geq c_{\mpas}$ and $\lambda_{n-1} < 2$, whenever $\gph$ is not bipartite, $\oper$ is a {\em positive definite operator}. Then a bound (in {\bf expectation}) the Jacobian obstruction can be computed by
\begin{align*}
    \tilde{\obst}^{(m)}(v,u) &= \sum_{k=0}^{m}\| \mathbf{J}^{(m)}_k(v,u) \| \leq \sum_{k = 0}^{m} \rho\mu^{m-k}\sum_{\ell = 0}^{n-1}\left(c_{\rs} + c_{\mpas}(1-\lambda_{\ell})\right)^{m-k}\Big(\frac{\eigen_\ell(v)}{\sqrt{d_v}} - \frac{\eigen_\ell(u)}{\sqrt{d_u}}\Big)^{2} \\
    &= \rho\sum_{\ell = 0}^{n-1}\left(\sum_{k = 0}^{m} \mu^{m-k}\left(c_{\rs} + c_{\mpas}(1-\lambda_{\ell})\right)^{m-k}\right)\Big(\frac{\eigen_\ell(v)}{\sqrt{d_v}} - \frac{\eigen_\ell(u)}{\sqrt{d_u}}\Big)^{2} \\
    &= \rho\sum_{\ell = 1}^{n-1}\left(\sum_{k = 0}^{m} \mu^{m-k}\left(c_{\rs} + c_{\mpas}(1-\lambda_{\ell})\right)^{m-k}\right)\Big(\frac{\eigen_\ell(v)}{\sqrt{d_v}} - \frac{\eigen_\ell(u)}{\sqrt{d_u}}\Big)^{2},
\end{align*}
\noindent where in the last equality $\eigen_0(v) = \sqrt{d_v}/(2\lvert \E\rvert)$. Therefore, is possible to expand the geometric sum by using the assumption $\mu(c_{\rs} + c_{\mpas})\leq 1$ and write
\begin{align*}
\tilde{\obst}^{(m)}(v,u) &\leq \rho\sum_{\ell = 1}^{n-1}\frac{1 - \left(\mu(c_{\rs} + c_{\mpas}(1-\lambda_\ell))\right)^{m+1}}{1 - \mu(c_{\rs} + c_{\mpas}) +\mu c_{\mpas}\lambda_\ell}\Big(\frac{\eigen_\ell(v)}{\sqrt{d_v}} - \frac{\eigen_\ell(u)}{\sqrt{d_u}}\Big)^{2} \\
&\leq \sum_{\ell = 1}^{n-1}\frac{\rho}{\mu c_{\mpas}\lambda_\ell}\Big(\frac{\eigen_\ell(v)}{\sqrt{d_v}} - \frac{\eigen_\ell(u)}{\sqrt{d_u}}\Big)^{2} \\
& = \frac{\rho}{\mu c_{\mpas}}\sum_{\ell = 1}^{n-1}\frac{1}{\lambda_\ell}\Big(\frac{\eigen_\ell(v)}{\sqrt{d_v}} - \frac{\eigen_\ell(u)}{\sqrt{d_u}}\Big)^{2} \\
& = \frac{\rho}{\mu c_{\mpas}}\res(v,u)
\end{align*} 
\noindent where in the last step, the spectral characterization of the effective resistance derived in~\cite{lovasz1993random} has been used -- which was also leveraged in~\cite{arnaiz2022diffwire} to derive a novel rewiring algorithm. Since by~\cite{chandra1996electrical} it holds $2\res(v,u)\lvert\E\rvert = \tau(v,u)$, this completes the proof of the upper bound. The lower bound case follows by a similar argument. In fact, one arrives at the estimate
\begin{align*}
\tilde{\obst}^{(m)}(v,u) &\geq \rho\sum_{\ell = 1}^{n-1}\frac{1 - \left(\nu(c_{\rs} + c_{\mpas}(1-\lambda_\ell))\right)^{m+1}}{1 - \nu(c_{\rs} + c_{\mpas}) +\nu c_{\mpas}\lambda_\ell}\Big(\frac{\eigen_\ell(v)}{\sqrt{d_v}} - \frac{\eigen_\ell(u)}{\sqrt{d_u}}\Big)^{2}. 
\end{align*} 
\noindent  Derive
 \[
 1 - \left(\nu(c_{\rs} + c_{\mpas}(1-\lambda_\ell))\right)^{m+1} \geq 1 - \left(\nu(c_{\rs} + c_{\mpas}(1-\lambda^\ast))\right)^{m+1},
 \]
 \noindent where $\lvert 1 -\lambda^\ast\rvert = \max_{\ell > 0} \lvert 1 - \lambda^\ast\rvert$.
 \noindent Next, find that 
 \[
 \frac{1}{1 - \nu(c_{\rs} + c_{\mpas}) +\nu c_{\mpas}\lambda_\ell} \geq \frac{\epsilon}{\nu c_{\mpas}\lambda_\ell} \Longleftrightarrow \lambda_\ell \geq \frac{\epsilon}{1-\epsilon}\, \frac{1-\nu (c_{\rs} + c_{\mpas})}{\nu c_{\mpas}}. 
 \]
 \noindent Since the eigenvalues are ordered from smallest to largest, it suffices that 
  \[
 \lambda_1 \geq \frac{\epsilon}{1-\epsilon}\, \frac{1-\nu (c_{\rs} + c_{\mpas})}{\nu c_{\mpas}} \Longleftrightarrow \epsilon \leq \epsilon_\gph := \frac{\lambda_1}{\lambda_1 + \frac{1 - \nu(c_{\rs}+c_{\mpas})}{\nu c_{\mpas}}}.
 \]
 \noindent This completes the proof.
\end{proof}

\noindent {\em It worth emphasize that without the degree normalization, the bound would have an extra-term (potentially diverging with the number of layers) and simply proportional to the degrees of nodes $v,u$. The extra-degree normalization is off-setting this uninteresting contribution given by the steady state of the Random Walks.}

%Finally, we note that it is straightforward to adapt the proof to derive lower bounds for $\obst^{m}(v,u)$ in terms of the effective resistance. 

% \begin{theorem}
%     Under the assumptions of \Cref{thm:effective_resistance}, assume  that the minimal singular value of all weight matrices is bounded from below by $\nu > 0$. For simplicity, we also assume that $\rho\nu(c_{\rs} + c_{\mpas}) = 1$ -- this is not restrictive and only leads to more appealing numerical coefficients. Then
%     \begin{equation}
%         \obst^{(m)}(v,u) \geq  \frac{1}{\rho\nu c_{\mpas}}\res(v,u) - \left(1-\rho\nu c_{\mpas}\lambda_{n-1}\right)^{m+1}\left(\frac{1}{d_v} + \frac{1}{d_u}\right).
%     \end{equation}
% \end{theorem}
% \begin{proof}
%     The proof follows the same steps above without variations. We simply observe that due to the orthonormality of the eigenvectors, we can write $\sum_{\ell =1}\left( \frac{\eigen_\ell(v)}{\sqrt{d_v}} - \frac{\eigen_\ell(u)}{\sqrt{d_u}}\right)^{2} = \frac{1}{d_v} + \frac{1}{d_u}$. The lower bound then follows.
% \end{proof}

%Accordingly we have the following important message: up to errors decaying exponentially with the number of layers, the Jacobian obstruction $J^{(m)}(v,u)$ is \textbf{\em equivalent} to the effective resistance of nodes $v,u$.

\chapter{On the Symmetries of Topological Neural Networks}
\label{appendix:simmetries_tnns}

\section{Primer on Category Theory}
\label{primer:category_theory}

Category theory is a branch of mathematics that deals with abstract structures and relationships between them. It provides a unified framework to study mathematical concepts in a way that emphasizes their relationships, rather than the objects themselves. While the  main goal of this thesis, is to study topological neural networks, a grasp of category theory can provide a bird-eye view of the underlying symmetries of these models.

\subsubsection{Objects and Morphisms}

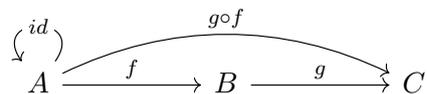
\begin{figure}[!htb]
        \centering
        \begin{tikzcd}
        A \arrow["id"{description}, loop, distance=2em, in=125, out=55] && B && C
        \arrow["f", from=1-1, to=1-3]
        \arrow["g", from=1-3, to=1-5]
        \arrow["{g \circ f}", curve={height=-24pt}, from=1-1, to=1-5]
        \end{tikzcd}
        \caption{Illustration of the Composition concept. If there is a morphism $ f $ from object $ A $ to $ B $ and another morphism $ g $ from $ B $ to $ C $, then there is a morphism $ g \circ f $ from $ A $ to $ C $.}
        \label{fig:composition}
    \end{figure}

The foundation of category theory groudns on the fundamental notion of
a category $\mathcal{C}$ composed by objects and morphisms.

\textbf{Objects} can be thought of as mathematical entities or structures. For the purposes of this thesis, think of them as containers or placeholders that represent mathematical objects, such as sets, groups, rings and so on. In this case, objects are discrete topological spaces equipped with signals. In~\Cref{fig:composition}, the objects are denoted wiht $A$, $B$ and $C$.

\textbf{Morphisms} are the relationships between objects. They can be described as transformations between two objects within the catorgy $\mathcal{C}$. Morphisms must satisfy two properties: {\bf composition}: if there exists a morphism $f$ from object $A$ to $B$ and another morphism $g$ from $B$ to $C$, then a morphism $g \circ f$ from $A$ to $C$ {\em must} also exist~(\Cref{fig:composition}) and {\bf identity} For every object, there exists a morphism that maps it to itself, called the identity morphism. Since often this is omitted from the diagrams, in~\Cref{fig:composition} the identity morphism is represented only for the object $A$. 

\begin{definition}[Category] A \textbf{category} $\mathcal{C}$  consists in a collection of objects and morphisms with the condition that morphisms can be composed, and this composition is associative. Each object has an associated identity morphism. 
\end{definition}

In essence, a category captures a mathematical world where objects and their relationships live. To relate different mathematical structures, the concept of {\em functor} bridges the gap between seemingly unrelated categories .

\begin{definition}[Functor]
    A \textbf{functor} $F$ is a map between two categories that maintains the object-morphism structure. Think of it as a transformer ({\em not the transformer architecture~\citep{vaswani2017attention}}) that takes objects and morphisms from one category and produces corresponding objects and morphisms in another category while preserving their relationships.
\end{definition}

\subsection{Why Category Theory for Topological Neural Networks?}

By modeling the symmetries of topological neural networks within the framework of category theory, is possible to exploit powerful mathematical concepts to elegantly express and prove the equivariance of such models. When objects like simplicial or cell complexes and morphisms like permutation matrices are considered, the problem is naturally embedded into a categorical framework, giving a rich language and toolkit to work with.

\subsubsection{A Categorical Perspective on the Symmetries of Topological Neural Networks}

Let $\mathcal{C}$ be a category such that, in the context of message passing schemes on discrete topological spaces the objects in $\mathcal{C}$ are complexes $\xph$ equipped with sequences of boundary matrices, $\mathbf{B} = (\Bnorm_1, \ldots, \Bnorm_K)$, and feature matrices $\mathbf{H} = (\mathbf{H}_0, \ldots, \mathbf{H}_K) $. The morphisms in $\mathcal{C}$ are sequences of permutation matrices $\mathbf{P} = (\Pnorm_0, \ldots, \Pnorm_K)$. These act on the complexes, permuting them as: $\mathbf{P}: \xph \to P(\xph)$.

Given this structure, permutation equivariance and invariance can be defined in terms of functoriality.

\begin{definition}[Permutation Equivariance in Category Theory]
A functor $F: \mathcal{C} \to \mathcal{C} $ is equivariant if, for any object $\xph$ in $\mathcal{C}$ and any morphism $\mathbf{P}: \xph \to \mathbf{P}(\xph)$ in $\mathcal{C}$, it holds

 \begin{equation}
     F(\mathbf{P}(\xph)) = (F  \circ \mathbf{P})(\xph) =( F \circ \mathbf{P}   )(\xph) = \mathbf{P}\, F(\xph), 
 \end{equation}

such that the following diagram commutes:
\end{definition}

\begin{center}
\begin{tikzcd}
\xph \arrow{r}{\mathbf{P}} \arrow[swap]{d}{F} & P(\xph) \arrow{d}{F} \\
F(\xph) \arrow{r}{F(\mathbf{P})} & F(P(\xph))
\end{tikzcd}
\end{center}

\begin{definition}[Permutation Invariance in Category Theory]
A functor $ F: \mathcal{C} \to \mathcal{C} $ is invariant if, for any object $\xph$ in $\mathcal{C}$ and any morphism $ \mathbf{P}: \xph \to \mathbf{P}(\xph) $ in $\mathcal{C}$,

\begin{equation}
 F(\mathbf{P}(\xph)) = (F  \circ \mathbf{P})(\xph) =( F \circ \mathbf{P}   )(\xph) = F(\xph)
\end{equation}

such that the following diagram commutes:

\begin{center}
\begin{tikzcd}
	\xph && \mathbf{P}(\xph) \\
	\\
	&& F(\xph)
	\arrow["F(\mathbf{P})"', from=1-1, to=3-3]
	\arrow["F", from=1-3, to=3-3]
	\arrow["\mathbf{P}", from=1-1, to=1-3]
\end{tikzcd}
\end{center}
\end{definition}

\begin{proof}[Permutation Equivariance for Topological Neural Networks]
Assume a topological neural network $\mathsf{TNN}_{\theta}$ as in~\Cref{eq:tnn_def} which acts as a functor $F$. Consider any complex $\xph$ with boundary $\mathbf{B}$ and feature matrices $\mathbf{H}$. When the sequence of permutation matrices $\mathbf{P}$ acts on $\xph$, it results in a permuted complex $P(\xph)$.

For permutation equivariance, it holds:

\begin{equation}
    \mathsf{TNN}_{\theta}(\mathbf{P} \mathbf{H}, \mathbf{P} \mathbf{B} \mathbf{P}^\top) = \mathbf{P} \, \mathsf{TNN}_{\theta}(\mathbf{H}, \mathbf{B}),
\end{equation}

which is analogous to:
\begin{equation}
    F(\mathbf{P}(\xph)) =   (\mathbf{P} \circ F)(\xph).
\end{equation}

When $\mathbf{P}$ is applied on $\xph$, the topological neural network represented with the functor $F$ respects the permuted relationships and produces an output that is a permuted version of the original. Thus, the network $\mathsf{TNN}_{\theta}$ satisfies the condition of being equivariant as defined in category theory.

This concludes the proof of permutation equivariance for topological neural networks from a categorical perspective.

\end{proof}

\chapter{Computational Complexity and Learnable Parameters of Cell Attention Networks}\label{app:comp_tan}

\paragraph{Structural Lift}

Although this operation can be pre-computed for the entire dataset and the connectivity results stored for later use, it is worth eliciting its complexity, noting that for some applications, the storage of the upper and lower connectivity for the \emph{entire} dataset might not be possible.
%We considered Cellular Lifting Maps that assign 2-cells to all the chordless cycles of a graph with a maximum number of nodes in the cycles up to $R$ as maximum cycle size. 
The chord-less cycles in a graph can thus be enumerated in $\mathcal{O}((|E| + |V| R) \, \textrm{polylog} |V|)$ time~\citep{ferreira2014amortized} where $R$ is upper bounded by a small constant. Thus, the complexity of this operation can be approximated to be linear in the size of the complex (i.e., the overall number of cells $\sigma \in \cph$. Intuitively, structural lifts do not involve any parameter to be learned during training.

\paragraph{Functional Lift}

The complexity of this operation is equivalent to a multi-head attention message passing scheme over the entire graph. For a single node pair $i,j \in \V$ connected by an edge $e \in \E$, the functional lift defined in~\Cref{eq:functional_lift} can be decomposed into $F_e$ independent transformations. Each map requires $\mathcal{O}(F_n)$ computations, where $F_n$ is the number of input node features. Thus, for the pair $i,j$, the functional lift is performed in $\mathcal{O}(F_e F_n)$, where $F_e$ is a parameter to be chosen as the number of input edge features. Accounting all the edges of the complex yields an amount of $\mathcal{O}(\vert \E \vert F_e F_n)$ operations to lift node features into edge ones. In the context of lifting a pair of graph node features $\mathbf{x}_u, \mathbf{x}_v \in \mathbb{R}^{F_n}$ to obtain edge features $\mathbf{x}_{e} \in \mathbb{R}^{F_e}$, the attention parameters are involved. The parameters to learn the transformations $\Wnorm_1 \in \mathbb{R}^{2 F_n \times F_e}$ and $\Wnorm_2 \in \mathbb{R}^{F_e \times F_e}$ are therefore on the order of $\Theta \left( F_e F_n \right)$. 

\paragraph{Cell Attention}

This operation consists in two independent masked self-attention message passing schemes over the upper and lower neighbourhoods of the complex, namely cell attention, an inner linear transformation of the edges' features and an outer point-wise nonlinear activation~(\Cref{eq:top_att:can:message_passing_scheme}). For a layer $l$, the number of messages that an edge $e$ receives from its lower neighbourhood is equal to $\vert \mathcal{N}_{\downarrow}(e) \vert$, the number of edges that share a common node with $e$.  The same computation yields for the upper neighbourhood: edge $e$ receives $\vert \mathcal{N}_{\doarr}(e) \vert$ messages, from edges that are in the same cell's boundaries as $e$. Recalling that $\E^{(l+1)} \subseteq \E^{(l)}$ and $R$ is upper bounded by a small constant~\cite{bodnar2021weisfeilercell}, in a single message passing the number of messages that and edge $e$ receives is bounded by $\mathcal{O}(\vert \E \vert )$, where $\E$ is the initial number of edges of $\cph$. The inner linear transformation that propagates the information contained in $\mathbf{h}_e$ is upper bounded by $\mathcal{O}(F_e^2)$. Extending this to all edges of the complex, the complexity of a cell attention layer can be rewritten as $\mathcal{O}(|\E|\, F_e^2)$. In the case of a multi-head cell attention, the complexity receives an overhead induced by the number of attention heads involved within the layer, i.e., a multiplication by a factor $H$, the number of cell attention heads. In terms of learnable parameters and in the case of the  GAT-like attention functions~\citep{velivckovic2018graph}, a single \emph{cell attention layer} is composed of: two independent vectors of attention coefficients $\mathbf{a}_{\doarr}, \mathbf{a}_{\uparr} \in \mathbb{R}^{2F_e}$ for properly weighting the lower and upper neighbourhoods, respectively. Moreover, the layer is equipped with three linear transformations, $\mathbf{W}, \mathbf{W}_{\doarr}, \mathbf{W}_{\uparr} \in \mathbb{R}^{F_e \times F_e}$ acting respectively on: $\mathbf{h}_e$, the latent representation of edge $e$  and the hidden features $\mathbf{h}_k$ in the lower and upper neighbourhoods of the edge $e$. If instead the dynamic attention proposed in~\cite{brody2021attentive} is used, the size of the weight matrices increases to $\mathbf{W}, \mathbf{W}_{\doarr}, \mathbf{W}_{\uparr} \in \mathbb{R}^{F_e \times 2 F_e}$ while the vectors of attention coefficients reduce to $\mathbf{a}_{\doarr}, \mathbf{a}_{\uparr} \in \mathbb{R}^{F_e}$.. Thus, independently on the particular graph attention mechanism employed, the number of learnable parameters of a cell attention layer is $\mathcal{O}(F_e^2)$.

\paragraph{Edge Pool}

The operations involved in the pooling layer can be decomposed in: (i) computing the self-attention scores for each edge of the complex ($\gamma_e$ from~\Cref{pooling_scores}); (ii) select the highest $\lceil k \, \vert \E \vert \, \rceil$ values from a collection of self-attention scores ($\text{top-k}(\{\gamma_e\}_{e \in \E}, \lceil k |\E| \rceil)$); and (iii) adjust the connectivity of the complex~(\Cref{fig:pooling}). To compute the computational complexity of this layer, it is convenient to view the selection operation as a combination of a sorting algorithm over a collection of self-attention scores and the selection of the first $\lceil k , \vert \E \vert , \rceil$ elements from the sorted collection. Since the computations involved in (i) and (iii) are linear in the dimension of the complex, the overall complexity of this layer in can be upper bounded by the sorting algorithm, i.e., $\mathcal{O}(|\E| \; log (|\E|))$. Please notice that, in the context of the edge pooling operation the number of elements of $\E$ is reduced after each layer. For this operation, learnable parameters are employed only in computing the self-attention scores ($\gamma_e$~(\Cref{pooling_scores})). In the case of the GAT-like attention functions~\citep{velivckovic2018graph}, they consist of a shared vector of attentional scores' coefficients $\mathbf{a}_p \in \mathbb{R}^{F_e}$, similarly to the lift layer, leading to $\Theta \left( F_e \right)$.
\chapter{Appendix CIN++}\label{app:cinpp}

\section{Expressive Power}\label{app:exp}
This section analyse the expressive power of enhanced topological message passing. Two complexes $\cph_1$ and $\cph_2$ are said to be {\em isomorphic} (written $\cph_1 \simeq \cph_2$) if there exists a bijection $\varphi:\mathcal{P}_{\cph_1} \to \mathcal{P}_{\cph_2}$ such that $\sigma \in \cph_1 \iff \varphi(\sigma) \in \cph_2$~\citep{bodnar2021weisfeiler, bodnar2021weisfeilercell}. Also, a cell coloring $c$ {\em refines} a cell coloring $d$, written $c \sqsubseteq d$, if $c(\sigma) = c(\tau) \implies d(\sigma) = d(\tau)$ for every $\sigma,\tau \in \cph$. 
Two colorings are {\em equivalent} if $c \sqsubseteq d$ and $d \sqsubseteq c$, and it is written as: $c \equiv d$~\citep{morris2019weisfeiler}.

\begin{proof}[\textbf{Proof of Theorem~\ref{thm:cin++express}}]
Let $c^l$ be the colouring of CWL~\citep{bodnar2021weisfeilercell} at iteration $l$ and $h^l$ the colouring (i.e., the multi-set of features) provided by a CIN++ network at layer $l$ as in~\Cref{sec:etmp}.

To show that CIN++ inherits all the properties of Cellular Isomorphism Networks~\citep{bodnar2021weisfeilercell} it is necessary to show that the proposed topological message passing scheme produces a colouring of the complex that satisfies Lemma 26 of~\cite{bodnar2021weisfeilercell}. 

To show $c^t \sqsubseteq h^t$ by induction, assume $h^l = h^L$ for all $l > L$, where $L$ is the number of the network's layers.
Let also $\sigma, \tau$ be two arbitrary cells with $c^{l+1}_\sigma = c^{l+1}_\tau$. Then, $c^l_\sigma = c^l_\tau$, $c^l_\mathcal{B}(\sigma) = c^l_\mathcal{B}(\tau)$, $c^l_\uparrow(\sigma) = c^l_\uparrow(\tau)$ and $c^l_\downarrow(\sigma) = c^l_\downarrow(\tau)$. By the induction hypothesis, $h^l_\sigma = h^l_\tau$, $h^l_\mathcal{B}(\sigma) = h^l_\mathcal{B}(\tau)$, $h^l_\uparrow(\sigma) = h^l_\uparrow(\tau)$ and $h^l_\downarrow(\sigma) = h^l_\downarrow(\tau)$. 

If $l+1 > L$, then $h^{l+1}_\sigma = h^{l}_\sigma = h^l_\tau = h^{l+1}_\tau$. Otherwise, $h^{l+1}$ is given by the update function in Equation \ref{eq:update}. Given that the inputs passed to these functions are equal for $\sigma$ and $\tau$, $h^{l+1}_\sigma = h^{l+1}_\tau$.

For showing  $h^l \sqsubseteq c^l$, suppose the aggregation from Equation \ref{eq:update} is injective and the model is equipped with a sufficient number of layers such that the convergence of the colouring is guaranteed. Let $\sigma, \tau$ be two cells with $h^{l+1}_\sigma = h^{l+1}_\tau$. Then, since the local aggregation is injective $h^l_\sigma = h^l_\tau$, $h^l_\mathcal{B}(\sigma) = h^l_\mathcal{B}(\tau)$, $h^l_\uparrow(\sigma) = h^l_\uparrow(\tau)$ and $h^l_\downarrow(\sigma) = h^l_\downarrow(\tau)$. By the induction hypothesis, $c^l_\sigma = c^l_\tau$, $c^l_\mathcal{B}(\sigma) = c^l_\mathcal{B}(\tau)$,  $c^l_\uparrow(\sigma) = c^l_\uparrow(\tau)$ and   $c^l_\downarrow(\sigma) = c^l_\downarrow(\tau)$ which implies that $c^{l+1}_\sigma = c^{l+1}_\tau$. 

Given that $c^t \sqsubseteq h^t$ and $h^l \sqsubseteq c^l$, is possible to conclude that $h^l \equiv c^l$.
\end{proof}

As a result, CIN++ inherits all the properties of Cellular Isomorphism Networks, in accordance with Lemma 26 from~\cite{bodnar2021weisfeilercell}.

\section{A Categorical Interpretation: Sheaves}
It worth to notice that CIN++ can be seen as a particular case of a message passing scheme over a cellular sheaf. 
Let $\cph$ be a regular cell complex. A cellular sheaf is a mathematical object that attaches data spaces to the
cells of $\cph$ together with relations that specify when assignments to these data
spaces are consistent.

\begin{definition}[Cellular Sheaf]~\citep{hansen2019toward}\label{def:sheaf} A \textit{cellular sheaf} of vector spaces on a regular cell complex $\cph$ is an
assignment of a vector space $\mathcal{F}(\sigma)$ to each cell $\sigma$ of $\cph$
together with a linear transformation $\mathcal{F}_{\sigma\face\tau}\colon \mathcal{F}(\sigma)
\to \mathcal{F}(\tau)$ for each incident cell pair $\sigma\,\face\,\tau$. These must
satisfy both an identity relation $\mathcal{F}_{\sigma\face\sigma}={id}$ and the
composition condition:
\[
     \rho\,\face\,\sigma\,\face\,\tau
  ~~\Rightarrow~~
  \mathcal{F}_{\rho\face\tau} = \mathcal{F}_{\sigma\face\tau}\circ\mathcal{F}_{\rho\face\sigma}.
\]
\end{definition}

It is also natural to consider a dual construction to a
cellular sheaf to preserves stalk data but reverses
the direction of the face poset, and with it, the restriction maps.
\begin{definition}[Cellular Cosheaf]~\citep{hansen2019toward}\label{def:cosheaf}
A cellular cosheaf of vector spaces on a regular cell complex $\cph$ is an
assignment of a vector space $\mathcal{F}(\sigma)$ to each cell $\sigma$ of  $\cph$
together with linear maps $\mathcal{F}^{\text{op}}_{\sigma\face\tau}\colon \mathcal{F}(\tau) \to
\mathcal{F}(\sigma)$ for each incident cell pair $\sigma\,\face\,\tau$ which satisfies
the identity ($\mathcal{F}^{\text{op}}_{\sigma\face\sigma}={id}$) and composition condition:
\[
     \rho\,\face\,\sigma\,\face\,\tau
  ~~\Rightarrow~~
  \mathcal{F}^{\text{op}}_{\rho\face\tau} = \mathcal{F}^{\text{op}}_{\rho\face\sigma}\circ\mathcal{F}^{\text{op}}_{\sigma\face\tau} .
\]
\end{definition}

The vector space $\mathcal{F}(\sigma)$ is called the {\em stalk} of $\mathcal{F}$ at $\sigma$ and will encode the features supported over $\sigma$. The maps $\mathcal{F}_{\sigma\face\tau}$ and $\mathcal{F}^{\text{op}}_{\sigma\face\tau}$ are called the {\em restriction maps} and will provide a principled way to respectively move features from lower dimensional cells to higher dimensional ones and vice-versa.

From a categorical perspective, a cellular sheaf is a functor $\mathcal{F} : \mathcal{P}_{\cph} \rightarrow \mathbf{Vect}_{\mathbb{R}}$ that maps the indexing set $\mathcal{P}_{\cph}$ to the category of vector spaces over $\mathbb{R}$ while a cellular cosheaf is a functor $ \mathcal{F}^{\text{op}} : \mathcal{P}_{\cph}^{\text{op}} \to
\mathbf{Vect}_{\mathbb{R}}$ such that, for a two dimensional regular cell complex $\cph$, a sheaf $(\mathcal{F}, \mathbb{R})$ and its dual cosheaf $(\mathcal{F}^{\text{op}}, \mathbb{R})$ on  $\cph$, the following diagram commutes:%~\cite{hu2020brief}:

\begin{wrapfigure}[5]{r}{0.4\textwidth}
    \begin{subfigure}[t!]{1.0\linewidth}
        \centering
        %\vspace{+10pt}
        \includegraphics[width=1.0\textwidth]{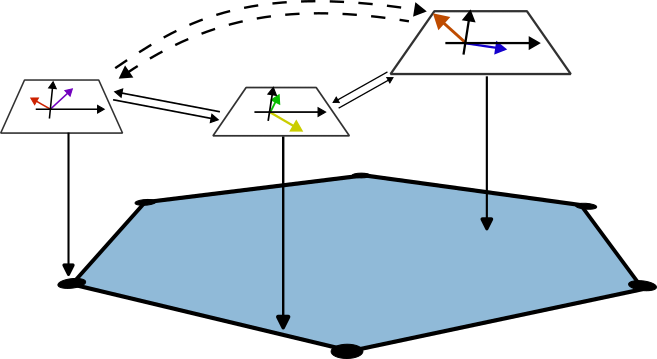}
    \end{subfigure}
     \caption{Pictorial example of a sheaf and cosheaf of vector spaces structure on a ring of a regular cell complex $\cph$.}
    \label{fig:sheaf}
    \vspace{+30pt}
\end{wrapfigure}

\
\

\[\begin{tikzcd}[sep=2.25em]
	{\mathcal{F}(v)} &&& {\mathcal{F}(e)} \\
	\\
	\\
	&&& {\mathcal{F}(r)}
	\arrow["{\mathcal{F}_{v\face e}}", from=1-1, to=1-4]
	\arrow["{\mathcal{F}_{e \face r}}", from=1-4, to=4-4]
	\arrow["{\mathcal{F}_{v \face r}}"', shift right=1, curve={height=30pt}, dashed, from=1-1, to=4-4]
	\arrow["{\mathcal{F}_{v \face r}^{\text{op}}}"', shift right=1, curve={height=-30pt}, dashed, from=4-4, to=1-1]
	\arrow["{\mathcal{F}_{e \face r}^{\text{op}}}", shift left=2, from=4-4, to=1-4]
	\arrow["{\mathcal{F}_{v \face e}^{\text{op}}}", shift left=2, from=1-4, to=1-1]
\end{tikzcd}\]

\vspace{25pt}
In the given commutative diagram, the arrow is dashed to indicate that the morphism (map) it represents is not explicitly defined in the diagram, but rather it is implied by the other morphisms. In this case, the dashed arrow is used to show the existence of a unique morphism that makes the diagram commute. This relationship is important in the context of cellular sheaves, where these morphisms represent restrictions on different cells and their overlaps. The dashed arrows shows that there is a unique way to go from $\mathcal{F}(v)$ to $\mathcal{F}(r)$  and back that is consistent with the other restrictions, even if it is not directly defined in the diagram.

\end{document}